  \newtheorem{theorem}{Theorem}[chapter]
  \newtheorem{cor}[theorem]{Corollary}
  \newtheorem{defn}[theorem]{Definition}
  \newtheorem{ex}[theorem]{Example}
	\newtheorem{experiment}[theorem]{Experiment}
  \newcommand{\blanknonumber}{\newpage\thispagestyle{empty}}
\begin{document}
	\abovedisplayskip=6pt
	\abovedisplayshortskip=6pt
	\belowdisplayskip=6pt
	\belowdisplayshortskip=6pt

  % set page numbers to roman and suppress chapter numbers
  \frontmatter
		
	% Here is how to reference a labelled equation
	%\begin{equation} \label{eq:solve}
	%x^2 - 5 x + 6 = 0
	%\end{equation}
	%\eqref{eq:solve}

  % remove or switch the order of these as you see fit
  \begin{titlepage}
\label{titlepage}
\begin{center}

\vspace*{\fill} \Huge
                        Inference, Sampling, and Learning in Copula Cumulative Distribution Networks
\\
\vfill\vfill\Large
                          Stefan Webb
\\
\vfill\vfill
                          May, 2013
\\
\vfill\vfill \normalsize
         Submitted in partial fulfilment of the requirements\\
         for the degree of Bachelor of Statistics with Honours\\
         in Statistics at the Australian National University
\vfill
         \includegraphics[scale=0.45]{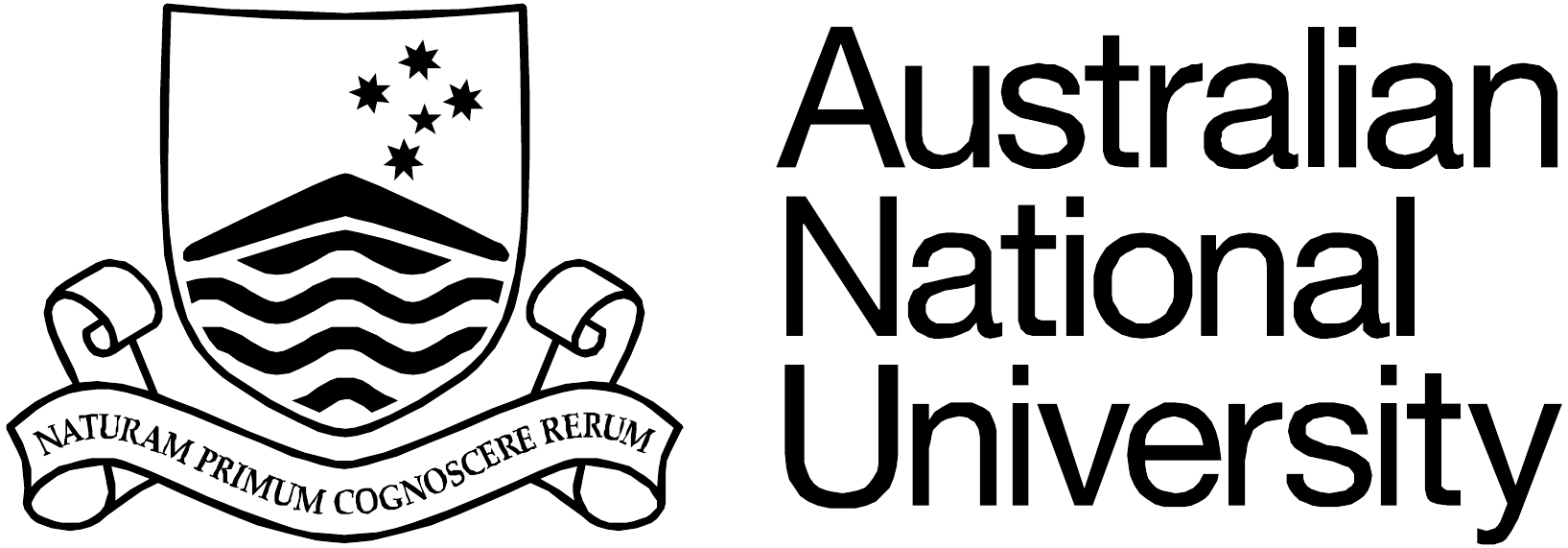}
         
\end{center}

\end{titlepage}
\blanknonumber
  \include{dedication}\blanknonumber
  
\chapter*{Declaration}\label{declaration}
\thispagestyle{empty}
This thesis contains no material that has been accepted for the award of any other degree or diploma in any university, and, to the best of my knowledge and belief, contains no material published or written by another person, except where due reference is made in the thesis.

\vspace{1in}

\hfill\hfill\hfill
Stefan Webb
\hspace*{\fill}
\blanknonumber
  
\chapter*{Acknowledgements}\label{acknowledgements}
\addcontentsline{toc}{chapter}{Acknowledgements}
Foremost, I would like to express my deep gratitude to Dr Stephen Gould, my primary supervisor, for his thoughtful mentoring, patient guidance, and useful critiques of this research work. I would also like to thank Dr Steven Roberts for his support, and Dr Stephen Sault and Dr Chris Bilson for their excellent management of the Honours program.

I am grateful to Microsoft for providing a complimentary copy of ``Visual Studio 2012 Professional'' under the DreamSpark program. Without its powerful features for debugging and profiling, the software component would have been insufferable. I am also grateful to S. M. Ali Eslami for supplying shape model datasets.

Finally, I wish to thank my parents for their support and encouragement throughout my study.\blanknonumber
  \chapter*{Abstract}\label{abstract}

\addcontentsline{toc}{chapter}{Abstract}

The cumulative distribution network (CDN) \cite{HuangFrey2011} is a recently developed class of probabilistic graphical models (PGMs) permitting a copula factorization, in which the CDF, rather than the density, is factored. Despite there being much recent interest within the machine learning community about copula representations, there has been scarce research into the CDN, its amalgamation with copula theory, and no evaluation of its performance. Algorithms for inference, sampling, and learning in these models are underdeveloped compared those of other PGMs, hindering widerspread use.

One advantage of the CDN is that it allows the factors to be parameterized as copulae \cite{SilvaEtAl2011}, combining the benefits of graphical models with those of copula theory. In brief, the use of a copula parameterization enables greater modelling flexibility by separating representation of the marginals from the dependence structure, permitting more efficient and robust learning. Also, directly modelling the CDF is more appropriate for some tasks. Another advantage is that the CDN permits the representation of implicit latent variables, whose parameterization and connectivity are not required to be specified. Unfortunately, that the model can encode \emph{only} latent relationships between variables severely limits its utility.

In this thesis, we present inference, learning, and sampling for CDNs, and further the state-of-the-art. First, we explain the basics of copula theory and the representation of copula CDNs. Then, we discuss inference in the models, and develop the first sampling algorithm. We explain standard learning methods, propose an algorithm for learning from data missing completely at random (MCAR), and develop a novel algorithm for learning models of arbitrary treewidth and size. Properties of the models and algorithms are investigated through Monte Carlo simulations. We conclude with further discussion of the advantages and limitations of CDNs, using the insight gained from our experiments, and suggest future work.\blanknonumber
  \tableofcontents\blanknonumber
	\listoffigures\blanknonumber
	\listofalgorithms\blanknonumber
  \include{notation}\blanknonumber

  % set page numbers to arabic, reset to 1
  \mainmatter

  % assuming there are files chapter1.tex etc...
	\chapter{Overview}
\label{overview}
The cumulative distribution network (CDN) \cite{HuangFrey2011} is a recently developed class of probabilistic graphical models (PGMs) permitting a copula factorization, in which the CDF, rather than the density, is factored. CDNs encode a different set of conditional independence relationships from standard GMs, and as such represents a new class of models. Despite there being much recent interest within the machine learning community about copula representations, there has been scarce research into the CDN, its amalgamation with copula theory, and no evaluation of its performance. Algorithms for inference, sampling and learning in these models are underdeveloped compared those of other PGMs, hindering widerspread use.

Probabilistic Graphical models \cite{KollerFriedman2009} are used as a general framework for compactly representing high-dimensional probability distributions and encoding conditional independence relationships, for which there exist efficient algorithms for inference, sampling, and learning. They have found diverse applications in, to name a few, information extraction, medical diagnosis, speech recognition, and computational biology, and have demonstrated superior performance over earlier techniques.

One advantage of the CDN is that it allows the factors to be parameterized as copulae \cite{SilvaEtAl2011}, combining the benefits of graphical models with those of copula theory. In brief, the use of a copula parameterization enables greater modelling flexibility by separating representation of the marginals from the dependence structure, permitting more efficient and robust learning. For example, the marginals can be learnt nonparametrically by kernel density estimation, while the dependence structure learnt as a parametric copula.

Another advantage is that the CDN permits the representation of implicit latent variables, whose parameterization and connectivity are not required to be specified. Furthermore, for some tasks such as learning to rank \cite{HuangFrey2009}, survival analysis, and data censoring, directly modelling the CDF is more appropriate than modelling the density. Notwithstanding, the utility of pure CDN models is severely limited by that it can \emph{only} represent latent relationships between variables.

In Chapter \ref{copulae}, we present the basics of copula theory, the construction of two common families of copulae, and how to differentiate them with respect to an arbitrary subset of their scope and copula parameter. In Chapter \ref{pgm}, we discuss the representation of the CDN, its copula parameterization, and the copula parameterization of standard PGMs. In Chapter \ref{inference}, we explain a message passing algorithm, akin to the sum-product algorithm in standard PGMs, that efficiently calculates derivatives of the model to perform inference. In Chapter \ref{sampling}, we develop the first algorithm for sampling from CDNs. In Chapter \ref{learning}, we derive algorithms for learning the parameters of a CDN. Properties of the algorithms are evaluated by Monte Carlo simulations in Chapter \ref{experiments}, which also discusses the limitations of the model. We conclude in Chapter \ref{conclusions} with further discussion of the advantages and limitations of CDNs, and suggest future work.

Our main contributions are,
\begin{itemize}
	\item derivation of a new learning algorithm for CDNs that enables learning on higher clique width and high-dimensional models (\S\ref{sec:piecewiseLearning});
	\item development and demonstration of the first sampling algorithm for CDNs (Chapter \ref{sampling});
	\item method for performing gradient based optimization methods on CDNs parameterized with normal copulae (\S\ref{sect:partialDerivNormal} and \S\ref{sec:gradNormal}).
\end{itemize}
Lesser contributions include,
\begin{itemize}
	\item proposal of an algorithm for learning from missing completely at random (MCAR) and censored data (\S\ref{sec:cmarLearning});
	\item investigation of the properties of the algorithms with Monte Carlo simulations (Chapter \ref{experiments});
	\item stable evaluation of the Clayton copula and its partial derivatives for extremal parameters (\S\ref{sec:claytonStability}).
\end{itemize}
A substantial software library was developed in C++ during the course of this project that implements representation, inference, sampling, and learning in CDNs as outlined in this thesis. It also contains the code to construct test networks and run our experiments. Part of the code is reusable outside the CDN context; for example, construction of clique trees, message scheduling, and representation of the copulae. The library comprises over 8000 lines of source code, and was programmed in around 200 commits over August 2012--March 2013. It is available on request to the author.
  \chapter{Copula Theory}
\label{copulae}
Modelling the association between random variables is of fundamental interest in the practice of Statistics. Characterizing the relationship between variables is required for a deep understanding of stochastic phenomena, and aids accurate prediction and the identification of causal relationships.

In the bivariate case, association possesses two extremities. At one end, the variables are independent; $P(X\ |\ Y)=P(X)$. This means that knowing the value of one variable does not reduce our uncertainty about the other. At the other end, one variable is almost surely a monotonic function of the other; $P(X=f(Y))=1$. In this case, knowing the value of one variable will entirely reduce our uncertainty about the other. Of course, in any interesting case the association will lie ``somewhere in between.''

Inveterate statistics describe limited aspects of the association \cite{HoggEtAl2012}. For example, the Pearson's correlation coefficient is suitable to characterize a linear relationship between variables, and Spearman's rho is suitable for general monotonic relationships. Clearly, single statistics are inadequate to fully characterize the nature of the association. A special distribution, termed the copula \cite{Nelson2010}, will, however, permit us to specify the exact nature of the association, which we will informally term the dependence structure.

The copula function links the marginal distributions to the joint distribution and thus separates the marginals from the dependence structure. This has several advantages. In addition to allowing us to identify an explicit functional relationship of the dependence structure, we can also specify a joint distribution by estimating the marginals. The marginals can be estimated robustly and efficiently, for example, by kernel density estimation with a normal kernel. Importantly, the marginals and dependence structure can be identified separately; first the marginals are estimated, which are then used to estimate the copula.

Moreover, it permits modelling flexibility since we can ``mix and match'' marginals and copulas. Normally, when a parametric multivariate distribution is specified, the marginals are required all to be from the same family.

In this chapter, we introduce the fundamentals of copula theory. We explain precisely how a copula separates the representation of the marginals from the dependence structure. An intuitive understanding is gained from information theory. Several families of copulae, their construction, and behaviour are illustrated. Finally, we derive the formulae to differentiate two classes of copulae with respect to all subsets of their scope and copula parameter.

\emph{An original contribution of this thesis is the formulae for the partial derivatives of the parameter gradient of the normal copula, which are required for learning CDNs parameterized by normal copulae with gradient-based optimization methods. A lesser contribution is the numerically stable evaluation of the partial derivatives of the Clayton copula.}

\section{Transforming the marginals}
Our goal is to obtain an explicit representation of the dependence structure of a set of random variables. Suppose the joint distribution of these variables is given. It will determine both the dependence structure and the marginals, and this suggests that we could develop a method for separating the two.

Consider that we have a cumulative distribution function $F$ over random variables $X_1,\ldots,X_n$. Could there be some transformation of the variables that could ``normalize'' the marginals whilst leaving the dependence structure unchanged? Joint transformations affect the dependence structure; thus we consider only transforming each variable separately. Also, intuitively, the transformation must be monotonically increasing so as to preserve ranks. Normalizing the marginals will make possible comparisons between different joint distributions over the same variables.

One idea is to try the monotonically increasing transformation $U_i=F_i(X_i)$, where $F_i$ is the CDF over variable $X_i$, so that $U_i$ will have the same distribution regardless of the distribution of $X_i$,
\begin{theorem}
	Let $X$ be an arbitrary random variable and $F$ its cumulative distribution function. Then, $F(X)\sim\mathcal{U}[0,1]$.
\end{theorem}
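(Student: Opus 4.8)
The plan is to compute the CDF of the random variable $U = F(X)$ directly and show it equals the identity on $[0,1]$, which characterizes the $\mathcal{U}[0,1]$ distribution. First I would record that $F$, being a CDF, takes values in $[0,1]$, so $U$ is supported there; hence for $u < 0$ we have $P(U \le u) = 0$ and for $u \ge 1$ we have $P(U \le u) = 1$, and it remains to handle $u \in [0,1)$. For such $u$, I want to establish $P(F(X) \le u) = u$.

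The key step is the identity
\begin{equation*}
	P(F(X) \le u) = P(X \le F^{-1}(u)) = F(F^{-1}(u)) = u,
\end{equation*}
where $F^{-1}(u) = \inf\{x : F(x) \ge u\}$ is the generalized inverse (quantile function). To make the first equality rigorous without assuming $F$ is continuous or strictly increasing, I would prove the set equality $\{x : F(x) \le u\} $ and $\{x : x \le F^{-1}(u)\}$ agree up to $F$-measure zero, or more cleanly, use the standard Galois-type inequality $F(x) \ge u \iff x \ge F^{-1}(u)$ valid for right-continuous nondecreasing $F$, together with right-continuity of $F$ to get $F(F^{-1}(u)) = u$ whenever $u$ is in the range closure. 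Carrying out this case analysis carefully is the main obstacle: the clean one-line argument $P(F(X)\le u)=P(X\le F^{-1}(u))=u$ is exactly correct when $F$ is continuous and strictly increasing, but the general statement requires invoking properties of the generalized inverse and right-continuity, and one must be careful at atoms of the distribution and at the boundary point $u=1$.

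Therefore I would split the write-up into two parts. In the clean case — $F$ continuous and strictly increasing on the support — give the three-line computation above. Then remark that for general $F$ the same conclusion holds by replacing $F^{-1}$ with the quantile function and using the equivalence $\{F(X) \le u\} = \{X \le F^{-1}(u)\}$ up to a null set, citing that $P(F(X) < u) \le u \le P(F(X) \le u)$ forces $P(F(X)\le u) = u$ for all but at most countably many $u$, and continuity of the limiting CDF then pins it down everywhere. Since for the copula construction that follows we will only apply this to continuous marginals, I would state and prove it under the continuity hypothesis and relegate the general case to a remark. The conclusion is that the CDF of $U$ is $G(u) = \max(0,\min(u,1))$, i.e. $U \sim \mathcal{U}[0,1]$.
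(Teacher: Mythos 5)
Your core computation is exactly the paper's proof: for $u\in[0,1]$ one writes $P(F(X)\le u)=P(X\le F^{-1}(u))=F(F^{-1}(u))=u$, justified by monotonicity of $F$. The paper, like your ``clean case,'' implicitly works under the assumption (made explicit in the surrounding text, where the marginals are taken strictly increasing and continuous) that $F^{-1}$ exists and $F(F^{-1}(u))=u$, so on the statement as intended your proposal is correct and essentially identical in approach; your explicit handling of $u<0$, $u\ge1$ and your use of the generalized inverse are just a more careful write-up of the same argument.

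However, the closing remark about the general case is wrong. If $F$ has atoms the conclusion fails outright, not merely on a countable exceptional set: for $X\sim\mathrm{Bernoulli}(1/2)$ the variable $F(X)$ takes only the values $1/2$ and $1$, so $P(F(X)\le u)=0\neq u$ for every $u\in(0,1/2)$, an uncountable set. The inequality you invoke, $u\le P(F(X)\le u)$, is false in general; the correct general fact is the reverse, $P(F(X)\le u)\le u$ for all $u\in[0,1]$, with equality only for $u$ in the closure of the range of $F$, so in general $F(X)$ merely dominates $\mathcal{U}[0,1]$ stochastically. Since you explicitly restrict the actual proof to continuous $F$ (which is all the copula construction requires), this does not invalidate your main argument, but the remark should be deleted or replaced by the observation that continuity of $F$ is genuinely necessary for uniformity of $F(X)$.
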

\begin{proof}
	For $u\in[0,1]$,
	\begin{align*}\vspace{-1cm}
		P(F(X)\le u) &= P(X\le F^{-1}(u))\ \ \ \textnormal{(since $F$ is increasing)}\\
		&= F(F^{-1}(u))\\
		&= u.
	\end{align*}
	\end{proof}
The transformation has ``removed'' the information content of the marginals from the joint distribution, so only the dependence structure remains; the marginals have zero entropy,
\begin{cor}
	The differential entropy of $U_i=F_i(X_i)$ is zero.
\end{cor}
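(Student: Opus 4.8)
The plan is to combine the previous theorem with the definition of differential entropy. By the theorem just proved, $U_i = F_i(X_i) \sim \mathcal{U}[0,1]$, so its density is $f_{U_i}(u) = 1$ for $u \in [0,1]$ and $0$ otherwise. The differential entropy of a continuous random variable $U$ with density $f$ is
\begin{equation*}
	h(U) = -\int_{-\infty}^{\infty} f(u)\log f(u)\, du.
\end{equation*}
First I would substitute the uniform density into this formula, so that the integral reduces to $-\int_0^1 1 \cdot \log 1\, du$.

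Since $\log 1 = 0$, the integrand vanishes identically on $[0,1]$, and the contribution from outside $[0,1]$ is zero because $f_{U_i}$ is supported on $[0,1]$. Hence $h(U_i) = 0$, which is exactly the claim. The whole argument is essentially a one-line substitution once the theorem is invoked.

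There is really no serious obstacle here; the statement is a direct corollary. The only point requiring a word of care is that this is \emph{differential} entropy (the continuous analogue), not Shannon entropy, so the value $0$ is not asserting that $U_i$ carries no information in an absolute sense — rather, it serves as the normalized reference point against which the information content of the original marginals has been ``removed.'' If one wanted to be fully rigorous, I would also note that the integral is well-defined (the integrand is $0$ almost everywhere on the support and the support has finite measure), so no convergence issues arise. I would keep the proof to two or three lines in the final text.
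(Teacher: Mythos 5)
Your proposal is correct and matches the paper's proof exactly: both substitute the uniform density $f(u)=1$ on $[0,1]$ into the differential entropy integral and observe that $\log 1 = 0$ makes the integral vanish. The extra remarks on well-definedness and the distinction from Shannon entropy are fine but not needed.
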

\begin{proof}
	\begin{align*}
		H[U_i] &= -\int f(u)\log(f(u))du = -\int1\log(1)du = 0.
	\end{align*}
\end{proof}
We will formalize this idea in the sequel.

Thus, according to our informal reasoning, the distribution $C$ of $\{U_i\}$ contains the dependence structure. This function is given a special name,
\begin{defn}
An $n$-\emph{copula} is an $n$-dimensional CDF for which the marginals are uniformly distributed on $[0,1]$.
\end{defn}
Assuming the CDFs of the marginals are strictly increasing and continuous, the copula takes the form,
\begin{align}
	C(u_1,\ldots,u_n) &= F(F^{-1}_1(u_1),\ldots,F^{-1}_n(u_n))\nonumber \\
	\Rightarrow\ F(x_1,\ldots,x_n) &= C(F(x_1),\ldots,F(x_n)).\label{eqn:copulaDecomposition}
\end{align}
We have thereby decomposed the representation of the joint distribution into the marginals and a function of the marginals encoding the dependence structure. Crucially for modelling, the converse also holds. That is, given $n$ marginals $\{F_i\}$ and a copula $C$, \eqref{eqn:copulaDecomposition} specifies a valid distribution $F$.

Therefore, this method gives us a ``recipe'' for constructing a distribution with given dependence structure and marginals.

\section{Sklar's theorem}\label{sec:sklar}
The informal discussion above is made exact by the following theorem due to Sklar \cite{Sklar1959}.
\begin{theorem}[Sklar]
Let $F$ be an $n$-dimensional distribution function with margins $F_1,\ldots,F_n$. Then there exists a copula $C$ such that,
\begin{align*}
	F(x_1,\ldots,x_n) &= C(F_1(x_1),\ldots,F_n(x_n)),
\end{align*}
for all $x$ in $\overline{\mathbb{R}}^n\equiv(\mathbb{R}\cup\{\pm\infty\})^n$. If $F_1,\ldots,F_n$ are all continuous, then $C$ is unique; otherwise, $C$ is uniquely determined on $\textnormal{Ran}F_1\times\cdots\times\textnormal{Ran}F_n$, where $\textnormal{Ran}(F)$ denotes the range of $F$.

Conversely, if $C$ is an $n$-dimensional copula, and $F_1,\ldots,F_n$ are distribution functions, then the function $F$ defined above is an $n$-dimensional distribution function with margins $F_1,\ldots,F_n$.
\end{theorem}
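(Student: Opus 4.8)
The plan is to separate the two directions and, within the existence half, to treat continuous margins first and then the general case. For the converse I would take a copula $C$ and univariate distribution functions $F_1,\ldots,F_n$, set $H(x_1,\ldots,x_n)=C(F_1(x_1),\ldots,F_n(x_n))$, and verify directly the characterizing properties of an $n$-dimensional distribution function. Right-continuity, $H\to 0$ as some $x_i\to-\infty$, and $H\to 1$ as every $x_i\to+\infty$ follow from the corresponding properties of $C$ and of the $F_i$ (using $F_i(-\infty)=0$, $F_i(+\infty)=1$). The one point deserving care is $n$-increasingness: since each $F_i$ is non-decreasing, the $n$-th order rectangle difference of $H$ over a box $[a,b]$ equals the rectangle difference of $C$ over $\prod_{i=1}^n[F_i(a_i),F_i(b_i)]$, which is nonnegative because $C$ is $n$-increasing; the margins of $H$ are recovered as the $F_i$ by sending all but one coordinate to $+\infty$ and using that $C$ has uniform margins. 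The identity then extends to all of $\overline{\mathbb{R}}^n$ by the usual conventions at $\pm\infty$.

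For existence with every $F_i$ continuous, I would make \eqref{eqn:copulaDecomposition} rigorous via the generalized inverse $F_i^{(-1)}(u)=\inf\{t\in\mathbb{R}:F_i(t)\ge u\}$, setting $C(u_1,\ldots,u_n)=F\bigl(F_1^{(-1)}(u_1),\ldots,F_n^{(-1)}(u_n)\bigr)$. The theorem above (that $F_i(X_i)\sim\mathcal{U}[0,1]$ when $F_i$ is continuous), together with the identity $F_i(F_i^{(-1)}(u))=u$, shows that $C$ has uniform margins, hence is a copula, and that $C(F_1(x_1),\ldots,F_n(x_n))=F(x_1,\ldots,x_n)$. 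Since $\textnormal{Ran}F_i=[0,1]$ in this case, this pins $C$ down on all of $[0,1]^n$, which is the uniqueness claim.

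The crux is the general case, in which some $F_i$ may have jumps and flat stretches; this is where the real work lies. I would use the distributional transform: on a probability space carrying $X=(X_1,\ldots,X_n)\sim F$, adjoin independent variables $V_1,\ldots,V_n\sim\mathcal{U}[0,1]$, independent of $X$, and set $U_i=F_i(X_i-)+V_i\bigl(F_i(X_i)-F_i(X_i-)\bigr)$. The two lemmas to establish are that each $U_i$ is exactly $\mathcal{U}[0,1]$ (interpolating linearly across each jump of $F_i$ removes the atoms) and that $F_i^{(-1)}(U_i)=X_i$ almost surely, equivalently $\{U_i\le F_i(x_i)\}=\{X_i\le x_i\}$ up to a null set. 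Letting $C$ be the joint distribution function of $(U_1,\ldots,U_n)$, a copula by the first lemma, the second lemma gives
\begin{align*}
	C\bigl(F_1(x_1),\ldots,F_n(x_n)\bigr) &= P\bigl(U_1\le F_1(x_1),\ldots,U_n\le F_n(x_n)\bigr) \\
	&= P\bigl(X_1\le x_1,\ldots,X_n\le x_n\bigr) = F(x_1,\ldots,x_n).
\end{align*}
For $u_i\in\textnormal{Ran}F_i$ we may write $u_i=F_i(x_i)$, so this identity determines $C$ on $\textnormal{Ran}F_1\times\cdots\times\textnormal{Ran}F_n$, giving the asserted uniqueness there.

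I expect the main obstacle to be precisely this discontinuous case: checking that the randomized transform $U_i$ is uniform and that the events $\{U_i\le F_i(x_i)\}$ and $\{X_i\le x_i\}$ agree up to null sets when $F_i$ has atoms. An alternative, more combinatorial route — the multilinear interpolation argument of Schweizer and Sklar — trades this for two other tasks: first the Lipschitz bound $|F(a)-F(b)|\le\sum_{i=1}^n|F_i(a_i)-F_i(b_i)|$, needed to see that $C$ is even well defined on $\textnormal{Ran}F_1\times\cdots\times\textnormal{Ran}F_n$ by $C(F_1(x_1),\ldots,F_n(x_n))=F(x)$; and then showing that extending $C$ from the closure of that set to $[0,1]^n$ by multilinear interpolation on the gaps preserves $n$-increasingness. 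Either way, the converse and the continuous case are routine, and the discontinuous existence step carries the weight.
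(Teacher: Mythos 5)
The paper itself gives no proof of this theorem: it is stated as a classical result with a citation to Sklar (1959), and the only supporting argument in the text is the informal inversion identity \eqref{eqn:copulaDecomposition}, derived under the assumption that the margins are continuous and strictly increasing. Your proposal therefore supplies something the paper deliberately omits, and it does so along standard, correct lines: the converse by direct verification of the distribution-function axioms (the reduction of $n$-increasingness of $H$ to the $C$-volume of $\prod_i[F_i(a_i),F_i(b_i)]$ is the right observation), the continuous case by the probability/quantile transform, and the general case by the distributional transform $U_i=F_i(X_i-)+V_i\bigl(F_i(X_i)-F_i(X_i-)\bigr)$, which is R\"uschendorf's proof. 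The two lemmas you isolate are exactly the load-bearing ones, and the null-set claim has the right shape: on $\{X_i>x_i\}$ one has $U_i\ge F_i(x_i)$, with equality possible only when $X_i$ lands in a flat stretch of $F_i$ at level $F_i(x_i)$ with no atom, an event of probability zero. Two small points to tighten: in the continuous case $\textnormal{Ran}\,F_i=[0,1]$ only under the extended-reals convention of the statement (otherwise $\textnormal{Ran}\,F_i\supseteq(0,1)$ and you extend to $[0,1]^n$ using the Lipschitz continuity of copulas); and the identity $C(F_1(x_1),\ldots,F_n(x_n))=F(x)$ via generalized inverses needs the one-line remark that $F$ is constant in the $i$-th coordinate across any interval on which $F_i$ is flat (essentially the Lipschitz bound you mention in the alternative route), which is also what makes $C$ well defined on $\textnormal{Ran}F_1\times\cdots\times\textnormal{Ran}F_n$. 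With those remarks your outline is a complete and correct plan, and it goes beyond anything actually argued in the paper, which only motivates the continuous, strictly increasing case.
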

The copula contains the dependence structure in the sense that it links the marginals to the joint distribution. It suffices to understand the copula as a distribution with uniform marginals.

\section{Frech\'{e}t-Hoeffding bounds}
It is possible to provide a bounds on the copula,
\begin{theorem}
	If $C$ is an $n$-copula, then for every $\mathbf{u}\in\textnormal{Dom}C$,
	\begin{align*}
		W^n(\mathbf{u}) &\le C(\mathbf{u})\ \le\ M^n(\mathbf{u}).
	\end{align*}
	where,
	\begin{align*}
		M^n(\mathbf{u}) &= \textnormal{min}(u_1,\ldots,u_n)\ \ \textnormal{and}\\
		W^n(\mathbf{u}) &= \textnormal{max}(u_1+\cdots+u_n-n+1,0),
	\end{align*}
	are referred to as the upper and lower Frech\'{e}t-Hoeffding bounds, respectively.
\end{theorem}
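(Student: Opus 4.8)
The plan is to establish the two inequalities $C(\mathbf{u}) \le M^n(\mathbf{u})$ and $C(\mathbf{u}) \ge W^n(\mathbf{u})$ separately, using only the defining properties of a copula (grounding, $n$-increasingness, and uniform margins). For the upper bound, I would exploit monotonicity of a CDF in each coordinate: since a copula $C$ is the joint CDF of some random vector $(U_1,\ldots,U_n)$ with uniform margins, the event $\{U_1 \le u_1, \ldots, U_n \le u_n\}$ is contained in each event $\{U_i \le u_i\}$, so $C(\mathbf{u}) = P\big(\bigcap_i \{U_i \le u_i\}\big) \le \min_i P(U_i \le u_i) = \min_i u_i = M^n(\mathbf{u})$. (Equivalently, without the probabilistic language, one argues directly that an $n$-increasing grounded function with uniform margins is nondecreasing in each argument, hence $C(\mathbf{u}) \le C(\ldots, u_{i-1}, 1, u_{i+1}, \ldots) = u_i$ for each $i$.)

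**For the lower bound**, the cleanest route is again probabilistic, via the union bound (Bonferroni). Writing $C(\mathbf{u}) = P\big(\bigcap_i \{U_i \le u_i\}\big) = 1 - P\big(\bigcup_i \{U_i > u_i\}\big)$, I would bound $P\big(\bigcup_i \{U_i > u_i\}\big) \le \sum_i P(U_i > u_i) = \sum_i (1 - u_i) = n - \sum_i u_i$, which yields $C(\mathbf{u}) \ge 1 - n + \sum_i u_i$. Combined with the trivial bound $C(\mathbf{u}) \ge 0$ (grounding plus monotonicity), this gives $C(\mathbf{u}) \ge \max\big(\sum_i u_i - n + 1,\, 0\big) = W^n(\mathbf{u})$.

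**Alternatively**, if I wanted to avoid invoking a probability space and work purely analytically, for the lower bound I would proceed by induction on $n$ or by a telescoping argument: apply the $n$-increasing property to the box $[\mathbf{u}, \mathbf{1}]$, expand the resulting alternating sum of $C$-values at the vertices, and use the margin conditions to evaluate the terms where at least one coordinate equals $1$. This shows the alternating sum equals $C(\mathbf{u})$ minus a sum of margin values, and nonnegativity of the sum over the box delivers the inequality after rearrangement.

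**I expect the lower bound to be the main obstacle**, since the upper bound is essentially immediate from monotonicity, whereas $W^n$ requires either the union-bound step or careful bookkeeping of the inclusion–exclusion expansion over the $2^n$ vertices of the box $[\mathbf{u},\mathbf{1}]$. A secondary subtlety worth flagging: $W^n$ is itself a genuine copula only for $n=2$, so for $n \ge 3$ the lower bound is not tight, and the proof should not (and need not) claim attainability — it establishes $W^n$ merely as a valid lower envelope. I would also briefly note at the start that it suffices to treat $\mathbf{u} \in [0,1]^n$, since outside the unit cube the inequalities either reduce to this case by the margin conventions or hold trivially.
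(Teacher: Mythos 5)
The paper itself offers no proof of this theorem: it is quoted as a classical result of copula theory (the Fr\'{e}ch\'{e}t--Hoeffding bounds, standard in \cite{Nelson2010}), so there is no in-paper argument to compare yours against. Your proposal is correct and is the standard textbook proof: given the paper's definition of a copula as a CDF with uniform $[0,1]$ margins, the upper bound follows from monotonicity of measures ($\bigcap_i\{U_i\le u_i\}\subseteq\{U_i\le u_i\}$ for each $i$), and the lower bound from the union bound $P\bigl(\bigcup_i\{U_i>u_i\}\bigr)\le\sum_i(1-u_i)$ together with $C\ge 0$; your purely analytic alternative via the $n$-increasing property on the box $[\mathbf{u},\mathbf{1}]$ is also sound. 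Your remark that $W^n$ is a copula only for $n=2$, so the lower bound need not be attained by a single copula for $n\ge 3$, is consistent with the paper's surrounding discussion (which notes $W^n$ is nonetheless pointwise best possible). No gaps.
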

The upper bound $M^n$ is a copula for all $n$, whereas $W^n$ fails to be a copula for $n>2$. Notwithstanding, $W^n$ is the best possible lower bound in the sense that for $n\ge3$, and all $\mathbf{u}\in[0,1]^n$, there is an $n$-copula $C$ for which $C(\mathbf{u})=W^n(\mathbf{u})$.

Another important copula is the independence copula,
\begin{align*}
	\Pi^n(\mathbf{u}) &= \prod^n_{i=1}u_i.
\end{align*}
It should be clear that this is a valid copula since the margins are uniform on $[0,1]$ and that it expresses independence between the variables; a distribution with the dependence structure of the independence copula can be written,
\begin{align*}
	F(x_1,\ldots,x_n) &= \Pi^n(F_1(x_1),\ldots,F_n(x_n)) \\
	&= F_1(x_1)\cdots F_n(x_n).
\end{align*}
\begin{ex}
	The level curves of the bivariate copulae $M^2$, $W^2$, and $\Pi^2$ are graphed in Figure \ref{fig:copulaBounds} along with samples from the corresponding copulae distributions. One sees that the bounds represent two extremes: the upper bound represents perfectly positive dependence---a comonotonic relation---between the variables, and the lower bound represents perfect negative dependence---a countermonotonic relation. Between these, we have the intermediate relation of independence. That the lower bounds fails to be a copula for $n>2$ is equivalent to the statement that a countermonotonic relation cannot be defined between more than two variables.
\end{ex}
\begin{figure}[t]
  \centering
  \includegraphics[scale=0.5]{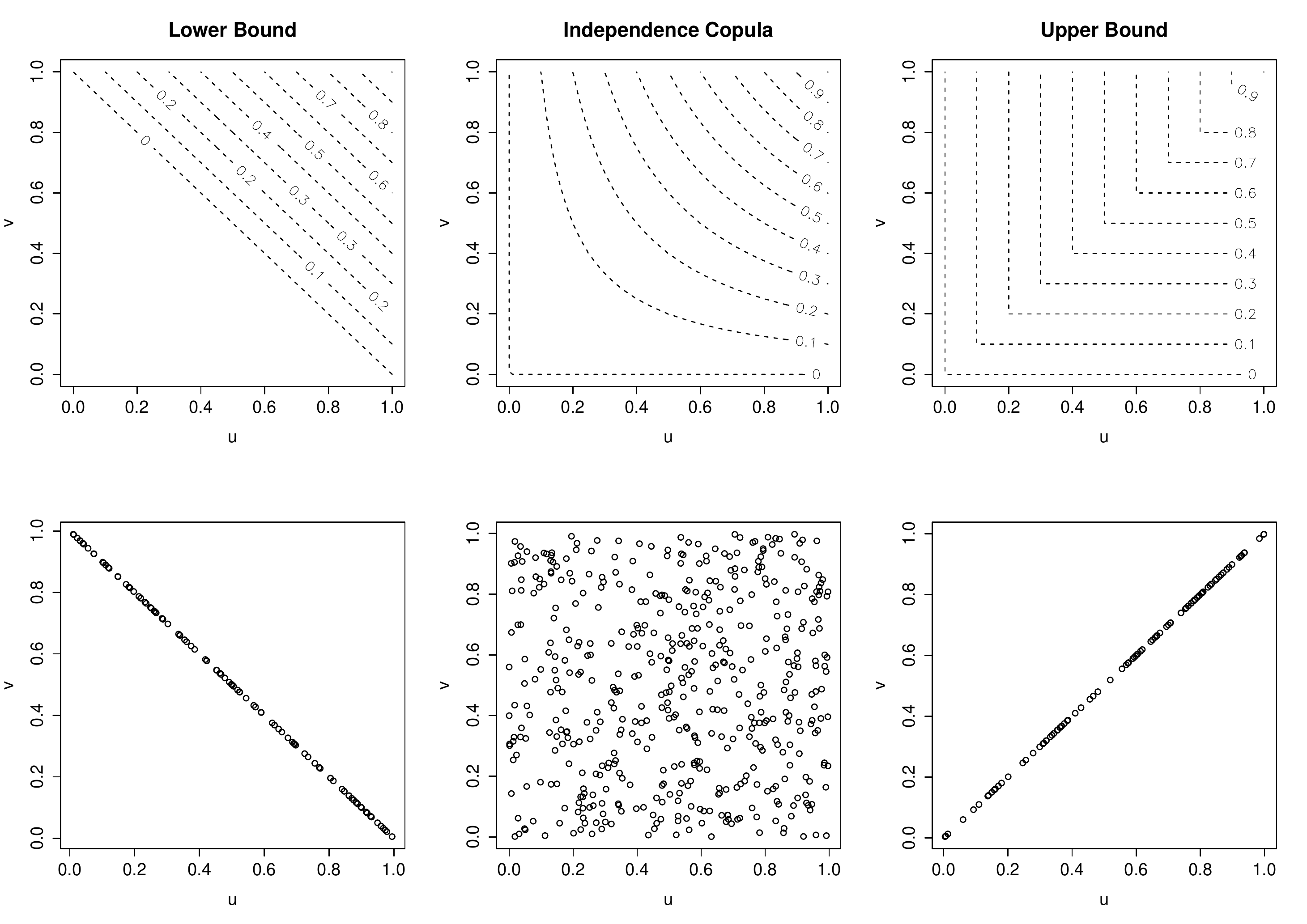}
	\caption[Level curves and samples from three copulae expressing extreme bivariate relationships.]{Level curves and samples from three copulae expressing extreme relationships: the bivariate upper ($M^2$) and lower ($W^2$) bounds, and the independence ($\Pi^2$) copula.}
	\label{fig:copulaBounds}
\end{figure}

\section{Information-theoretic perspective}
It is interesting to examine copulas from an information-theoretic perspective \cite{CalsaveriniVicente2009}. Consider that we are examining the relationship between two random variables $X$ and $Y$. The mutual information can be regarded as the average reduction in the uncertainty in $X$ given knowledge of $Y$ \cite{CoverThomas2006},
\begin{align*}
	I[X,Y] &= H[X]-H[X\ |\ Y],
\end{align*}
where $H[X]$ is the entropy,
\begin{align*}
	H[X] &= -\int f(x)\ln(f(x))dx,
\end{align*}
and $H[X\ |\ Y]$ is the conditional entropy,
\begin{align*}
	H[X\ |\ Y] &= -\iint f(x,y)\ln(f(x\ |\ y))dxdy.
\end{align*}
Alternatively, the mutual information can be viewed as the ``distance'' of the joint distribution to a distribution where the variables are independent,
\begin{align*}
	I[X,Y] &= D[f(x,y)\ ||\ f(x)f(y)]\\
	&= \int f(x,y)\log\left(\frac{f(x,y)}{f(x)f(y)}\right)dxdy,
\end{align*}
where $D[p||q]$ denotes the relative entropy (also known as the KL-divergence) of $p$ from $q$. 

Making the substitutions $u=F_X(x)$ and $v=F_Y(y)$ (so that $du=f_X(x)dx$, etc.) the mutual information is re-expressed,
\begin{align*}
	I[X,Y] &= \iint_{[0,1]^2}c(u,v)\log(c(u,v))dudv\\
	&= -H[U,V]\\
	&\equiv -H[C].
\end{align*}
Thus, the mutual information between $X$ and $Y$ is equal to the negative entropy of the copula distribution, which we have defined as the \emph{copula entropy}, $H[C]$. Importantly, the mutual information is a function only of the copula.

The mutual information will be minimized, or rather, knowledge of $Y$ will provide no reduction in the uncertainty of $X$ when the variables are independent. 

This allows us to decompose the information content of the joint distribution,
\begin{align*}
	H[X,Y]&= H[X]+H[Y]-I[X,Y]\\
	&= H[X]+H[Y]+H[C]\\
	\Rightarrow\ H[C] &= H[X,Y]-H[X]-H[Y].
\end{align*}
We have shown that the information content of the copula is equal to the difference of the information content in the joint distribution from that of the marginals---a result that generalizes to higher dimension---thus validating our intuition that the copula contains only the information content of the dependence structure.

\section{Construction}
Although there are several methods for constructing copulae, including geometric and algebraic ones, we discuss only those that generalize to dimensions higher than two.

\subsection{Inversion Method}
Sklar's Theorem (\S\ref{sec:sklar}) suggests the most obvious way to produce copulae. The dependence structure of a multivariate distribution is extracted by inverting Sklar's Theorem as follows. Given a distribution $F(x_1,\ldots,x_n)$ with invertible marginal CDFs, one forms the copula,
\begin{align*}
	C(u_1,\ldots,u_n) &= F(F_1^{-1}(u_1),\ldots,F_n^{-1}(u_n)),
\end{align*}
which then becomes a standard element to be combined with arbitrary marginals.

Three common copula families produced by this method are those based on the multivariate normal, Student, and Dirichlet distributions \cite{Lewandowski2008}. In this thesis, we consider only the multivariate normal copula parameterized with a single parameter. Without loss of generality, we set $F_i$ to be distributed as $N[0,1]$, so that the covariance matrix, $\Sigma$, has a unit diagonal and the copula parameters are the off diagonal elements of $\Sigma$---the correlations between variables.

\subsection{The normal copula}\label{sec:normalCopula}
Representing a normal copula that has a scope over more than two variables with more than one parameter complicates learning. In this case, the parameters cannot be modified independently, as modifying one parameter whilst holding the others fixed may spoil the positive-definiteness of the covariance matrix. Also, representing and manipulating a model composed of factors with a varying number of parameters complicates our software implementation.

One solution is to reparameterize the covariance matrix. For example, when $n=3$, choose $\alpha_1,\alpha_2,\alpha_3\in(-1, 1)$ and set,
\begin{align*}
	\Sigma &=
		\begin{pmatrix}
			1 & \alpha_1\alpha_2 & \alpha_1\alpha_3 \\
			\alpha_1\alpha_2 & 1 & \alpha_2\alpha_3 \\
			\alpha_1\alpha_3 & \alpha_2\alpha_3 & 1
		\end{pmatrix}.
\end{align*}
It can be shown that $\Sigma$ is positive definite for all choices of the parameters.

For convenience, and to enable comparison with Archimedean copulae, we desire to parameterize the covariance matrix with a single parameter. Our approach is to equate all covariances to $\rho$. We determined the following theorem, which gives the domain for $\rho$,
\begin{theorem}
	Consider the covariance matrix
	\begin{align*}
		\Sigma_\rho &= I_{n\times n} + \rho\left(\mathbf{1}_n\mathbf{1}^T_n-I_{n\times n}\right),
	\end{align*}
	that is, with unit variance for each variable and correlation $\rho$ between each pair of variables.
	
	Then $\Sigma_\rho\succ0$ if and only if $\rho\in(-1/(n-1),1)$.
\end{theorem}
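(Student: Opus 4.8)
The plan is to diagonalize $\Sigma_\rho$ explicitly, exploiting that it is a rank-one perturbation of a multiple of the identity. First I would rewrite
\begin{align*}
	\Sigma_\rho &= (1-\rho)\,I_{n\times n} + \rho\,\mathbf{1}_n\mathbf{1}^T_n,
\end{align*}
which simply regroups the diagonal and off-diagonal contributions. The matrix $\mathbf{1}_n\mathbf{1}^T_n$ is real symmetric of rank one, with single nonzero eigenvalue $n$ (eigenvector $\mathbf{1}_n$) and eigenvalue $0$ of multiplicity $n-1$ on the orthogonal complement $\{\mathbf{v}:\mathbf{1}^T_n\mathbf{v}=0\}$.

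Next I would read off the spectrum of $\Sigma_\rho$ from this decomposition: on $\operatorname{span}(\mathbf{1}_n)$ the eigenvalue is $(1-\rho) + \rho n = 1 + (n-1)\rho$, and on the $(n-1)$-dimensional orthogonal complement the eigenvalue is $1-\rho$. Since $\Sigma_\rho$ is real symmetric, it is positive definite precisely when all eigenvalues are strictly positive, i.e.\ when $1 + (n-1)\rho > 0$ and $1-\rho > 0$ hold simultaneously. The first inequality gives $\rho > -1/(n-1)$, the second gives $\rho < 1$, and intersecting them yields $\Sigma_\rho \succ 0$ iff $\rho \in (-1/(n-1),1)$, as claimed.

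There is no genuinely hard step; the only points warranting care are that the two eigenvalue conditions are independent constraints for $n\ge 2$ (neither implies the other), so the admissible set is truly the intersection rather than a half-line, and that the bound degenerates correctly to the familiar $\rho\in(-1,1)$ when $n=2$. An alternative route, if one prefers to avoid eigenvectors, is to expand $\det(\Sigma_\rho)$ directly (it factors as $(1-\rho)^{n-1}\bigl(1+(n-1)\rho\bigr)$) and apply Sylvester's criterion to the leading principal minors, each of which is itself a $\Sigma_\rho$-type matrix of smaller order; but the rank-one-perturbation argument is the cleanest.
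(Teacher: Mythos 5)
Your proof is correct and follows essentially the same route as the paper: both diagonalize $\Sigma_\rho$ by exploiting the rank-one structure of $\mathbf{1}_n\mathbf{1}^T_n$ (eigenvalue $n$ with eigenvector $\mathbf{1}_n$, eigenvalue $0$ with multiplicity $n-1$), obtaining the eigenvalues $1+(n-1)\rho$ and $1-\rho$ and requiring both to be positive. The only cosmetic difference is that the paper shifts the characteristic equation to $\det\bigl(\rho\mathbf{1}_n\mathbf{1}_n^T-(\lambda+\rho-1)I\bigr)=0$ while you write $\Sigma_\rho=(1-\rho)I+\rho\mathbf{1}_n\mathbf{1}_n^T$ directly, which amounts to the same computation.
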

\begin{proof}
	We need to determine for which $\rho$ all eigenvalues of $\Sigma_\rho$ are positive. The eigenvalues $\{\lambda_i\}$ of $\Sigma_\rho$ are the solutions to,
	\begin{align}
		0 &= \det\left(\Sigma_\rho - \lambda I\right)\nonumber \\
		\Rightarrow 0 &= \det\left(\rho\mathbf{1}_n\mathbf{1}_n^T - \left(\lambda + \rho - 1\right)I_{n\times n}\right).\label{eqn:eigenvalue}
	\end{align}
	First, we note that the eigenvalues $\{\lambda'_i\}$ of $\rho\mathbf{1}_n\mathbf{1}^T_n$ are $\rho n$ with multiplicity $1$, and $0$ with multiplicity $n - 1$. This should be clear, since $\textnormal{Rank}\left(\mathbf{1}_n\mathbf{1}^T_n\right)=1$, and $\rho\mathbf{1}_n\mathbf{1}^T_n\mathbf{1}_n=\rho n\mathbf{1}_n$.

Thus, from \eqref{eqn:eigenvalue},
	\begin{align*}
		\lambda'_1 &= \lambda_1 + \rho - 1 = \rho n \\
		\Rightarrow\ \lambda_1 &= (n-1)\rho + 1.
	\end{align*}
Similarly,
	\begin{align*}
		\lambda_2 &= \cdots = \lambda_n  = 1 - \rho,
	\end{align*}
from which the proposition follows.
\end{proof}

\begin{ex}
	Levels curves and samples from the bivariate normal copula are graphed for varying values of the dependence parameter in Figure \ref{fig:copulaNormal}. As $\rho\rightarrow-1$, we see that the copula approaches the lower bound, and similarly as $\rho\rightarrow1$ it approaches the upper bound. The parameter $\rho=0$ corresponds to the independence copula. Thus, the normal copula is able to represent a spectrum of dependencies between the two bounds
\end{ex}
\begin{figure}[p]
  \centering
  \includegraphics[scale=0.52]{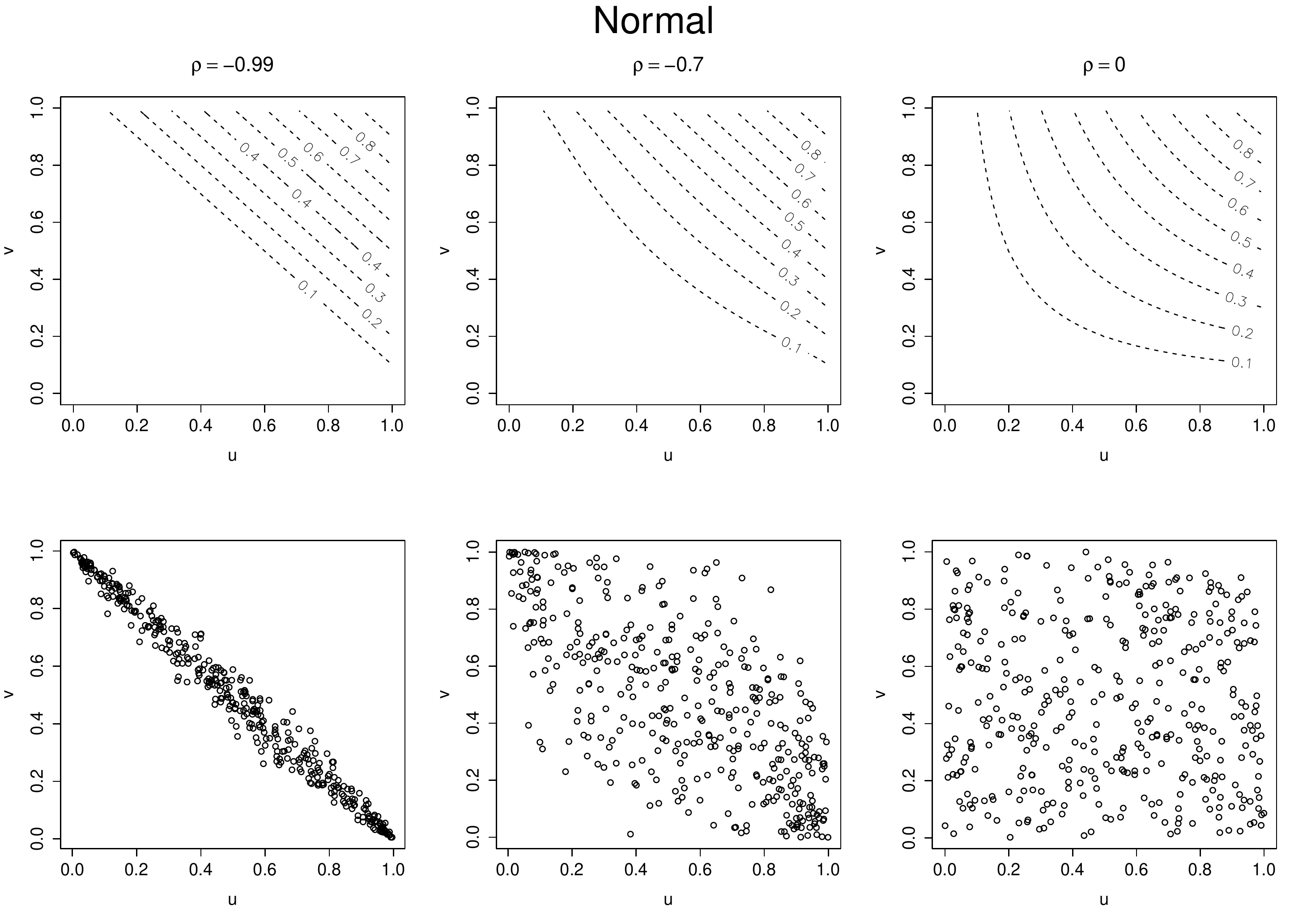} \\[6pt]
	\includegraphics[scale=0.52]{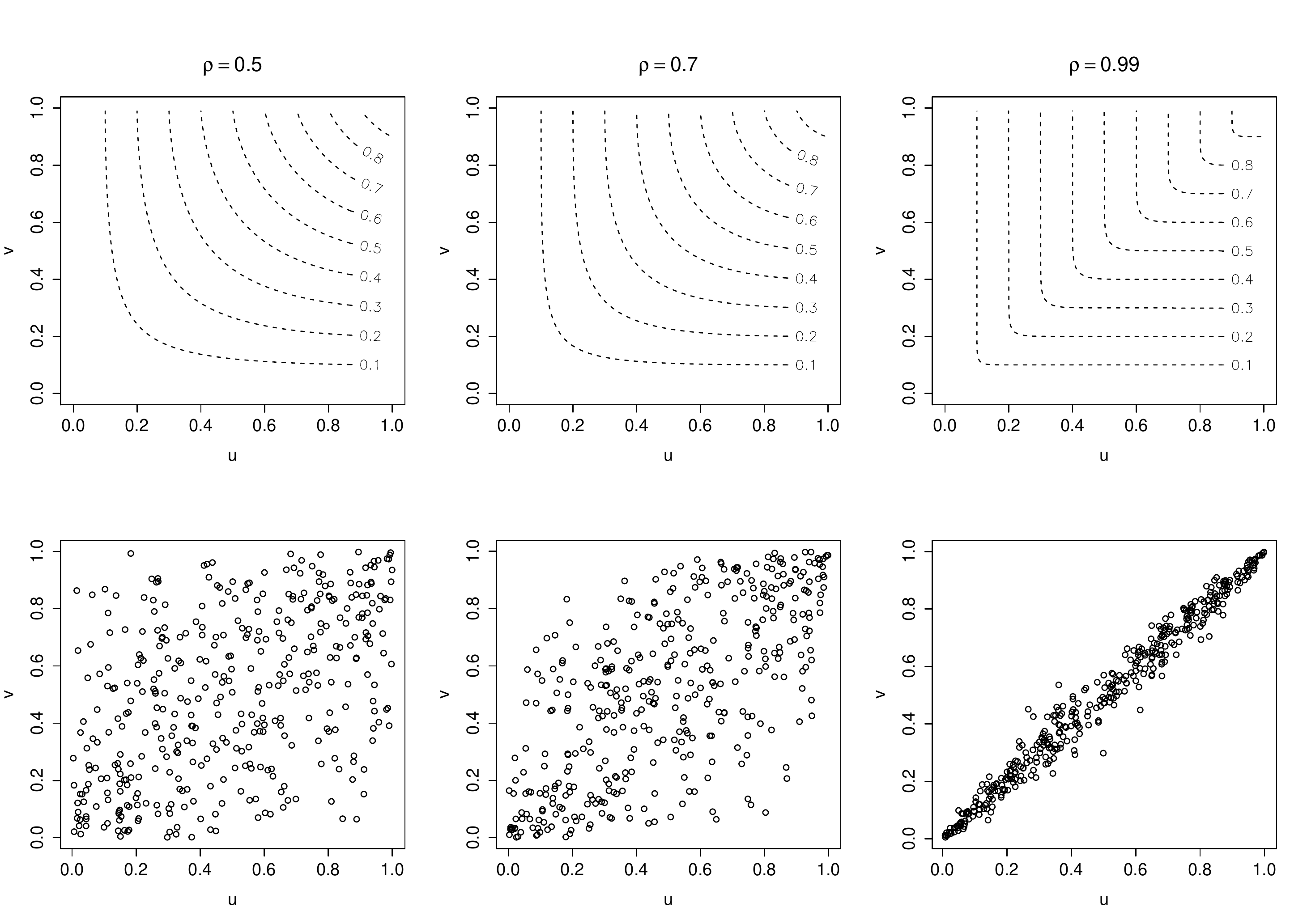}
	\caption[Level curves and samples from the bivariate normal copula.]{Level curves and samples from the bivariate normal copula for varying values of the parameter.}
	\label{fig:copulaNormal}
\end{figure}

\subsection{Archimedean Copulas}
Archimedean copulas are those that admit the representation,
\begin{align*}
	C(u_1,\ldots,u_n) &= \varphi(\varphi^{-1}(u_1) + \cdots + \varphi^{-1}(u_n)),
\end{align*}
for some generator function $\varphi$ and its inverse $\varphi^{-1}$.

A necessary and sufficient condition that a generator function must satisfy to define a valid copula \emph{of arbitrary dimension} is given by the following theorem,
\begin{theorem}[\cite{Kimberling1974}]
	Let $\varphi:[0,1]\rightarrow[0,\infty]$ be a continuous strictly decreasing function such that $\varphi(0)=\infty$ and $\varphi(1)=0$. The function
\begin{align*}
	C^n(\mathbf{u}) &= \varphi^{-1}(\varphi(u_1)+\cdots+\varphi(u_n))
\end{align*}
	defines	a valid $n$-copula for all $n\ge2$ if and only if
\begin{align}\label{eqn:montone}
	0 &\le (-1)^k\frac{d^k}{dt^k}\varphi^{-1}(t)
\end{align}
	for $k=0,1,2,\ldots$ and $t\in(0,\infty)$. That is, if and only if $\varphi$ is \emph{completely monotone}.
\end{theorem}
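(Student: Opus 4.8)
Both implications funnel through one combinatorial identity about iterated finite differences, so the plan is to set things up to exploit that. Write $\psi:=\varphi^{-1}:[0,\infty]\to[0,1]$; it is continuous and strictly decreasing with $\psi(0)=1$ (since $\varphi(1)=0$) and $\psi(\infty)=0$ (since $\varphi(0)=\infty$), and condition~\eqref{eqn:montone} is precisely the assertion that $\psi$ is completely monotone on $(0,\infty)$. Recall that a map $[0,1]^n\to[0,1]$ is an $n$-copula exactly when it is grounded (it vanishes if any argument is $0$), has uniform one-dimensional margins, and is $n$-increasing (every $n$-box has nonnegative volume). For $C^n(\mathbf u)=\psi\bigl(\varphi(u_1)+\cdots+\varphi(u_n)\bigr)$ the first two are immediate --- $u_i=0\Rightarrow\varphi(u_i)=\infty\Rightarrow C^n=\psi(\infty)=0$, and $C^n(1,\dots,u_i,\dots,1)=\psi(\varphi(u_i))=u_i$ --- so in either direction the whole content is the $n$-increasing condition, which I would convert into a statement about differences of $\psi$.

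The key step: given a box $\prod_i[a_i,b_i]\subseteq[0,1]^n$, put $h_i:=\varphi(a_i)-\varphi(b_i)\ge0$ (nonnegative because $\varphi$ decreases) and $x:=\sum_i\varphi(b_i)\ge0$. Summing $C^n$ over the $2^n$ vertices of the box with the alternating signs that define its volume gives
\[
\sum_{L\subseteq\{1,\dots,n\}}(-1)^{|L|}\,\psi\Bigl(x+{\textstyle\sum_{i\in L}}h_i\Bigr)=(-1)^n\bigl(\Delta_{h_1}\!\cdots\Delta_{h_n}\psi\bigr)(x),
\]
where $\Delta_h g(t)=g(t+h)-g(t)$; the left side is exactly the $C^n$-volume of the box. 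So $C^n$ is $n$-increasing iff $(-1)^n(\Delta_{h_1}\!\cdots\Delta_{h_n}\psi)(x)\ge0$ for every $(x,h_1,\dots,h_n)$ arising from an admissible box. Since $C^n(u_1,\dots,u_k,1,\dots,1)=C^k(u_1,\dots,u_k)$, validity of $C^n$ forces validity of $C^k$ for all $k\le n$; and as $\varphi$ is a continuous decreasing bijection $[0,1]\to[0,\infty]$, inside $C^k$ one realizes \emph{any} $k\ge1$, $h_1,\dots,h_k>0$, $x>0$ by choosing $b_i\in(0,1)$ with $\varphi(b_i)=x/k$ and then $a_i\in(0,b_i)$ with $\varphi(a_i)=\varphi(b_i)+h_i$. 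Hence $C^n$ is a copula for all $n\ge2$ if and only if $(-1)^k(\Delta_{h_1}\!\cdots\Delta_{h_k}\psi)(x)\ge0$ for all $k\ge1$ (the case $k=1$ being automatic since $\psi$ decreases), all $h_i>0$ and all $x>0$.

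It then remains to match this family of difference inequalities with complete monotonicity of $\psi$. For the ``if'' direction I would pass from derivatives to differences: if $\psi$ is completely monotone then $\psi\in C^\infty(0,\infty)$ with $(-1)^n\psi^{(n)}\ge0$, and $n$ applications of the fundamental theorem of calculus give
\[
(-1)^n\bigl(\Delta_{h_1}\!\cdots\Delta_{h_n}\psi\bigr)(x)=\int_0^{h_1}\!\!\!\cdots\int_0^{h_n}(-1)^n\psi^{(n)}(x+t_1+\cdots+t_n)\,dt_1\cdots dt_n\ \ge\ 0,
\]
so every $C^n$ is a copula. For the ``only if'' direction, having the difference inequalities in hand (and using that $\psi$ is bounded and continuous), I would invoke the classical Bernstein--Widder theorem: a bounded continuous function on $(0,\infty)$ whose iterated forward differences alternate in sign this way is the Laplace transform of a finite positive measure, hence completely monotone, i.e.\ \eqref{eqn:montone} holds. (To stay self-contained instead, the $k=1,2$ inequalities already force $\psi$ decreasing and convex, hence differentiable off a countable set; one then checks that $-\psi'$ obeys the same family of inequalities and inducts to recover smoothness and the sign of each $\psi^{(k)}$.)

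I expect the ``only if'' direction to be the genuine obstacle, and within it the passage from the finite-difference inequalities to the differentiable complete monotonicity of \eqref{eqn:montone}: the $n$-increasing conditions supply the difference inequalities almost for free, but proving that $\psi$ is actually $C^\infty$ and fixing the sign of every derivative is where the real analysis lives (Bernstein--Widder, or the inductive bootstrap above). A secondary nuisance worth handling carefully is the endpoint bookkeeping --- $\varphi(0)=\infty$, $\varphi(1)=0$, and $\varphi$ itself possibly non-differentiable --- which is exactly why I would route everything through $\psi=\varphi^{-1}$ and finite differences rather than differentiating $C^n$ in the $u_i$ directly.
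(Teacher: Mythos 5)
The paper does not prove this theorem at all: it is quoted as a known result with the citation to Kimberling (1974) (cf.\ Nelsen's Theorem 4.6.2), so there is no in-paper argument to compare against. Judged on its own, your proposal is essentially the standard proof of Kimberling's theorem and is sound in outline. Working with $\psi=\varphi^{-1}$, the vertex-sum identity equating the $C^n$-volume of a box with $(-1)^n\bigl(\Delta_{h_1}\cdots\Delta_{h_n}\psi\bigr)(x)$ is correct, groundedness and uniform margins are indeed immediate from $\varphi(0)=\infty$ and $\varphi(1)=0$, and the surjectivity of $\varphi$ onto $[0,\infty]$ lets you realize every $(x,h_1,\dots,h_k)$ with $x>0$, $h_i>0$, so the family of copulae $\{C^n\}_{n\ge2}$ being valid is equivalent to $\psi$ being completely monotone \emph{in the finite-difference sense}. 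The ``if'' direction via the iterated integral representation of the differences is fine (with a limiting argument at $x=0$ and at degenerate boxes, which you correctly file under endpoint bookkeeping). The genuine content, as you identify, is the ``only if'' passage from the difference inequalities to smoothness and the derivative signs in \eqref{eqn:montone}; invoking the classical Hausdorff--Bernstein--Widder characterization (a bounded continuous function on $(0,\infty)$ that is completely monotone in the difference sense is a Laplace transform of a positive measure, hence $C^\infty$ with alternating derivatives) is legitimate, though your alternative ``inductive bootstrap'' is only a sketch and would need real work to be a self-contained proof. One small point worth noting: the paper's closing phrase ``if and only if $\varphi$ is completely monotone'' is a slip --- condition \eqref{eqn:montone} concerns $\varphi^{-1}$, and your choice to route everything through $\psi=\varphi^{-1}$ implicitly corrects this.
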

When condition \eqref{eqn:montone} is satisfied only for $k=0,1,..d-2$ then we say that $\varphi$ is $d$-monotone, and it generates up to a $d$-copula.

Many generators that are completely monotone have been discovered; see \cite[Table 4.1]{Nelson2010}. We discuss three in common use---the Clayton, Frank, and Gumbel--and implement the Clayton copula in our library. Refer to Figure \ref{fig:archcopulae} for their formulae.
\begin{figure}[t]
  \centering
	{\footnotesize
	\begin{tabular}{cccc}
		{\itshape name} & {\itshape generator} $\varphi(t)$ & {\itshape inverse generator} $\varphi^{-1}(t)$ & {\itshape parameter} \\ \hline
		Clayton & $(1+\theta t)^{-1/\theta}$ & $\frac{1}{\theta}(t^{-\theta} - 1)$ & $\theta\in(0,\infty)$\\
		Frank & $-\frac{1}{\theta}\ln\left(1-\left(1-\exp\left(-\theta\right)\right)\exp\left(-t\right)\right)$ & $-\ln\left(\frac{\exp\left(-\theta t\right)-1}{\exp\left(-\theta\right)-1}\right)$ & $\left\{\begin{array}{l}
			\theta\in\left(-\infty,\infty\right)\setminus\{0\},\ n = 2\\
			\theta\in\left(0,\infty\right),\ \textnormal{otherwise}
		\end{array}\right.$\\
		Gumbel & $\exp\left(-t^{1/\theta}\right)$ & $\left(-\ln(t)\right)^\theta$ & $\theta\in[1,\infty)$ 
	\end{tabular}}\\[24pt]	
	
	\begin{tabular}{cc}
		{\itshape name} & {\itshape copula} \\ \hline
		Clayton & $C(u_1,\ldots,u_n) = \left(\sum^n_{i=1}u^{-\theta}_i-n+1\right)^{-1/\theta}$ \\
		Frank & $C(u_1,\ldots,u_n) = -\frac{1}{\theta}\ln\left(1+\frac{\prod^n_{i=1}\left(\exp\left(-\theta u_i\right) - 1\right)}{\left(\exp\left(-\theta\right)-1\right)^{n-1}}\right)$ \\
		Gumbel & $C(u_1,\ldots,u_n) = \exp\left(-\left(\sum^n_{i=1}\left(-\ln\left(u_i\right)\right)^\theta\right)^{1/\theta}\right)$
	\end{tabular}
  \caption[Three common Archimedean copulae.]{Three common Archimedean copulae. All are completely monotone.}
	\label{fig:archcopulae}
\end{figure}
	
\begin{ex}
	The surface of three bivariate Archimedean copulae are graphed for varying values of their dependence parameter along with a scatter plot of samples drawn using our sampling algorithm developed in Chapter \ref{sampling} in Figures \ref{fig:claytonGumbelLevels} and \ref{fig:frankLevels}.
	
	The Clayton and Frank copulae approach $\Pi^2$ as $\theta\rightarrow0$, and the Gumbel copula is identical with $\Pi^2$ when $\theta=1$. As $\theta\rightarrow\infty$, the three copulae approach the upper bound $M^2$, and in the bivariate case the Frank copula approaches the lower bound $W^2$ as $\theta\rightarrow-\infty$.
	
	We note that the Clayton and Gumbel copulae cannot represent a negative association between variables, and the Frank copula can only do so in the bivariate case.
		
	As is visible in the figures, the Clayton copula prohibits upper tail dependence, the Gumbel copula prohibits lower tail dependence, and the Frank copula does not express any tail dependence. The Clayton and Frank copulae are able to represent a bivariate distribution with Kendall's $\tau\in(0,1)$, and the Gumbel, $\tau\in[0,1)$.
\end{ex}
\begin{figure}[p]
  \centering
  \includegraphics[scale=0.5]{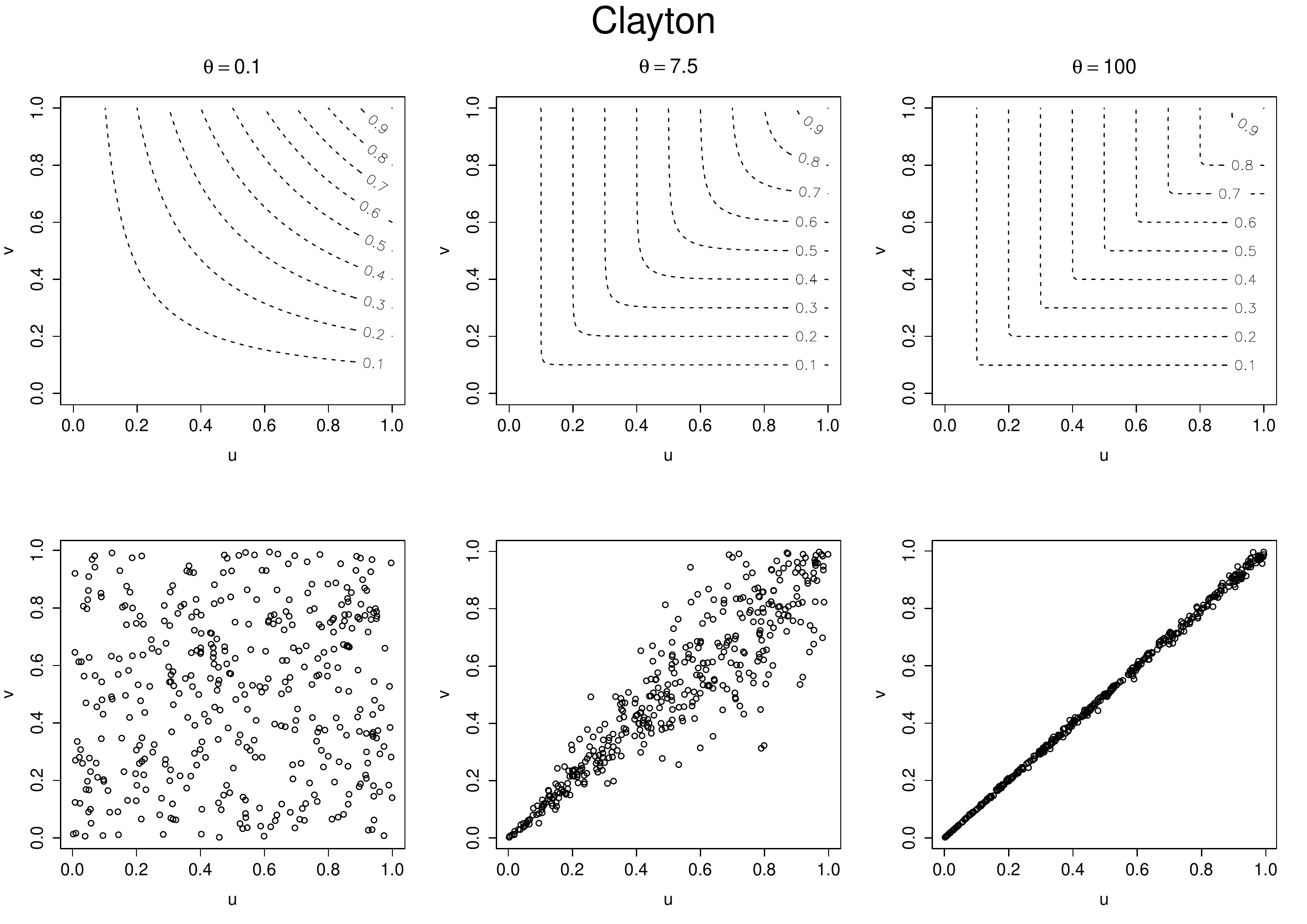} \\[18pt]
	\includegraphics[scale=0.5]{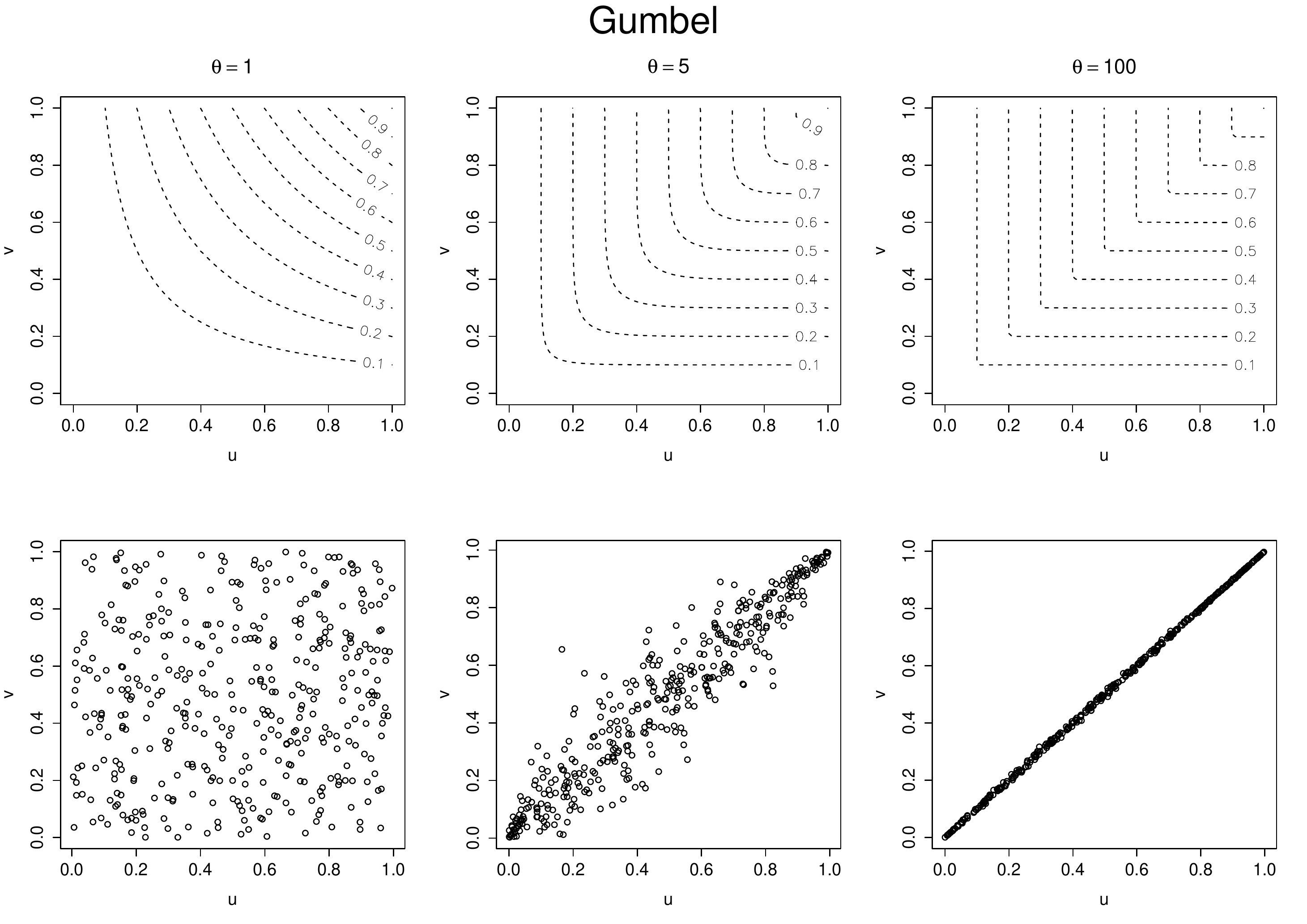}
	\caption[Level curves and samples from the Clayton and Gumbel copulae.]{Level curves and samples from the Clayton and Gumbel copulae for varying values of the parameter.}
	\label{fig:claytonGumbelLevels}
\end{figure}
\begin{figure}[p]
  \centering
  \includegraphics[scale=0.52]{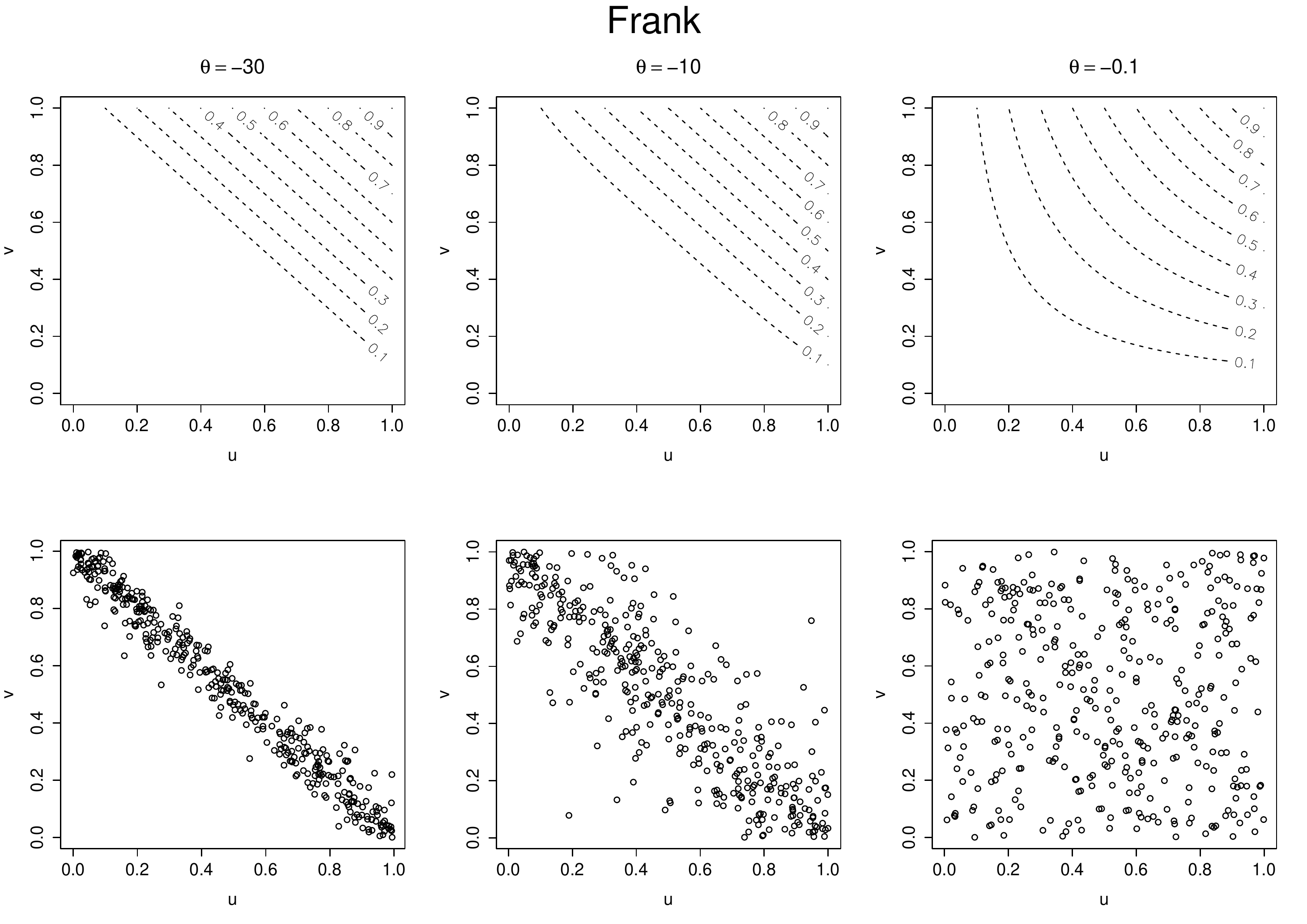} \\[6pt]
	\includegraphics[scale=0.52]{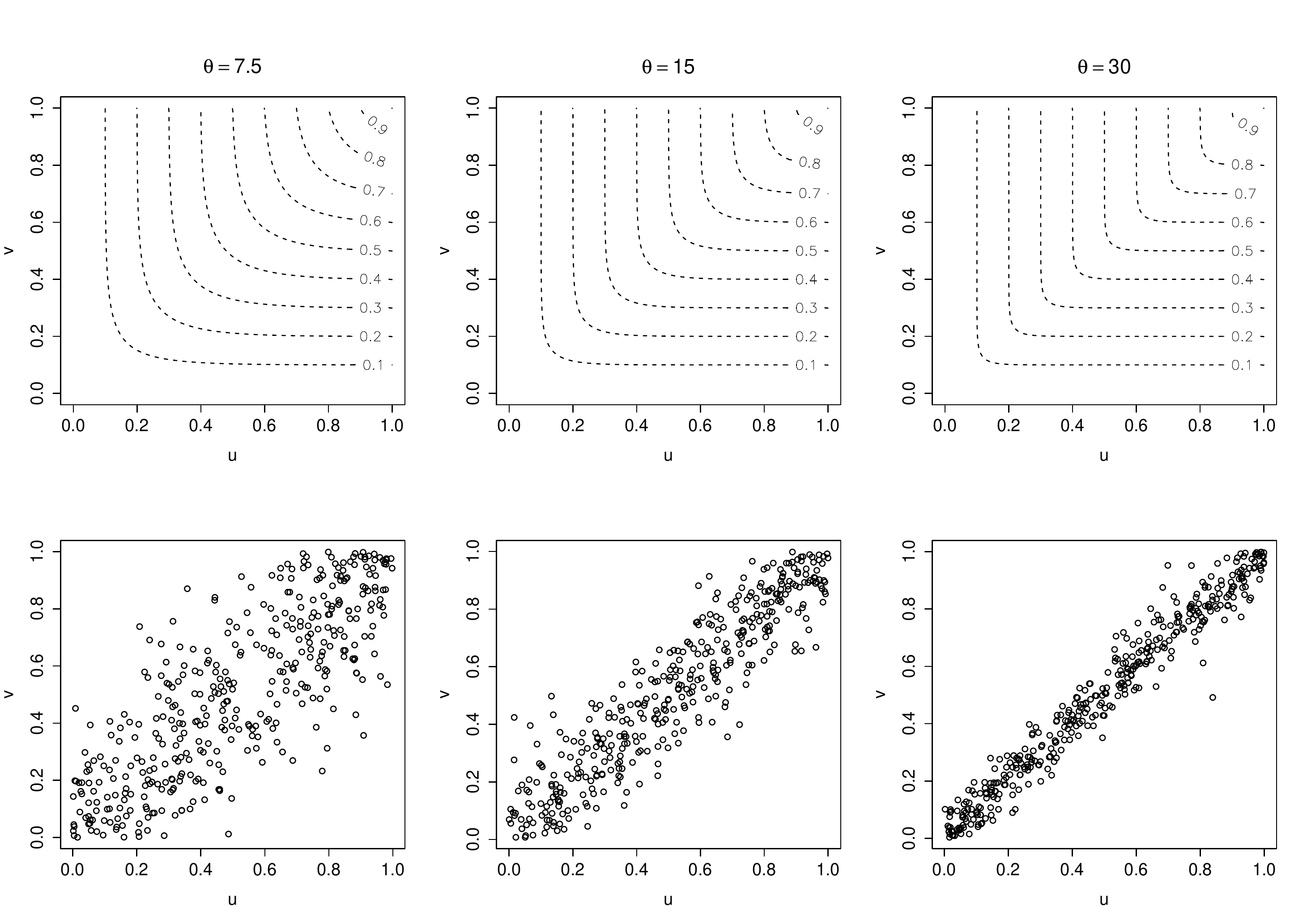}
	\caption[Level curves and samples from the Frank copula.]{Level curves and samples from the Frank copula for varying values of the parameter.}
	\label{fig:frankLevels}
\end{figure}

\section{Evaluating partial derivatives of the factors}\label{sect:partialDerivFactors}
Our model, as will be explained in Chapter \ref{pgm}, represents a CDF as the product of copulae factors. The algorithm to differentiate the model---the derivative-sum-product algorithm---requires evaluation of the partial derivatives of the factors with respect to different subsets of their scopes.

\subsection{The normal copula}\label{sect:partialDerivNormal}
How are we to do this when we have chosen to represent the factors as multivariate normal CDFs?

Let $(\mathbf{X}, \mathbf{Y})$ be distributed as $N(\mu,\Sigma)$, where
\begin{align*}
	\mu &= \begin{pmatrix}\mu_\mathbf{x}\\\mu_\mathbf{y}\end{pmatrix},\\
	\Sigma &= \begin{pmatrix}\Sigma_\mathbf{xx} & \Sigma_\mathbf{xy} \\ \Sigma_\mathbf{xy}^T & \Sigma_\mathbf{yy}\end{pmatrix}.
\end{align*}
The derivative of the CDF with respect to the subset $\mathbf{x}$ is evaluated thus,
\begin{align}
	\frac{\partial}{\partial\mathbf{x}}F(\mathbf{x},\mathbf{y}) &= P(\mathbf{X}=\mathbf{x}, \mathbf{Y}\le\mathbf{y})\nonumber \\ \label{eqn:derivNormal}
	&= f_\mathbf{X}(\mathbf{x})F_{\mathbf{Y}|\mathbf{X}=\mathbf{x}}(\mathbf{y}).
\end{align}
Since $\mathbf{X}$ and $\mathbf{Y}$ are normally distributed, $\mathbf{Y}\ |\ \mathbf{X}=\mathbf{x}$ is normally distributed \cite{HoggEtAl2012}, with
\begin{align*}
	\mu' &=  \mu_\mathbf{y}+\Sigma_{\mathbf{xy}}^T\Sigma^{-1}_{\mathbf{xx}}(\mathbf{x}-\mu_\mathbf{x})\\
	\Sigma' &= \Sigma_{\mathbf{yy}} - \Sigma_{\mathbf{xy}}^T\Sigma^{-1}_{\mathbf{xx}}\Sigma_{\mathbf{xy}}.
\end{align*}
As discussed previously, it suffices to consider multivariate normal distributions with zero mean and variance of each variable equal to unity. Thus, the mean simplifies to
\begin{align*}
	\mu' &= \Sigma_{\mathbf{xy}}^T\Sigma^{-1}_{\mathbf{xx}}\mathbf{x}.
\end{align*}
When $\mathbf{X}$ and $\mathbf{Y}$ are univariate, the parameters simplify to
\begin{align*}
	\mu' &= \rho \mathbf{x}\\
	\Sigma' &= 1 - \rho^2,
\end{align*}
where $\rho$ is the correlation coefficient between $\mathbf{X}$ and $\mathbf{Y}$ (and also the covariance in this case).

\subsection{Archimedean copulae}
It is easiest to derive the form of the derivatives for a general Archimedean copula. Let $C$ be an Archimedean $n$-copula generated by $\varphi$. Then,
\begin{align*}
	\frac{\partial C}{\partial u_i} &= \left(\varphi^{-1}\left(u_i\right)\right)'\varphi'\left(\varphi^{-1}\left(u_1\right)+\cdots+\varphi^{-1}\left(u_n\right)\right)\\
	&= \frac{\varphi'\left(\varphi^{-1}\left(u_1\right)+\cdots+\varphi^{-1}\left(u_n\right)\right)}{\varphi'\left(\varphi^{-1}\left(u_i\right)\right)}
\end{align*}
In general, for a subset $\mathbf{a}\subseteq\mathbf{u}$ of cardinality $m$,
\begin{align*}
	\frac{\partial C}{\partial\mathbf{a}} &= \frac{\varphi^{(m)}\left(\varphi^{-1}\left(u_1\right)+\cdots+\varphi^{-1}\left(u_n\right)\right)}{\prod_{u_i\in\mathbf{a}}\varphi'\left(\varphi^{-1}\left(u_i\right)\right)}.
\end{align*}
Whether the formulae for a particular Archimedean copula have general form thus depends on whether a general form exists for the derivatives of the generator.

Formulae for the $n$th derivative of the generators of five common Archimedean copulae---including the three discussed in this thesis---and the resulting copulae densities are derived in \cite{HofertEtAl2012}. We rederived the formula for the Clayton copula, as the one given in \cite{HofertEtAl2012} was based on a nonstandard form of the generator.

The formulae for the Clayton copula are,
\begin{align}
	\varphi^{(m)}(u) &= (-1)^m\prod^{m-1}_{k=0}\left(1/\theta+k\right)\theta^m\left(1+\theta u\right)^{-(1/\theta+m)},\nonumber\\
	\varphi'\left(\varphi^{-1}\left(u\right)\right) &= -u^{1+\theta}\nonumber\\
	\Rightarrow\ \frac{\partial C}{\partial\mathbf{a}} &= \prod^{m-1}_{k=0}\left(1/\theta+k\right)\theta^m\prod_{u_i\in\mathbf{a}}\left(u_i\right)^{-(1+\theta)}\left(\sum^n_{i=1}u^{-\theta}_i-n+1\right)^{-(1/\theta + m)}\label{eqn:claytonDerivs}
\end{align}

\subsection{Numerical stability}\label{sec:claytonStability}
The partial derivatives of the normal copula were found to be numerically stable at extreme values of the parameter due to the stable implementation of the multivariate normal CDF.

The formulae for the Archimedean copulae in \cite{HofertEtAl2012} were designed to be numerically stable for large $n$. They are not, however, stable for an extreme value of the parameter. For example, suppose we sample from the copula distribution and require to calculate,
\begin{align}
	C(u_2\ |\ U_1=u_1) &= \frac{\phi'(\phi^{-1}(u_1)+\phi^{-1}(u_2)}{\phi'(\phi^{-1}(u_1))}. \label{eqn:condCopCdf}
\end{align}
Firstly, when $u_1$ is very small, $\phi^{-1}(u_1)=\infty$ and \eqref{eqn:condCopCdf} is evaluated as $0/0=\textnormal{NaN}$. Thus, the first part of our solution is to take logs,
\begin{align*}
	\ln\left(\frac{\partial C}{\partial\mathbf{a}}\right) &= \ln\left(\left(-1\right)^m\varphi^{(m)}\left(\varphi^{-1}\left(u_1\right)+\cdots+\varphi^{-1}\left(u_n\right)\right)\right) - \sum_{u_i\in\mathbf{a}}\ln\left(-\varphi'\left(\varphi^{-1}\left(u_i\right)\right)\right).
\end{align*}
For the Clayton copula, by \eqref{eqn:claytonDerivs},
\begin{align*}
	\ln\left(\frac{\partial C}{\partial\mathbf{a}}\right) &= \sum^{m-1}_{k=0}\ln\left(1/\theta+k\right) - \left(1+\theta\right)\sum_{u_i\in\mathbf{a}}\ln\left(u_i\right) + m\ln(\theta)-(1/\theta+m)\ln\left(\sum^n_{i=1}u_i^{-\theta}-n+1\right).
\end{align*}
The exact modification required to make a copula's partial derivative numerically stable depends on its form. In the case of the Clayton copula, it becomes unstable for large values of $\theta$ because $u_i^{-\theta}$ overflows for small $u_i$. Let $u_{\textnormal{min}}=\min_i\{u_i\}$. The solution is to scale the logarithm by this term,
\begin{align}
	\begin{split}
		\ln\left(\frac{\partial C}{\partial\mathbf{a}}\right) &= \sum^{m-1}_{k=0}\ln\left(1/\theta+k\right) - \left(1+\theta\right)\sum_{u_i\in\mathbf{a}}\ln\left(u_i\right) + m\ln(\theta)\\
		&\ \ -(1/\theta+m)\left(\ln\left(\sum^n_{i=1}\left(u_{\textnormal{min}}/u_i\right)^{\theta}+(1-n)u_{\textnormal{min}}\right) - \ln\left(u_{\textnormal{min}}\right)\right).
	\end{split}\label{eqn:stableClaytonDerivs}
\end{align}
Also, we use the function {\ttfamily log1p} to stably evaluate the penultimate logarithm. {\ttfamily log1p}, a numerically stable implementation of $\ln(1+x)$, approximates $\ln(1+x)$ with its Taylor series expansion when $x$ is small. We use the implementation of \cite{website:boost}.

\emph{Our solution has improved the numerical stability over the state-of-the-art R package {\ttfamily copula}.} This is apparent when comparing samples generated from a copula with an extreme parameter. See Figure \ref{fig:stableClayton}

Although we have not implemented the Gumbel copula, we note that it requires a similar modification for numerical stability. The Frank copula, provided the {\ttfamily log1p} function is used, does not require additional modifications.
\begin{figure}[t]
  \centering
  \includegraphics[scale=0.5]{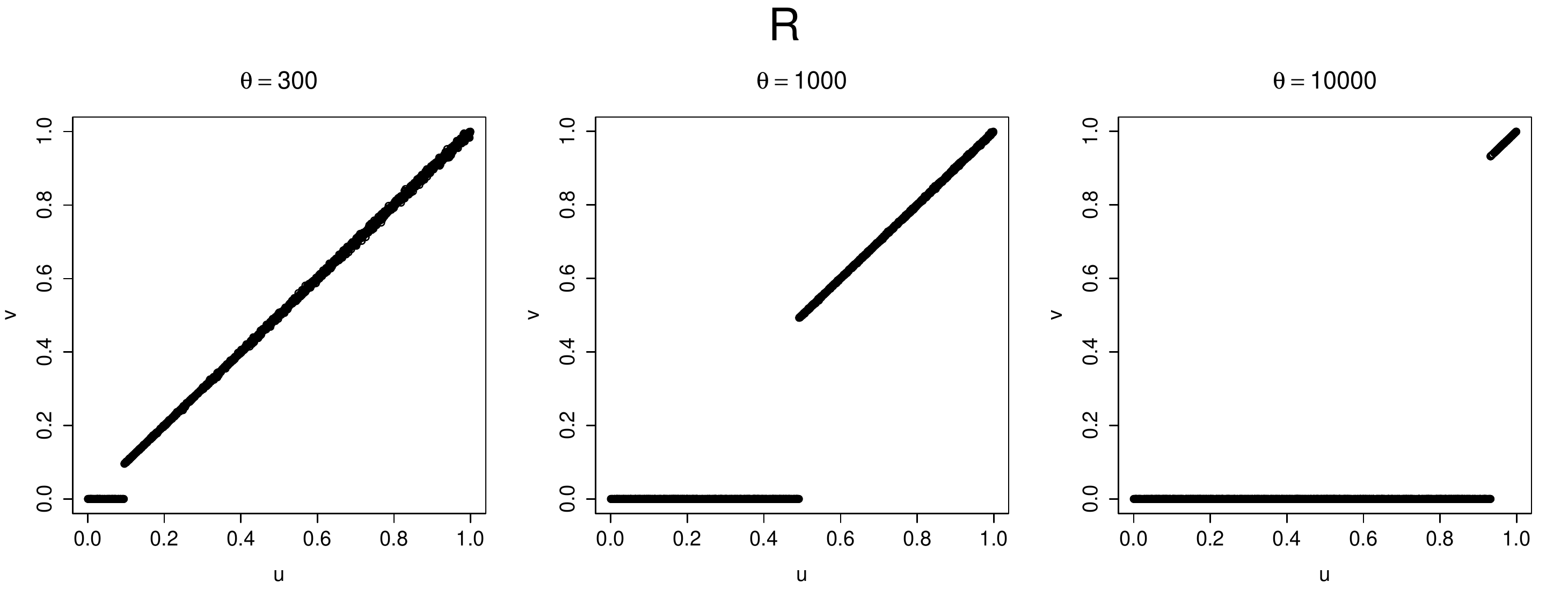} \\[6pt]
	\includegraphics[scale=0.5]{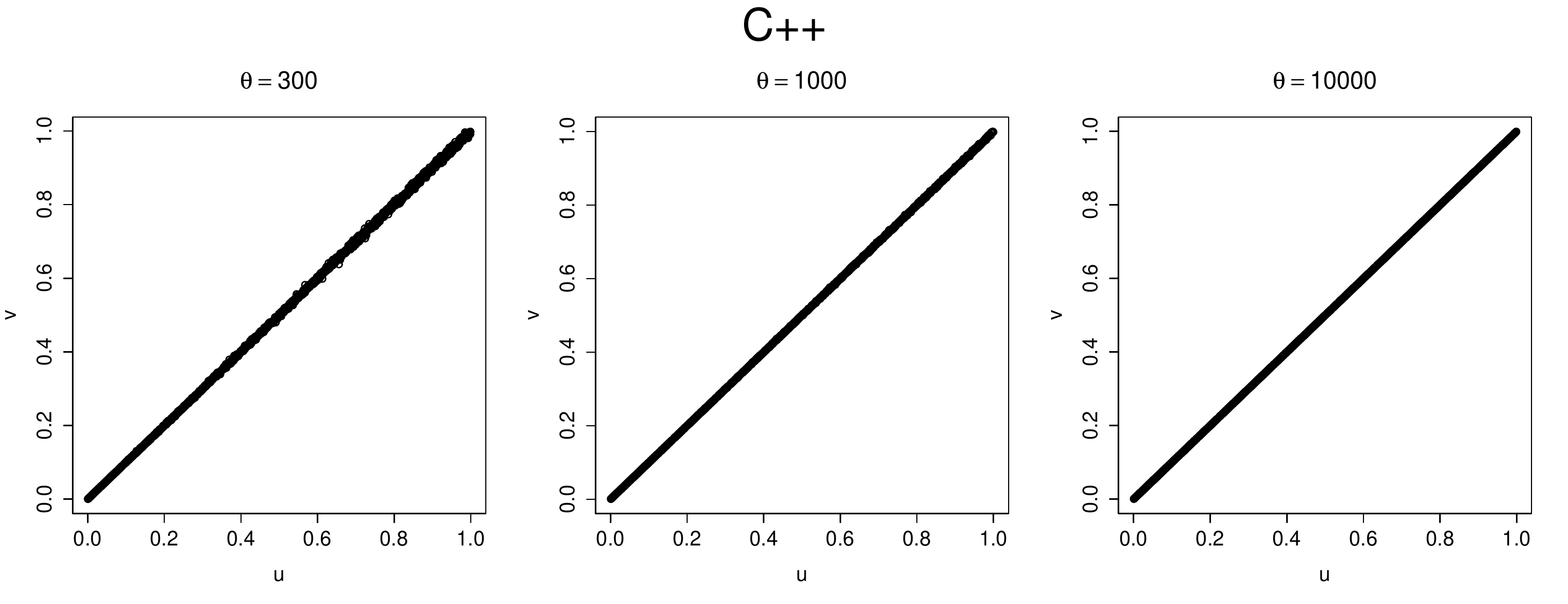}
	\caption[Comparing samples from bivariate Clayton copulae generated by an R package and our library.]{Comparing samples from bivariate Clayton copulae generated by the R package {\ttfamily copula} (top) and our library (bottom), which stably samples the copula even when the parameter is extremal.}
	\label{fig:stableClayton}
\end{figure}

\section{Differentiating with respect to the parameter}
\label{sec:gradCopulae}
Our learning algorithm necessitates calculating the gradient of the sample log-likelihood with respect to the copulae parameters. As shall be explained (see Chapter 5), when the parameters are not shared between factors, it suffices to be able to calculate the derivative of each factor with respect to an arbitrary subset, $\mathbf{x}$, of its scope, and a given parameter $\theta_i$.

\subsection{The normal copula}\label{sec:gradNormal}
For the normal copula, we use the following result due to \cite{Plackett1954},
\begin{align}
	\frac{\partial}{\partial\rho_{ij}}f(\mathbf{x};\mathbf{0},\Sigma) &= \frac{\partial^2}{\partial x_i\partial x_j}f(\mathbf{x};\mathbf{0},\Sigma). \label{eqn:plackett}
\end{align}
(To see why this is true, write the density as the transform of its characteristic function.)

Thus, in the bivariate case,
\begin{align*}
	\frac{\partial}{\partial\rho}F(x_1,x_2;\mathbf{0},\rho) &= f(x_1,x_2;\mathbf{0},\rho).
\end{align*}
To calculate the remaining derivatives, we need to differentiate the density with respect to all nonempty subsets of $\{x_1,x_2\}$. We use the results \cite[eq 325, 326]{PetersenPedersen2008}, which are also true for higher dimensions,
\begin{align}
	\frac{\partial f}{\partial x_i} &= -f(\mathbf{x})\left(\Sigma^{-1}\mathbf{x}\right)_i \label{eqn:normalD1}\\
	\frac{\partial^2 f}{\partial x_i\partial x_j} &= f(\mathbf{x})\left(\Sigma^{-1}\mathbf{x}\mathbf{x}^T\Sigma^{-1}-\Sigma^{-1}\right)_{ij}. \label{eqn:normalD2}
\end{align}
Thus, the remaining derivatives are,
\begin{align*}
	\frac{\partial}{\partial\rho}\frac{\partial}{\partial x_1}F(x_1,x_2;\mathbf{0},\rho) &= \frac{\rho x_2-x_1}{1-\rho^2}f(x_1,x_2;\mathbf{0},\rho),\\
	\frac{\partial}{\partial\rho}\frac{\partial}{\partial x_2}F(x_1,x_2;\mathbf{0},\rho) &= \frac{\rho x_1-x_2}{1-\rho^2}f(x_1,x_2;\mathbf{0},\rho),\\
	\frac{\partial}{\partial\rho}\frac{\partial^2}{\partial x_1\partial x_2}F(x_1,x_2;\mathbf{0},\rho) &= \left(\frac{(\rho x_1-x_2)(\rho x_2-x_1)}{(1-\rho^2)^2}+\frac{\rho}{1-\rho^2}\right)f(x_1,x_2;\mathbf{0},\rho).
\end{align*}
The general case is more complicated and we must distinguish several cases. Suppose we wish to calculate,
\begin{align}
	\frac{\partial}{\partial\rho_{ij}}\frac{\partial}{\partial\mathbf{x}}F(\mathbf{x},\mathbf{y};\mathbf{0},\Sigma) &= \int^{y_1}_{-\infty}\cdots\int^{y_m}_{-\infty}\frac{\partial}{\partial\rho_{ij}}f(x_1,\ldots,x_n,y'_1,\ldots,y'_m)dy'_1\cdots dy'_m. \label{eqn:gencase}
\end{align}
First, suppose (with a slight abuse of notation) that $\rho_{ij}$ is the correlation coefficient between $Y_i$ and $Y_j$. Then, combining \eqref{eqn:plackett} and \eqref{eqn:derivNormal},
\begin{align*}
	\frac{\partial}{\partial\rho_{ij}}\frac{\partial}{\partial\mathbf{x}}F(\mathbf{x},\mathbf{y};\mathbf{0},\Sigma) &= f(\mathbf{x},y_i,y_j)F(\mathbf{y}\setminus\{y_i,y_j\}\ |\ \mathbf{x},y_i,y_j).
\end{align*}
Next, suppose that $\rho_{ij}$ is the correlation coefficient between $X_i$ and $Y_j$, and $\mathbf{Y}\setminus\{Y_j\}\neq\emptyset$, that is, there are variables with respect to which we have not differentiated. Applying \eqref{eqn:plackett} and \eqref{eqn:normalD1} to \eqref{eqn:gencase},
\begin{align*}
	\frac{\partial}{\partial\rho_{ij}}\frac{\partial}{\partial\mathbf{x}}F(\mathbf{x},\mathbf{y};\mathbf{0},\Sigma) &= -f(\mathbf{x},y_j)\int^{\mathbf{y}\setminus\{y_j\}}_{-\mathbf{\infty}}\left(\Sigma^{-1}\mathbf{z}\right)_if(\mathbf{y}\setminus\{y_j\}\ |\ \mathbf{x},y_j)d\mathbf{y}'\setminus\{y'_j\}\\
	&= -f(\mathbf{x},y_j)F(\mathbf{y}\setminus\{y_j\}\ |\ \mathbf{x},y_j)\int^{\mathbf{y}\setminus\{y_j\}}_{-\mathbf{\infty}}\left(\Sigma^{-1}\mathbf{z}\right)_i\frac{f(\mathbf{y}'\setminus\{y'_j\}\ |\ \mathbf{x},y_j)}{F(\mathbf{y}\setminus\{y_j\}\ |\ \mathbf{x},y_j)}d\mathbf{y}\setminus\{y_j\}\\
	&= -f(\mathbf{x},y_j)F(\mathbf{y}\setminus\{y_j\}\ |\ \mathbf{x},y_j)\left(\sum^n_{k=1}\Sigma^{-1}_{i,k}x_k+\Sigma^{-1}_{i,j+n}y_j+\sum_{l\neq j}\left(\Sigma^{-1}_{i,l+n}E[Y_l]\right) \right),
\end{align*}
where $\mathbf{z}=(\mathbf{x},\mathbf{y})$, and the expectation is taken under the \emph{truncated} multivariate normal distribution, truncated from above at $\mathbf{y}\setminus\{y_j\}$, with $\mathbf{\mu}$ and $\Sigma$ given by the parameters of $\mathbf{Y}\setminus\{Y_j\}\ |\ \mathbf{x},y_j$.

The term $\left(\Sigma^{-1}\mathbf{z}\right)_i$ is a linear combination of $\mathbf{z}$. The elements of $\mathbf{z}$ with respect to which we are not integrating are taken outside the integral, which evaluates to $1$. On the contrary, the elements of $\mathbf{z}$ with respect to which we are integrating cannot be taken outside the integral, and form the expectation terms.

When $\mathbf{Y}=\{Y_j\}$, all variables are differentiated at least once, and one variable is differentiated twice. Hence,
\begin{align*}
	\frac{\partial}{\partial\rho_{ij}}\frac{\partial}{\partial\mathbf{x}}F(\mathbf{x},y_j;\mathbf{0},\Sigma) &= -f(\mathbf{x},y_j)\left(\Sigma^{-1}\mathbf{z}\right)_i
\end{align*}

Suppose that $\rho_{ij}$ is the correlation coefficient between $X_i$ and $X_j$, and $\mathbf{Y}\neq\emptyset$. Similarly, applying \eqref{eqn:plackett} and \eqref{eqn:normalD2} to \eqref{eqn:gencase},
\begin{align*}
	\frac{\partial}{\partial\rho_{ij}}\frac{\partial}{\partial\mathbf{x}}F(\mathbf{x},\mathbf{y};\mathbf{0},\Sigma) &= f(\mathbf{x})\int^{\mathbf{y}}_{-\mathbf{\infty}}\left(\Sigma^{-1}\mathbf{z}\mathbf{z}^T\Sigma^{-1}-\Sigma^{-1}\right)_{ij}f(\mathbf{y}'\ |\ \mathbf{x})d\mathbf{y}'\\
	&= f(\mathbf{x})F(\mathbf{y}\ |\ \mathbf{x})\left(\sum^n_{k=1}\sum^n_{l=1}\Sigma^{-1}_{ik}\Sigma^{-1}_{jl}x_kx_l\right. \\
	& \ \ \left. +\sum^m_{l=1}\left(\sum^n_{k=1}\left(\Sigma^{-1}_{ik}\Sigma^{-1}_{j,l+n}+\Sigma^{-1}_{jk}\Sigma^{-1}_{i,l+n}\right)x_k\right)E[Y_l]\right.\\
	& \ \ \left. +\sum^m_{k=1}\sum^m_{l=1}\Sigma^{-1}_{i,k+n}\Sigma^{-1}_{j,l+n}E[Y_kY_l]\right),
\end{align*}
where the expectations are taken under the same distribution as above.

Standard formulae have been derived for the first and second moments of a truncated multivariate distribution \cite{ManjunathWilhelm2009}, \cite{Cartinhour1990}. In this thesis, we only consider copulae having scope between two and four variables. Thus, rather than implementing the (greatly complicated) general case, we use formulae specific to the univariate and bivariate cases \cite{Rosenbaum1961}, \cite{Muthen1990}.

To avoid calculating the expectations twice, and to separate those with a different formula, we use,
\begin{align*}
	\sum^m_{k=1}\sum^m_{l=1}\Sigma^{-1}_{i,k+n}\Sigma^{-1}_{j,l+n}E[Y_kY_l] &= \sum^m_{k=1}\Sigma^{-1}_{i,k+n}\Sigma^{-1}_{j,k+n}E[Y_k^2]\\
	& \ \ +\sum^n_{k=1}\sum_{l>k}\left(\Sigma^{-1}_{i,k+n}\Sigma^{-1}_{j,l+n}+\Sigma^{-1}_{i,l+n}\Sigma^{-1}_{j,k+n}\right)E[Y_kY_l].
\end{align*}

When $\mathbf{Y}=\emptyset$, all variables are differentiated at least once, and two variables are differentiated twice. Hence,
\begin{align*}
	\frac{\partial}{\partial\rho_{ij}}\frac{\partial}{\partial\mathbf{x}}F(\mathbf{x};\mathbf{0},\Sigma) &= f(\mathbf{x})\left(\Sigma^{-1}\mathbf{x}\mathbf{x}^T\Sigma^{-1}-\Sigma^{-1}\right)_{ij}
\end{align*}
Therefore, we see that the result depends on whether the correlation coefficient belongs to variables with respect to which we have already differentiated and whether there are undifferentiated variables.

When the normal copula is parameterized with a single parameter $\rho$, so that $\rho=\rho_{ij}$ for all $i<j$, the chain rule yields,
\begin{align*}
	\frac{\partial}{\partial\rho}\frac{\partial}{\partial\mathbf{x}}F(\mathbf{x},\mathbf{y};\mathbf{0},\Sigma) &= \sum_{i<j}\frac{\partial}{\partial\rho_{ij}}\frac{\partial}{\partial\mathbf{x}}F(\mathbf{x},\mathbf{y};\mathbf{0},\Sigma).
\end{align*}
In this case, we apply our previous procedure and sum the results over all pairs of variables.

When a single parameter is used, the formulae above are greatly simplified by the fact that $\Sigma^{-1}$ contains two unique entries: all diagonal entries are equal, and likewise all off-diagonal entries are equal.

\emph{While we have used some existing results, this is the first time they have been combined and applied to copula theory and gradient-based optimization methods.}

\subsection{Archimedean copulae}
For Archimedean copulae, it is easier to take the derivative of the \emph{log} of the partial derivatives with respect to the copula parameter. Multiplying by the partial derivative then gives the desired result. That is,
\begin{align*}
	\frac{\partial}{\partial\theta}\frac{\partial C}{\partial\mathbf{a}} &= \frac{\partial C}{\partial\mathbf{a}}\frac{\partial}{\partial\theta}\ln\left(\frac{\partial C}{\partial\mathbf{a}}\right).
\end{align*}
For the Clayton copula, by \eqref{eqn:stableClaytonDerivs},
\begin{align*}
	\frac{\partial}{\partial\theta}\ln\left(\frac{\partial C}{\partial\mathbf{a}}\right) &= -\sum^{m-1}_{k=0}\frac{1}{\theta\left(1+\theta k\right)}-\sum_{u_i\in\mathbf{a}}\ln\left(u_i\right)+\frac{m}{\theta}\\
	&\ \ +\frac{1}{\theta^2}\left(\ln\left(\sum^n_{i=1}(u_{\textnormal{min}}/u_i)^{\theta}+(1-n)u_{\textnormal{min}}\right) - \ln(u_{\textnormal{min}})\right)\\
	&\ \ +\left(\frac{1}{\theta} + d\right)\frac{\sum^n_{i=1}\ln\left(u_i\right)(u_{\textnormal{min}}/u_i)^{\theta}}{\sum^n_{i=1}(u_{\textnormal{min}}/u_i)^{\theta}+(1-n)u_{\textnormal{min}}}
\end{align*}
Formulae for the Gumbel and Frank copulae follow the derivation of their score functions in \cite{HofertEtAl2012}. That is, as the score function is the derivative of the \emph{log-density} of a copula with respect to its parameter, a trivial modification is required to differentiate the \emph{log-partial derivative} of a copula with respect its parameter.

\section{Summary}
\begin{itemize}
	\item We explained how the copula enables separation of the representation of the dependence structure from the marginals.
	\item The copula is to be thought of in two equivalent ways:
		\begin{itemize}
			\item as a transformation on the joint distribution that normalizes, or rather, removes the information content, of the marginals;
			\item as a multivariate distribution with standard uniform marginals.
		\end{itemize}
	\item A class of copulae containing the normal copula is constructed by inverting continuous multivariate distributions.
	\item Another class, the Archimedean copulae, is generated by continuous strictly decreasing functions that satisfy complete monotonicity.
	\item Formulae were derived for the partial derivatives of the normal and Clayton copula with respect to an arbitrary subset of their scope and parameter.
	\item Closed form expressions exist for the partial derivatives of the Clayton copula, whereas the normal copula requires the numerical evaluation of the multivariate normal CDF, an expensive operation.
\end{itemize}

\section{A note on implementation}\label{sec:copImplement}
We verified the above formulae using the R packages {\ttfamily mnormt} \cite{website:mnormt} to calculate the multivariate normal density and CDF, {\ttfamily copula} \cite{website:copula} to debug the Archimedean copulae, {\ttfamily numDeriv} \cite{website:numDeriv} to calculate numerical derivatives for comparison, and {\ttfamily tmvtnorm} \cite{website:tmvtnorm} to calculate moments of the truncated multivariate normal distribution to test the formulae for the gradient of normal factors.

In our library, we use the implementation of \cite{Genz1992}, \cite{website:mvnPack}, translating from Fortran to C using {\ttfamily f2c}, to calculate the multivariate normal CDF.
  \chapter{Probabilistic Graphical Models}
\label{pgm}
Probabilistic graphical models (PGMs) are used as a general framework for representing and reasoning about a wide class of probability distributions. They came to prominence within computer science in the endeavour of artificial intelligence to build an expert system---one that is able to reconcile multiple probabilistic influences to make decisions under uncertainty, performing at the level of a human expert.

In the framework, distributions are represented as the product of factors over subsets of the variables. Associated with the factorization is a graph, wherein the nodes are the random variables of the model, and the edges express the independencies contained in the distribution. The graph is to be thought of in two ways:
\begin{itemize}
	\item as a compact means of representing the probability distribution;
	\item as encoding a set of conditional independence relationships expressed by the distribution.
\end{itemize}
The exact semantics of the graphical representation depend on the type of factorization.

PGMs are a type of declarative representation, that is, they separate the representation of knowledge from the processes that reason with it. This specialization is advantageous; both representation and reasoning can be developed independently. When a new inference algorithm is invented, it is applicable to many existing models without modification, and conversely, improvements to the domain knowledge can be incorporated without having to update the inference engine. General algorithms exist for efficiently performing exact and approximate inference on PGMs, and learning both their structure and parameters.

The applications of PGMs are diverse, since they represent such a general class of distributions. They have found application in, to name a
few, information extraction, medical diagnosis, speech recognition, and computational biology, and have demonstrated superior performance over earlier techniques. Many familiar models, such as Kalman filters, hidden Markov models, and ARCH time series models \cite{ThiessonEtAl2004}, can be expressed in the PGM framework.

Due to space constraints, the technical details of PGMs are beyond the scope of this thesis, and we presuppose the reader's familiarity with the field. Refer to the thorough reference \cite{KollerFriedman2009} if necessary.

In this chapter, we discuss a recently developed class of graphical models---the cumulative distribution network (CDN) \cite{HuangFrey2008}, \cite{HuangFrey2011}. The representation of the model, its basic properties, and a method for parameterization with copulae are explained. Finally, we discuss related work on the copula parameterization of standard PGMs.

\section{Cumulative distribution networks}
Standard PGMs, such as Bayesian networks and Markov fields, factor the model probability density function. In contrast, the CDN is premised on the idea of factoring the cumulative distribution function. As we will explain, this results in a class of models that encodes a set of independencies vastly different from standard PGMs. One obvious application of modelling the CDF is learning to rank \cite{HuangFrey2009}, although we believe the main advantage to be its copula parameterization.

\subsection{Model}
Suppose we have a model over $n$ random variables, $X_1,\ldots,X_n$. Let there be $m$ subsets of the variables, $\mathbf{S}_1,\ldots,\mathbf{S}_m$ so that $\mathbf{S}_i\subseteq\{X_1,\ldots,X_n\}$. Consider the factorization,
\begin{align}\label{eqn:cdnmodel}
	F(x_1,\ldots,x_n) &= \prod_{i=1}^m\phi_i(\mathbf{s}_i),
\end{align}
where $\phi_i$ is an arbitrary function with scope $\mathbf{s}_i$. What condition must we impose on the $\phi_i$ so that \eqref{eqn:cdnmodel} defines a valid CDF?

It can be shown (see \cite{HuangFrey2011}) that a sufficient but not necessary condition is that each factor $\phi_i$ is a CDF over its scope. For example, consider the required property that
\begin{align*}
	F(x_1,\ldots,x_n) &\rightarrow 1\ \ \textnormal{as}\ \ x_1\rightarrow\infty,\ldots,x_n\ \rightarrow\ \infty.
\end{align*} 
Since, by assumption, each factor is a CDF,
\begin{align*}
	\phi_i(\mathbf{s}_i) &\rightarrow 1\ \textnormal{for all}\ i\ \ \textnormal{as}\ \ x_1\rightarrow\infty,\ldots,x_n\ \rightarrow\ \infty,
\end{align*} 
and hence the property holds by the basic properties of limits. The other properties are proven similarly.

We associate with the factorization a bidirected graph that has an edge between two variables when they are in the scope of some factor. More formally,
\begin{defn}
Let $\mathcal{G}$ be a bidirected graph over the variables $X_1,\ldots,X_n$. We say that a distribution $F$ over the same space \emph{factorizes according to $\mathcal{G}$} if $F$ can be expressed as a product,
\begin{align*}
	F(x_1,\ldots,x_n) &= \prod_{i=1}^m\phi(\mathbf{s}_i),
\end{align*}
where there is a bidirected edge $(X_i,X_j)\in\mathcal{G}$ if and only if $\{X_i,X_j\}\subseteq\mathbf{S}_k$ for some $k$.
\end{defn}
Also, we associate a factor graph with the factorization in the same way as for standard PGMs.

Now, we define the CDN.
\begin{defn}
	A \emph{cumulative distribution network} is a pair $\mathcal{C}=(F,\mathcal{G})$ where $F$ factorizes according to $\mathcal{G}$, and where $F$ is specified as a set of CDFs, $\{\phi_i\}$, associated with cliques of $\mathcal{G}$.
\end{defn}
Although, the $\{\phi_i\}$ are not necessarily defined over the \emph{maximal} cliques in $\mathcal{G}$.

\begin{figure}[t]
     \begin{center}
        \subfigure[]{
            \label{fig:studentbn}
            \includegraphics[scale=0.6]{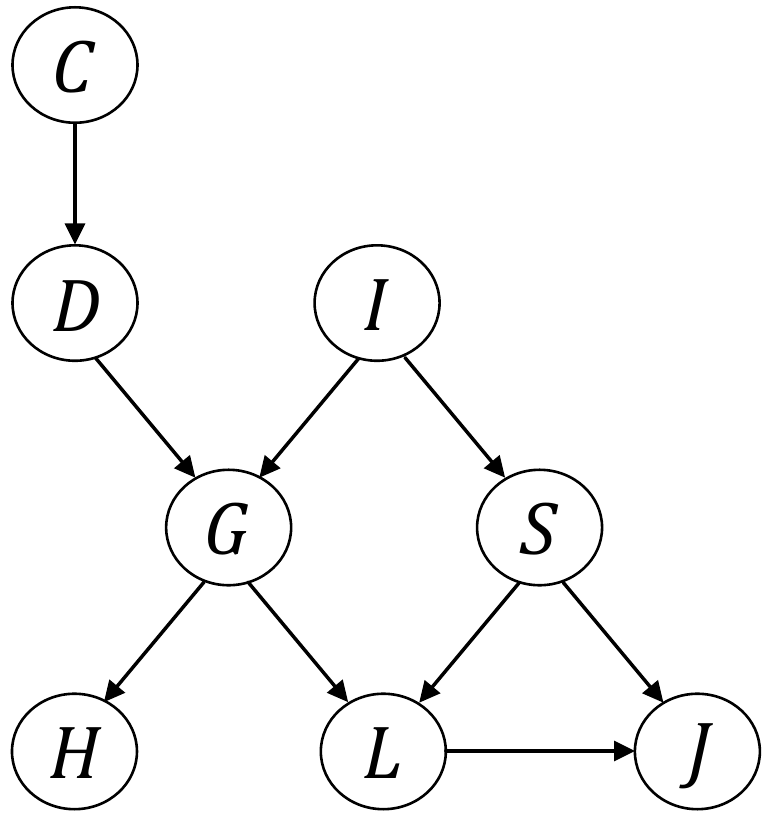}
        }\hspace{1.5cm}
        \subfigure[]{
					\label{fig:studentcdn}
					\includegraphics[scale=0.6]{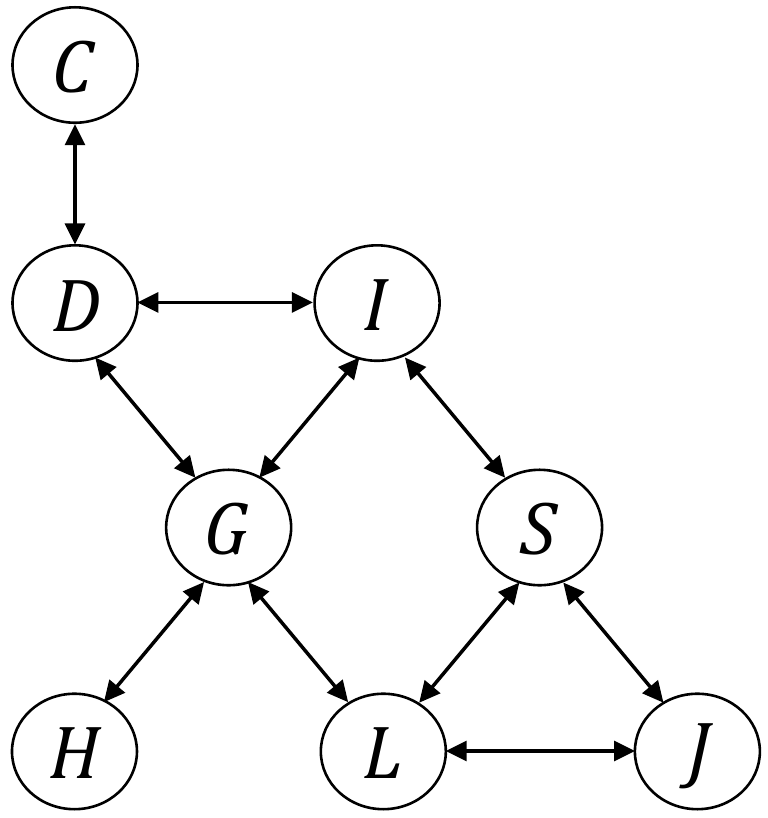}
        }
    \end{center}
    \caption[Student example and an analogous CDN.]{(a) ``Student'' Bayesian network example, adapted from \cite[pg 374]{KollerFriedman2009}. (b) CDN with an analogous structure.}
   \label{fig:studentGraphs}
\end{figure}
\begin{ex}\label{ex:studentcdn}
	We form in Figure \ref{fig:studentcdn} a network structure analogous in terms of edges to the ``Student'' example familiar from \cite{KollerFriedman2009} and shown in Figure \ref{fig:studentbn}. This graph along with the factorization,
	\begin{align*}
		F(c,d,i,g,s,l,j,h) &= \phi_1(c,d)\phi_2(d,i,g)\phi_3(i,s)\phi_4(g,h)\phi_5(g,l)\phi_6(s,l,j),
	\end{align*}
	for some CDFs $\{\phi_i\}$, defines a CDF over the model space.
	
	Note, however, that the the semantics of the graph differ from the Bayesian network. That is, the graph encodes a different set of conditional independence relations.
\end{ex}

\subsection{Parameterization}\label{sec:parameterization}
By parameterizing the factors as copulae, the benefits of copula theory are combined with those of PGMs.

Let $n_i$ denote the number of variables in the scope of the $i$th factor, and let $(i,j)$ index the $j$th variable in the $i$th factor. Our first attempt is to represent the factors as the copulae,
\begin{align*}
	\phi_i(\mathbf{s}_i) &= C_i(u_{(i,1)},\ldots,u_{(i,n_i)}),\ \ i=1,\ldots,n,
\end{align*}
where $U_i=F_i(X_i)$ (which, as mentioned, is distributed as $\mathcal{U}[0,1]$).

Although the product of copulae will always be a valid CDF, it will not, in general, be a copula. For example, consider the lower Frech\'{e}t-Hoeffding bound $W$, which is a copula in two dimensions. Clearly, $W^2$ is not a copula as it is lower than the lower bound. Thus this parameterization, while valid, cannot possess the desired marginals.

We fix this problem by transforming the variables \cite{Liebscher2008}, \cite{SilvaEtAl2011}. Suppose that $u_i$ is in the scope of $k_i$ copula factors. If we raise the $j$th occurrence of $u_i$ to the power $d_{ij}$ for all $i$, where
\begin{align*}
	\sum^{k_i}_{j=1}d_{ij} &= 1,\ \ i=1,\ldots,n
\end{align*}
then it can be shown that the product of the factors on the transformed variables defines a valid copula with the desired marginals.

For convenience, we set $d_{ij}=1/k_i\equiv d_i$. This is a reasonable assumption in the absence domain knowledge. We could, however, make the $d_{ij}$ additional copulae parameters, at the expense of complicating learning.

Let $V_i=U_i^{d_i}$. Then,
\begin{align}\label{eqn:archccdnmodel}
	F(x_1,\ldots,x_n) &= \prod^m_{i=1}C_i(v_{(i,1)},\ldots,v_{(i,n_i)};\ \theta_i),
\end{align}
defines a valid CDF with the desired marginals, where $\theta_i$ is the copula parameter for the $i$th copula. Equation \eqref{eqn:archccdnmodel} is the form of the model when we parameterize with Archimedean copulae.

Suppose we parameterize the factors as normal copulae with a single parameter. Then our model takes the form,
\begin{align}\label{eqn:normalccdnmodel}
	F(x_1,\ldots,x_n) &= \prod^m_{i=1}F_i(w_{(i,1)},\ldots,w_{(i,n_i)};\ \rho_i),
\end{align}
where $W_i=\Phi^{-1}(V_i)$, and $\rho_i$ is the copula parameter for the $i$th normal copula, parameterized with a single parameter as in \S\ref{sec:normalCopula}.

\begin{ex}\label{ex:studentccdn}
	Continuing our previous example, the model takes the form,
	\begin{align*}
		F(c,d,i,g,s,l,j,h) &= F_1(\Phi^{-1}(F_C(c)),\:\Phi^{-1}(F_D(d)^{1/2}))\\
		& \times F_2(\Phi^{-1}(F_D(d)^{1/2}),\:\Phi^{-1}(F_I(i)^{1/2}),\:\Phi^{-1}(F_G(g)^{1/3}))\\
		& \times F_3(\Phi^{-1}(F_I(i)^{1/2}),\:\Phi^{-1}(F_S(s)^{1/2}))\\
		& \times F_4(\Phi^{-1}(F_G(g)^{1/3}),\:\Phi^{-1}(F_H(h)))\\
		& \times F_5(\Phi^{-1}(F_G(g)^{1/3}),\:\Phi^{-1}(F_L(l)^{1/2}))\\
		& \times F_6(\Phi^{-1}(F_S(s)^{1/2}),\:\Phi^{-1}(F_L(l)^{1/2}),\:\Phi^{-1}(F_J(j))),
	\end{align*}
	where we have omitted the dependence on the copulae parameters for clarity.
\end{ex}

\subsection{Independencies}\label{sec:independencies}
The conditional independencies encoded by the factorization are easily read from the associated graph. It can be shown that a bidirected edge between two variables encodes equivalent independencies to a directed acyclic graph (DAG) wherein those two variables have a common parent that has been marginalized. For this reason, one can understand a bidirected edge between variables as modelling an association between those variables due to a latent variable. By replacing the bidirected edges in this manner, one forms the associated \emph{canonical DAG}. 

It is easiest to understand the semantics of the bidirected graph in terms of its corresponding DAG, as we illustrate with an example.
\begin{ex}
The canonical DAG associated with the mode of Example \ref{ex:studentcdn} is shown in Figure \ref{fig:canonicaldag}. Both graph encode the same conditional independencies, which can be read from the DAG using the $d$-separation criterion.

For example, $C\not\perp D$ (unless they are connected by the independence copula), since the path $C\rightarrow Z_1\leftarrow D$ is active. Also, $H\perp L$, since there is no active path between these nodes. They are all blocked by $v$-structures, such as $Z_6\rightarrow G\leftarrow Z_9$. Similarly, we expect $H\not\perp L\ |\ G$, as the $v$-structure $Z_6\rightarrow G\leftarrow Z_9$ is active.
\end{ex}
\begin{figure}[t]
  \centering
  \includegraphics[scale=0.7]{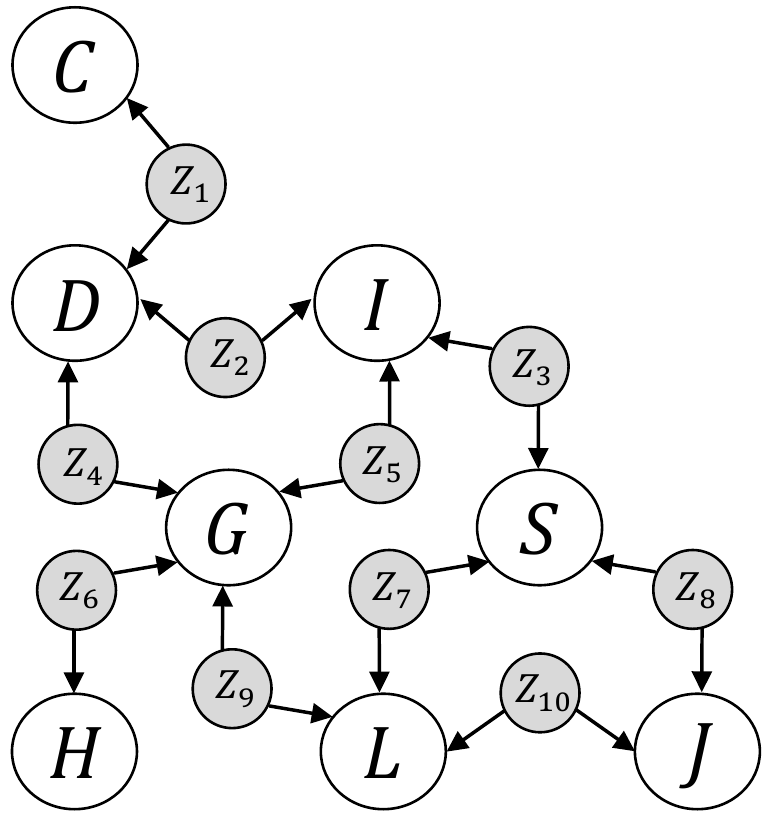}
  \caption[Canonical DAG for Student CDN]{Canonical DAG for Student CDN. The shaded variables are latent. The independencies are read using the $d$-separation criterion.}
	\label{fig:canonicaldag}
\end{figure}
As in the example, in general, variables not connected by a bidirected edge are marginally independent, and may be dependent otherwise. Variables not directly connected by a bidirected edge are independent conditioned on a subset only if no path between them consists wholly of observed variables variables. They may be conditionally dependent otherwise.

Expressing these properties in their most general forms,
\begin{itemize}
	\item $\mathbf{X}\perp\mathbf{Y}$ when $X\leftrightarrow Y\not\in\mathcal{G}$ for all $X\in\mathbf{X}$ and $Y\in\mathbf{Y}$.
	\item Typically, $\mathbf{X}\not\perp\mathbf{Y}$ when $X\leftrightarrow Y\in\mathcal{G}$ for all $X\in\mathbf{X}$ and $Y\in\mathbf{Y}$.
	\item $\mathbf{X}\perp\mathbf{Y}\ |\ \mathbf{Z}$ when there is no path composed of observed variables between any $X\in\mathbf{X}$ and $Y\in\mathbf{Y}$.
	\item Typically, $\mathbf{X}\not\perp\mathbf{Y}\ |\ \mathbf{Z}$ when all paths between each $X\in\mathbf{X}$ and $Y\in\mathbf{Y}$ consist entirely of observed variables.
\end{itemize}
``Typically'' is meant in the sense of occurring for \emph{almost all} parameterizations, as for the converse of $d$-separation in BNs. That is, the set of parameterizations for a given structure for which the second and fourth conditions do not hold has measure zero (see \cite[Theorem 3.5]{KollerFriedman2009}).

We see that a CDN encodes an unusual set of independencies; the use of \emph{bi}directed edges was to distinguish these as separate from those of BNs and MRFs. These unusual independencies entail modifications to standard inference, sampling, and learning algorithms. Indeed, we are studying a new class of models.

That the CDN is equivalent to a DAG with marginalized variables cannot be exploited for inference and sampling, as it is unclear how to go from the CDN parameterization to an equivalent DAG parameterization. Indeed, if it were easy to go from one parameterization to the other, inference and learning in BNs with latent variables would not prove a difficult task.

\subsection{Latent variables in CDNs}\label{sec:latentVariables}
In the previous section, it was explained how a bidirected edge implicitly represents a latent parent between two variables. It does not make sense in this model to explicitly include latent variables, as we demonstrate with a simple example.
\begin{ex}
Consider connecting two variables by a latent variable, as in Figure \ref{fig:latentexa}. The canonical DAG is given in Figure \ref{fig:latentexb}. The $v$-structure $Z_1\rightarrow Z\leftarrow Z_2$ is always blocked, as $Z$ is never observed, and $Z$ cannot be made to have descendents in the canonical DAG formed after introducing additional variables. (It is only possible to give $Z$ additional parents by introducing new variables.)

Therefore the addition of the latent variable has not encoded any conditional dependencies, or rather, $A$ and $B$ are independent regardless of the inclusion of $Z$.
\end{ex}
\begin{figure}[t]
     \begin{center}
        \subfigure[]{
            \label{fig:latentexa}
            \includegraphics[scale=0.7]{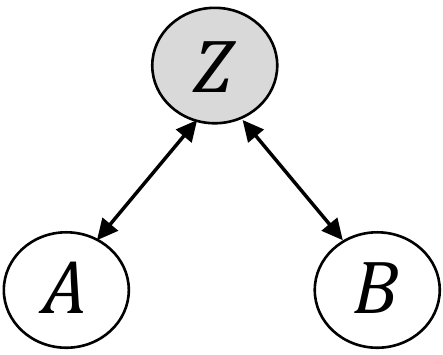}
        }\hspace{1cm}
        \subfigure[]{
					\label{fig:latentexb}
					\includegraphics[scale=0.7]{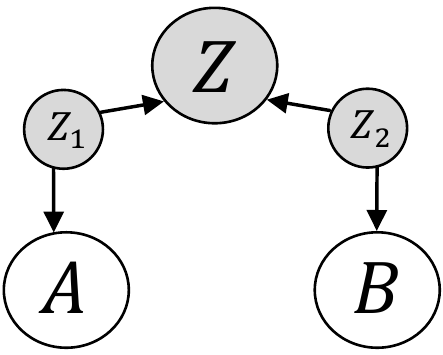}
        }
    \end{center}
    \caption[Example of adding a latent variable to a CDN.]{(a) CDN with latent variable $Z$. (b) Canonical DAG associated with this model.}
   \label{fig:latentex}
\end{figure}

A similar argument holds when the latent variable connects more than two variables or is connected to another latent variable.

From another perspective, latent variables, when marginalized out, simply reduce the dimension of any copula for whose scope they are in. This has the same effect as if they had not been included in the first instance. Therefore, there is nothing to be gained by the inclusion of explicit latent variables.

\section{Markov networks}
Undirected models, such as the Markov network, are an important class of graphical model that can encode symmetric conditional independence relations between variables. They possess, however, no obvious copula parameterization.

A Markov network factors the density as a normalized product of nonnegative factors. Suppose we factor the copula density as,
\begin{align*}
	c(\mathbf{u}) &= \frac{1}{Z}\prod_i\phi_i(\mathbf{c}_i).
\end{align*}
where $\mathbf{u}=(u_1=F_1(x_1),\ldots,u_n=F_n(x_n))$ and the $\{\mathbf{c}_i\}$ are subsets of the variables $\{u_1,\ldots,u_n\}$.
  
In order for $c$ to be a valid copula density, it must have uniform marginals. That is,
\begin{align}\label{eqn:copulaMarkov}
	1 &= \int c(\mathbf{u})d({\mathbf{u}\setminus\{u_i\}}),\ i=1,\ldots,n
\end{align}
for all $u_i$. It is unclear what sufficient condition on the factors enforces \eqref{eqn:copulaMarkov}.

\section{Copula Bayesian networks}
In contrast, Bayesian networks permit an obvious copula parameterization \cite{Elidan2010a}. Recall that in a Bayesian Network, the distribution is factored over the graph as,
\begin{align*}
	f(\mathbf{x}) &= \prod_if(x_i\ |\ \textnormal{Pa}_\mathcal{G}(x_i)),
\end{align*}
where $\textnormal{Pa}_\mathcal{G}(x_i)$ denotes the parents of $x_i$ in the graph $\mathcal{G}$.

One combines this framework with copula theory as follows. For a joint distribution $F$ that factors over a graph $\mathcal{G}$, suppose the marginals of each variable $F_i$ and a copula over each family $C_i(u_i, \textnormal{Pa}_\mathcal{G}(u_i))$ in $\mathcal{G}$ are given. Any conditional independence relations that hold in the distribution also hold in the copula distribution. Therefore,
\begin{align*}
	c(\mathbf{u}) &= \prod_ic(u_i\ |\ \textnormal{Pa}_\mathcal{G}(u_i))\\
	&= \prod_i\frac{c(u_i, \textnormal{Pa}_\mathcal{G}(u_i))}{c(\textnormal{Pa}_\mathcal{G}(u_i))}.
\end{align*}
The denominator is calculated by taking the derivative of $C(\textnormal{Pa}_\mathcal{G}(u_i))=C(1,\textnormal{Pa}_\mathcal{G}(u_i))$, which is analytically tractable in many useful cases.

Thus, the distribution factors as,
\begin{align*}
	f(\mathbf{x}) &= \prod_ic(u_i\ |\ \textnormal{Pa}_\mathcal{G}(u_i))f(x_i),
\end{align*}
having separately specified the marginals and the dependence structure over subsets of the variables. The copulae parameters of each family are learnt by the standard methods, for example by maximizing a pseudo-likelihood \cite{GenestFavre2007}.

The model has several advantages. Most obviously, modelling is made more flexible by separating the specification of marginals their association, which also permits more robust and efficient learning, and faster learning with missing-at-random data \cite{Elidan2010b}. When learning the structure of a Bayesian Network, evaluating an expensive entropy term is required to evaluate the change in score of adding, removing, or reversing an edge. Spearman's rho, however, can be used as a proxy for the entropy in this model, drastically speeding up structure learning \cite{Elidan2012a}.

There is a straightforward extension allowing this model to be used for classification that performs better than the state-of-the-art support vector machine (SVM) in some domains \cite{Elidan2012b}. Moreover, the model is drastically faster to train than SVM. We speculate that the superior performance is due to the precise representation of the marginals with non-parametric kernel density estimation, which may be more important for classification than the dependence structure.

\section{Summary}
\begin{itemize}
	\item A CDN factors a CDF as a product of CDFs over subsets of the variables.
	\item The CDN permits a copula parameterization in which each of the factors is a copula with arguments that have been transformed.
	\item The independencies encoded by a CDN are read by applying the $d$-separation criterion to its canonical DAG.
	\item An edge in the model is equivalent to a latent parent between those variables that has been marginalized.
	\item Latent variables cannot be modelled explicitly with CDNs.
	\item Bayesian networks permit an obvious copula parameterization, in contrast to Markov networks.
	\item Copula BNs are advantageous for representation, learning, and classification.
\end{itemize}
  \chapter{Inference}
\label{inference}
Inference, in the context of machine learning, refers to deriving conclusions that are valid with respect to one's model, such as the probability an event occurs conditioned on another event, or the most likely outcome for some subset of the model conditioned on the remaining variables. It is not to be confused with ``statistical inference,'' which is deriving valid conclusions about the model parameters from a sample, and is typically referred to in machine learning as ``parameter estimation.''

In this chapter, we discuss the types of inference, or queries, that can be performed on a CDN, and the requisite algorithms. We will see that CDNs admit a wider variety of queries relative to standard PGMs. The derivative-sum-product algorithm for efficiently differentiating the model is explained in fine detail.

\section{Types of queries}
All queries are comprised of three basic operations: evaluation of the factors, taking the limit of a variable, and differentiation. 

\subsection{Evaluating the full CDF}
Suppose our model is over variables $\mathbf{X}$ and we have two subsets $\mathbf{A}\subseteq\mathbf{X}$ and $\mathbf{B}\subseteq\mathbf{X}$ such that $\mathbf{A}\cap\mathbf{B}=\emptyset$. Recall the form of the model given by \eqref{eqn:cdnmodel}.

The most basic query is to calculate the full model CDF. We do this simply by evaluating each factor on its scope,
\begin{align*}
	P(\mathbf{X}\preceq\mathbf{x}') &= \prod_{i=1}^n\phi_i\left(\mathbf{s}_i[\mathbf{x}=\mathbf{x}']\right).
\end{align*}
This operation is analogous to calculating the full density in standard PGMs.

\subsection{Marginalization}
By definition of the CDF, variables are marginalized from the model CDF by taking the limit of \eqref{eqn:cdnmodel} as those variables approach infinity. Or rather, working in the extended reals, we set those variables to infinity. For example,
\begin{align*}
	P(\mathbf{A}\preceq\mathbf{a}') &= \prod_{i=1}^n\phi_i\left(\mathbf{s}_i[\mathbf{a}=\mathbf{a}',\mathbf{x}\setminus\mathbf{a}=\mathbf{\infty}]\right).
\end{align*}
To marginalize a variable in a copula distribution we set it to $1$, since the marginals are distributed as $\mathcal{U}[0,1]$.

It is noteworthy that marginalization is a trivial operation in CDNs, in contrast to standard PGMs where one must sum or integrate over those variables.

\subsection{The derivatives of the CDF}
Taking the derivative of the CDF with respect to all variables gives the density, a quantity of prime interest; for example, it is required to calculate the log-likelihood of a sample. With regards to our model,
\begin{align*}
	f(\mathbf{X}=\mathbf{x}') &= \left(\frac{\partial}{\partial\mathbf{x}}\left.\prod_{i=1}^n\phi_i\left(\mathbf{s}_i\right)\right)\right|_{\mathbf{x}=\mathbf{x}'}.
\end{align*}

Taking the partial derivative of a CDF with respect to a subset of the variables produces a mixed density/CDF. For example,
\begin{align*}
	P(\mathbf{A}=\mathbf{a}', \mathbf{B}\preceq\mathbf{b}') &= \left(\frac{\partial}{\partial\mathbf{a}}\left.\prod_{i=1}^n\phi_i\left(\mathbf{s}_i[\mathbf{b}=\mathbf{b}']\right)\right)\right|_{\mathbf{a}=\mathbf{a}'},
\end{align*}
assuming that $\mathbf{A}\cup\mathbf{B}=\mathbf{X}$.

\subsection{More complicated queries}
One performs a range of queries by combining marginalization and differentiation. Refer to Figure \ref{fig:cdnqueries} for a full listing.
\begin{figure}[t]
  %\centering
	\begin{tabular}{ll}
		Full CDF & $P(\mathbf{X}\preceq\mathbf{x}') = \prod_{i=1}^n\phi_i\left(\mathbf{s}_i[\mathbf{x}=\mathbf{x}']\right)$ \\[16pt]
		Marginal CDF & $P(\mathbf{A}\preceq\mathbf{a}') = \prod_{i=1}^n\phi_i\left(\mathbf{s}_i[\mathbf{a}=\mathbf{a}',\mathbf{z}=\infty]\right)$ \\[16pt]
		Marginal CDF conditioned & \multirow{2}{*}{$P(\mathbf{A}\preceq\mathbf{a}'\ |\ \mathbf{B}\preceq\mathbf{b}') = \frac{P(\mathbf{A}\preceq\mathbf{a}', \mathbf{B}\preceq\mathbf{b}')}{P(\mathbf{B}\preceq\mathbf{b}')}$} \\
		on a cumulative event & \\[16pt]
		Full density & $f(\mathbf{X}=\mathbf{x}') = \left(\frac{\partial}{\partial\mathbf{x}}\left.\prod_{i=1}^n\phi_i\left(\mathbf{s}_i\right)\right)\right|_{\mathbf{x}=\mathbf{x}'}$ \\[16pt]
		Marginal density & $f(\mathbf{A}=\mathbf{a}') = \left(\frac{\partial}{\partial\mathbf{a}}\left.\prod_{i=1}^n\phi_i\left(\mathbf{s}_i[\mathbf{z}=\infty]\right)\right)\right|_{\mathbf{a}=\mathbf{a}'}$ \\[16pt]
		Mixed density and CDF & $P(\mathbf{A}=\mathbf{a}', \mathbf{B}\preceq\mathbf{b}') = \left(\frac{\partial}{\partial\mathbf{a}}\left.\prod_{i=1}^n\phi_i\left(\mathbf{s}_i[\mathbf{b}=\mathbf{b}',\mathbf{z}=\infty]\right)\right)\right|_{\mathbf{a}=\mathbf{a}'}$ \\[16pt]
	Density conditioned & \multirow{2}{*}{$f(\mathbf{A}=\mathbf{a}'\ |\ \mathbf{B}\preceq\mathbf{b}') = \frac{P(\mathbf{A}=\mathbf{a}', \mathbf{B}\preceq\mathbf{b}')}{P(\mathbf{B}\preceq\mathbf{b}')}$} \\
	on a cumulative event & \\[16pt]
	Conditional density & $f(\mathbf{A}=\mathbf{a}'\ |\ \mathbf{B}=\mathbf{b}') = \frac{P(\mathbf{A}=\mathbf{a}', \mathbf{B}=\mathbf{b}')}{P(\mathbf{B}=\mathbf{b}')}$\\[8pt]
	\end{tabular}
  \caption[Queries that can be performed on a CDN.]{Complete list of queries that can be performed on a CDN. The vector $\mathbf{z}$ is to be understood as those variables in the model that do not appear as arguments.}
	\label{fig:cdnqueries}
\end{figure}

\section{Differentiating the model}
Without loss of generality, we consider in detail calculating the full model density. Calculating a marginal density or mixed density/CDF is accomplished by reducing the factors as explained previously, then performing differentiation over a subset of the model.

\subsection{Exchanging differentiation and multiplication}\label{sec:diffcdn}
Consider again the form of the model with normal copulae and arbitrary marginals given by \eqref{eqn:normalccdnmodel}. In this case, the density is,
\begin{align}
	f(x_1,\ldots,x_n) &= \frac{\partial}{\partial\mathbf{x}}\prod^m_{i=1}F_i(w_{(i,1)},\ldots,w_{(i,n_i)};\ \rho_i) \nonumber \\
	&= \kappa\frac{\partial}{\partial\mathbf{w}}\prod^m_{i=1}F_i(w_{(i,1)},\ldots,w_{(i,n_i)};\ \rho_i). \label{eqn:normalDensity}
\end{align}
where,
\begin{align*}
	\kappa &= \prod^m_{i=1}\frac{\partial u_i}{\partial x_i}\frac{\partial v_i}{\partial u_i}\frac{\partial w_i}{\partial v_i}\\
		&= \prod^n_{i=1}f_i(x_i)d_iu_i^{d_i-1}\frac{d\Phi^{-1}_i}{dv_i}.
\end{align*}
The derivative of the probit function is calculated by,
\begin{align*}
	x &= F(\Phi^{-1}(x)) \\
	\Rightarrow\ 1 &= f(\Phi^{-1}(x))\frac{d\Phi^{-1}}{dx} \\
	\Rightarrow\ \frac{d\Phi^{-1}}{dx} &= 1 / f(\Phi^{-1}(x)).
\end{align*}
Similarly, when the model is parameterized with Archimedean copulae, the density is,
\begin{align}
	f(x_1,\ldots,x_n) &= \kappa'\frac{\partial}{\partial\mathbf{v}}\prod^m_{i=1}C_i(v_{(i,1)},\ldots,v_{(i,n_i)};\ \theta_i), \label{eqn:archDensity}
\end{align}
where,
\begin{align*}
	\kappa' &= \prod^n_{i=1}f_i(x_i)d_iu_i^{d_i-1}.
\end{align*}
The problem is, of course, how to calculate the derivative of the product of factors. Consider the simplest example of two factors. The naive approach uses the product rule,
\begin{align*}
	\frac{\partial}{\partial\mathbf{x}}\phi_1(\mathbf{s}_1)\phi_2(\mathbf{s}_2) &= \sum_{\mathbf{a}\subseteq\mathbf{x}}\frac{\partial\phi_1}{\partial\mathbf{a}}\frac{\partial\phi_2}{\partial(\mathbf{x}\setminus\mathbf{a})}.
\end{align*}
The summation has $2^n$ terms, where $n$ is the dimension of $\mathbf{x}$, and there will be an even greater number of terms as the number of factors increases. Clearly, this approach is intractable.

However, by exploiting the structure of the factorization---a recurrent theme of PGMs---the derivative is efficiently calculated. The method is based on the observation that one may exchange the order of multiplication and differentiation. All factors that do not have a given variable in their scope can be taken outside the differentiation with respect to that variable. (The procedure is analogous to variable elimination and the sum-product algorithm in standard PGMs.)

Each step of differentiation produces a factor over a smaller scope. By performing the differentiation on a smaller set of the factors at each step, differentiation is no longer exponential in the size of the model. This is best illustrated with an example.

\begin{ex}\label{ex:cdnvarelim}
	Refer to Example \ref{ex:studentcdn}. We choose the differentiation order $c,d,h,g,j,i,s,l$. Then the intermediate factors and density are calculated as,
	\begin{align*}
		\psi_1(d) &= \frac{\partial}{\partial c}\phi_1(c,d) \\
		\psi_2(i,g) &= \frac{\partial}{\partial d}\left(\phi_2(d,i,g)\psi_1(d)\right) \\
		\psi_3(g) &= \frac{\partial}{\partial h}\phi_4(g,h) \\
		\psi_4(i,l) &= \frac{\partial}{\partial g}\left(\phi_5(g,l)\psi_2(i,g)\psi_3(g)\right) \\
		\psi_5(s,l) &= \frac{\partial}{\partial j}\phi_6(s,l,j) \\
		f(c,\ldots,h) &= \frac{\partial^3}{\partial i\partial s\partial l}\left(\phi_3(i,s)\psi_4(i,l)\psi_5(s,l)\right).
	\end{align*}
	Substituting the intermediate factors into the final answer clarifies how the two operations were interchanged,
	\begin{align*}
		f(c,\ldots,h) &= \frac{\partial^3}{\partial i\partial s\partial l}\left(\phi_3\frac{\partial}{\partial g}\left(\phi_5\frac{\partial}{\partial d}\left(\phi_2\frac{\partial}{\partial c}\phi_1\right)\frac{\partial}{\partial h}\phi_4\right)\frac{\partial}{\partial j}\phi_6\right).
	\end{align*}
\end{ex}

The efficiency of this procedure depends on the differentiation order, as this effects the size of the intermediate factors at each step. It can be shown that choosing an optimal differentiation order is an $NP$-hard problem, analogous to variable elimination, although the same proven heuristics for variable elimination are applicable.

\subsection{Message passing}
The procedure of \S \ref{sec:diffcdn} is equivalent to an algorithm that passes messages over a data structure known as a clique tree. A clique tree is a tree for which the nodes, or \emph{cliques}, are subsets of the model variables and each factor is associated with a node. It must also satisfy a connectedness condition. Specifically,
\begin{defn}
A \emph{clique tree} for a set of factors $\Phi$ over $\mathcal{X}$ is an undirected tree, each of whose nodes $i$ is associated with a subset $\mathbf{C}_i\subset\mathcal{X}$, that satisfies the following two properties,
	\begin{enumerate}
		\item family preservation: each factor $\phi_i\in\Phi$ must be associated with a node $\mathbf{C}_{\alpha(i)}$ such that $\textnormal{Scope}[\phi_i]\subset\mathbf{C}_{\alpha(i)}$.
		\item running intersection: whenever there is a variable $X$ such that $X\in\mathbf{C}_i$ and $X\in\mathbf{C}_j$, then $X$ is also in every node in the (unique) path between $\mathbf{C}_i$ and $\mathbf{C}_j$.
	\end{enumerate}
	A set $\mathbf{S}_{i,j}=\mathbf{C}_i\cap\mathbf{C}_j$, known as the \emph{sepset}, is associated with the edge between adjacent nodes $\mathbf{C}_i$ and $\mathbf{C}_j$. In a slight abuse of notation, we use $\mathbf{C}_i$ to denote both the set of variables in the $i$-th clique and the set of factors associated with that clique.
\end{defn}
At a high level, a clique tree can be thought of as a data structure that exposes the structure of the factorization, so that it may be exploited for efficient inference, learning, sampling, and so on.

In synchronous message passing, for each node, starting from the leaves and progressing towards an arbitrary node that is chosen as the root, a message is formed by performing some operation to combine the factors associated with a clique with its incoming messages. The message is sent downstream towards the root. A node must wait for all of its upstream messages before it is ready to calculate and transmit its own downstream, necessitating an algorithm for scheduling message passing. The message from clique $i$ to clique $j$ is denoted $\psi_{i\rightarrow j}$.
\begin{algorithm}[t]
\caption{Pass messages in the default topological ordering given an arbitrary message function}
\label{alg:passMessages}
\begin{algorithmic}[1]
	\Procedure{PassMessages}{$\mathbf{u}, \mathcal{C}, \textnormal{\scshape CalcMsgs}$}
		\State $\textnormal{result} = 1$\Comment{there may be more than one root if ``clique forest''}
		\For{$C_i\in\mathcal{C}$ (taken in the topological ordering)}
			\If{$\textnormal{child}(i)\neq-1$}
				\State $\textnormal{\scshape CalcMsgs}(i, \textnormal{child}(i))$
			\Else
				\State $\textnormal{result} = \textnormal{result}\times\textnormal{\scshape CalcMsgs}(i, \textnormal{child}(i))$
			\EndIf
		\EndFor
		\State \textbf{return} $\textnormal{result}$
	\EndProcedure
\end{algorithmic}
\end{algorithm}

In our case, the incoming messages and the associated factors are multiplied together, and differentiated with respect to the variables in the clique that are not in its downstream sepset. This is analogous to the sum-product algorithm, for which those variables are summed out. As discussed previously, this operation is exponential in the size of the clique due to the correspondence between the scopes of the intermediate factors and the cliques. The \emph{treewidth} is defined as the size of the largest clique minus one. Thus, calculation of the messages is exponential in the treewidth.

Multiple clique trees exist for a distribution with differing treewidths. Finding the optimal clique tree---the one with the lowest treewidth---like finding the optimal differentiation ordering, is an {\itshape NP}-hard problem. Nonetheless, a satisfactory clique tree is constructed with the same greedy heuristics as for determining a good elimination ordering. A discussion of clique tree construction is incidental to our purpose; we simply state that in our implementation clique trees are constructed by simulating variable elimination, successive variables chosen by the min fill-edge criterion (see \cite[\S10.4.1]{KollerFriedman2009}).

A topological ordering that is used to schedule messages in a default ordering is determined during construction. In this ordering, the final clique is the root, and for each clique, all cliques with a lesser index are upstream and all cliques with a greater index are downstream. Therefore, to schedule the messages, one simply passes the message from clique $i$ to its downstream clique in the order $i=1,2,3,\ldots$. When notating clique trees, we direct the edges to indicate the default direction that messages are passed. Subsequently, it makes sense to refer to a clique's parents and (unique) child. See Algorithm \ref{alg:passMessages} for a summary of default message scheduling.

\begin{ex}
	A clique tree for Example \ref{ex:studentcdn} is given in Figure \ref{fig:cliquetree}. In fact, since the tree was constructed using the same variable order as Example \ref{ex:cdnvarelim}, the messages correspond to those intermediate factors. That is, $\psi_{1\rightarrow2}=\psi_1$, $\psi_{2\rightarrow5}=\psi_2$, etc. It is clear we have constructed a valid topological ordering: all cliques with a lesser index are upstream relative to that clique.
\end{ex}
\begin{figure}[t]
  \centering
  \includegraphics[scale=0.6]{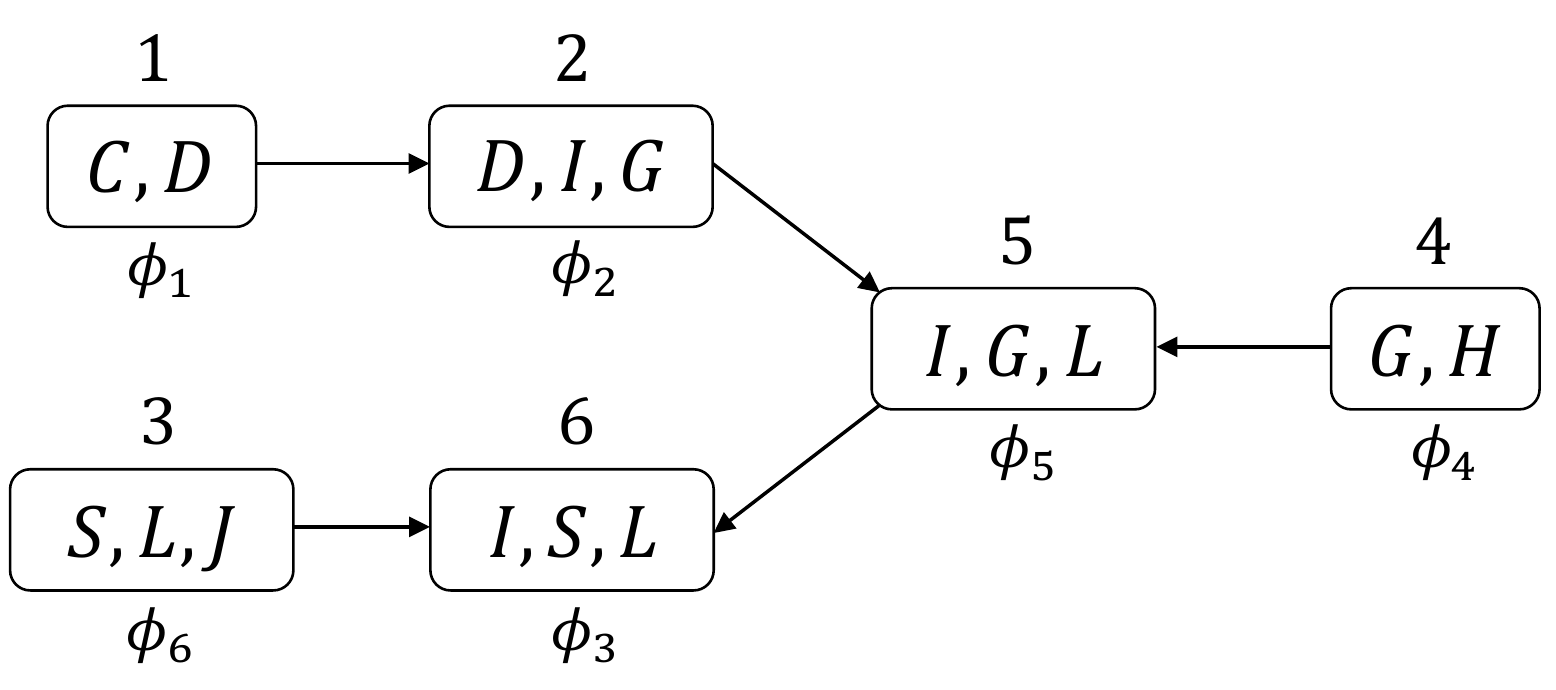}
  \caption[A clique tree for the Student example]{A clique tree for the Student example outputted by our implementation of clique tree construction. The number above the cliques indicate a topological ordering, and the direction of the arrows indicates the direction of message passing under this ordering. The factors in each clique are notated beneath.}
	\label{fig:cliquetree}
\end{figure}

\section{The derivative-sum-product algorithm}\label{sec:dspAlgorithm}
In practice, however, representing and manipulating symbolic factors is complicated and inefficient, and we are only interested in the result evaluated at a given number. In this section, we extend the previous algorithm to evaluate the model derivative for a given value. The resulting algorithm is known as the \emph{derivative-sum-product algorithm} \cite{HuangEtAl2010}, \cite{HuangFrey2008}, \cite{HuangFrey2011}.

The intuition behind this algorithm was presented in the previous section. Instead of, however, passing functions $\psi_{i\rightarrow j}$ from clique $i$ to clique $j$, we pass a set of partial derivatives evaluated at a fixed value. Define for each non-root clique $C_i$ that passes to $C_j$,
\begin{align*}
	\delta_{i\rightarrow j}(A) &:= \left.\left(\frac{\partial}{\partial A}\psi_{i\rightarrow j}\right)\right|_{\mathbf{x}=\mathbf{x}'} \\
	&=
	\left.\left(
		\frac{\partial}{\partial(A\cup (C_i\setminus S_{i,j}))}
		\left(
			\prod_{\phi_k\in C_i}\phi_k\prod_{l\in\textnormal{Pa}(C_i)}\psi_{l\rightarrow i}
		\right)
	\right)\right|_{\mathbf{x}=\mathbf{x}'},
\end{align*}
for all $A\subseteq S_{i,j}$. The set of messages passed from $i$ to $j$ is,
\begin{align*}
	\Delta_{i\rightarrow j} &:= \{\delta_{i\rightarrow j}(A)\ |\ A\subseteq S_{i,j}\}. 
\end{align*}

To calculate the final answer, the root clique is differentiated with respect to its entire scope, requiring the derivatives with respect to every subset of its scope. Define for the root clique $i$,
\begin{align*}
	\delta_{i}(A) &:= \left.\left(\frac{\partial}{\partial A}\left(\prod_{\phi_k\in C_i}\phi_k\prod_{l\in\textnormal{Pa}(C_i)}\psi_{l\rightarrow i}\right)\right)\right|_{\mathbf{x}=\mathbf{x}'},
\end{align*}
for all $A\subseteq C_i$. We think of these values as ``messages'' from clique $i$ to itself. The result of message passing is $\delta_{i}(C_i)$. (The dependence of $\delta_{i\rightarrow j}$ and $\delta_{i}$ on $\mathbf{x}'$ is omitted for clarity.)

It will become clear why the additional derivatives are required to evaluate the density numerically when we explain how the messages are calculated.

Assume that the parents of each clique are ordered, and denote by $(i,p)$ the $p$th parent of $C_i$. Also, assume the factors associated with each clique are ordered and denote by $(i,q)$ the $q$th factor in the $C_i$ clique. (It will be clear from the context which meaning is intended for the parentheses.) Let $p_i$ denote the number of parents of $C_i$, and $q_i$ the number of its factors.

Each message is efficiently calculated using the product rule and dynamic programming. To this end, define for each non-root clique $C_i$ that passes to $C_j$ the \emph{partial messages}, 
\begin{align*}
	\delta_i(A, p) &:= \left.\left(\frac{\partial}{\partial A}\left(\prod_{k=1}^{q_i}\phi_{(i,k)}\prod_{l=1}^p\psi_{(i,l)\rightarrow i}\right)\right)\right|_{\mathbf{x}=\mathbf{x}'}
\end{align*}
for $p=1,\ldots,p_i$ and $A\subseteq C_i$.

Also, define the \emph{partial factor derivatives},
\begin{align*}
	\mu_i(A, q) &:= \left.\left(\frac{\partial}{\partial A}\prod^q_{k=1}\phi_{(i,k)}\right)\right|_{\mathbf{x}=\mathbf{x}'}
\end{align*}
for $q=1,\ldots,q_i$ and $A\subseteq C_i$. Define the set of final partial factor derivatives as,
\begin{align*}
	M_i &:= \{\mu_i(A,q_i)\ |\ A\subseteq C_i\}.
\end{align*}
Note that for non-root cliques $C_i$,
\begin{align}
	\delta_{i\rightarrow j}(A) &= \delta_i(A\cup(C_i\setminus S_{i,j}), p_i), \label{eqn:partialFinalMsg} \\
	\mu_i(A,q_i) &= \delta_i(A, 0), \label{eqn:partialFinalFactors}
\end{align}
and similarly for root cliques. Thus, the partial messages can be thought of as to what the message from $i$ to $j$ would have evaluated had we omitted some of its parents.

Applying the product rule,
\begin{align}
	\delta_{i\rightarrow j}(A,p) &= \left.\left(\frac{\partial}{\partial A}\left(\prod_{k=1}^{q_i}\phi_{(i,k)}\prod_{l=1}^p\psi_{(i,l)\rightarrow i}\right)\right)\right|_{\mathbf{x}=\mathbf{x}'} \nonumber \\
	&= \sum_{B\subseteq A}\left.\left(\frac{\partial}{\partial B}\psi_{(i,p)\rightarrow i}\right)\right|_{\mathbf{x}=\mathbf{x}'}\left.\left(\frac{\partial}{\partial(A\setminus B)}\prod_{k=1}^{q_i}\phi_{(i,k)}\prod_{l=1}^{p-1}\psi_{(i,l)\rightarrow i}\right)\right|_{\mathbf{x}=\mathbf{x}'} \nonumber \\
	&= \sum_{B\subseteq A\cap S_{(i,p),i}}\delta_{(i,p)\rightarrow i}(B)\delta_{i\rightarrow j}(A\setminus B,p-1). \label{eqn:calcMsg}
\end{align}
In the last line, we used the fact that $\partial\psi/\partial B=0$ when $B$ is not a subset of the scope of $\psi$.

Equation \eqref{eqn:calcMsg} characterizes the structure of the overlapping subproblems. Importantly, each term in this equation is a value rather than a function, and thus the partial messages can be calculated without symbolic manipulation.

To calculate the final messages, we use a top-down approach. For each $A\subseteq S_{i,j}$, we start from \eqref{eqn:partialFinalMsg} and apply \eqref{eqn:calcMsg}. The contributions in the sum from all partial messages that have been calculated are added in to this partial message, and those that have not been calculated are pushed to a stack. Then, the top element of the stack is examined, and we apply \eqref{eqn:calcMsg} to this partial message, again adding in the contributions of calculated partial messages and pushing all further uncalculated partial messages to the stack. 

When an element from the top of the stack is examined a second time, all of the terms of \eqref{eqn:calcMsg} for this partial message have been, perforce, added in, and thus it is popped from the stack. This procedure is repeated until the stack is empty and the calculation of $\delta_{i\rightarrow j}(A)$ completed. The recursion is guaranteed to terminate by \eqref{eqn:partialFinalFactors} and the fact that the derivatives of the product of the factors have been computed in advance.

Alternatively, one can adopt a bottom-up algorithm, as in the original paper \cite{HuangEtAl2010}, and calculate $\delta_i(A, p)$ for all $A\subseteq C_i$, for increasing $p$. Usually, some partial messages are not required to calculate the final messages, and it is difficult to determine in advance which messages these are. On the other hand, our top-down approach only calculates as many partial messages as are required.

The authors of \cite{HuangEtAl2010} leave unspecified the details of calculating the partial derivatives of the product of all factors associated with a clique. We devised a top-down dynamic programming algorithm similar to the one previously described. The structure of the subproblems is characterized by,
\begin{align}
	\mu_i(A,q) &= \left.\left(\frac{\partial}{\partial A}\prod^q_{k=1}\phi_{(i,k)}\right)\right|_{\mathbf{x}=\mathbf{x}'} \nonumber \\
	&= \sum_{B\subset A}\left.\left(\frac{\partial}{\partial B}\phi_{(i,q)}\right)\right|_{\mathbf{x}=\mathbf{x}'}\left.\left(\frac{\partial}{\partial(A\setminus B)}\prod^{q-1}_{k=1}\phi_{(i,k)}\right)\right|_{\mathbf{x}=\mathbf{x}'} \nonumber \\
	&= \sum_{\substack{B\subseteq A\cap\textnormal{Scope}[\phi_{(i,q)}] \\ A\setminus B\subseteq\textnormal{Scope}[\prod^{q-1}_{k=1}\phi_{(i,k)}]}} \left.\left(\frac{\partial}{\partial B}\phi_{(i,q)}\right)\right|_{\mathbf{x}=\mathbf{x}'}\mu_i(A\setminus B,q-1). \label{eqn:calcProdFactors}
\end{align}
First, $\partial\phi_k/\partial B$ is evaluated and stored for every $\phi_k\in C_i$ and $B\subseteq\textnormal{Scope}[\phi_k]$. (See \S\ref{sect:partialDerivFactors}.)

Then, for each $A\subseteq C_i$, \eqref{eqn:calcProdFactors} is recursively applied to \eqref{eqn:partialFinalFactors}, calculating the partial factor derivatives in a manner analogous to the calculation of the partial messages.

In our implementation, calculation of the partial factor derivatives and the partial messages are broken into two steps for simplicity of implementation. Although, it \emph{is} possible to combine them into a single top-down dynamic programming algorithm, which would avoid the calculation of any unnecessary value. Again, since each term in \eqref{eqn:calcProdFactors} is clamped to the values $\mathbf{x}=\mathbf{x}'$, no symbolic manipulation is required.

The correctness of the derivative-sum-product algorithm should be clear from the intuition given in the previous section. Essentially, it follows from the properties of a clique tree, and its proof is very similar to that for the sum-product algorithm.

Refer to Algorithm \ref{alg:calcDspMessages} for a summary of calculating the derivative-sum-product messages. We use the convention that calculating the result at the root clique is denoted by passing to ``clique $-1$.'' It is assumed that the helper functions memoize their values, which are accessible to all methods. Also, it is assumed that the parent messages have been calculated and can be indexed by their subset. The reduction of terms in the summations, as given in \eqref{eqn:calcMsg} and \eqref{eqn:calcProdFactors}, is omitted in lines $13$ and $20$ for clarity.

\emph{We emphasize that implementation of the derivative-sum-product algorithm is complex.} Issues such as the enumeration and indexing of subsets and messages must be addressed. Performing a stack-based dynamic programming procedure is intricate. Several optimizations that can be found in our library, such as only allocating as much memory as is required for the messages and temporary values (in advance), avoiding the evaluation of any term guaranteed to be zero, and caching the values of the factors, complicate the implementation further. The author of \cite{Silva2012} abandons it for these reasons.

See Experiment \ref{exp:inference} for an empirical analysis of how the running time of d-s-p varies with the model structure.

\begin{algorithm}[p]
\caption{Calculate the derivative-sum-product messages from clique $i$ to clique $j$}
\label{alg:calcDspMessages}
\begin{algorithmic}[1]
	\Procedure{DspMessages}{$i, j$}
		\If{$j\neq-1$}
			\For{$A\subseteq S_{i,j}$}
				\State \textbf{calculate} $\delta_{i\rightarrow j}(A)=\textnormal{\scshape PartialMessage}(A\cup(C_i\setminus S_{i,j}), p_i)$
			\EndFor
			\State \textbf{return} $\Delta_{i\rightarrow j} := \{\delta_{i\rightarrow j}(A)\ |\ A\subseteq S_{i,j}\}$
		\Else
			\State \textbf{return} $\{\textnormal{\scshape PartialMessage}(C_i, p_i)\}$
		\EndIf
	\EndProcedure\vspace{6pt}
	\Procedure{PartialMessage}{$A, p$}
		\If{$p>0$}
			\State \textbf{return} $\sum_{B\subseteq A}\textnormal{\scshape DspMessages}\left(\left(i,p\right), i\right)[B]\times\textnormal{\scshape PartialMessage}(A\setminus B, p-1)$
		\Else
			\State \textbf{return} {\scshape PartialFactorDerivative}($A, q_i$)
		\EndIf
	\EndProcedure\vspace{6pt}
	\Procedure{PartialFactorDerivative}{$A, q$}
		\If{$q>0$}
			\State \textbf{return} $\sum_{B\subseteq A}\left.\left(\frac{\partial}{\partial B}\phi_{(i,q)}\right)\right|_{\mathbf{x}=\mathbf{x}'}\times\textnormal{\scshape PartialFactorDerivative}(A\setminus B, q-1)$
		\Else
			\State \textbf{return} $\left.\left(\frac{\partial}{\partial A}\phi_{(i,0)}\right)\right|_{\mathbf{x}=\mathbf{x}'}$
		\EndIf
	\EndProcedure
\end{algorithmic}
\end{algorithm}

\section{Calculating the density}
Putting together the previous steps, we obtain Algorithm \ref{alg:density} for calculating the density of a CDN. Refer to \S\ref{sec:parameterization} and \S\ref{sec:diffcdn} for details about the notation. We have assumed that for transformation copulae, the margins of the transformed distribution are identical, as they are for the normal copula. Of course, for the normal copula, $f$ and $F$ in lines 10-11 are the density and CDF of the standard normal distribution, $\phi$ and $\Phi$.

\begin{algorithm}[p]
\caption{Calculate the density of a CDN}
\label{alg:density}
\begin{algorithmic}[1]
	\Procedure{Density}{$\mathbf{x}, \mathcal{C}$}
		\State $u_i := F_i(x_i)$,\ $i=1,\ldots,n$
		\State $\alpha := \prod^n_{i=1}f_i(x_i)$\Comment{product of marginals}
		\State \textbf{return} $\alpha\times\textnormal{\scshape CopulaDensity}(\mathbf{u},\mathcal{C})$
	\EndProcedure\vspace{6pt}
	\Procedure{CopulaDensity}{$\mathbf{u},\mathcal{C}$}
		\State $v_i := u_i^{d_i}$,\ $i=1,\ldots,n$
		\State $\beta := \prod^n_{i=1}d_iu_i^{d_i-1}$
		\If{model comprised of transformation copulae}
			\State $w_i := F^{-1}(v_i)$,\ $i=1,\ldots,n$\Comment{$F$ is marginal CDF of transformed distribution}
			\State $\gamma := \prod^n_{i=1}f(F^{-1}(v_i))^{-1}$\Comment{$f$ is corresponding density}
			\State \textbf{return} $\beta\gamma\times\textnormal{\scshape PassMessages}(\mathbf{w}, \mathcal{C}, \textnormal{\scshape DspMessages})$
		\Else
			\State \textbf{return} $\beta\times\textnormal{\scshape PassMessages}(\mathbf{v}, \mathcal{C}, \textnormal{\scshape DspMessages})$
		\EndIf
	\EndProcedure
\end{algorithmic}
\end{algorithm}

\section{Numerical stability}\label{sec:numericalStability}
A numerical instability encountered in this project occurred during evaluation of Archimedean copulae for extreme values of the parameters. For a network composed of any such factors, message passing would result in underflow, and it is conceivable that it could do so for other cases as well.

All message passing can be run in log-space to avoid underflow using the following numerical tricks. First, we make the copulae cache the log of their partial derivatives.

To multiple numbers in log-space, we add their logs,
\begin{align*}
	\ln\left(\prod^n_{i=1}x_i\right) &= \sum^n_{i=1}\ln\left(x_i\right).
\end{align*}
Adding numbers in log-space uses the following trick,
\begin{align*}
	\ln\left(\sum^n_{i=1}x_i\right) &= \ln\left(\sum^n_{i=1}e^{\ln(x_i)}\right)	\\
	&= \max_i\{\ln(x_i)\} + \ln\left(\sum^n_{i=1}e^{\ln\left(x_i\right) - \max_i\{\ln\left(x_1\right)\}}\right)
\end{align*}
The final $\ln$ is evaluated using the C++ function {\ttfamily log1p}. This function stably evaluates $\ln(1+x)$ when $x$ is small.

The reason this rearrangement is more numerically stable is that it makes the exponential terms much less likely to underflow. Also, we can understand it in terms of the bounds on the log-sum-exp function,
\begin{align*}
	\max\{x_1,\ldots,x_n\} &\le \ln\left(\sum_{i=1}^ne^{x_i}\right) \le \max\{x_1,\ldots,x_n\} + \ln(n).
\end{align*}
Thus, even if the exponential terms underflow, one still has a lower bound on the sum.

\section{Inference in discrete CDNs}\label{sec:discreteInference}
Although this thesis has focused on representing and manipulating continuous distributions with CDNs, they are capable of representing discrete distributions also. Suppose we have a discrete distribution over $X_1,\ldots,X_n$ for which the CDF is, for some copula $C$,
\begin{align*}
	F(X_1\le x'_1,\ldots,X_n\le x'_n) &= C(u'_1,\ldots,u'_n)
\end{align*}
using the notation of Chapter \ref{pgm}. Then, the probability mass function (pmf) is (\cite{SmithKhaled2012}),
\begin{align*}
	p(x'_1,\ldots,x'_n) &= \Delta^{a_1}_{b_1}\cdots\Delta^{a_n}_{b_n}C(u_1,\ldots,u_n),
\end{align*}
where
\begin{align*}
	\Delta^{a_k}_{b_k}C(u_1,\ldots,u_n) &:= C(u_1,\ldots,u_{k-1},a_k,u_{k+1},\ldots,u_n) \\
	&\ \ - C(u_1,\ldots,u_{k-1},b_k,u_{k+1},\ldots,u_n),\ \ \textnormal{and}\\
	a_k &= u'_k := F_k(x'_k),\\
	b_k &= F_k(x'_k - 1).
\end{align*}
The pmf is calculable using a variation of the derivative-sum-product algorithm that applies the calculus of finite differences. The modifications required are,
\begin{itemize}
	\item replacing the continuous differential operator with the backwards finite difference operator;
	\item omitting the product of the terms that result from the chain rule;
	\item substituting the familiar product rule with its analogue for backwards finite differences,
		\begin{align*}
			\Delta\left(f(x)g(x)\right) &= g(x)\Delta f(x)+f(x-1)\Delta g(x).
		\end{align*}
\end{itemize}

\section{Summary}
\begin{itemize}
	\item A combination of marginalization and differentiation permits one to perform a myriad of queries on CDNs; more than can be performed on regular PGMs.
	\item Variables are marginalized by taking limits.
	\item Differentiation of the model is efficiently performed by the derivative-sum-product algorithm, a message-passing algorithm analogous to the sum-product algorithm for standard PGMs. The implementation of the algorithm is involved.
	\item The running time of derivative-sum-product is exponential in the size of the largest clique. Consequently, the speed of inference depends on which subset of the model is differentiated and the structure of its factorization.
	\item An adequate differentiation ordering is obtained using the same heuristics as for acquiring an elimination ordering in standard PGMs.
	\item The numerical stability of inference is improved by running the message passing in log-space.
	\item A discrete version of the derivative-sum-product is obtained by replacing the differential calculus with the calculus of finite differences.
\end{itemize}
	\chapter{Sampling}
\label{sampling}
As copula CDNs are capable of representing a broad class of probability distributions, it is desirable to develop an algorithm for sampling in these models. Sampling allows us to study properties of the models and algorithms---for example, the agreement of asymptotic results with small sample ones, and the comparison of novel algorithms with existing methods---through Monte Carlo studies, as we do in Chapter \ref{experiments}. Other applications of sampling include approximate inference, and determining high probability outcomes, an approximation to maximum a posteriori probability (MAP) inference.

In this chapter we develop an algorithm to sample from the copula distribution. It suffices to be able to sample from the copula distribution, for if $U_1,\ldots,U_n$ is distributed as $C(U_1,\ldots,U_n)$, then $X_1=F_1^{-1}(U_1),\ldots,X_n=F_n^{-1}(U_n)$ is distributed as $F(X_1,\ldots,X_n)$, by definition. That is, once we have generated a sample from the copula, we apply the quantiles of each marginal variable to obtain a sample from the joint distribution.

\emph{To the best of our knowledge, this thesis presents the first development and demonstration of a sampling algorithm for CDNs.}

\section{Conditional method}
The method we will use to sample from CDNs is based on the following theorem,
\begin{theorem}
	Suppose that $X_1,\ldots,X_n$ are jointly distributed as $F$. First, sample $X_1$. Next, sample $X_{i+1}\ |\ X_i,\ldots,X_1$ for $i=1,\ldots,n-1$. Then, the sample thus produced will be jointly distributed as $F$.
\end{theorem}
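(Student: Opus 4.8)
The plan is to prove the theorem by induction on $n$ using the chain rule of probability, which states that for any joint distribution, $P(X_1,\ldots,X_n) = P(X_1)\prod_{i=1}^{n-1}P(X_{i+1}\mid X_i,\ldots,X_1)$. The claim is essentially a restatement of this decomposition as a sampling procedure: the procedure generates a sample from the joint density by composing samples from each conditional factor. First I would set up the induction hypothesis that the first $k$ variables produced by the procedure, $(X_1,\ldots,X_k)$, are jointly distributed according to the marginal $F_{1,\ldots,k}$ obtained from $F$. The base case $k=1$ is immediate since we sample $X_1$ directly from its marginal.

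For the inductive step, I would assume $(X_1,\ldots,X_k)$ has joint CDF $F_{1,\ldots,k}$ and show that appending $X_{k+1}$ drawn from the conditional $X_{k+1}\mid X_k,\ldots,X_1$ yields a sample with joint CDF $F_{1,\ldots,k+1}$. The key computation is that for any measurable event, the density (or CDF) of the augmented tuple factors as the product of the density of $(X_1,\ldots,X_k)$ — which is correct by hypothesis — and the conditional density of $X_{k+1}$ given the first $k$ variables — which is correct by construction. Since this product is exactly the joint density $f_{1,\ldots,k+1}$ by definition of conditional density, the augmented sample has the desired distribution. Taking $k = n-1$ to $k=n$ completes the argument.

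The main obstacle, and the only subtle point, is ensuring the conditional distributions $X_{i+1}\mid X_i,\ldots,X_1$ are well-defined and that the factorization holds — i.e., handling the measure-theoretic technicalities of conditioning (regular conditional distributions, sets of measure zero). For the continuous distributions considered in this thesis these conditionals are given by ratios of partial derivatives of $F$, which are well-defined wherever the relevant marginal density is positive, so in practice the argument reduces to the elementary chain rule and no real difficulty arises. I would remark that the practical content of the theorem — and what the sampling chapter actually needs — is that these conditionals can be computed from the CDN using the inference machinery of Chapter~\ref{inference} (derivatives of the model CDF), which is what makes the procedure algorithmically usable.
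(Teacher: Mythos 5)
Your proposal is correct and follows essentially the same route as the paper, which justifies the theorem in a single line by the chain-rule decomposition $F(x_1,\ldots,x_n)=F(x_1)\prod^{n-1}_{i=1}F(x_{i+1}\mid x_i,\ldots,x_1)$; your induction merely makes that decomposition explicit step by step. Nothing is missing --- the measure-theoretic caveat you note is handled exactly as you suggest, since the thesis only uses continuous distributions whose conditionals are ratios of partial derivatives of $F$.
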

This is known as the \emph{conditional method} and follows from the fact that the distribution decomposes using the chain rule as,
\begin{align*}
	F(x_1,\ldots,x_n) &= F(x_1)\prod^{n-1}_{i=1}F(x_{i+1}\ |\ x_{i},\ldots,x_1).
\end{align*}
Forward sampling in Bayesian networks is based on the same idea, but the difference in the semantics of the graphical representation, that is, the different conditional independence relationships encoded in the factorization, results in a different algorithm.

The conditional method has the advantage of producing independent samples from the exact distribution, unlike MCMC methods, which require a period of burn-in and produce correlated samples. It does suffer, however, from requiring inference over the full model clique tree, and is thus infeasible for high treewidth models.

\section{Numerical inverse transform sampling}
Recall that the marginals of the copula are uniformly distributed on $[0,1]$. Indeed, this is the defining property of a copula distribution. For the first variable in some arbitrary sampling order, $U_1$, we simply sample from $\mathcal{U}[0,1]$ using standard algorithms implemented in the Boost Random library \cite{website:boost}.

For subsequent variables we must sample from $U_{i}\ |\ U_{i-1}=u'_{i-1},\ldots,U_1=u'_1$ for $i=1,\ldots,n$. We use inverse transform sampling to perform each of these steps. The method is based on the following theorem \cite{HoggEtAl2012},
\begin{theorem}
	Let $X\sim F$ and $K\sim\mathcal{U}[0,1]$. Then $F^{-1}(K)\sim F$.
\end{theorem}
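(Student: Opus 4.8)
The plan is to compute the cumulative distribution function of $Y := F^{-1}(K)$ directly and check that it coincides with $F$. Because a general distribution function $F$ need not be strictly increasing or continuous, the first step is to fix the meaning of $F^{-1}$ as the generalized inverse (quantile function),
\begin{align*}
	F^{-1}(k) &= \inf\{x \in \mathbb{R} : F(x) \ge k\},
\end{align*}
which reduces to the ordinary inverse when $F$ is a continuous bijection, as in the informal discussion preceding this theorem. (This statement is in a sense the converse of the earlier theorem that $F(X)\sim\mathcal{U}[0,1]$, so one can think of it as completing that circle of ideas.)

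The key step is the ``Galois'' equivalence
\begin{align*}
	F^{-1}(k) \le x &\iff k \le F(x),
\end{align*}
valid for every $x\in\mathbb{R}$ and $k\in(0,1)$. The reverse implication is immediate: if $k\le F(x)$ then $x$ belongs to the set whose infimum defines $F^{-1}(k)$, hence $F^{-1}(k)\le x$. For the forward implication I would use that $F$ is non-decreasing together with right-continuity: choosing a decreasing sequence $x_n \downarrow F^{-1}(k)$ with $F(x_n)\ge k$ (possible by definition of the infimum), right-continuity of $F$ at $F^{-1}(k)$ gives $F(F^{-1}(k)) = \lim_n F(x_n) \ge k$, and then monotonicity yields $F(x) \ge F(F^{-1}(k)) \ge k$.

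Granting the equivalence, the conclusion follows in one line: for every $x$,
\begin{align*}
	P(Y\le x) &= P\big(F^{-1}(K)\le x\big) = P\big(K\le F(x)\big) = F(x),
\end{align*}
the last equality being the elementary distribution function of a $\mathcal{U}[0,1]$ variable evaluated at the point $F(x)\in[0,1]$. Hence $Y\sim F$.

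I expect the main obstacle to be the forward direction of the equivalence, that is, correctly handling flat stretches and jump discontinuities of $F$, where $F^{-1}$ is only a one-sided inverse; right-continuity of $F$ is exactly what is needed there. If one instead adopts the simplifying assumption made earlier in this chapter that the marginal CDFs are continuous and strictly increasing, then $F^{-1}$ is a genuine inverse, the equivalence is trivial, and the proof collapses to the one-line computation above with no subtlety remaining.
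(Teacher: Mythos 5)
Your proof is correct and takes essentially the same route as the paper, whose entire proof is the one-line computation $P(F^{-1}(K)\le x) = P(K\le F(x)) = F(x)$. The only difference is that you rigorously justify the equivalence $F^{-1}(k)\le x \iff k\le F(x)$ for the generalized inverse via right-continuity, a subtlety the paper leaves implicit under its standing assumption of continuous, strictly increasing CDFs.
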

\begin{proof}
	$P(F^{-1}(K)\le x) = P(K\le F(x)) = F(x)$.
\end{proof}
Consequently, when we sample $k$ from $\mathcal{U}[0,1]$ using standard methods, $F^{-1}(k)$ is a sample from the desired distribution.

For simple parametric distributions, the CDF is invertible in closed form. However, the CDN is a product of arbitrary factors, and the derived CDFs cannot, in general, be inverted algebraically. The solution can, nevertheless, be found numerically. We formulate it as a root finding problem:

For $i=1,\ldots,n-1$, solve,
\begin{align*}
	\textnormal{find} &\ u_i\\
	\textnormal{such that} &\ g(u_i) := P(U_i\le u_i\ |\ U_{i-1}=u'_{i-1},\ldots,U_1=u'_1) - k_i\\
	&\ \ \ \ \ \ \ \ \ \ \ = 0,
\end{align*}
where $k_i$ has been sampled from $\mathcal{U}[0,1]$, and $u'_i$ is the solution for the value of $U_i$.

Evaluating the objective $g(u_i)$ requires a clique tree over the variables $U_1,\ldots,U_{i-1}$ with respect to which the function has been differentiated. We term the collection of these clique trees the \emph{sampling clique trees}. There may be factors that have $U_i$ in their scope but none of the variables $U_1,\ldots,U_{i-1}$. These factors will clearly not be associated with the clique tree, and thus must be recorded so that they can be evaluated and multiplied into the result of message passing over the clique tree. We term these factors the \emph{extra factors}.

Preliminary experiments revealed that a combination of the bisection method and Newton's method is typically unstable for solving the root finding problems. Instead, we chose a well known algorithm called Brent's method. The exact details are immaterial to this thesis, although it suffices to understand that the method combines the bisection method, the secant method, and inverse quadratic interpolation to produce an algorithm that has the robustness of the bisection method but with superlinear convergence. At each iteration, one of the three methods is chosen and applied according to the current values. We adapt the implementation given in \cite[\S 9.3--4]{PressEtAl2002}.

%Our first attempt at solving these problems used a number of iterations of the bisection method followed by a few iterations of Newton's method to ``polish off'' the solution. The bisection method was performed until $g(u_i)$ was smaller than a given $\epsilon$. It is useful that the domain of $g$ is bounded, so we can initialize the upper bound $U=1$ and lower bound $L=0$. Inference only needs to be run once per iteration to calculate $P(U_i\le u_i, U_{i-1}=u'_{i-1},\ldots,U_1=i'_1)$. Dividing this by the term $P(U_{i-1}=u'_{i-1},\ldots,U_1=u'_1)$ gives the conditional density, although this term can be calculated before the first iteration and is clearly not a function of $u_n$.

%Calculating \eqref{eqn:primaryCT} requires a clique tree over the variables $U_1,\ldots,U_{i}$. We term the collection of these clique trees the \emph{primary} sampling clique trees. Note that, depending on the sampling order, the secondary clique tree for the $(i+1)$th variable may match a primary clique tree for some variable $k\le i$. In these cases, we economize by storing a reference to the former primary clique tree. 

%The bisection method is robust but has slow convergence. Other methods, such as Newton's method, have fast convergence but are not robust. One solution is to combine both: we compare taking a Newton step and applying the bisection method, and choose the one that moves the solution closer to the root. This is essentially Dekker's method, replacing the secant method with Newton steps. We refer to this method as ``safe Newton's method.''

\section{An efficient ordering for sampling}
The independencies encoded by the CDN can be exploited to simplify sampling. Recall from \S\ref{sec:independencies} that $\mathbf{X}\perp\mathbf{Y}\ |\ \mathbf{Z}$ when there is no path composed of observed variables between any $X\in\mathbf{X}$ and $Y\in\mathbf{Y}$. Therefore, when we sample a variable $X$, we only have to condition on those previously sampled variables that are directly connected to $X$, or indirectly through a path of sampled variables connected to $X$.

Also, the order in which we sample the variables effects the sizes of the sampling clique trees. Consider the model in Figure \ref{fig:buildingSamplingCliques}. The shaded variables have been sampled and we are considering whether to sample $A$ or $B$ next. According to our discussion in the previous paragraph, sampling $A$ first requires conditioning on the two left observed branches, of size $4$ each. Thus, the sampling clique tree for $A$ when sampling $A$ first is constructed over $4+4=8$ variables. On the other hand, suppose we sampling $B$ first. We must condition on the two right observed branches; the two left branches are not connected to $B$. Thus, the sampling clique tree for $B$ is constructed over just $2$ variables. The sampling clique tree for the remaining variable is constructed over remaining 11 variables.

In summary, if we sample $A$ first, the sampling clique trees for $A$ and $B$ are constructed over 19 variables, whereas if $B$ is sampled first they are constructed over $13$. Inference is faster over smaller clique trees both because the running time is linear in the number of variables, and because reducing the size could possibly reduce the treewidth. The cumulative savings from choosing an efficient sampling ordering can be substantial.
\begin{figure}[t]
  \centering
  \includegraphics[scale=0.6]{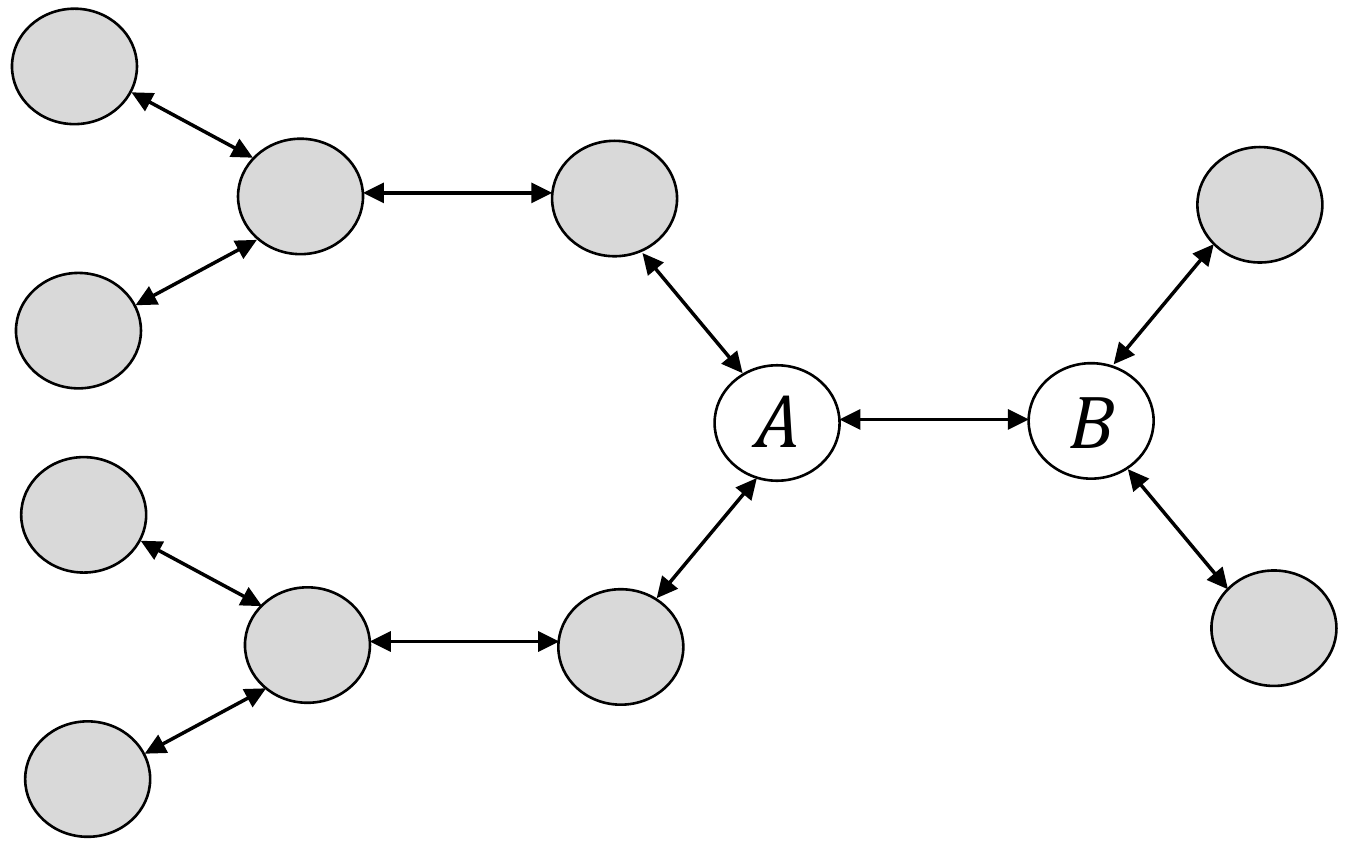}
  \caption[Demonstrating that the sampling order effects the total sampling clique tree size.]{Demonstrating that the sampling order effects the total sampling clique tree size. If A is chosen first, then the sum of the sizes of the sampling clique trees of A and B is $19$. If B is chosen first, then it is $13$.}
	\label{fig:buildingSamplingCliques}
\end{figure}

We have constructed a greedy algorithm using this intuition to determine an efficient sampling order. First, the leaves are added to a ``to-do'' list of variables to consider for sampling. When there are no leaves (for example, in a grid), a number of variables are chosen at random and added to the to-do list. For each variable in the to-do list, the size of the sampling clique tree for that variable conditioned on the previous variables is calculated. This is done by keeping track of the connected branches of sampled variables in the model. For each variable, a record is kept of whether it has been sampled and if so to which branch it belongs. When calculating the size of a sampling clique tree for adding a variable, we simply add the sizes of the unique branches to which its sampled neighbours belong.

The variable in the to-do list with the minimum sampling clique tree size is greedily added, removed from the to-do list, its unsampled neighbours added to the list, and the branches on its sampled neighbours updated. When multiple variables in the to-do list have identically sized sampling clique trees, one is chosen at random. This is repeated until the list is empty and the complete sampling order has been determined. The sampling clique trees for each variable are created and cached at each step so that multiple samples can be drawn efficiently. Our discussion is formalized in Algorithm \ref{alg:calcSamplingOrder}.

\begin{algorithm}[p]
\caption{Determine a good variable ordering for sampling and create corresponding clique trees}
\label{alg:calcSamplingOrder}
\begin{algorithmic}[1]
	\Procedure{MakeSamplingCliques}{$\mathcal{G}$}
		\State $\mathcal{V} := \textnormal{\scshape Leaves}(\mathcal{G})$, $\mathcal{B} := \emptyset$ \Comment{$\mathcal{V}$ is the to-do list, $\mathcal{B}$ is the set of branches}
		\State $\mathcal{U} := \mathcal{G}$, $i := 1$\Comment{$\mathcal{U}$ is the set of unused variables, $i$ indexes the sampling order}
		\If{$\mathcal{V} = \emptyset$}\Comment{in case $\mathcal{G}$ is, for example, a loop or a grid}
			\State \textbf{choose} $v\in\mathcal{G}$ randomly
			\State $\mathcal{V} := \{v\}$
		\EndIf
		\While{$|\mathcal{U}| > 0$}\Comment{in case $\mathcal{G}$ is not connected}
			\While{$\mathcal{V} \neq \emptyset$}
				\State $W_{min} := \infty$
				\For{$v \in \mathcal{V}$}
					\State $B :=\textnormal{\scshape Branches}(\mathcal{B}, v)$\Comment{determine branches adjacent to $v^*$}
					\State $W := \sum_{b\in B}|b|$\Comment{calculate number of variables in adjacent branches}
					\If{$W < W_{min}$}
						\State $V_{min} := \{v\}$
					\ElsIf{$W = W_{min}$}
						\State $V_{min} := V_{min}\cup\{v\}$
					\EndIf
				\EndFor
				\State \textbf{choose} $v^*\in V_{min}$ randomly
				\State $B := \textnormal{\scshape Branches}(\mathcal{B}, v^*)$
				\State $\mathcal{V} := \mathcal{V}\cup(\textnormal{\scshape Neighbours}(v^*) \cap \mathcal{U})$\Comment{add unused neighbours of $v^*$ to to-do list}
				\If{$B=\emptyset$}
					\State $\mathcal{B} := \mathcal{B}\cup\{v^*\}$\Comment{create new branch for this variable}
					\State $\textnormal{\ttfamily SampleCt}[v^*] := \emptyset$\Comment{no sampling clique tree}
				\ElsIf{$|B|>1$}
					\State $\mathcal{B} := \mathcal{B}\setminus B$\Comment{remove branches from $\mathcal{B}$}
					\State $b^* := \bigcup_{b\in B}b$\Comment{merge branches}
					\State $\mathcal{B} := \mathcal{B}\cup\{b^*\} $ \Comment{insert merged branch}
					\State $\textnormal{\ttfamily SampleCt}[v^*] := \textnormal{\scshape MakeCliqueTree}(\mathcal{G},b^*)$
				\Else
					\State $b^* := b$ for the unique $b\in B$
					\State $\textnormal{\ttfamily SampleCt}[v^*] := \textnormal{\scshape MakeCliqueTree}(\mathcal{G},b^*)$\Comment{this clique tree already exists}
				\EndIf
				\State $\textnormal{\ttfamily Extras}[v^*] := \textnormal{missing factors for $\textnormal{\ttfamily SampleCt}[v^*]$}$\Comment{provided $B\neq\emptyset$}
				\State $b^* := b^*\cup\{v^*\}$\Comment{this implicitly updates $\mathcal{B}$ also}
				%\State $\textnormal{\ttfamily Primary}[v^*] := \textnormal{\scshape MakeCliqueTree}(\mathcal{G},b^*)$
				\State $\mathcal{V} := \mathcal{V}\setminus\{v^*\}$\Comment{remove from to-do list}
				\State $\mathcal{U} := \mathcal{U}\setminus\{v^*\}$\Comment{and mark as used}
				\State $\textnormal{\ttfamily Order}[i] := v^*$, $i:=i+1$\Comment{record sampling order}
			\EndWhile
			\If{$|\mathcal{U}|>0$}
				\State \textbf{choose} $v\in\mathcal{U}$ randomly
				\State $\mathcal{V} := \mathcal{V}\cup\{v\}$
			\EndIf
		\EndWhile
		\State \textbf{return} $\{\textnormal{\ttfamily Order}, \textnormal{\ttfamily SampleCt}, \textnormal{\ttfamily Extras}\}$
	\EndProcedure
\end{algorithmic}
\end{algorithm}

\begin{ex}
We exemplify the determination of a sampling order on the model of Example \ref{ex:studentccdn}. The reader may wish to follow the ordering and the development of the branches in Figure \ref{fig:samplingOrder}. Initially, the leaves $\{C,H\}$ are added to the to-do list. The first four steps of the algorithm are as follows:
\begin{enumerate}
	\item Adding either $C$ or $H$ would create a sampling clique tree of size 0. $C$ is chosen randomly from the two, removed from the to-do list, its sampling clique tree constructed, and $D$ added to the to-do list.
	\item Adding $D$ would create a sampling clique tree of size $1$, whereas adding $H$ would create a sampling clique tree of size $0$. Thus, $H$ is chosen, removed from the to-do list, its sampling clique tree constructed, and $G$ added to the to-do list.
	\item Adding either $D$ or $G$ would create a sampling clique tree of size $1$. $G$ is randomly chosen, removed from the to-do list, its sampling clique tree constructed, and $\{I,S,L\}$ are added to the to-do list.
	\item Adding $D$ would create a sampling clique tree of size $3$, whereas adding any of $\{I,S,L\}$ would create a clique tree of size $2$. $I$ is chosen at random from the three, and the structures updated as before.
\end{enumerate}
The remainder of the steps proceed similarly. The sampling clique trees created during determination of the sampling order are illustrated in Figure \ref{fig:samplingCliqueTreesSecondary}.
\end{ex}

\begin{figure}[p]
  \centering
	\renewcommand{\arraystretch}{4}
	{%\footnotesize
	\begin{tabular}{c@{\hskip 1cm}c@{\hskip 1cm}|@{\hskip 1cm}c@{\hskip 1cm}c}
		$1$ & \raisebox{-.5\height}{\includegraphics[scale=0.6]{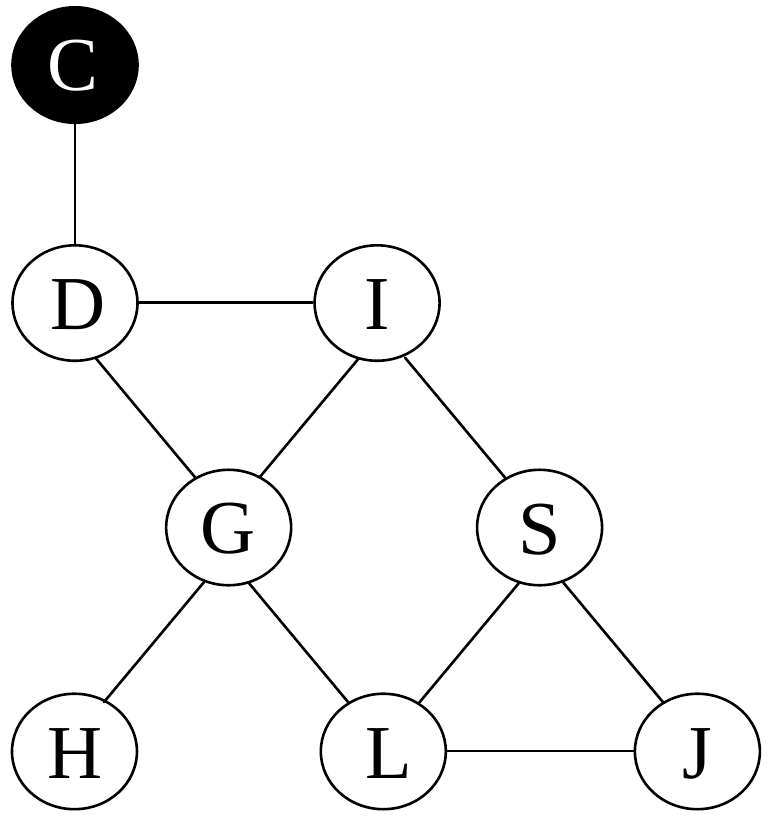}} & $5$ & \raisebox{-.5\height}{\includegraphics[scale=0.6]{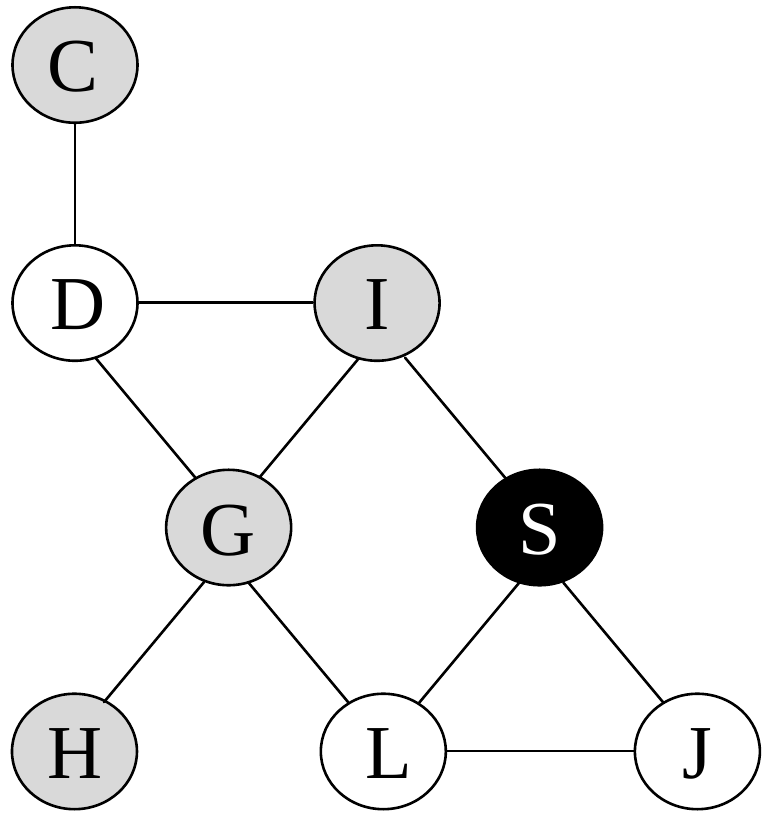}} \\
		& &\\[-5ex]
		$2$ & \raisebox{-.5\height}{\includegraphics[scale=0.6]{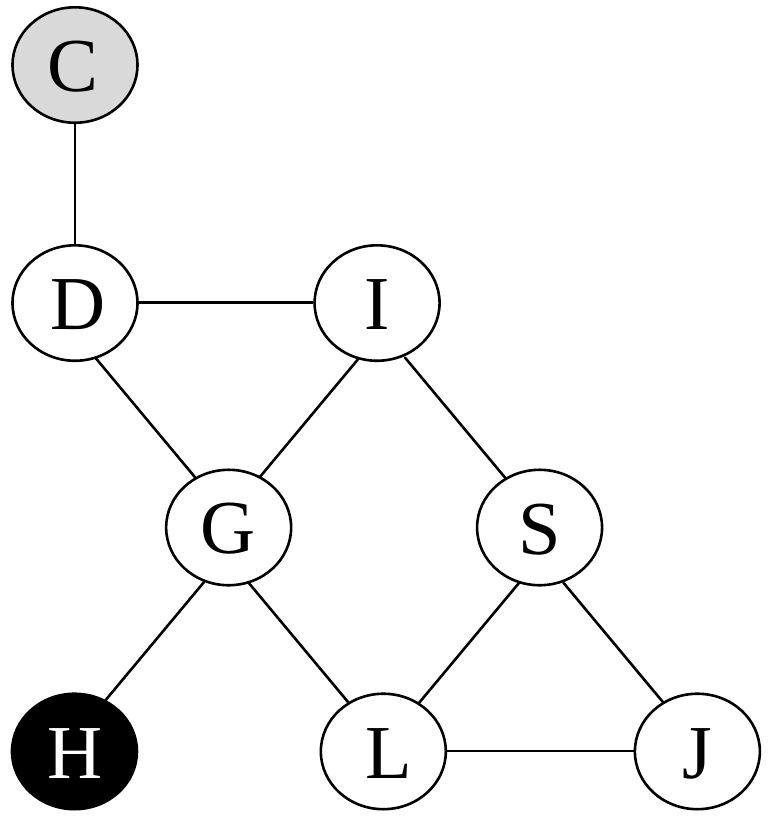}} & $6$ & \raisebox{-.5\height}{\includegraphics[scale=0.6]{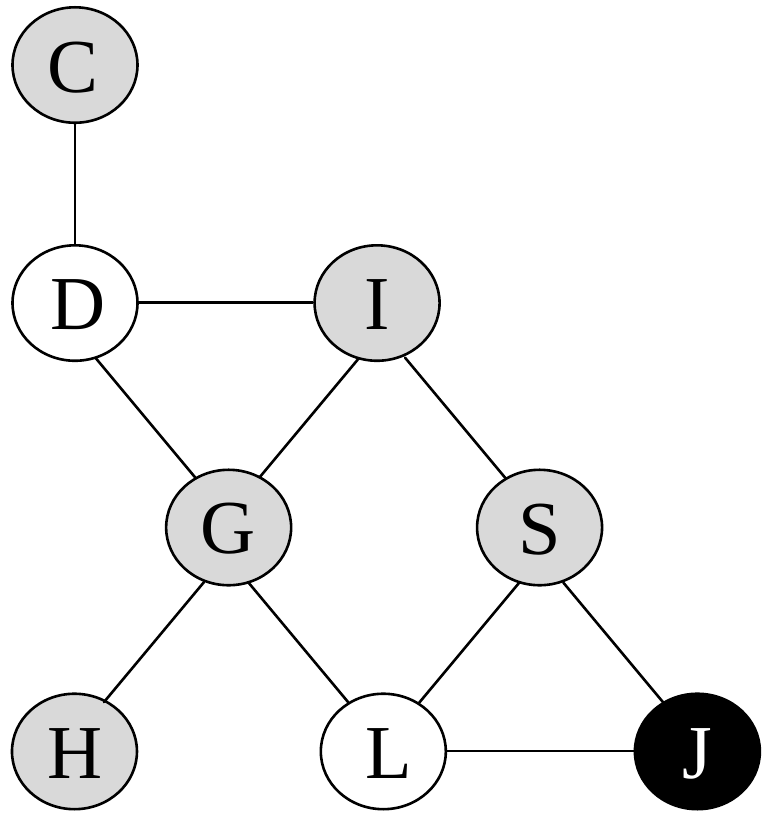}}\\
		& &\\[-5ex]
		$3$ & \raisebox{-.5\height}{\includegraphics[scale=0.6]{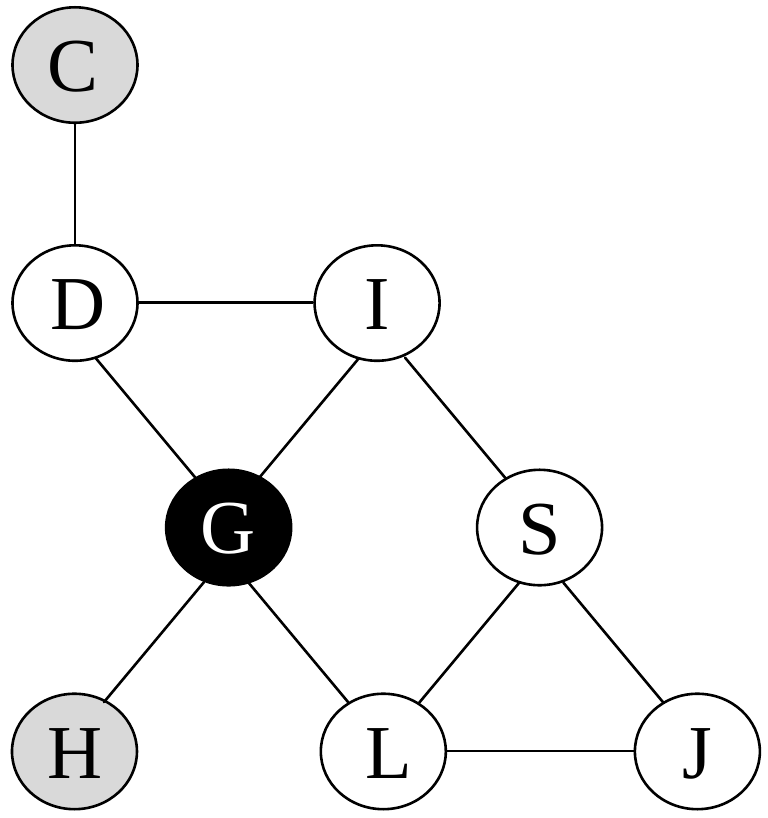}} & $7$ & \raisebox{-.5\height}{\includegraphics[scale=0.6]{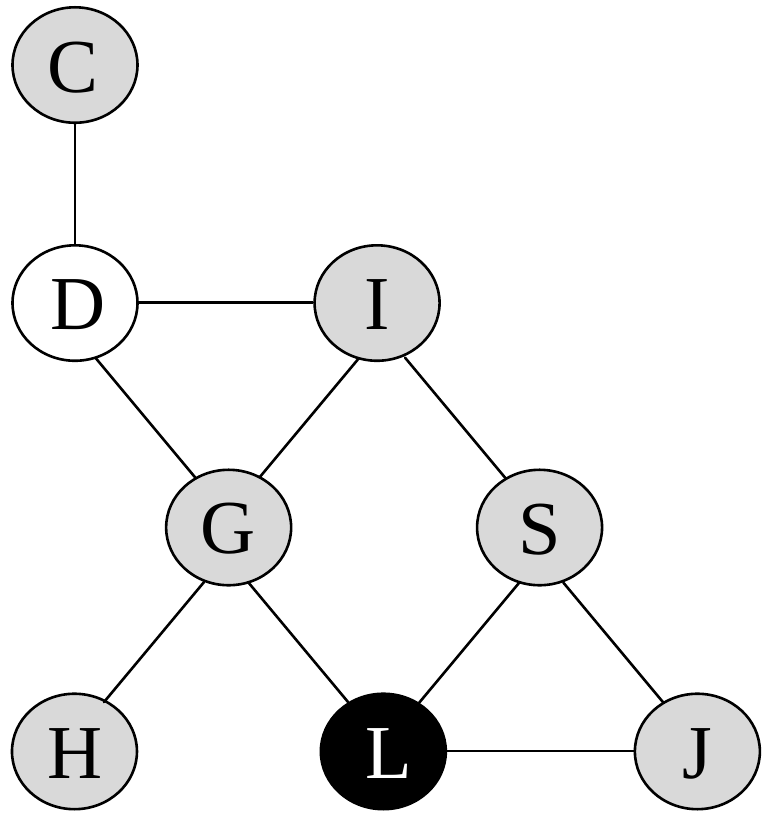}} \\
		& &\\[-5ex]
		$4$ & \raisebox{-.5\height}{\includegraphics[scale=0.6]{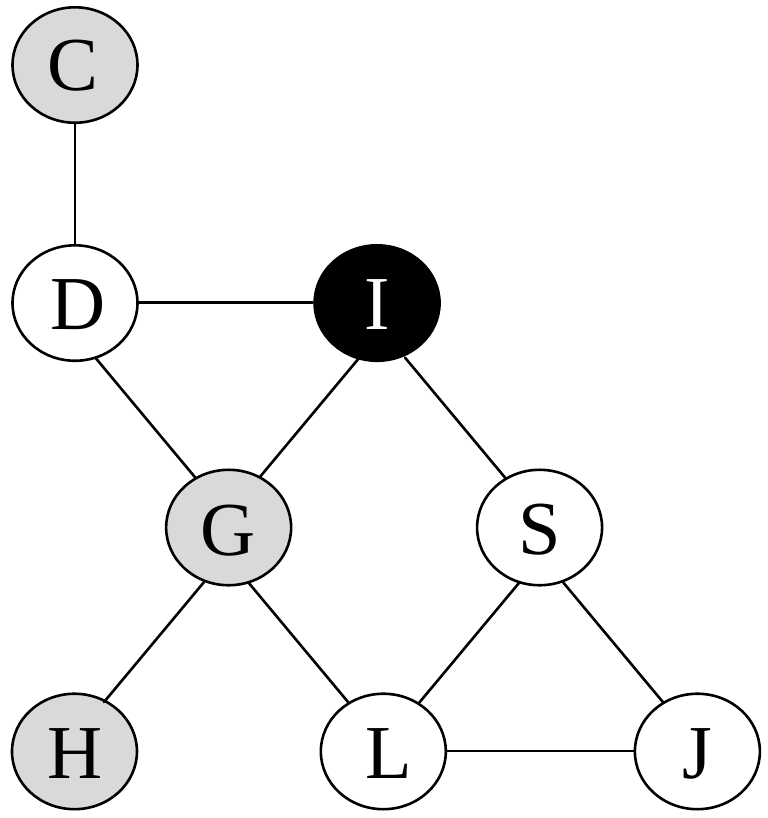}} & $8$ & \raisebox{-.5\height}{\includegraphics[scale=0.6]{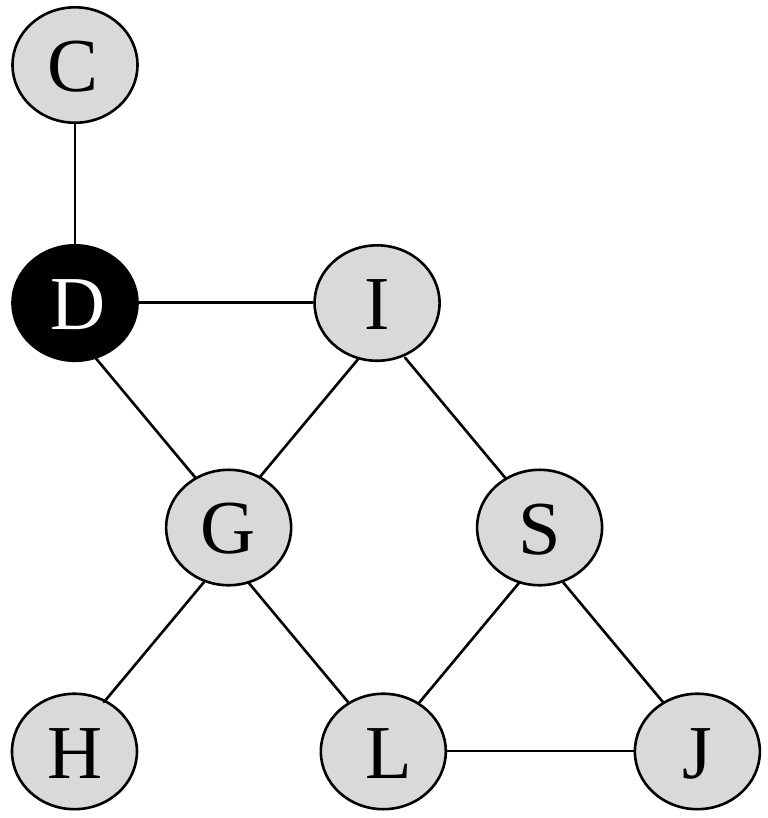}}\vspace{6pt}
	\end{tabular}}
	\caption[Illustration of the sampling order for the Student example]{Illustration of how branches grow during the determination of the sampling order for the Student example. The black node indicates the variable added at that step of the sampling order, and the shaded nodes indicate variables that have already been added. The arrows on the edges are omitted.}
	\label{fig:samplingOrder}
\end{figure}

\begin{figure}[p]
  \centering
	\renewcommand{\arraystretch}{4}
	{%\footnotesize
	\begin{tabular}{ccc}
		{\itshape variable} & $P(U_n \le u_n, U_{n-1}=u'_{n-1},\ldots,U_1=u'_1)$ & {\itshape extra factors} \\ \hline
		$C$ & --- & ---\\
		$H$ & --- & --- \\
		$G$ & \raisebox{-.5\height}{\includegraphics[scale=0.6]{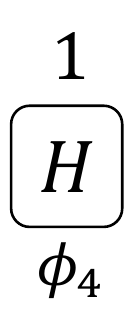}} & $\phi_2,\phi_5$ \\
		$I$ & \raisebox{-.5\height}{\includegraphics[scale=0.6]{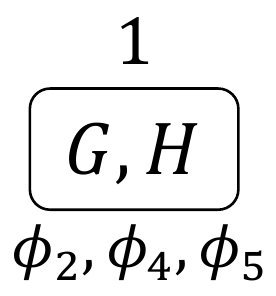}} & $\phi_3$ \\
		$S$ & \raisebox{-.5\height}{\includegraphics[scale=0.6]{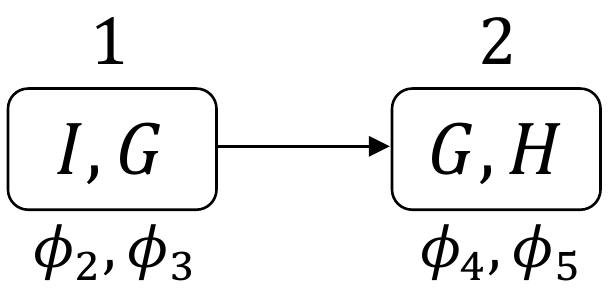}} & $\phi_6$ \\
		$J$ & \raisebox{-.5\height}{\includegraphics[scale=0.6]{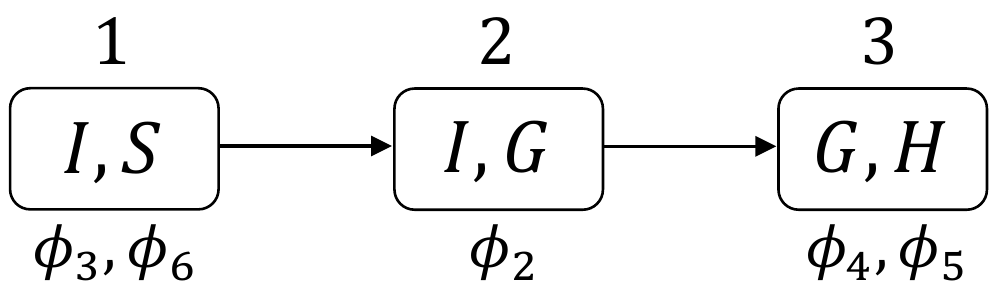}} & --- \\
		$L$ & \raisebox{-.5\height}{\includegraphics[scale=0.6]{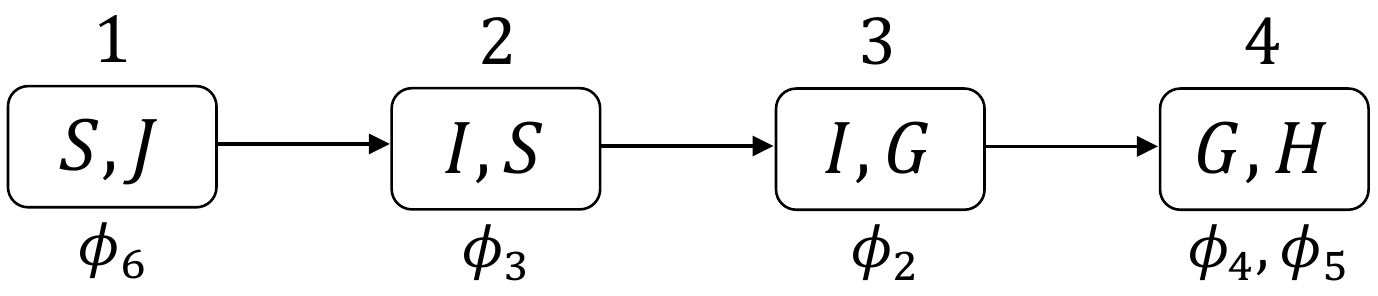}} & --- \\
		$D$ & \raisebox{-.5\height}{\includegraphics[scale=0.6]{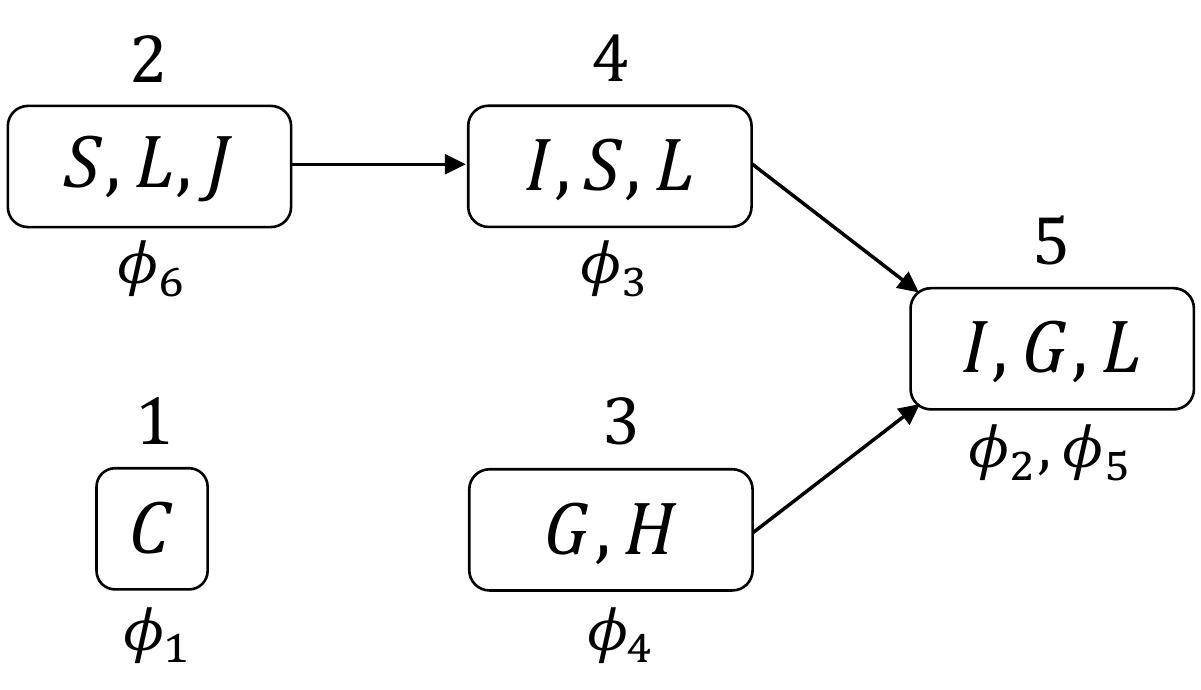}} & ---
	\end{tabular}}
	\caption[Array of sampling cliques]{Array of sampling clique trees and extra factors for the Student example.}
	\label{fig:samplingCliqueTreesSecondary}
\end{figure}

\section{Putting it together}
Combining the steps in our previous discussion produces an algorithm to sample from CDNs, as formalized in Algorithm \ref{alg:calcSampleCDN}. First, the sampling order and clique trees are calculated and cached. Then, the sequence of root finding problems are solved in order using Brent's method, making use of the derivative-sum-product algorithm on the cached sampling clique trees. Finally, the samples from the copula distribution are transformed back to the joint distribution by applying to each variables its quantile function. Examples of our algorithm on bivariate models are found in Figures \ref{fig:copulaNormal}, \ref{fig:claytonGumbelLevels}, and \ref{fig:frankLevels}.

The root finding problem can be solved in log-space to improve numerical stability. That is, we solve instead the following for $i=1,\ldots,n-1$, 
\begin{align*}
	\textnormal{find} &\ u_i\\
	\textnormal{such that} &\ g(u_i) := \ln\left(P\left(U_i\le u_i, U_{i-1}=u'_{i-1},\ldots,U_1=u'_1\right)\right)\\
	&\ \ \ \ \ \ \ \ \ \ \ \ \ \ \ \ - \ln\left(P\left(U_{i-1}=u'_{i-1},\ldots,U_1=u'_1\right)\right) - \ln(k_i)\\
	&\ \ \ \ \ \ \ \ \ \ \ = 0,
\end{align*}
where $k_i$ has been sampled from $\mathcal{U}[0,1]$, and $u'_i$ is the solution for the value of $U_i$. For additional stability, the inference is performed in log-space, as described in \S\ref{sec:numericalStability}. We found that sampling succeeds for models parameterized by Clayton copulae with more than three or four variables \emph{only} when the log-space version of root finding is used.

\begin{algorithm}[t]
\caption{Generate samples from CDN}
\label{alg:calcSampleCDN}
\begin{algorithmic}[1]
	\Procedure{SampleCDN}{$n, \mathcal{G}$}
		\State $\{\textnormal{\ttfamily Order}, \textnormal{\ttfamily SampleCt}, \textnormal{\ttfamily Extras}\}=\textnormal{\scshape MakeSamplingCliques}(\mathcal{G})$
		\For{$i=1,\ldots,n$}\Comment{sample $n$ times}
			\For{$j=1,\ldots,|G|$}
				\State $\textnormal{\ttfamily Samples}[i,j]=1$\Comment{initialize sample by marginalizing variables}
			\EndFor
			\State \textbf{sample} $u\sim\mathcal{U}[0,1]$\Comment{first variable is uniform on $\mathcal{U}[0,1]$}
			\State $\textnormal{\ttfamily Samples}[i,\textnormal{\ttfamily Order}[1]]=u$
			\For{$j=2,\ldots,|G|$}\Comment{sample remaining variables}
				\State \textbf{sample} $k\sim\mathcal{U}[0,1]$\Comment{random variable for inverse transform method}
				\State \textbf{find} $u=u^*$ such that $\textnormal{\scshape CondCDF}(u,i,j)-k=0$\Comment{invert CDF numerically}
				\State $\textnormal{\ttfamily Samples}[i,\textnormal{\ttfamily Order}[j]]=u^*$
			\EndFor
			\For{$j=1,\ldots,|G|$}\Comment{transform from copula to original distribution}
				\State $\textnormal{\ttfamily Samples}[i,j] = F^{-1}_j(\textnormal{\ttfamily Samples}[i,j])$
			\EndFor
		\EndFor
		\State \textbf{return} {\ttfamily samples}
	\EndProcedure\vspace{6pt}
	\Procedure{CondCDF}{$u, i, j$}
		\State $v=\textnormal{\ttfamily Order}[j]$
		\State $d=\textnormal{\scshape CopulaDensity}(\textnormal{\ttfamily SampleCt}[v], \textnormal{\ttfamily Samples}[i,:])$
		\State $\textnormal{\ttfamily Samples}[i,v] = u$
		\State \textbf{return} $\textnormal{\scshape CopulaDensity}(\textnormal{\ttfamily SampleCt}[v], \textnormal{\ttfamily Samples}[i,:], \textnormal{\ttfamily Extras}[v])/d$
	\EndProcedure
\end{algorithmic}
\end{algorithm}

\section{Verification of sampling algorithm}
It is easy to check that the marginals of a sample generated by our algorithm have the desired distribution. The best approach is to examine samples from the copula distribution rather than after we have transformed them to the true marginals. One can visually inspect the corresponding histograms or estimations of the densities (which should show a uniform distribution on $[0,1]$). Formally, one can perform a goodness-of-fit test.

Even when the sample marginals appear to be correctly distributed, it is possible that they do not have the correct joint distribution. For networks with a single factor, correctness of sampling was verified by comparing the model statistics. For example, when sampling from a bivariate normal distribution we compared the sample correlation to the theoretical correlation. However, for more complex networks, such as chains of bivariate normal factors, the theoretical statistics do not exist in closed form.

We have not determined a general procedure to verify correctness of the joint distribution, although the fact that our learning algorithm correctly recovers the parameters from samples generated with known parameters provides strong evidence that it is correct.

\section{Sampling a subset of the model given observed variables}
Sampling a subset of the model conditioning on another subset is a simple extension of the previously described sampling algorithm. Suppose we desire to sample $\{X_{k+1},\ldots,X_n\}$ given $\{X_1=x'_1,\ldots,X_k=x'_k\}$. Applying the conditional method, first we sample $X_{k+1}\ |\ X_1=x'_1,\ldots,X_k=x'_k$, then we sample $X_{k+2}\ |\ X_1=x'_1,\ldots,X_{k+1}=x'_{k+1}$, and so on.

Thus, sampling a subset of the model conditioning on another subset can be performed by starting Algorithm \ref{alg:calcSamplingOrder} from a state where it were as if we had already sampled $\{X_1=x'_1,\ldots,X_k=x'_k\}$. That is, we simulate the process of sampling the variables to be conditioned on and update the data structures of Algorithm \ref{alg:calcSamplingOrder} accordingly, then continue as normal from line 8. Also, trivial modifications to Algorithm \ref{alg:calcSampleCDN} are required.

%An application of this type of sampling is given in Chapter \hl{ref shape model chapter} wherein it is applied to the object completion task. ... sample from the distribution of occluded pixels given the observed pixels. That is, the model suggests how to fill in \rephrase missing pixels.

\section{Why Gibbs sampling is inefficient}\label{sec:gibbsSampling}
Gibbs sampling is a type of Markov chain Monte Carlo (MCMC) sampling method. MCMC methods construct a Markov chain that has the same asymptotic distribution as the desired sampling distribution. Samples are drawn from the Markov chain, discarding the initial samples (the burn-in period) until the sample statistics indicate that convergence is adequate.

Gibbs sampling, in its simplest form, is performed as follows. Initial values for the variables, $x^{(0)}_1,\ldots,x^{(0)}_n$, are given. For each sample $j=1,\ldots,m$, each variable $\{X_i\}$ is sampled in order from
\begin{align*}
	X_i\ &|\ x^{(j-1)}_1,\ldots, x^{(j-1)}_{i-1}, x^{(j)}_{i+1},\ldots,x^{(j)}_n,
\end{align*}
and set as $x^{(j)}_i$.

When it is easier to sample from the conditional distributions than the joint model, Gibbs sampling is preferable to direct sampling. Whereas this is the case for BNs, it is not for CDNs---a consequence of the different independencies that the model encodes.

In BNs, a variable is independent from the rest of the variables in the network given its Markov blanket, which is the set of its parents, children, and parents of its children.
\begin{ex}
	Refer to the Student BN of Figure \ref{fig:studentbn}. The Markov blanket of $S$ is shown in Figure \ref{fig:markovblanket}. From $d$-separation,
	\begin{align*}
		S &\perp C, D, H\ |\ I, G, L, J.
	\end{align*}
	In general, it is necessary to condition on the parents of children of a node. For example, suppose we did not condition on $G$. Then,
	\begin{align*}
		S &\not\perp C, D, H, G\ |\ I, L, J,
	\end{align*}
	since observing $L$ activates the v-structure $G\rightarrow L\leftarrow S$. Alternatively, the parents of a variable's children are informative about the variable since they can be used to explain away the variable in question.
\end{ex}
\begin{figure}[t]
  \centering
  \includegraphics[scale=0.6]{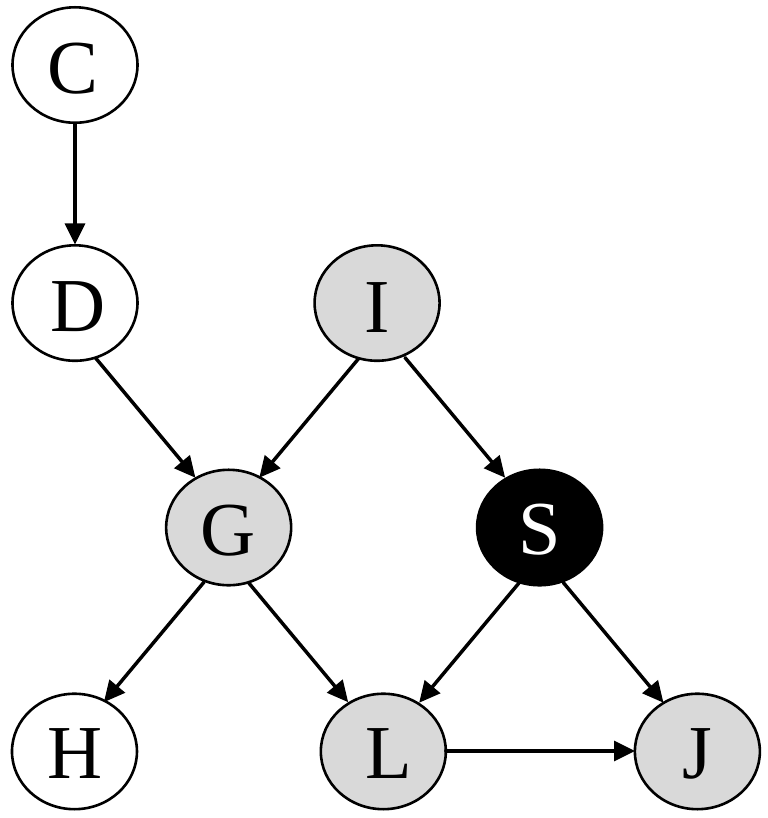}
  \caption[Markov blanket example]{The Markov blanket of $S$ in the student BN is shaded.}
	\label{fig:markovblanket}
\end{figure}
Therefore, in a BN, sampling a variable having conditioned on all others is equivalent to sampling having conditioned solely on its Markov blanket. This makes Gibbs sampling efficient in BNs, relative to sampling the full distribution.
\begin{ex}
	Continuing our example, suppose one wishes to sample $S$ given all other variables. The conditional distribution is,
	\begin{align}
		f(s\ |\ \mathbf{x}\setminus\{s\}) &= f(s\ |\ i,g,l,j) \nonumber \\ 
		&\propto f(s,i,g,l,j) \nonumber \\ 
		&= \sum_{c,d,h}f_C(c)f_D(d|c)f_G(g|d,i)f_I(i)f_S(s|i)f_H(h|g)f_L(l|g,s)f_J(j|l,s) \nonumber \\
		&= f_G(g|d,i)f_I(i)f_S(s|i)f_L(l|g,s)f_J(j|l,s) \nonumber \\
		&= f_S(s\ |\ \textnormal{Pa}_S)\prod_{x\in\textnormal{Markov}_S}f_X(x\ |\ \textnormal{Pa}_X). \label{eqn:markovblanket}
	\end{align}
	That is, the conditional distribution is obtained by multiplying together the factor for $S$ and the variables in its Markov blanket and renormalizing. In general, \eqref{eqn:markovblanket} holds, replacing $S$ with the variable of interest.
\end{ex}
CDNs encode a different set of conditional independencies, and for this reason an efficient Gibbs sampling algorithm cannot be developed.
\begin{ex}
	Refer to the analogous CDN in Example \ref{ex:studentcdn}. Again, consider sampling $S$ conditioned on the rest of the variables. What is the minimal set of variables such that, conditioned on this set, the variable is independent of the rest? $S$ is dependent on its neighbours, so the set must include $I$, $G$, $L$, and $J$.
	
	But then, having conditioned on its neighbours, $S$ is dependent on $D$ and $H$. So the set must include those variables. Similarly, conditioning its neighbours, and neighbours of neighbours, $S$ is dependent on $C$. Clearly, the set must be $\mathbf{X}\setminus\{S\}$!
	
	In general, in a CDN, a variable conditioned on a subset is dependent on at least some of the remaining variables. In this sense, all other variables are informative about a given variable.
\end{ex}
	Thus, no simplification of the conditional distribution is possible and Gibbs sampling requires inference over the full model to sample each variable. Direct sampling is better, because sampling each variable requires inference over only a subset of the variables (except for the last variable), and the sample thus produced is from the true distribution.
	
	We note that other MCMC methods may be applicable to CDNs.

\section{Summary}
\begin{itemize}
	\item Sampling in CDNs is performed by the conditional method.
	\item The inverse transformation method is used to sample each variable given the previously sampled variables, and the conditional CDF is inverted numerically using a robust root-finding method.
	\item The sequence of root-finding problems requires inference, and thus an array of clique trees, over increasing subsets of the model.
	\item The sampling order effects the total size of the sampling cliques, necessitating a simple greedy algorithm for producing an efficient ordering.
	\item Sampling a subset of the model conditioned on another subset is a simple extension of our sampling algorithm.
	\item An efficient algorithm for Gibbs sampling from CDNs does not exist, although other MCMC methods may be applicable.
\end{itemize}
	\chapter{Learning}
\label{learning}
Learning, most generally, refers to the process of choosing a ``good'' statistical model for a given purpose based on information about the task, which is either elicited from domain-experts or formed from direct observations of the variables. The purpose of the model may be prediction, forecasting, description, discovery of correlations or causal relationships, to name a few. The measure of goodness is task specific, and usually encoded by a loss function; for example, a predictive model can be judged by a bootstrapping estimate of its ability to generalize to unseen instances.

Learning decomposes into a number of subtasks such as feature selection and extraction, model selection, and parameter estimation, for which manual and automated methods exist. Typically, much domain knowledge (or arbitrariness!) is implicit---for example, the choice of features under consideration, the values of the metaparameters, and when the class of models is restricted to those that are linear.

In this chapter, we consider the supervised learning problem. The model variables, the structure of the CDN, and the form of the (parametric) marginals and copula factors are given. We are presented with a data set of samples over the complete model, and the task is to estimate the value of the parameters that best fits the data according to some criterion.

Many criteria attempt to prevent overfit of the model to the specific data set in order to improve the model's ability to generalize. However, we consider the simplest measure of model fit---the sample log-likelihood---and regularize by restricting the network structure. Several descent methods for optimizing this criterion are described and an efficient algorithm for calculating the gradient.

\emph{An original contribution of this thesis is the development of a novel learning algorithm for CDNs that is able to learn high-dimensional and large treewidth models, at the expense of additional message passing and a loss in efficiency. Also, we propose an algorithm for learning from data missing completely at random (MCAR).}

Structure learning of CDNs, whilst being an interesting and scarcely studied problem, falls outside the scope of this thesis. 

\section{Learning in copula CDNs}
In copula CDNs, there are two types of parameters to estimate: those of the marginals and those of the copulae factors. The desired parameters minimize the negative log-likelihood,
\begin{align}
	E(\theta ; \mathcal{D}) &= -\sum^m_{i=1}\ln\left(f(x^i_1,\ldots,x^i_n\right) \nonumber \\
	&= -\sum^m_{i=1}\ln\left(\frac{\partial^n}{\partial x_1\cdots\partial x_n}F(x^i_1,\ldots,x^i_n)\right) \nonumber \\
	&= -\sum^m_{i=1}\ln\left(\prod^n_{j=1}\frac{\partial u_j}{\partial x_j}\frac{\partial^n}{\partial u_1\cdots\partial u_n}C(u^i_1,\ldots,u^i_n; \boldsymbol\theta)\right) \nonumber \\
	&= -\sum^m_{i=1}\sum^n_{j=1}\ln\left(f_j\left(x^i_j\right)\right)-\sum^m_{i=1}\ln\left(\frac{\partial^n}{\partial u_1\cdots\partial u_n}C(u^i_1,\ldots,u^i_n ; \boldsymbol\theta)\right), \label{eqn:negloglike}
\end{align}
where $x^i_j$ is the value of the $j$th variable in the $i$th sample, $\mathcal{D}=\{\mathbf{x}^1,\ldots,\mathbf{x}^m\}$ is the data set of $m$ samples, and $C$ is the model copula. The function $E$ is termed the \emph{energy}.

From \eqref{eqn:negloglike}, one may misconstrue that the energy decomposes into the sum of two terms where one term depends only on the marginal parameters and the other solely on the copula parameters. The second term, however, also depends on the marginal parameters through each variable $u_i=F(x_i;\boldsymbol\xi_i)$, where $\boldsymbol\xi_i$ are the marginal parameters  of the $i$th variable. Moreover, that a marginal parameter could be present in multiple copulae, those that have intersecting scope (as the model copula is formed from the product of copulae over subsets of the variables), greatly complicates learning.

Minimizing the second term with respect to both parameters simultaneously is infeasible. Instead, we use the method of ``inference functions for margins'' \cite{JoeXu1996}. In this approach, the marginal parameters are estimated first, and then the copula parameters are estimated having fixed the marginal parameters. Thereby, estimation becomes tractable at the cost of some accuracy.

Estimation of the marginal parameters is trivial. We use simple parametric forms and estimate the parameters by MLE. Therefore, we redesignate the energy function as,
\begin{align*}
	E(\boldsymbol\theta ; \mathcal{D}) &= -\frac{1}{m}\sum^m_{i=1}\ln\left(\frac{\partial^n}{\partial u_1\cdots\partial u_n}C(u^i_1,\ldots,u^i_n ; \boldsymbol\theta)\right)
\end{align*}
assuming that the margins have been learnt. The average is taken over the number of samples to avoid overflow. Kernel density estimation, which is non-parametric, can be used instead \cite{Elidan2010b}.

In contrast, estimation of the copulae parameters is nontrivial, and the focus of this chapter is two gradient-based optimization methods for this purpose that minimize the energy. Throughout, we assume each factor has a single parameter (but the method can be extended in a straightforward way to vector valued parameters).

%\emph{An original contribution of this thesis is our development of the theory to learn the parameters of a copula CDN with normal factors using gradient-based optimization methods.}

\section{Gradient descent}
Let $f(\mathbf{x})$ be an arbitrary continuous real-valued multidimensional function. The negative gradient $-\nabla f(\mathbf{x})$ gives the direction of maximum decrease of $f$ at $\mathbf{x}$. Therefore, $f(\mathbf{x}-\eta\,\nabla f(\mathbf{x}))<f(\mathbf{x})$, provided $\eta>0$ is small enough and $\nabla f(\mathbf{x})\neq\mathbf{0}$.

Gradient descent is an iterative algorithm for unconstrained optimization that uses this idea. Given an initial guess $\mathbf{x}^0$ for the maximum of $f$, we improve our estimate by moving in the direction of the negative gradient,
\begin{align*}
	\mathbf{x}^{t+1} &= \mathbf{x}^{t}-\eta\,\nabla f(\mathbf{x}^{t}).
\end{align*}
This is repeated until some termination criterion is satisfied. In our implementation, the algorithm terminates when any of the following is sufficiently small:
\begin{itemize}
	\item the relative change in the objective,
	\begin{align*}
		\frac{f(\mathbf{x}^{t}) - f(\mathbf{x}^{t+1})}{|f(\mathbf{x}^{t})|} &< \epsilon_1;
	\end{align*}
	\item the length of the step,
	\begin{align*}
		||\mathbf{x}^{t}-\mathbf{x}^{t+1}|| &< \epsilon_2;
	\end{align*}
	\item the length of the gradient,
	\begin{align*}
		||\nabla f(\mathbf{x})^{t+1}|| &< \epsilon_3.
	\end{align*}
\end{itemize}
For simplicity, we set $\epsilon=\epsilon_1=\epsilon_2=\epsilon_3$.

The parameter $\eta$ is known as the learning rate or step size. Ideally, we would like to choose
\begin{align*}
	\eta &= \arg\min_{s\ge0}f(\mathbf{x}-s\nabla f(\mathbf{x}))
\end{align*}
This is known as \emph{exact line search}. In our case, however, the problem is insolvable in closed form. Instead, we use the method of \emph{backtracking line search}. An initial step size, $\eta^0$, is assumed. The step size is reduced by a ratio $\beta\in(0,1)$ until the objective has ``sufficiently decreased,'' that is, until the first Wolfe condition,
\begin{align*}
	f(\mathbf{x}^{t+1}) &\le f(\mathbf{x}^{t}) - \alpha\eta^{t}\nabla f(\mathbf{x}^{t})^T\nabla f(\mathbf{x}^{t}),
\end{align*}
is satisfied, where $\eta^{t+1} = \beta\eta^{t}$ and $\alpha\in(0,0.5)$. A large $\beta$ corresponds to a fine line search, and a small one to a crude search. See Figure \ref{fig:backtrackingLineSearch} for a geometric interpretation of this criterion.
\begin{figure}[t]
  \centering
  \includegraphics[scale=1.2]{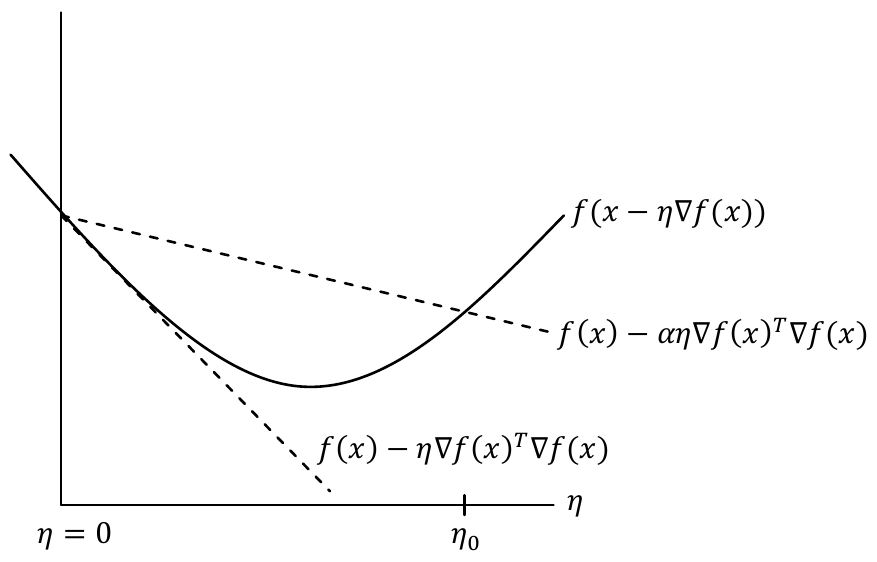}
	\caption[Geometric interpretation of backtracking line search condition.]{The solid line is the objective restricted to the line over which we search. The dashed lines are the tangent of $f$ at the origin, and that line flatter by a factor $\alpha$. The ``sufficient decrease'' condition means that the step size, $\eta$, is reduced until $\eta<\eta_0$. (Adapted from \cite[Figure 9.1]{BoydVandenberghe2004}.)}
	\label{fig:backtrackingLineSearch}
\end{figure}

With regards to our problem, the objective is the energy function, $E$. The initial parameters are uniformly sampled from their domain. The constraint that the parameters are restricted to a subset of $\mathbb{R}$ is enforced by returning $+\infty$ when a parameter strays outside its bounds. The algorithm can be run several times with different initial parameters, known as random restarts, to see whether the solution is sensitive to the initial values (which is necessary since our objective is unlikely to be convex).

Pseudo-code for gradient descent is given in Algorithm \ref{alg:graddescent}. The metaparameters $\alpha=0.001$ and $\beta=0.9$ were chosen experimentally. Initially, we set $\alpha=0.1$, however, we discovered that this resulted in convergence just short of the solution.
\begin{algorithm}[t]
\caption{Gradient descent with backtracking line search}
\label{alg:graddescent}
\begin{algorithmic}[1]
	\Procedure{GradientDescent}{$f$, $\nabla f$, $\mathbf{x}^0$, $\epsilon$}
		\State $\mathbf{x}^{\textnormal{old}} := \mathbf{x}^0$
		\Repeat
			\State $\eta := 1$
			\Repeat
				\State $\mathbf{x}^{\textnormal{new}} := \mathbf{x}^{\textnormal{old}} - \eta\nabla f(\mathbf{x}^{\textnormal{old}})$
				\State $\eta := \beta\eta$
			\Until{$f(\mathbf{x}^{\textnormal{new}})\le f(\mathbf{x}^{\textnormal{old}})-\alpha\eta||\nabla f(\mathbf{x}^{\textnormal{old}})||^2$}
			\If{$(f(\mathbf{x}^{\textnormal{old}}) - f(\mathbf{x}^{\textnormal{new}}))/|f(\mathbf{x}^{\textnormal{old}})|<\epsilon$, or $||\mathbf{x}^{\textnormal{new}}-\mathbf{x}^{\textnormal{old}}||<\epsilon$, or $||\nabla f(\mathbf{x}^{\textnormal{new}})||<\epsilon$}
				\State \textbf{return} $\mathbf{x}^{\textnormal{new}}$
			\Else
				\State $\boldsymbol\theta^{\textnormal{old}} := \boldsymbol\theta^{\textnormal{new}}$
			\EndIf
		\Until{$I$ iterations performed}
		\State \textbf{error} ``Did not converge within $I$ iterations.''
	\EndProcedure
\end{algorithmic}
\end{algorithm}

\section{Newton and Pseudo-Newton methods}\label{sec:newtonMethods}
Gradient descent is an instance of a \emph{descent method}, a general class of iterative optimization techniques. See Algorithm \ref{alg:descentmethod} for the general form these methods take.
\begin{algorithm}[t]
\caption{General descent method}
\label{alg:descentmethod}
\begin{algorithmic}[1]
	\State given a starting $x\in\textnormal{dom}f$
	\Repeat
		\State determine a descent direction $\Delta x$
		\State choose a step size $\eta>0$
		\State update $x := x+\eta\Delta x$
	\Until{stopping criterion is satisfied}
\end{algorithmic}
\end{algorithm}

A descent direction for $f$ at $x$ is one for which $\nabla f(x)^T\Delta x<0$. For example, in the gradient descent algorithm, $\Delta x=-\nabla f(x)$.

It can be shown that gradient descent requires a large number of iterations when the Hessian, or, equivalently, the sublevel sets of $f$ near the solution are ill-conditioned \cite{BoydVandenberghe2004}. Moreover, it only offers linear convergence.

A solution is to choose $\Delta x=-\nabla^2f(x)^{-1}\nabla f(x)$; the \emph{Newton step}. The descent method with this choice of search direction is known as \emph{Newton's method}. It can be motivated in several ways. For example, the Newton step minimizes a second-order approximation to $f$ at $x$. Or one can think of it as solving a first-order approximation to the optimality condition $\nabla f(x^\star)=0$.

Newton's method is insensitive to the condition number of the sublevel sets of the objective, and convergence is rapid in general, and quadratic near a local minimum. Moreover, its performance is similar in low and high dimensions alike. This comes at the cost of calculating, storing, and inverting the Hessian matrix.

In our problem, calculating the Hessian is feasible but requires more message passing relative to calculating the gradient. The dimension of our problem is the same as the number of factors, so storing and inverting the Hessian matrix is prohibitive when there are many factors.

Many so-called \emph{quasi-Newton} methods have been devised to ameliorate these issues. We consider the L-BFGS algorithm \cite{LiuNocedal1989}, and use the implementation of \cite{website:Darwin}, which has the same termination criteria as gradient descent. The details of the algorithm are immaterial to this thesis, although it suffices to understand that the algorithm uses as the descent direction a low-rank approximation to the inverse Hessian. The approximation is not represented directly as a dense matrix, but implicitly by a number of the most recent gradients. 

Thus, the L-BFGS algorithm proffers the quadratic convergence of Newton's method whilst foregoing the calculation of any derivative of an order higher than the gradient. For these reasons, L-BFGS is established as an excellent general purpose unconstrained optimization algorithm.

We tried enforcing the constraints on the parameters by evaluating the objective as $+\infty$ when the parameters were out of their domains. Convergence, however, failed for many combinations of initial values and true parameters. On the iteration of failure, the search direction points away from the solution (so that there is no step size that reduces the objective), and we suspect this is due to the algorithm passing over an indefinite area of the search space, one for which the sublevel sets are nonconvex.

We devised two solutions. First, the algorithm is simply restarted from where it fails, repeating if necessary until success is reported. Restarting the algorithm discards the previous gradients and begins building up a fresh approximation to the inverse Hessian. We call this method \emph{L-BFGS with restart}. The approach is familiar from the optimization literature; for example, as a part of the conjugate gradient method.

Second, the constraints are enforced by the barrier method \cite[\S11.3]{BoydVandenberghe2004}. Consider the general constraint, $f_i(\theta_i) < 0$. Our initial approach essentially adds to the objective a hard constraint,
\begin{align*}
	I_-(\theta_i) &= \left\{\begin{array}{ll} 0, & f_i(\theta_i) < 0 \\ \infty, & \textnormal{otherwise} \end{array}\right..
\end{align*}
The barrier method softens the constraint with the differentiable function,
\begin{align*}
	\widehat{I}_-(\theta_i) &= \left\{\begin{array}{ll} -(1/t)\ln(-f_i(\theta_i)), & f_i(\theta_i) < 0 \\ \infty, & \textnormal{otherwise} \end{array}\right..
\end{align*}
As $t$ increases, the approximation becomes more accurate. On the other hand, it can be shown that when $t$ is large, the Hessian is large near the boundary of the feasible set and the L-BFGS algorithm becomes unstable.

As a compromise, a sequence of problems are solved. Initially, the objective plus the barrier function is minimized by the L-BFGS algorithm with a small value of $t$. Then, $t$ is increased by a fixed ratio $\mu$. The L-BFGS algorithm is restarted from the previous solution, the new solution being a better approximation due to the increase in $t$. This is repeated until the convergence criterion (see \cite[\S11.2.2]{BoydVandenberghe2004}) is attained. We call this method \emph{L-BFGS with barrier method}.

Another solution is to use a variant of the L-BFGS algorithm for box constraints \cite{ByrdEtAl1995}, although a convenient implementation proved elusive.

\section{Calculating the energy function}
In Chapter \ref{inference}, calculating the density, or equivalently, the likelihood, of a CDN using the derivative-sum-algorithm was explained in some detail. Thus, evaluating the energy function, which is the sum of the negative copula log-likelihood of the samples, is trivial. The log copula density of each sample (or rather, this term scaled by a constant) is calculated at the same time as the gradient of the model evaluated at the sample.

\section{Calculating the gradient}
In general, the $i$th component of the log-gradient is,
\begin{align}
	\frac{\partial}{\partial\theta_i}\ln f(x) &= \frac{1}{f(x)}\frac{\partial}{\partial\theta_i}f(x). \label{eqn:gradComponent}
\end{align}
Combining \eqref{eqn:gradComponent} and \eqref{eqn:normalDensity}, and using the fact that the parameters are not shared between copulae, when our model factors are normal copulae with a single parameter,
\begin{align}
	\frac{\partial}{\partial\rho_i}\ln f(x'_1,\ldots,x'_n) &= \frac{\frac{\partial^n}{\partial w_1\cdots\partial w_n}\frac{\partial F_i}{\partial\rho_i}\prod_{j\neq i}F_j(w'_{(j,1)},\ldots,w'_{(j,n_j)})}{\frac{\partial^n}{\partial w_1\cdots\partial w_n}\prod^m_{j=1}F_j(w'_{(j,1)},\ldots,w'_{(j,n_j)})}, \label{eqn:normalGradComponent}
\end{align}
where $\rho_i$ is the parameter of the $i$th copula. Note the terms of the chain rule that have cancelled.

Similarly, using \eqref{eqn:archDensity}, when our model is parameterized with Archimedean copulae,
\begin{align}
	\frac{\partial}{\partial\theta_i}\ln f(x'_1,\ldots,x'_n) &= \frac{\frac{\partial^n}{\partial v_1\cdots\partial v_n}\frac{\partial C_i}{\partial\theta_i}\prod_{j\neq i}C_j(v'_{(j,1)},\ldots,v'_{(j,n_j)})}{\frac{\partial^n}{\partial v_1\cdots\partial v_n}\prod^m_{j=1}C_j(v'_{(j,1)},\ldots,v'_{(j,n_j)})}. \label{eqn:archGradComponent}
\end{align}
Both the numerator and denominator in \eqref{eqn:normalGradComponent} and \eqref{eqn:archGradComponent} are calculated using the derivative-sum-product algorithm (\S\ref{sec:dspAlgorithm}). The denominator is proportional to the result of running the message passing algorithm before scaling by the additional terms of the chain rule.

To calculate the numerator, regular message passing is performed replacing the values of the partial derivatives of $F_i$ or $C_i$ with the partial derivatives of $\partial F_i/\partial\rho_i$ or $\partial C_i/\partial\theta_i$. Calculating the partial derivatives of copulae with respect to their parameter and subsets of their scope was discussed in \S\ref{sec:gradCopulae}.

This is, however, an inefficient method for calculating the full gradient. For each component of the gradient, messages are passed over the full clique tree in calculating the numerator. When, for each component, the node associated with the copulae containing the differentiated parameter is designated as the root, corresponding messages are identical across components.

To eliminate this repetition, an alternative dynamic programming procedure was adopted. The clique tree is ``calibrated'' by calculating all messages---not just those in the direction of an arbitrary root. First, the messages are passed towards the default root in the topological ordering. Then, the remaining messages are calculated by passing messages in the reverse topological ordering. That is, starting from the node before the root in the topological ordering and proceeding to the first node, a message is passed from each node's child to itself.

After the clique tree has been calibrated, the components of the gradient are calculated by evaluating the result of message passing at the clique associated with the copula containing the differentiated parameter, replacing that copula with its derivative with respect to its parameter. Any clique can be used to calculate the density, although it is preferable to use the clique with the smallest scope. See Algorithm \ref{alg:calculateGradient}.
\begin{algorithm}[t]
\caption{Calculate the log-gradient and copula log-likelihood of a CDN}
\label{alg:calculateGradient}
\begin{algorithmic}[1]
	\Procedure{GradientLoglikelihood}{$\mathbf{u}, \mathcal{C}$}
		\State $\textnormal{\scshape CalibrateCliqueTree}(\mathbf{u}, \mathcal{C})$
		\State $c(\mathbf{u}) := \textnormal{\scshape DspMessages}(1, -1)$\Comment{chain rule term of copula density is irrelevant}
		\For{$i$ in $1$ to $\#\{\phi_j\}$}
			\State \textbf{replace} $\phi_i$ with $\partial\phi_i/\partial\theta_i$ temporarily
			\State $\left(\nabla\ln f\left(\mathbf{x}\right)\right)_i := \textnormal{\scshape DspMessages}(\alpha(i), -1) / c(\mathbf{u})$
		\EndFor
		\State \textbf{return} $\{\ln(c(\mathbf{u})), \nabla\ln f(\mathbf{x})\}$
	\EndProcedure\vspace{6pt}
	\Procedure{CalibrateCliqueTree}{$\mathbf{u}, \mathcal{C}, \textnormal{\scshape CalcMsgs}$}	
		\For{$C_i\in\mathcal{C}$ taken in the topological ordering}
			\If{$\textnormal{child}(i)\neq-1$}
				\State $\textnormal{\scshape CalcMsgs}(i, \textnormal{child}(i))$
			\EndIf
		\EndFor
		\For{$C_i\in\mathcal{C}$ taken in reverse topological ordering}
			\If{$\textnormal{child}(i)\neq-1$}
				\State $\textnormal{\scshape CalcMsgs}(\textnormal{child}(i), i)$
			\EndIf
		\EndFor
	\EndProcedure
\end{algorithmic}
\end{algorithm}

An alternative algorithm, called the gradient-derivative-product algorithm, for calculating the gradient is given in \cite{HuangJojic2010}. Essentially, it works by passing a separate set of messages for each component of the gradient where the parameter has been introduced as an additional variable to be differentiated. The algorithm is more general than ours in the sense that it handles the, albeit uncommon, case of shared copulae parameters. Notwithstanding, our approach is quicker, and easier to implement.

\section{Examples}
Learning is illustrated with two simple examples.
\begin{ex}
	Consider the CDN parameterized with normal copulae and margins having the factor graph of Figure \ref{fig:learningExGraph1}. The parameters were initialized to $\rho_1=0.6$ and $\rho_2=-0.2$, and ten thousand samples drawn. Then, we applied gradient descent and L-BFGS with restart from three random restarts to determine if the parameters could be recovered.
	
	The results are given in Figure \ref{fig:learningEx1Parameters}. Notice that for gradient descent, the iterations move perpendicular to the level surfaces (the gradient at a point is normal to the level surface intersecting that point). The large movements for the initial iterations are not a problem since the likelihood is still decreasing sufficiently. The L-BFGS algorithm converges in fewer iterations and approaches the optimal solution more smoothly.
\end{ex}

\begin{ex}
	Consider the CDN parameterized with normal copulae and margins having the factor graph of Figure \ref{fig:learningExGraph2}. The parameters were initialized to $\rho_1=0.5$ and $\rho_2=-0.5$, and ten thousand samples drawn. Note that the parameters in this model are not identifiable, as exchanging them does not change the log-likelihood. Again, we applied both learning methods from a number of random restarts.
	
	A record was made for each random restart whether the solution converged to $\rho_1=0.5$ and $\rho_2=-0.5$, or vice versa. The region of initial values that converge to a given solution is apparent from Figure \ref{fig:learningEx2}, wherein the endpoint of convergence is indicated along with the initial values.
	
	Learning by gradient descent exhibits clear regions of convergence, and there are only a few starting points for which convergence fails. These are near the boundary, where the gradient is very steep. Learning by L-BFGS with barrier method also has clear regions of convergence, although they are different to those of gradient descent, which we can understand in terms of the force field interpretation of the barrier function. There are more starting values that fail to converge, suggesting that it is not as robust as gradient descent.
\end{ex}
See Experiment \ref{exp:learning} for a thorough comparison of the learning methods.

\begin{figure}[p]
     \begin{center}
        \subfigure[]{
            \label{fig:learningEx1overview}
            \includegraphics[scale=0.5]{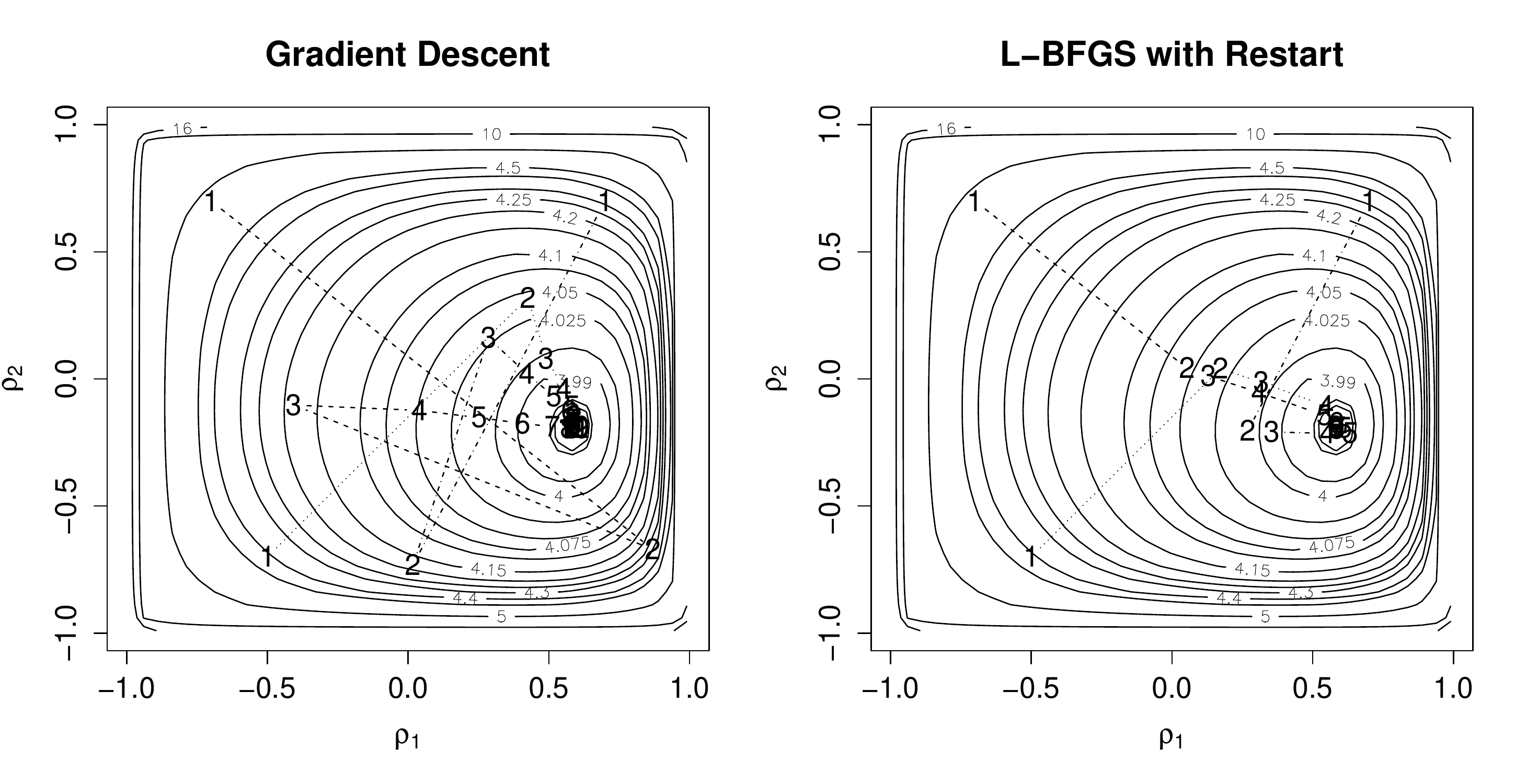}
        }\vspace{1cm}
        \subfigure[]{
           \label{fig:learningEx1zoomed}
					
					\begin{tabular}{c}
						\includegraphics[scale=0.5]{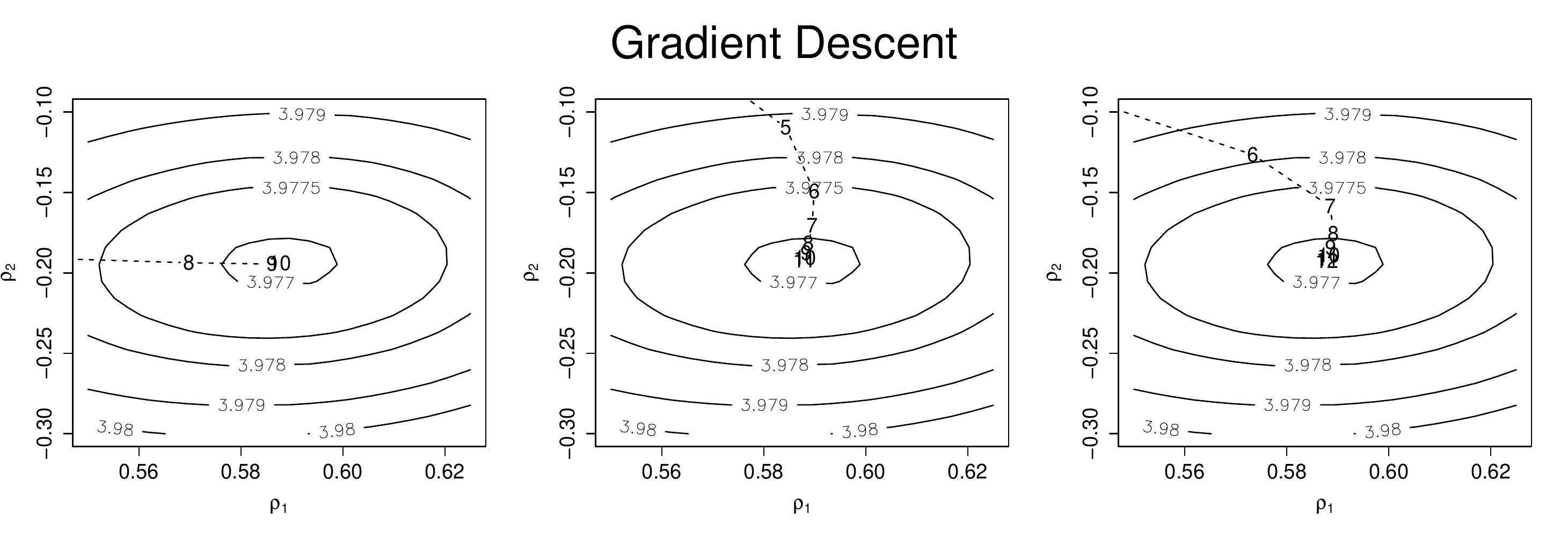}\\
						\includegraphics[scale=0.5]{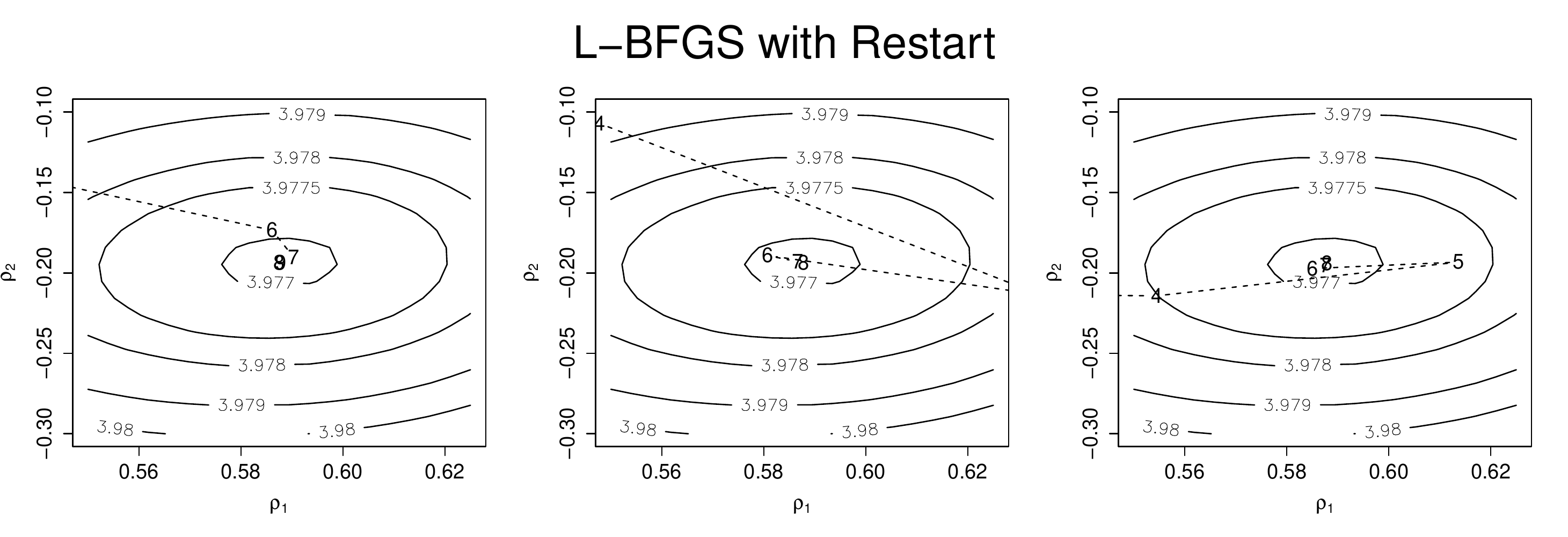}
					\end{tabular}
        }
    \end{center}
    \caption[Visualization of the iterations of learning.]{(a) Overview of the parameters at each iteration of learning for three random restarts. (b) Close-up near optimal point.}
   \label{fig:learningEx1Parameters}
\end{figure}

\begin{figure}[t]
     \begin{center}
        \subfigure[]{
            \label{fig:learningExGraph1}
            \includegraphics[scale=0.6]{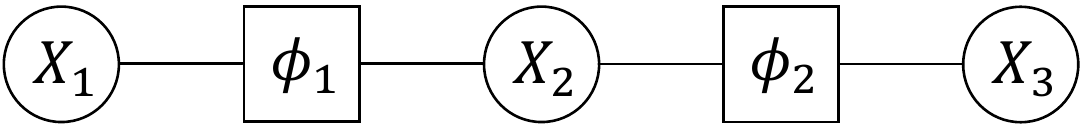}
        }\hspace{1cm}
        \subfigure[]{
           \label{fig:learningExGraph2}
           \includegraphics[scale=0.6]{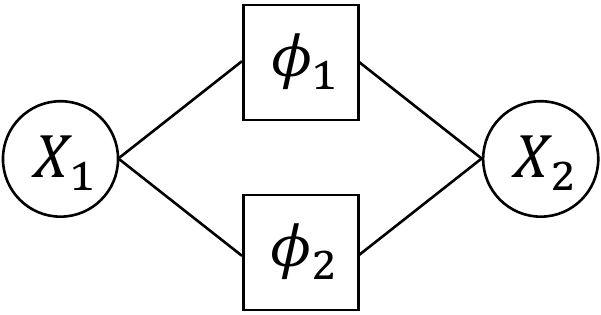}
        }
    \end{center}
    \caption[Factor graph for learning examples.]{(a) Factor graph for a CDN that has three variables connected by two bivariate normal copulae. (b) Factor graph for a CDN over two variables connected by two bivariate normal copulae. Note that the parameters are not identifiable.}
   \label{fig:learningExGraphs}
\end{figure}
\begin{figure}[t]
	\begin{center}\includegraphics[scale=0.5]{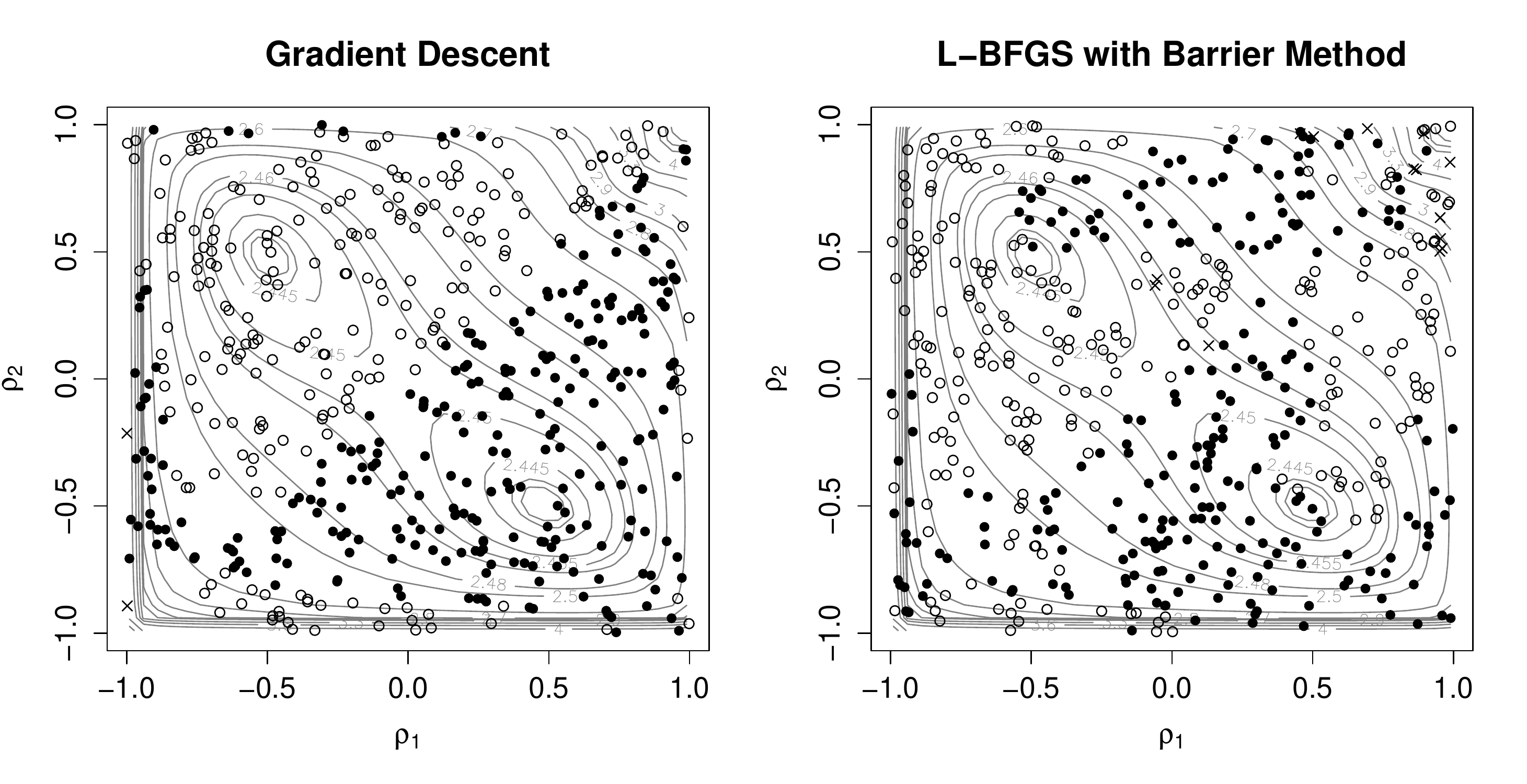}\end{center}\caption[Regions of convergence for learning non-identifiable parameters.]{Filled circles indicate the initial values that converge to $\rho_1=0.5$, $\rho_2=-0.5$, hollow circles indicate those converging to $\rho_1=-0.5$, $\rho_2=0.5$, and crosses indicates values that do not converge to an optimal value.}\label{fig:learningEx2}
\end{figure}

\section{Learning from data missing completely at random}\label{sec:cmarLearning}
A simple algorithm for learning from missing completely at random (MCAR) data ensues from the triviality of marginalization. Data values are said to be MCAR when, roughly speaking, their observational pattern is independent of both the observed variables and the parameters.

Suppose that data for $\mathbf{v}^i\subseteq\mathbf{u}$ is missing from the $i$th sample. When calculating the energy function, the likelihood of a sample with missing data is equal to the likelihood of its observed variables having marginalized the missing variables. Recall that variables are marginalized from a copula by setting them to unity. Consequently, learning from MCAR data can be performed by minimizing the modified energy,
\begin{align*}
	E(\boldsymbol\theta ; \mathcal{D}) &= -\frac{1}{m}\sum^m_{i=1}\ln\left(\frac{\partial}{\partial(\mathbf{u}\setminus\mathbf{v}^i)}C(\mathbf{u}^i\setminus\mathbf{v}^i,\mathbf{v}^i=\mathbf{1} ; \boldsymbol\theta)\right).
\end{align*}
The likelihood of a subset of the model is calculated by the derivative-sum-product algorithm having set the missing variables to unity. In a naive approach, a clique tree is constructed over each combination of observed variables. It is, however, possible to run inference on a subset of the variables using the full model clique tree, which is preferable as it obviates determining all combinations of missing variables and constructing and storing the resulting clique trees.

Provided partial derivatives with respect to subsets that include a marginalized variable are zeroed, and we do not differentiate missing variables in $C_i\setminus S_{i,j}$ when sending a message from $i$ to $j$, the standard learning procedure applies.

A similar idea produces an algorithm for learning from censored data. For example, suppose for one sample that while the exact value of a variable $X_k$ is unknown, we observe that $X_k\le x'_k$. To calculate the likelihood for this sample, we differentiate with respect to all variables save $x_k$, and evaluate the resulting mixed density and CDF at $x'_k$.

See Experiment \ref{exp:cmar} for a study of the robustness of learning in CDNs from missing data.

\section{Piecewise composite likelihood learning}\label{sec:piecewiseLearning}
Standard gradient-based learning methods require calculating the full model likelihood and gradient. A recurrent theme of PGMs, however, is that the factorization of the model allows one to operate on smaller parts. In this section, we develop a novel algorithm for learning CDNs that adopts this approach, and discuss its advantages and disadvantages.

To visualize the information flow throughout the network during learning, we form what we term the \emph{parameter graph}. The nodes in the parameter graph represent the copula parameters, or equivalently the factors since there is one parameter per factor, and an edge connects two nodes if the factors have intersecting scope. The parameter graph can be thought of as the factor graph with the variable nodes removed and the paths between factors merged. See Figure \ref{fig:parameterGraph} for the parameter graph of Example \ref{ex:studentcdn}.
\begin{figure}[t]
     \begin{center}
        \includegraphics[scale=0.7]{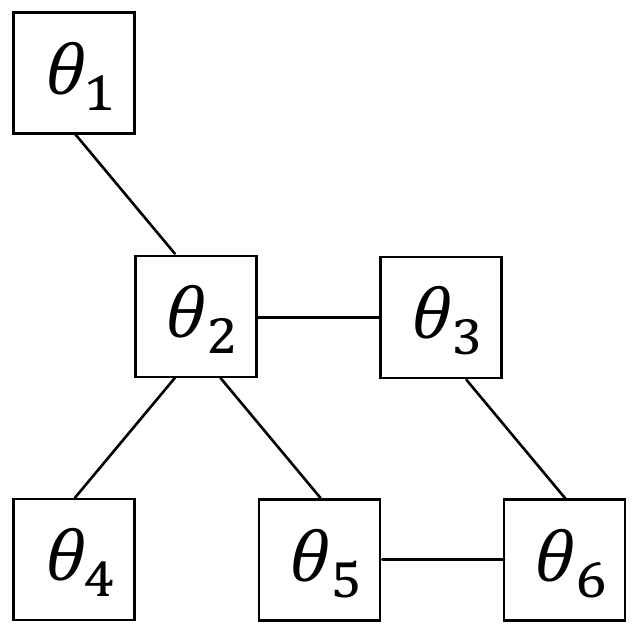}
    \end{center}
    \caption[Parameter graph for the Student CDN.]{Parameter graph for the Student CDN.}
   \label{fig:parameterGraph}
\end{figure}

A novel perspective is revealed when this concept is applied to standard learning methods. In learning methods based on MLE, a likelihood is formed from the factors at all nodes, and is optimized once. Put another way, there is one subproblem over the entire parameter graph that is solved once. Coordinate descent, in its simplest form, is viewed in terms of the graph as solving a succession of subproblems, each over a single node, incorporating information from the remaining nodes.

Composite likelihood learning \cite{VarinEtAl2011} is also comprehensible in terms of the graph. A composite log-likelihood is a function of the form,
\begin{align*}
	l^c({\boldsymbol\theta}\ |\ \mathbf{x}_1,\ldots,\mathbf{x}_n) &= \sum^n_{i=1}\sum^m_{j=1}\ln\left(\mathcal{L}_j({\boldsymbol\theta}_j;\mathbf{x}_i)\right),
\end{align*}
where each term $\mathcal{L}_j$ is the likelihood function resulting from the conditional density of a subset of the model, given another subset. In composite likelihood learning, the function $l^c$ is maximized instead of the log-likelihood. A loss in efficiency is traded for the gain in computational ease of optimizing terms over smaller scope. With respect to the parameter graph, $m$ subproblems are formed over the sets of nodes $\{{\boldsymbol\theta}_j\}$, which are solved \emph{simultaneously}.

More generally, a continuum of algorithms is formed by changing the form of the subproblems---which parameters and information they incorporate---and whether they are solved simultaneously or iteratively (or in some combination). The continuum between these two aspects provides a trade-off between efficiency and speed. It is typically quicker to solve many small subproblems, albeit at the cost of some efficiency, rather than fewer large ones. Also, by iterating the solution of the subproblems, information flows around the parameter graph and the loss in efficiency is minimized, at the cost of computational time.

Our algorithm, \emph{piecewise composite likelihood learning}, is a generalization of the aforementioned learning methods, and we explored two variants. In the first, a subproblem was formed over each node by constructing the clique tree over the scope of the factor at that node. The resulting clique tree also contains the factors that have overlapping scope with the factor under consideration---the neighbours of the factor in the parameter graph. In this way, each subproblem incorporates information from each of the neighbouring subproblems, while only requiring optimization over a single variable. L-BFGS with restart was used to solve each subproblem.

The function that is optimized at each subproblem is the product of the factors in the clique tree, differentiated with respect to the scope of the factor at that node. It does not correspond exactly to the marginal likelihood or conditional likelihood of any subset of the model. (If this were required, one could, for example, marginalize out the variables not in the clique tree that are present in the factors.)

The second variant similarly constructs a subproblem over each adjacent pair of nodes in the parameter graph. For each subproblem, a clique tree is constructed over the union of the scopes of the corresponding two factors, which is again solved by L-BFGS with restart. We found, however, that this variant was slower to converge.

Solving each subproblem is termed an ``inner iteration,'' whereas one pass of solving every subproblem in some order is termed an ``outer iteration.'' To find the global solution, the parameters are randomly initialized. Then, the subproblems are shuffled and solved in succession. After each inner iteration, the subproblem is said to be active when either of the following changes with respect to the last outer iteration by less than some fixed tolerance $\epsilon$:
\begin{itemize}
	\item the mean squared error in the parameter or parameters optimized during that subproblem;
	\item the ``likelihood'' over the subproblem clique tree.
\end{itemize}
In other words, a subproblem becomes active when the update in the solution or change in likelihood is sufficiently small. At the end of each outer iteration, if all subproblems are active, or one iteration of L-BFGS was performed per subproblem, the algorithm terminates. The subproblems are shuffled before the start of each outer iteration, to improve information flow.

One advantage of this method is that in many cases the maximal treewidth over the subproblems is less than the treewidth over the full model clique tree. For example, in a square grid with bivariate factors, the treewidth is linear in the width of the grid. In the first variant of our method, however, every subproblem has treewidth one, as it consists of a single clique with two variables. Another advantage is that it obviates the need to calculate the full model likelihood and gradient. For very large models, the derivative-sum-product algorithm and its modification to calculate the gradient frequently overflow or underflow, even when they are run in log-space. Also, the algorithm is well suited to being parallelized; the subproblems can be solved simultaneously.

The main disadvantage is that the sum of the sizes of the subproblem clique trees is typically greater than the size of the full model clique tree, so more messages are passed during a single outer iteration, compared to calculating the full model likelihood or gradient. The gain in speed from reducing the treewidth must outweigh the decrease in speed from passing more messages for there to be a speedup. Also, there appears to be a loss of efficiency. This trade-off is explored in Experiment \ref{exp:piecewise}

A study of the theory of convergence of the algorithm is beyond the scope of this thesis.

\section{Summary}
\begin{itemize}
	\item The marginal parameters of a CDN are learnt by MLE.
	\item Subsequently, the copula parameters are learnt by minimizing the negative copula log-likelihood, termed the energy function.
	\item We discussed three gradient-based descent methods---gradient descent, L-BFGS with restart, and L-BFGS with barrier method---to perform the optimization.
	\item The gradient is efficiently calculated by dynamic programming and a modification to the derivative-sum-product algorithm.
	\item An algorithm for learning from MCAR data follows from the triviality of marginalization. Learning from censored data proceeds similarly.
	\item A novel algorithm termed \emph{piecewise composite likelihood learning} makes learning feasible over models with high-dimension or treewidth.
\end{itemize}
	\chapter{Experiments}\label{experiments}
In this chapter, we detail four simple experiments that test various properties of the models and algorithms. Our purpose was to determine:
\begin{itemize}
	\item the relationship between the speed of inference, the size of the model, its largest clique, and the copula type;
	\item how the performance of learning varies over model typology and learning method;
	\item the efficiency of learning with MCAR data as the proportion of missing data is varied;
	\item the relative efficiency of running times of piecewise learning compared to L-BFGS with restart, and whether piecewise learning is consistent.
\end{itemize}
The experiments were run on four classes of models supposed to represent the archetypal components of a graphical model's typology: chains, loops, trees and grid. The models are parameterized with bivariate factors, which are all either normal or Clayton. Refer to Figure \ref{fig:modelTypes} for an illustration of the graphs associated with these models, and Figure \ref{fig:modelSizes} for how the treewidth varies with the model size. Chains, loops, and trees have constant treewidth, whereas the treewidth of grids increases linearly with the width of the grid. (Actually, the minimal treewidth of an $n\times n$ grid is $n$, so the figure indicates that the min-fill heuristic is unable to find the optimal clique tree for grids.)

Due the copula representation, the marginals were irrelevant to any of our enquiries. Thus, they were simply set to have a standard normal distribution. All learning was performed with unknown margins, that is, the margins were learnt from the data prior to learning the copula distribution. Timed experiments were run on an Intel(R) Core(TM) i7-3770 CPU at 3.40GHz, utilizing a single core per process.

We conclude with a discussion of the limitations of CDNs.
\begin{figure}[p]
  \centering
	\renewcommand{\arraystretch}{4}
	{%\footnotesize	%
	\begin{tabular}{c@{\hskip 1cm}cc}
	 \multicolumn{3}{c}{Chains} \\[-30pt]
		\raisebox{-.5\height}{\includegraphics[scale=0.4]{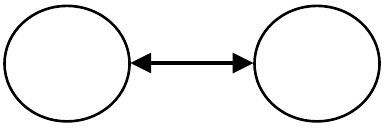}} & \raisebox{-.5\height}{\includegraphics[scale=0.4]{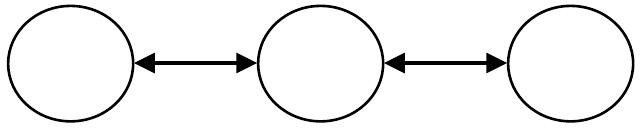}} & \raisebox{-.5\height}{\includegraphics[scale=0.4]{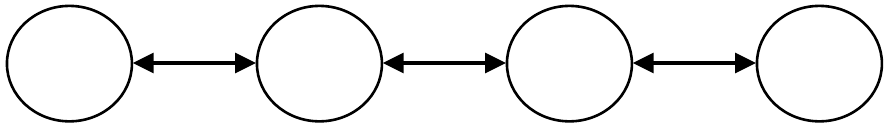}} \\[-24pt]
		$n=2$ & $n=3$ & $n=4$ \vspace{0.25cm} \\[0pt] \hline
		\multicolumn{3}{c}{Loops} \\[-5pt]
		\raisebox{-.5\height}{\includegraphics[scale=0.4]{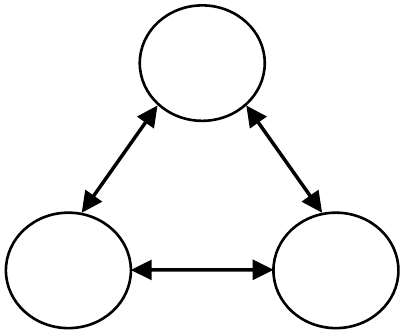}} & \raisebox{-.5\height}{\includegraphics[scale=0.4]{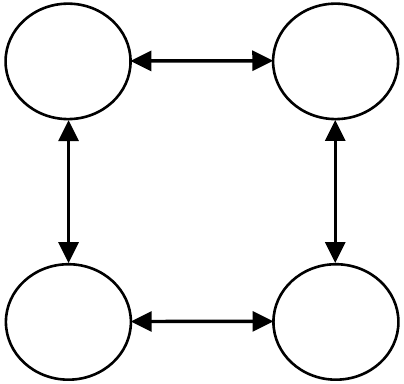}} & \raisebox{-.5\height}{\includegraphics[scale=0.4]{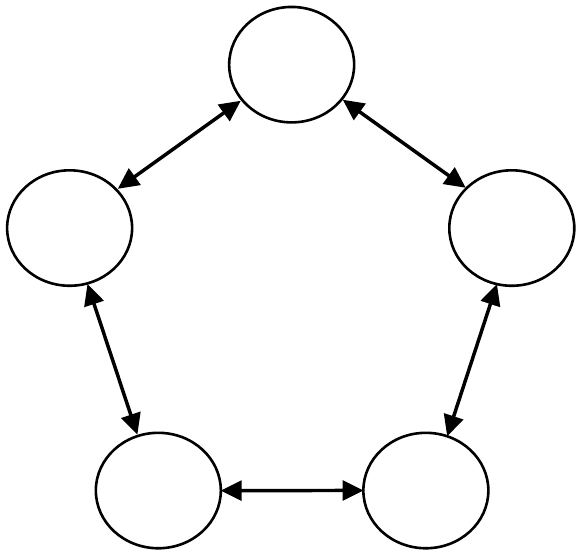}} \\[-15pt]
		$n=3$ & $n=4$ & $n=5$ \vspace{0.25cm} \\ \hline
		\multicolumn{3}{c}{Trees} \\
		\raisebox{-.5\height}{\includegraphics[scale=0.4]{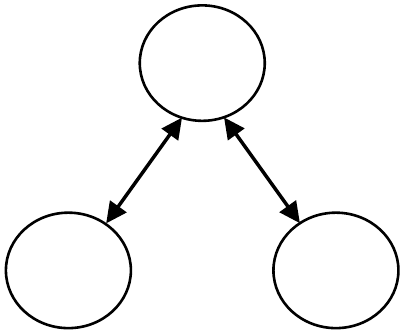}} & \raisebox{-.5\height}{\includegraphics[scale=0.4]{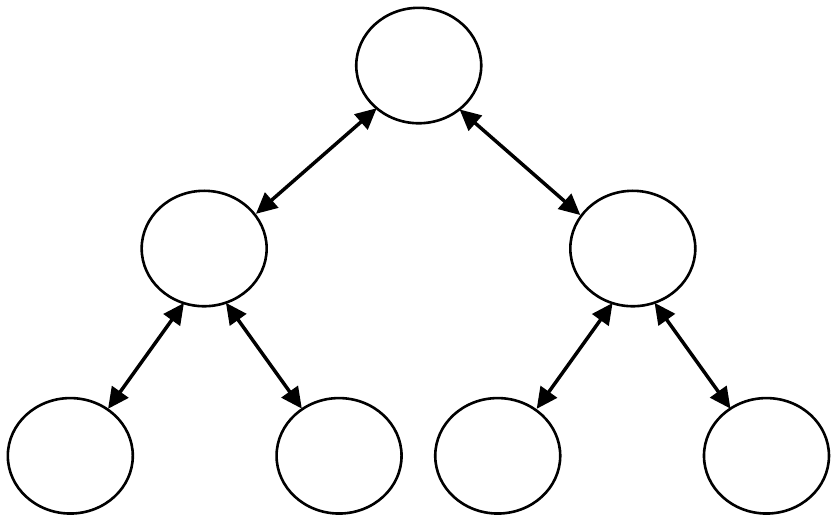}} & \raisebox{-.5\height}{\includegraphics[scale=0.4]{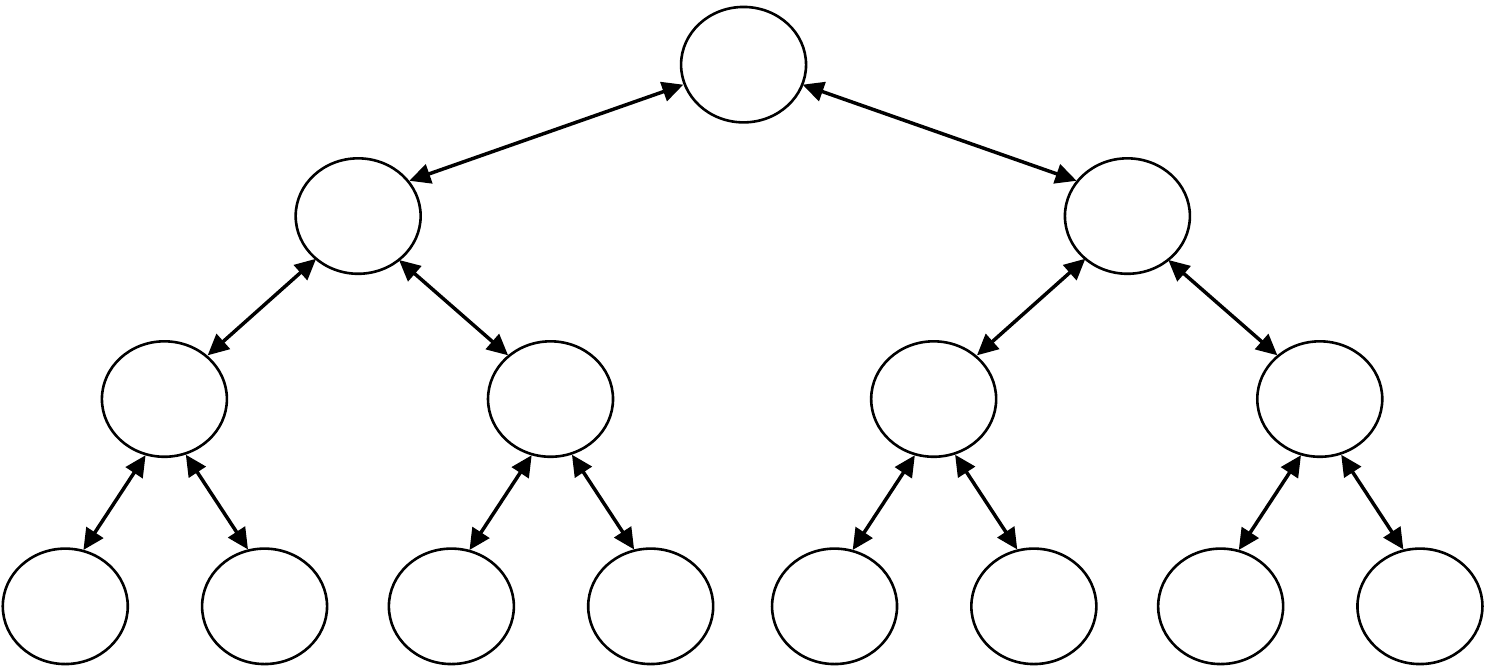}} \\[-15pt]
		$n=2$ & $n=3$ & $n=4$ \vspace{0.25cm} \\ \hline
		\multicolumn{3}{c}{Grids} \\
		\raisebox{-.5\height}{\includegraphics[scale=0.4]{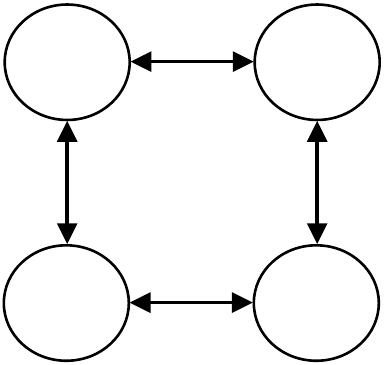}} & \raisebox{-.5\height}{\includegraphics[scale=0.4]{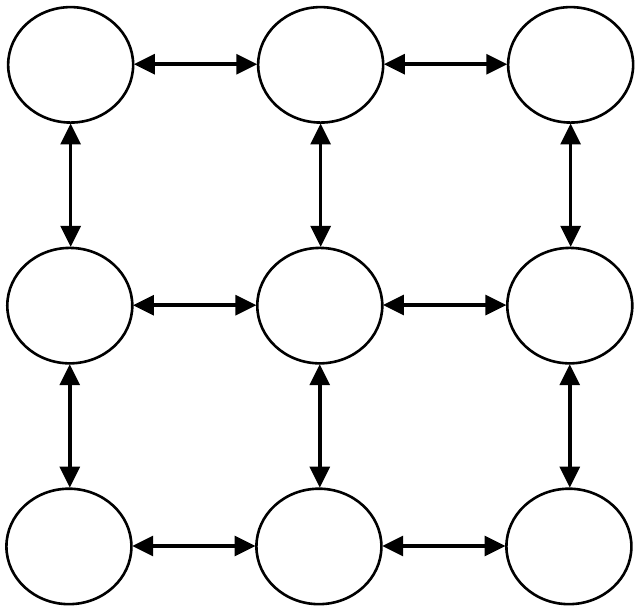}} & \raisebox{-.5\height}{\includegraphics[scale=0.4]{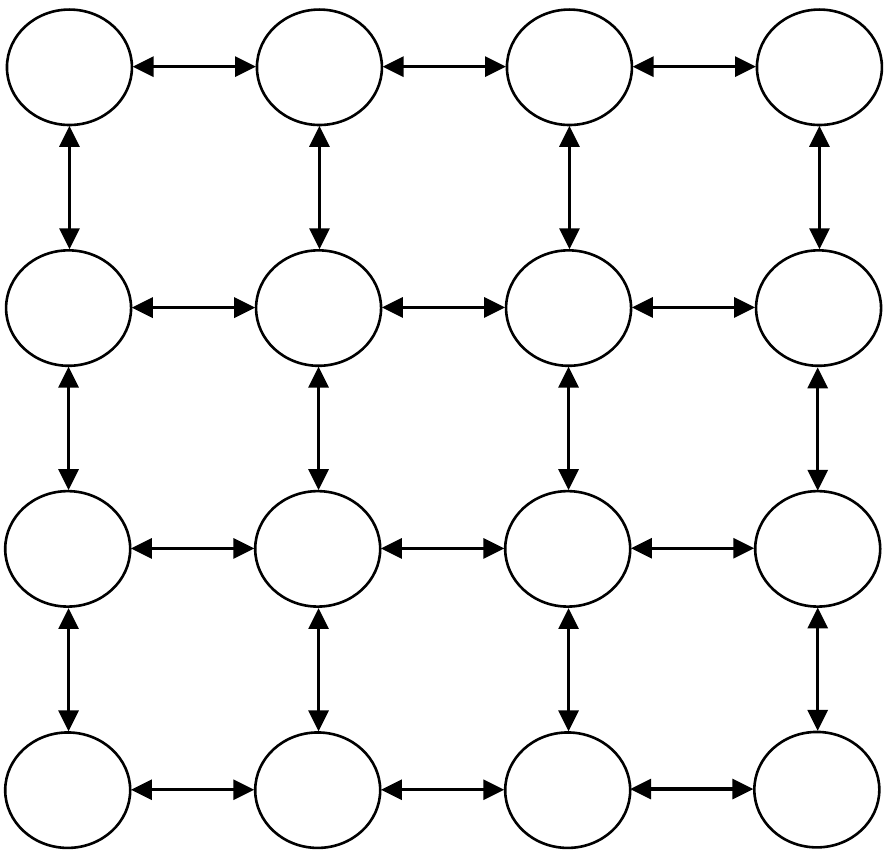}} \\[-15pt]
		$n=2$ & $n=3$ & $n=4$
		\vspace{6pt}
	\end{tabular}}
	\caption[Archetypal models for our experiments]{Four archetypal components of a graphical model's typology, whereon our experiments are run. For loops and chains, the dimension $n$ is the number of variables. For grids, $n$ is the number of variables across the width, and for trees $n$ is the number of levels.}
	\label{fig:modelTypes}
\end{figure}

\begin{figure}[p]
  \centering\begin{tabular}{@{\hskip 0.65cm}c@{\hskip 1cm}c}
	 \ \ \ \ \ Chains & \ \ \ \ \ Loops \\ 
	\raisebox{-.5\height}{\includegraphics[scale=0.3]{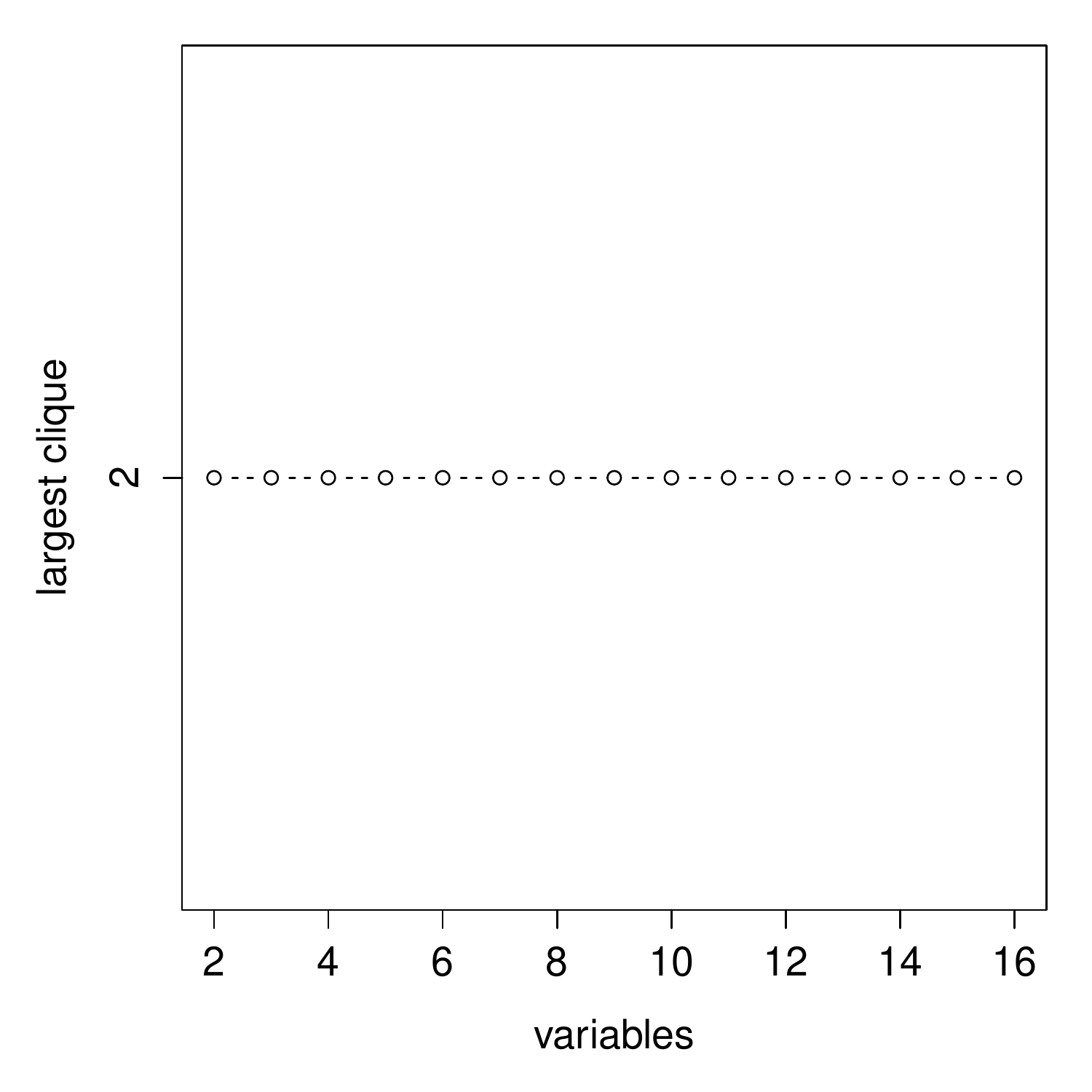}} & \raisebox{-.5\height}{\includegraphics[scale=0.3]{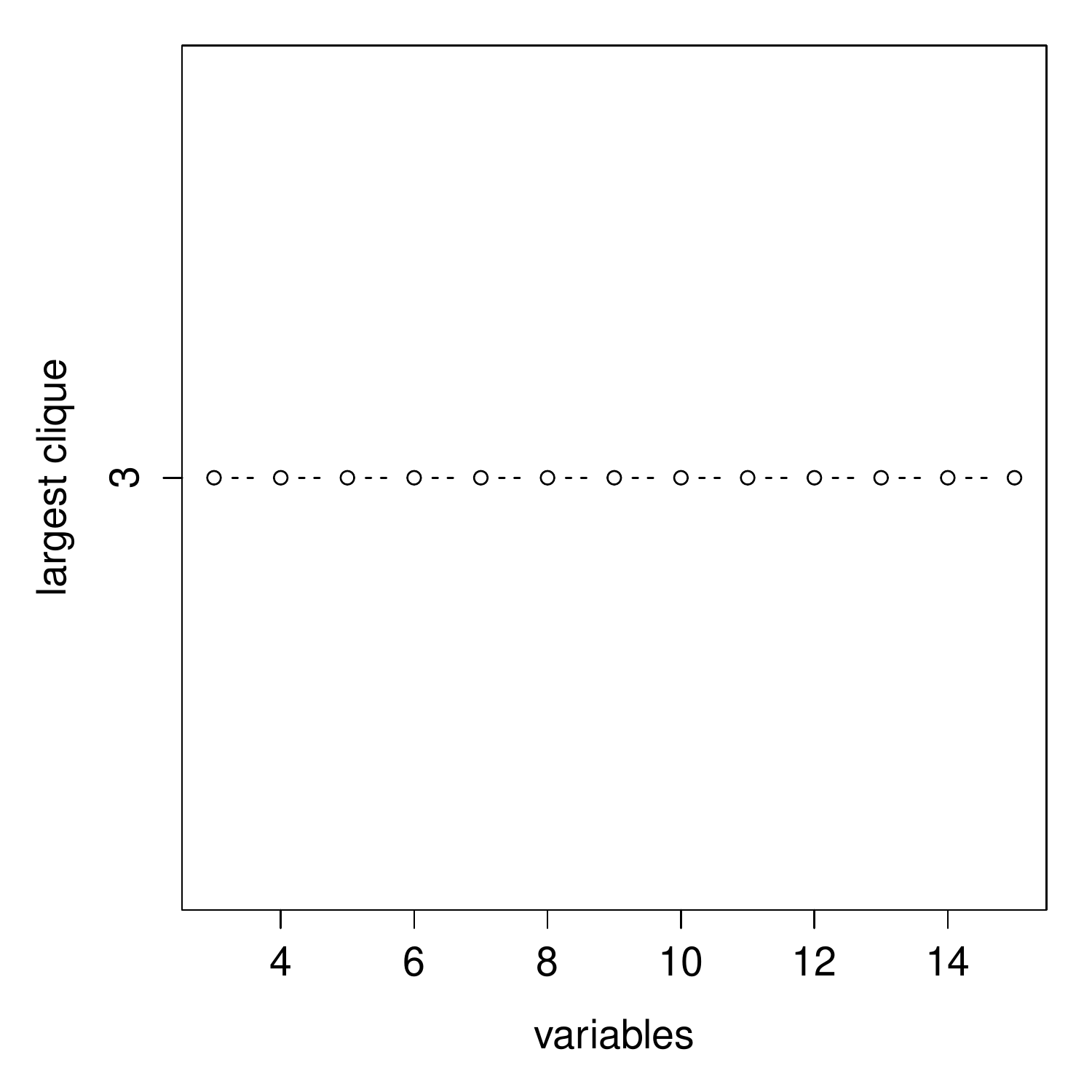}} \vspace{0.5cm} \\
	%Clayton & \raisebox{-.5\height}{\includegraphics[scale=0.36]{}} & \raisebox{-.5\height}{\includegraphics[scale=0.36]{}} \vspace{0.5cm} \\ 
	\ \ \ \ \ Trees & \ \ \ \ \ \ Grids \\ 
	\raisebox{-.5\height}{\includegraphics[scale=0.3]{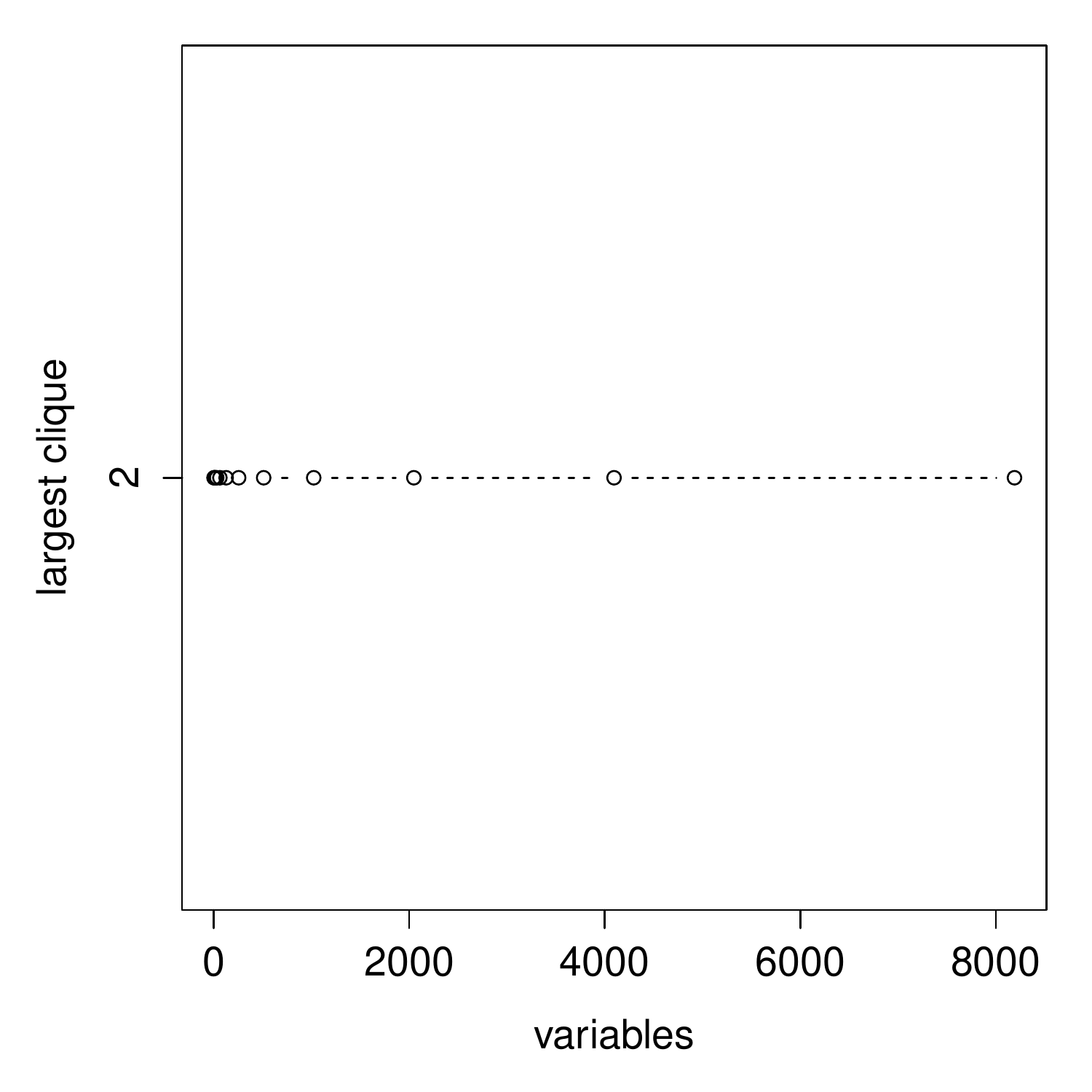}} & \raisebox{-.5\height}{\includegraphics[scale=0.3]{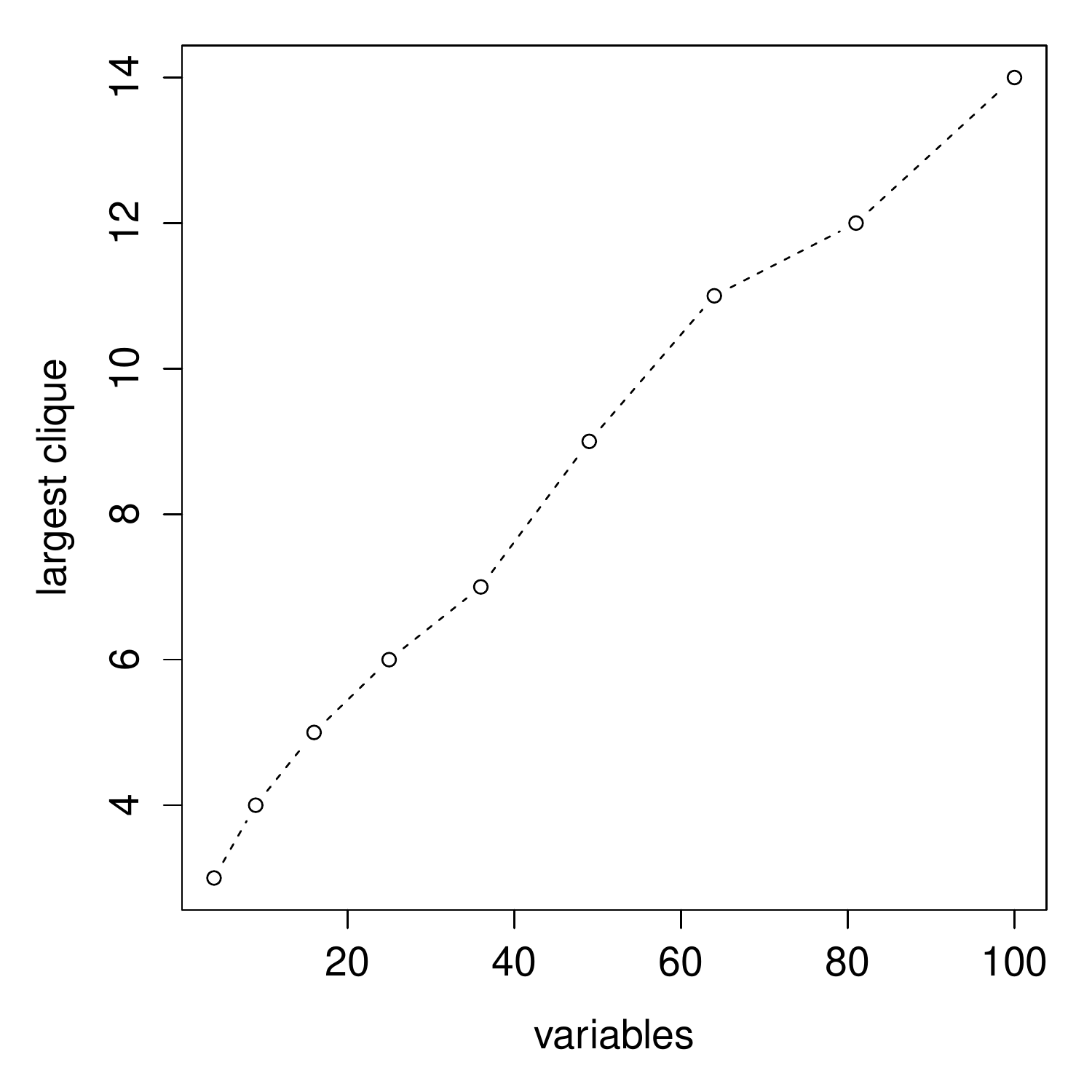}} \\
	%Clayton & \raisebox{-.5\height}{\includegraphics[scale=0.36]{}} & \raisebox{-.5\height}{\includegraphics[scale=0.36]{}}	
	\end{tabular}
	\caption[Size of the largest clique against the number of variables in test models.]{Size of largest clique in a clique tree constructed with the min-fill heuristic against the number of variables in test models.}
	\label{fig:modelSizes}
\end{figure}

\begin{figure}[p]
  \centering\begin{tabular}{@{\hskip 0.65cm}c@{\hskip 1cm}c}
	 \ \ Chains & \ \ Loops \vspace{-0.15cm} \\ 
	\raisebox{-.5\height}{\includegraphics[scale=0.3]{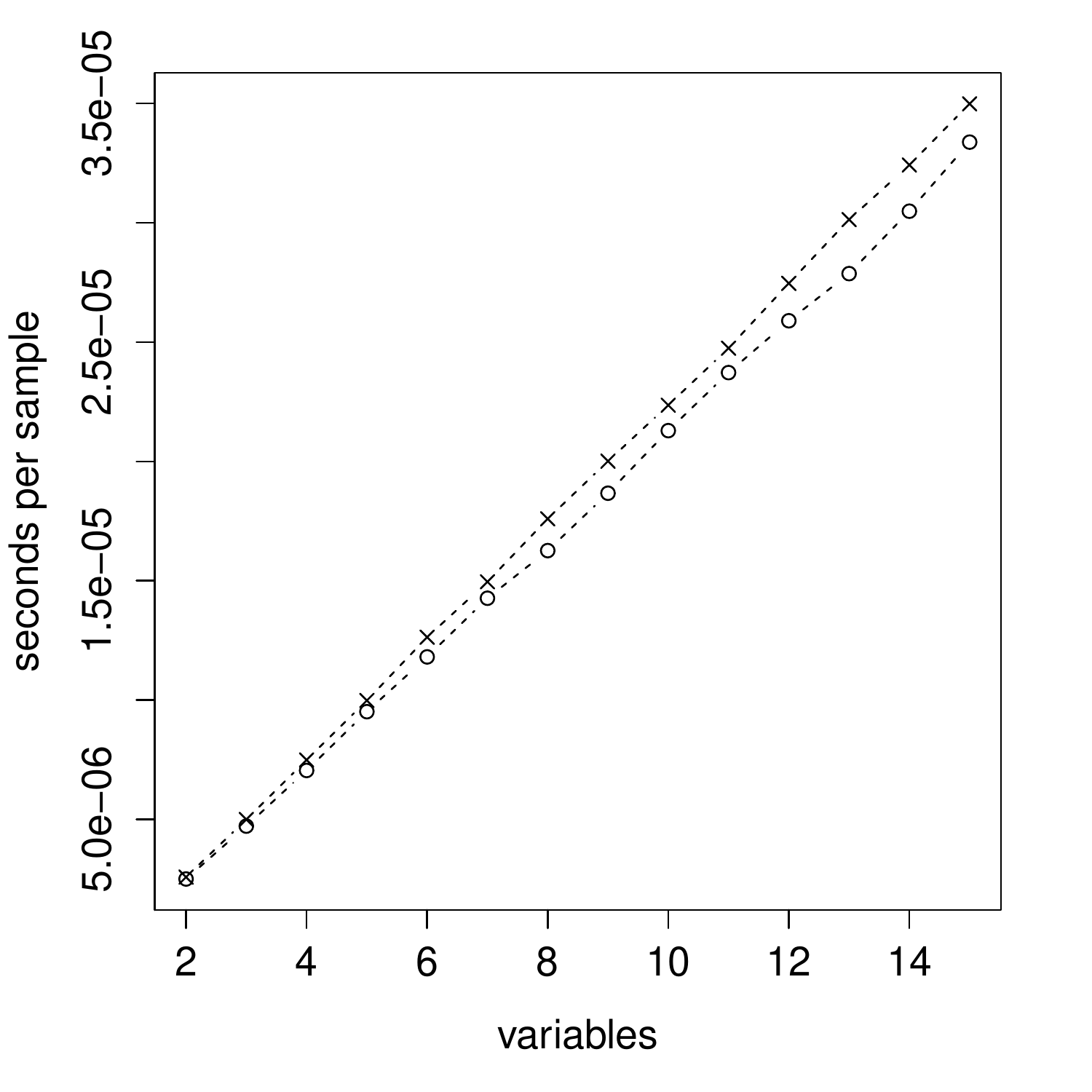}} & \raisebox{-.5\height}{\includegraphics[scale=0.3]{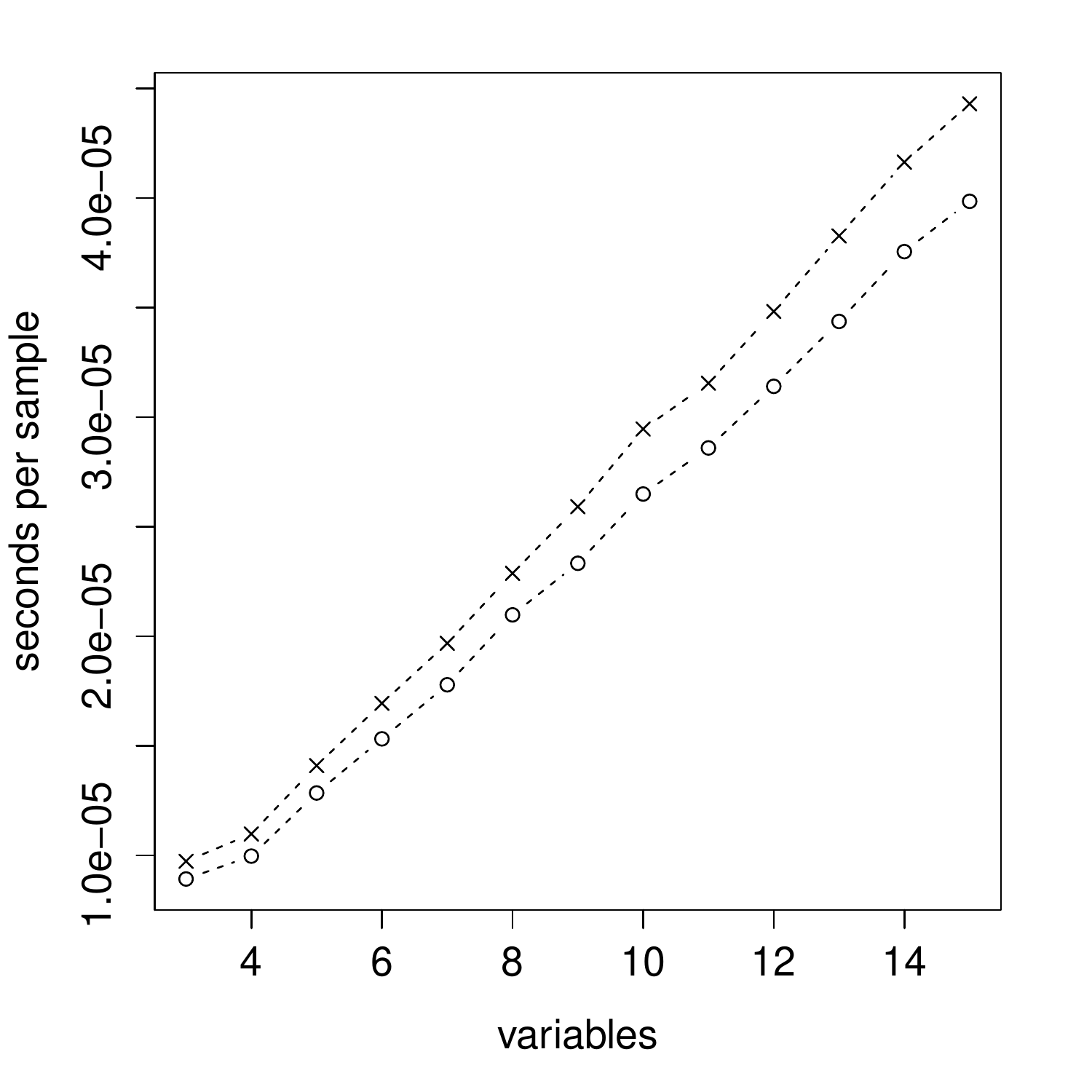}} \vspace{0.5cm} \\
	%Clayton & \raisebox{-.5\height}{\includegraphics[scale=0.36]{infClaytonChains.pdf}} & \raisebox{-.5\height}{\includegraphics[scale=0.36]{infClaytonLoops.pdf}} \vspace{0.5cm} \\ 
	\ \ Trees & \ \ \ Grids \vspace{-0.15cm} \\ 
	\raisebox{-.5\height}{\includegraphics[scale=0.3]{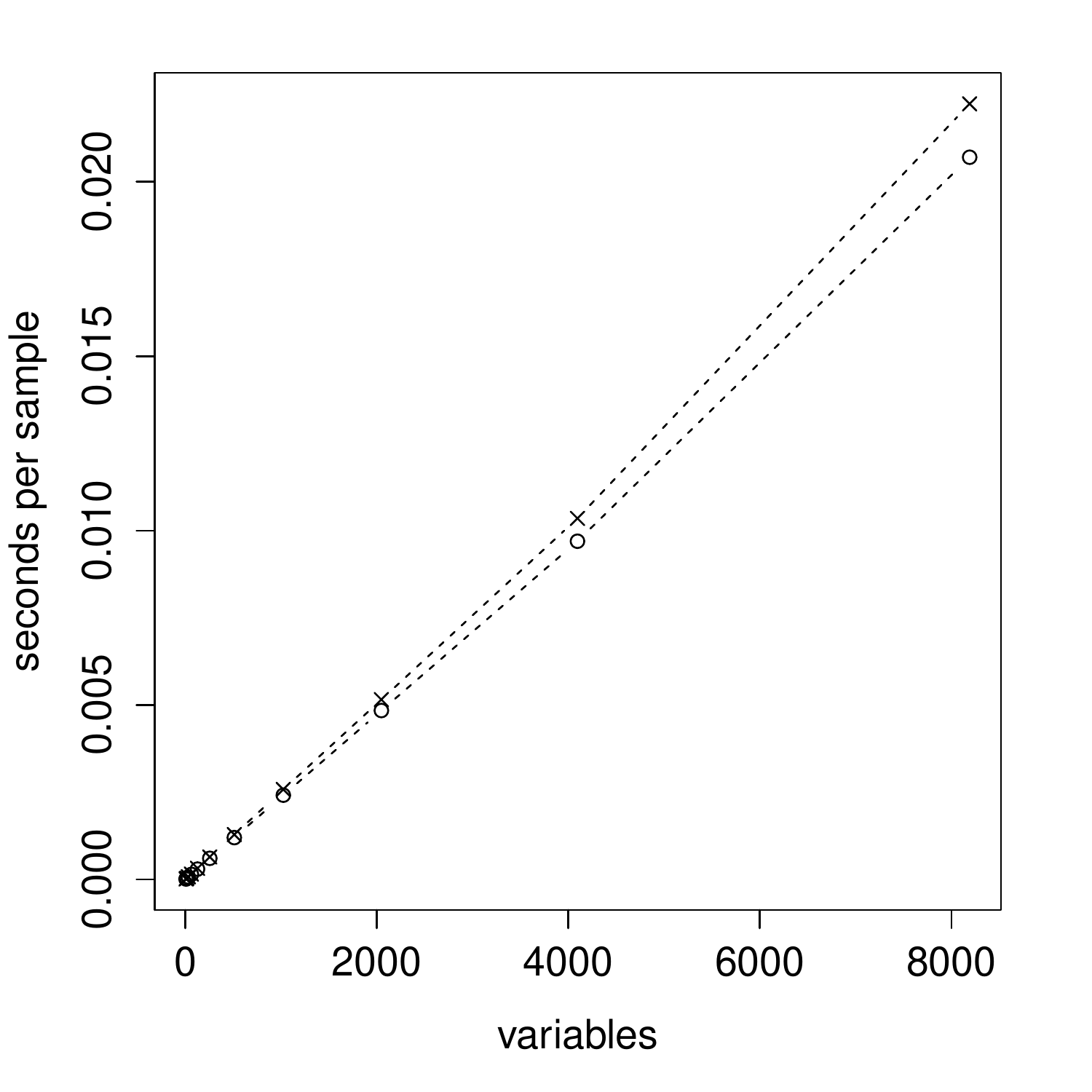}} & \raisebox{-.5\height}{\includegraphics[scale=0.3]{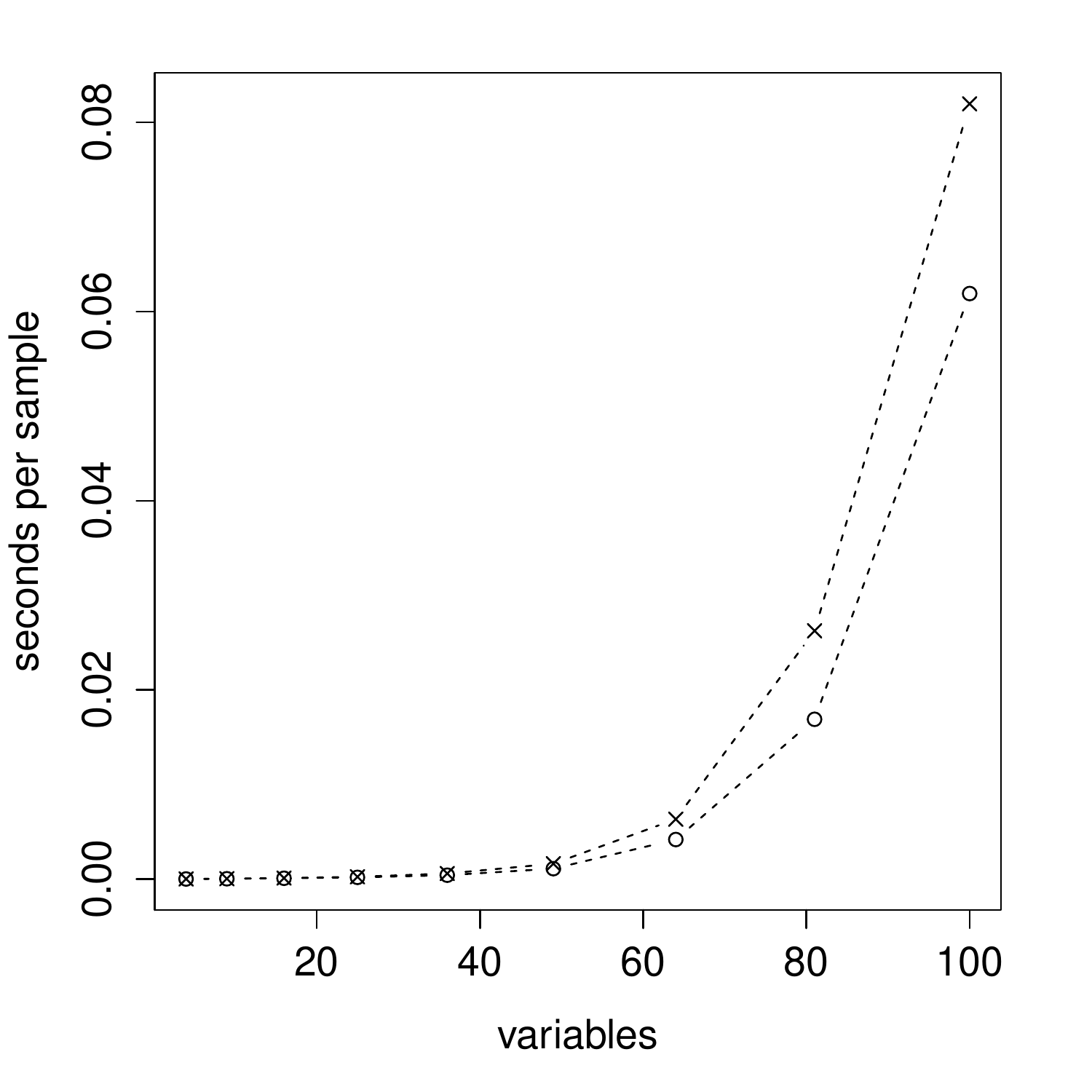}} \\
	%Clayton & \raisebox{-.5\height}{\includegraphics[scale=0.36]{infClaytonTrees.pdf}} & \raisebox{-.5\height}{\includegraphics[scale=0.36]{infClaytonGrids.pdf}}	
	\end{tabular}
	\caption[Speed of calculating copula density in archetypal models with normal copulae.]{Speed of calculating copula density in archetypal models with normal copulae. Crosses indicate message passing in log-space, circles in normal space.}
	\label{fig:infSpeeds}
\end{figure}

\section{Inference}
We ran the following experiment to investigate how the speed of inference varies over the archetypal models and the copula type.
\begin{experiment}\label{exp:inference}
	The models used were: chains of size $n=2,\ldots,15$, loops of size $n=3,\ldots,15$, trees of size $n=2,\ldots,13$, and grids of size $n=2,\ldots,10$ with normal and Clayton copulae. For each model, the parameters were randomized and a number of samples drawn. The time taken to calculate the copula density in both normal and log-space, which requires passing messages over the full clique tree, for all samples was recorded, and this process repeated thirty times. (Inference was performed over different values to reduce cache effects.)
		
	The timings were used to calculate the average time to calculate the copula density for a single sample, an indicator of the speed of inference. The number of samples used was varied for the model and copula types. That is, it had to be chosen to be large enough to avoid the error due to the finite precision of the machine clock (when it only took a fraction of a section to evaluate all samples), and small enough so that the experiment ran in a feasible time.
\end{experiment}
The results for normal models are presented in Figure \ref{fig:infSpeeds}; the corresponding Clayton models are very similar and thus omitted. The figures confirm the theoretical result that the running time of inference is exponential in the size of the treewidth, and, holding the treewidth constant, is linear in the model size. The speed of inference in chains, loops, and trees, which have fixed treewidth, appears to the be linear in the model size, whereas the speed of inference in grids appears to increase exponentially as the model size and hence the treewidth increases.

For the same model, the type of copula only appears to affect the scale and not the shape of the relationship between the speed of inference and the model size. In our implementation, for models with small treewidth, such the chains, loops, and trees, inference in Clayton copula models is roughly 2--3 times faster than the equivalent normal copula model. As the treewidth increases the difference in speed between the two copula types vanishes, as the time taken to incorporate the additional messages dominates the time taken in evaluating the copula factors. Inference in log-space is roughly 5--10\% slower relative to inference in normal space.

\section{Standard learning}
We performed the following experiment to examine how the performance of learning varies over models and copulae, and which learning method is preferred. %Mean squared error of learning estimates.
\begin{experiment}\label{exp:learning}
	The models used were: chains of size $n=2,\ldots,15$, and loops of size $n=3,\ldots,15$ with normal and Clayton copulae. For each model, the parameters were randomized and a sample of size $100$, $1000$, or $10000$ was drawn. For the same sample, the three learning methods---gradient descent, L-BFGS with restart, and L-BFGS with barrier---were performed from different random starts with $\epsilon=10^{-8}$. This was repeated 100 times for each combination of model and sample size.
	
	A record was made of the mean squared error (MSE) of the learnt parameters from the true parameter, and how many iterations required until convergence. All learning algorithms were restricted to 100 iterations to expedite the experiment. The method of randomizing the parameters was: for the normal copula, the parameters were uniformly sampled from $(0,1)$; for the Clayton copula, Kendall's $\tau$ was uniformly sampled from $(0,0.5)$, which was transformed to the bivariate copula parameter using the relationship \cite{Nelson2010},
	\begin{align*}
		\theta &= \frac{2\tau}{1-\tau}.
	\end{align*}
	The range of Kendall's $\tau$ was restricted because it was found to reduce the number of iterations taken until convergence (it effects the shape of the sublevel sets).
\end{experiment}
%\begin{figure}[p]
  %\centering
  %\includegraphics[scale=0.52]{learningMseNormalChains.pdf} \\[6pt]
	%\includegraphics[scale=0.52]{learningMseClaytonChains.pdf}
	%\includegraphics[scale=0.52]{learningMseNormalLoops.pdf}
	%\includegraphics[scale=0.52]{learningMseClaytonLoops.pdf}
	%\caption[Comparing MSEs of learning methods.]{Comparing MSEs of learning methods.}
	%\label{fig:learningMse}
%\end{figure}

\begin{figure}[p]
  \centering\begin{tabular}{cccc}
	& \multicolumn{3}{c}{\large\ \ \ \ \ Normal Loops, Errors} \vspace{0.25cm} \\
	 Samples & \ \ \ \ L-BFGS with Restart & \ \ \ \ L-BFGS with Barrier & \ \ \ \ \ \ Gradient Descent \vspace{0.15cm} \\ 
	100 & \raisebox{-.5\height}{\includegraphics[scale=0.27]{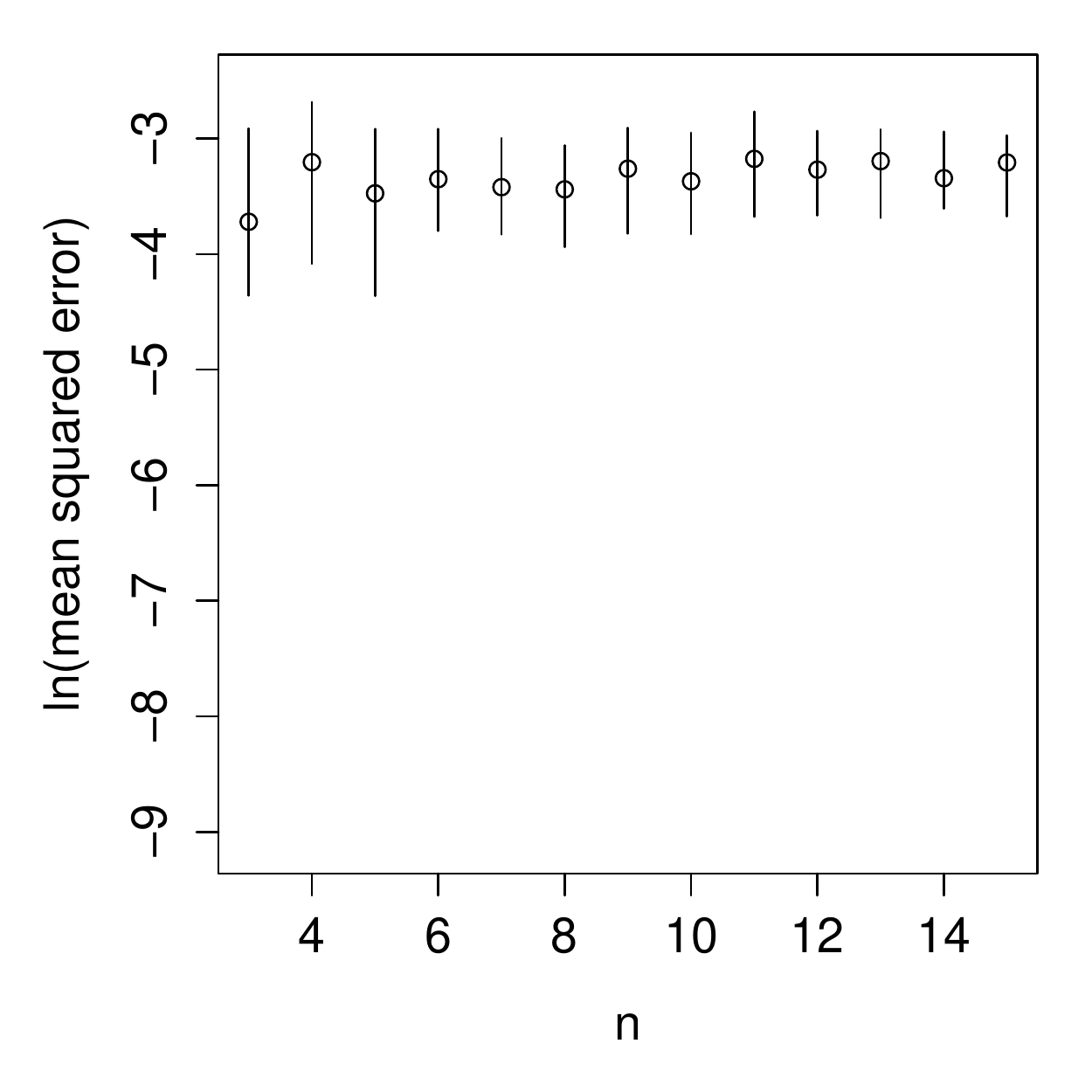}} & \raisebox{-.5\height}{\includegraphics[scale=0.27]{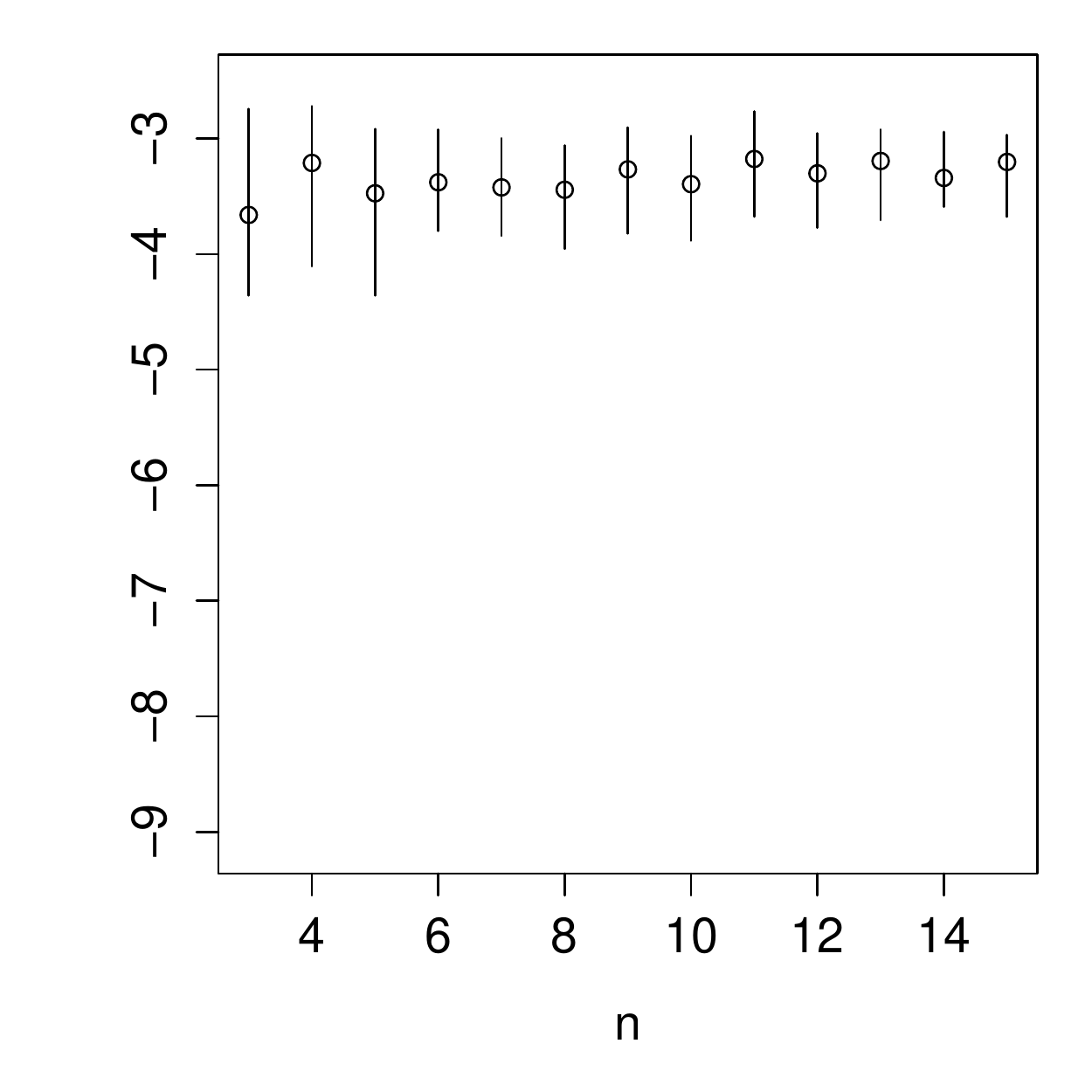}} & \raisebox{-.5\height}{\includegraphics[scale=0.27]{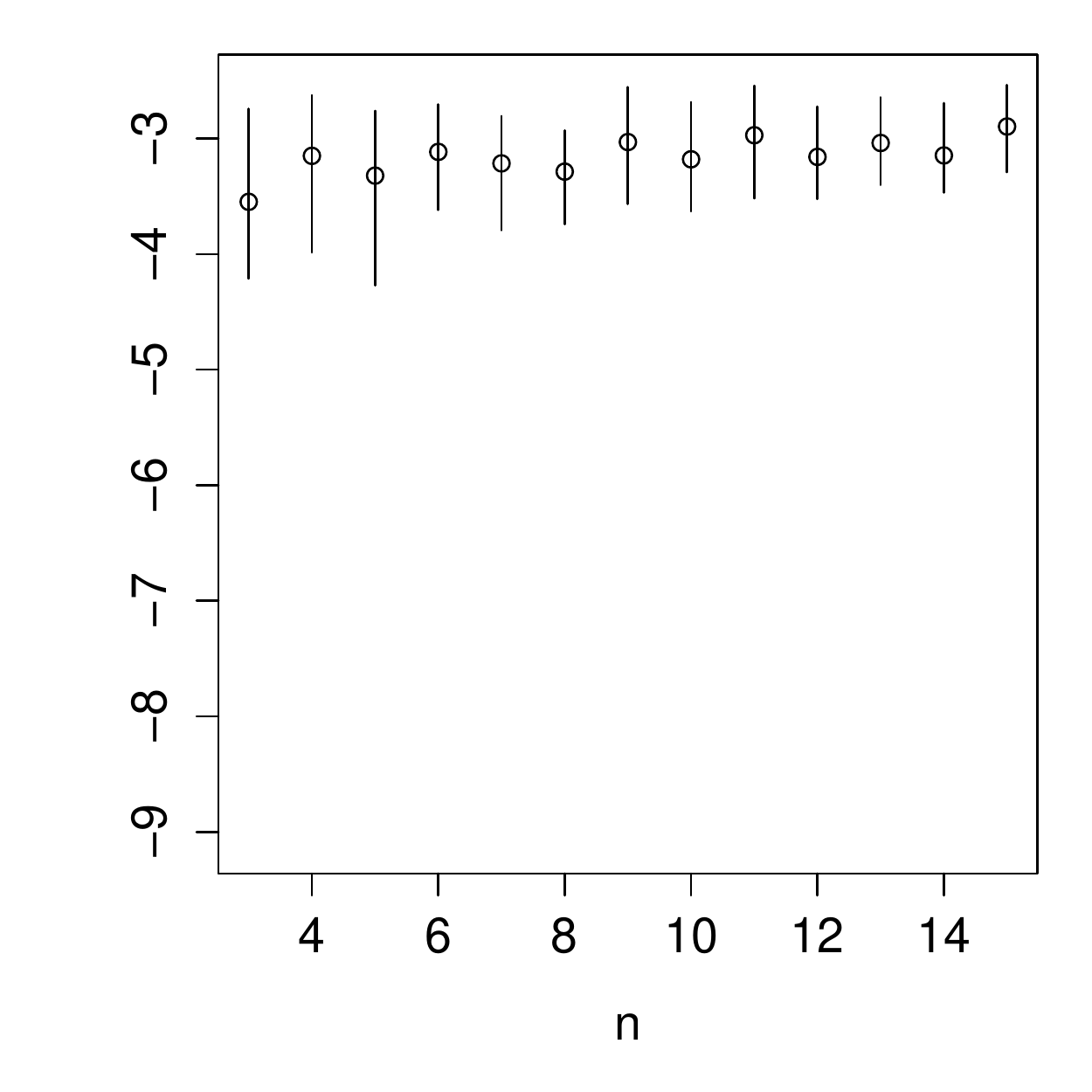}} \\
	1000 & \raisebox{-.5\height}{\includegraphics[scale=0.27]{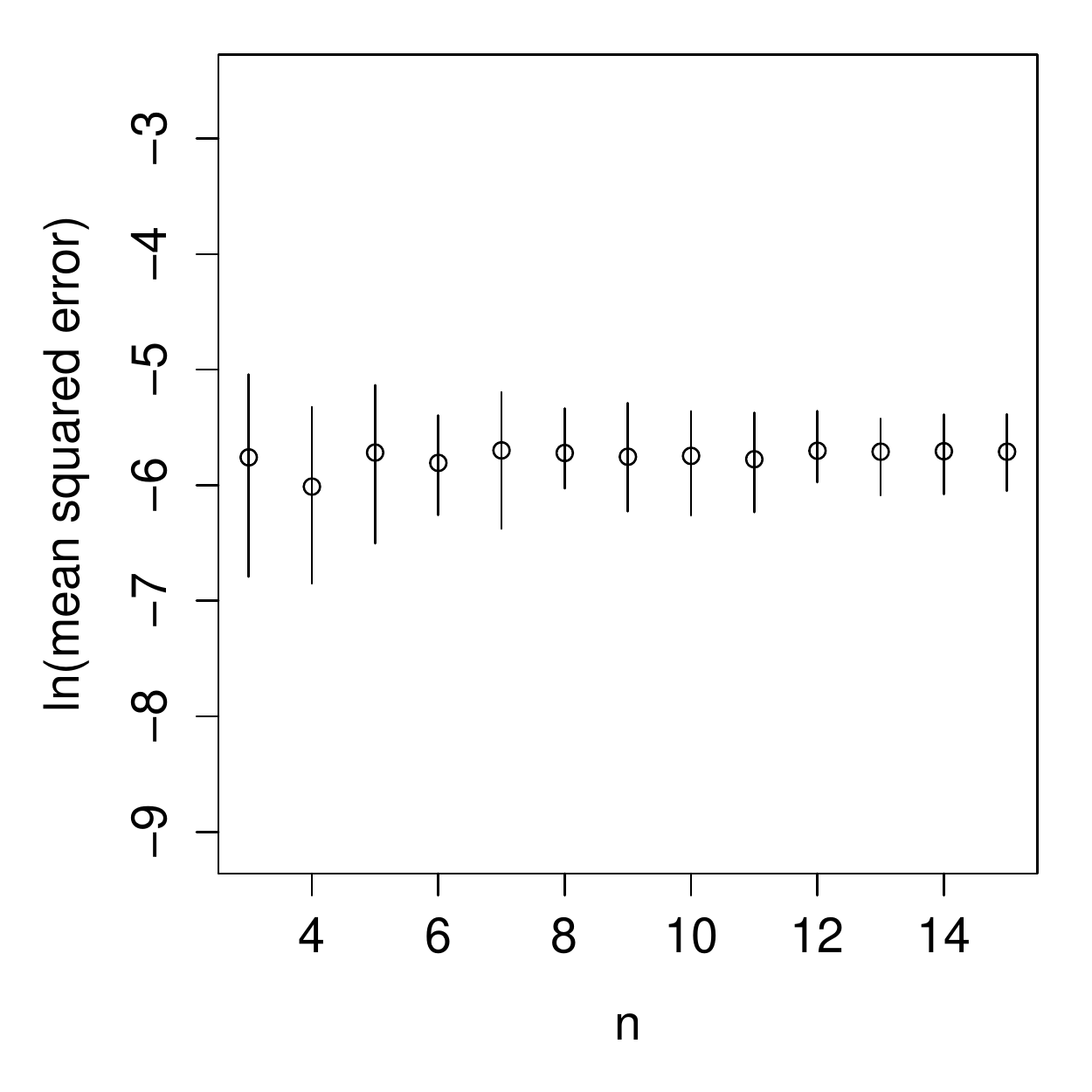}} & \raisebox{-.5\height}{\includegraphics[scale=0.27]{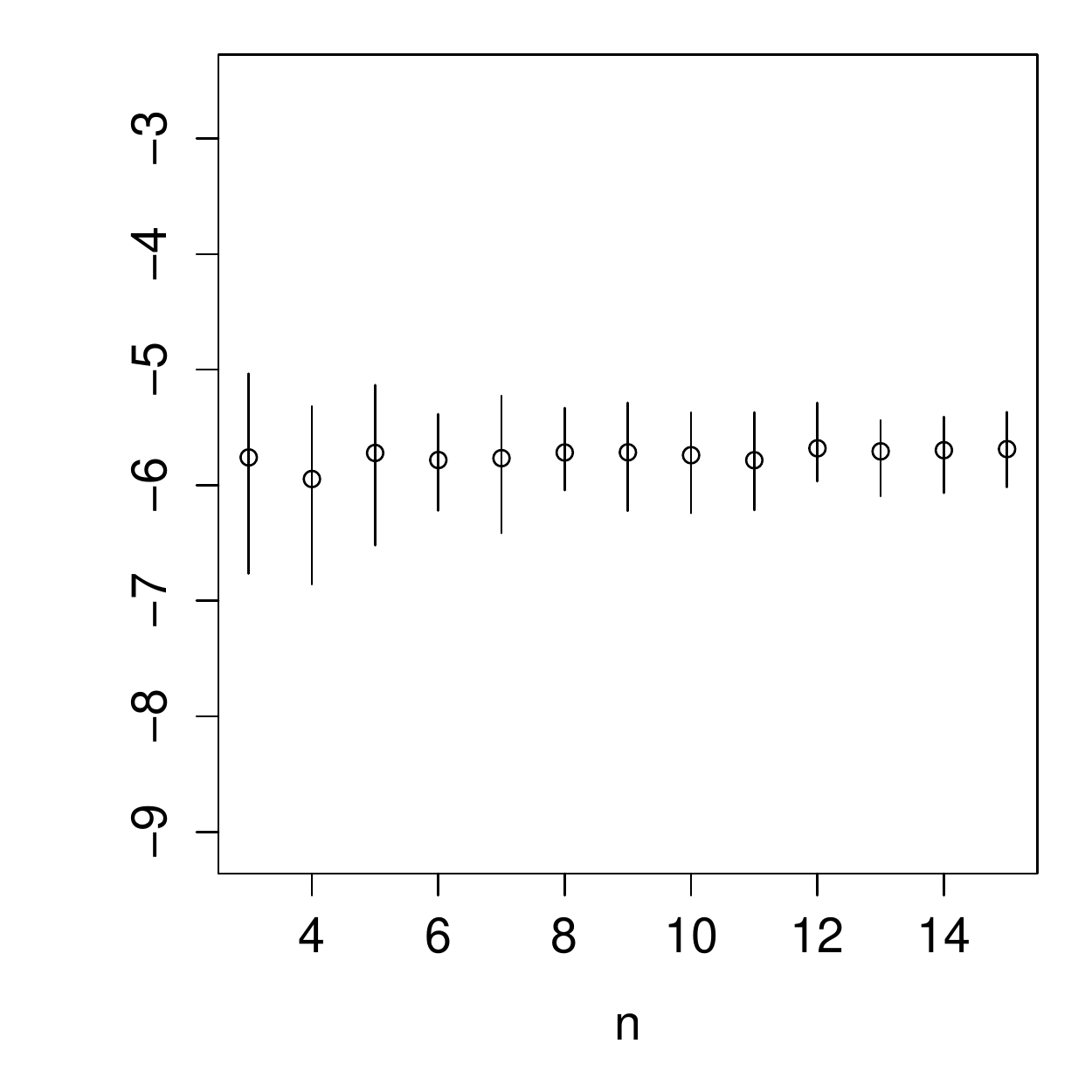}} & \raisebox{-.5\height}{\includegraphics[scale=0.27]{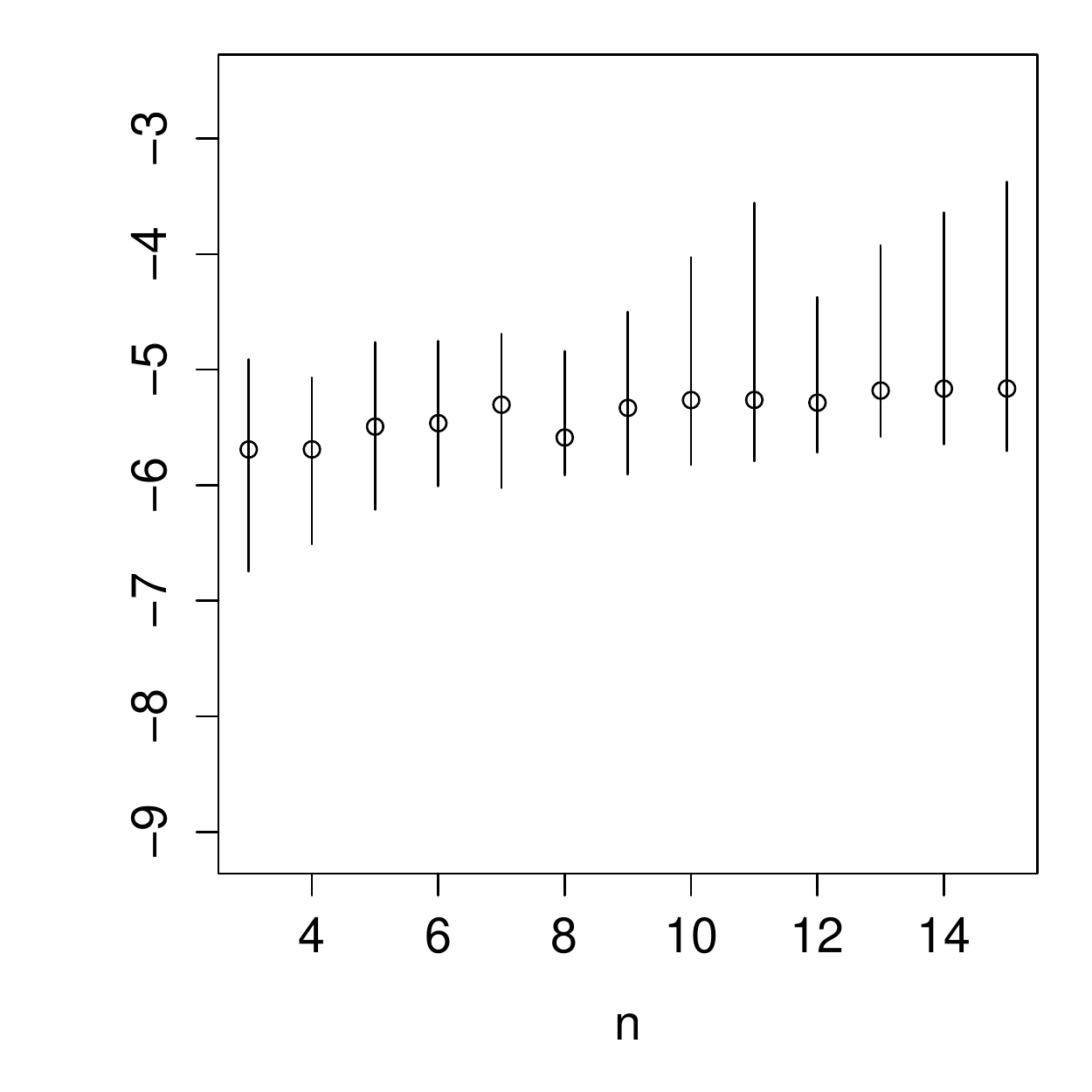}} \\
	10000 & \raisebox{-.5\height}{\includegraphics[scale=0.27]{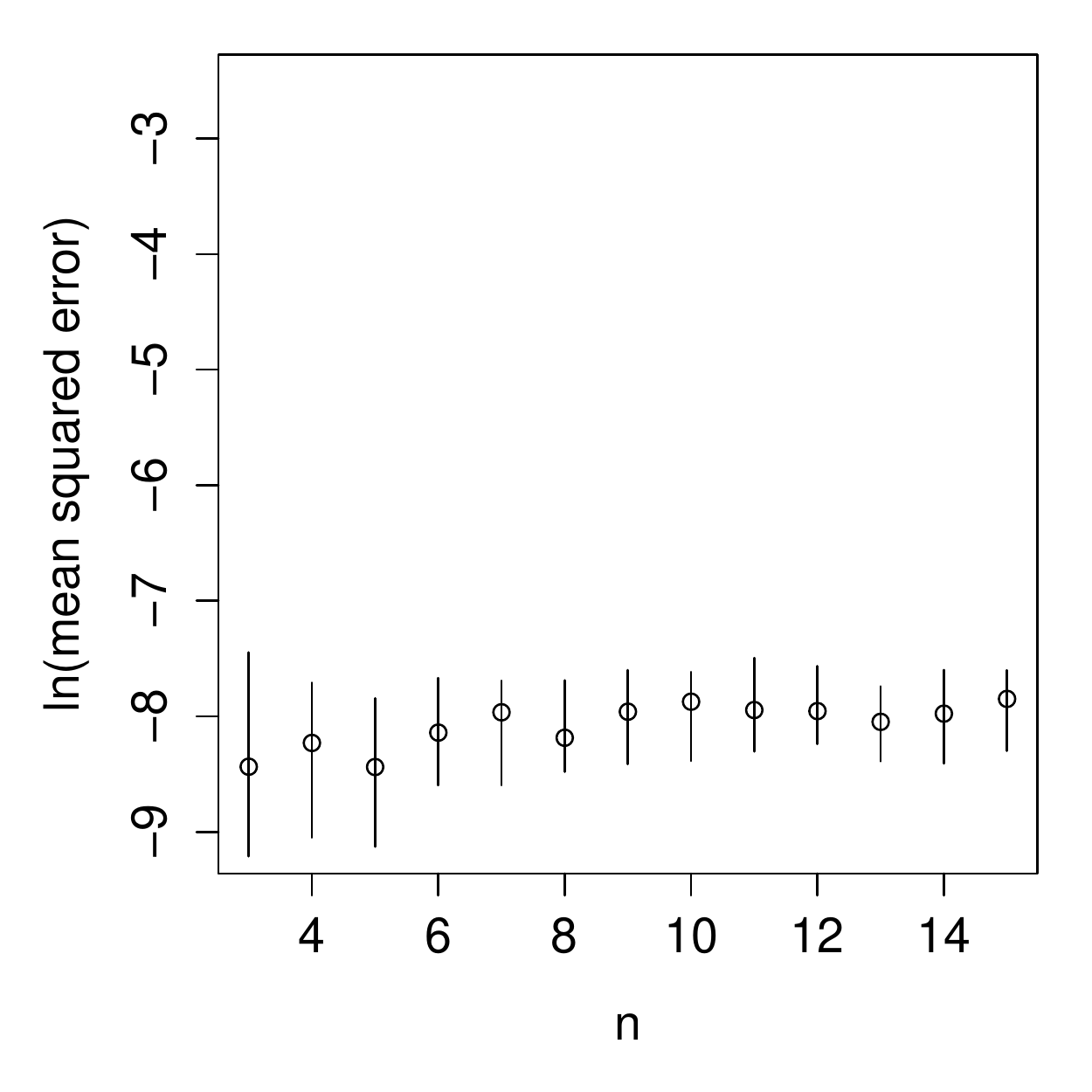}} & \raisebox{-.5\height}{\includegraphics[scale=0.27]{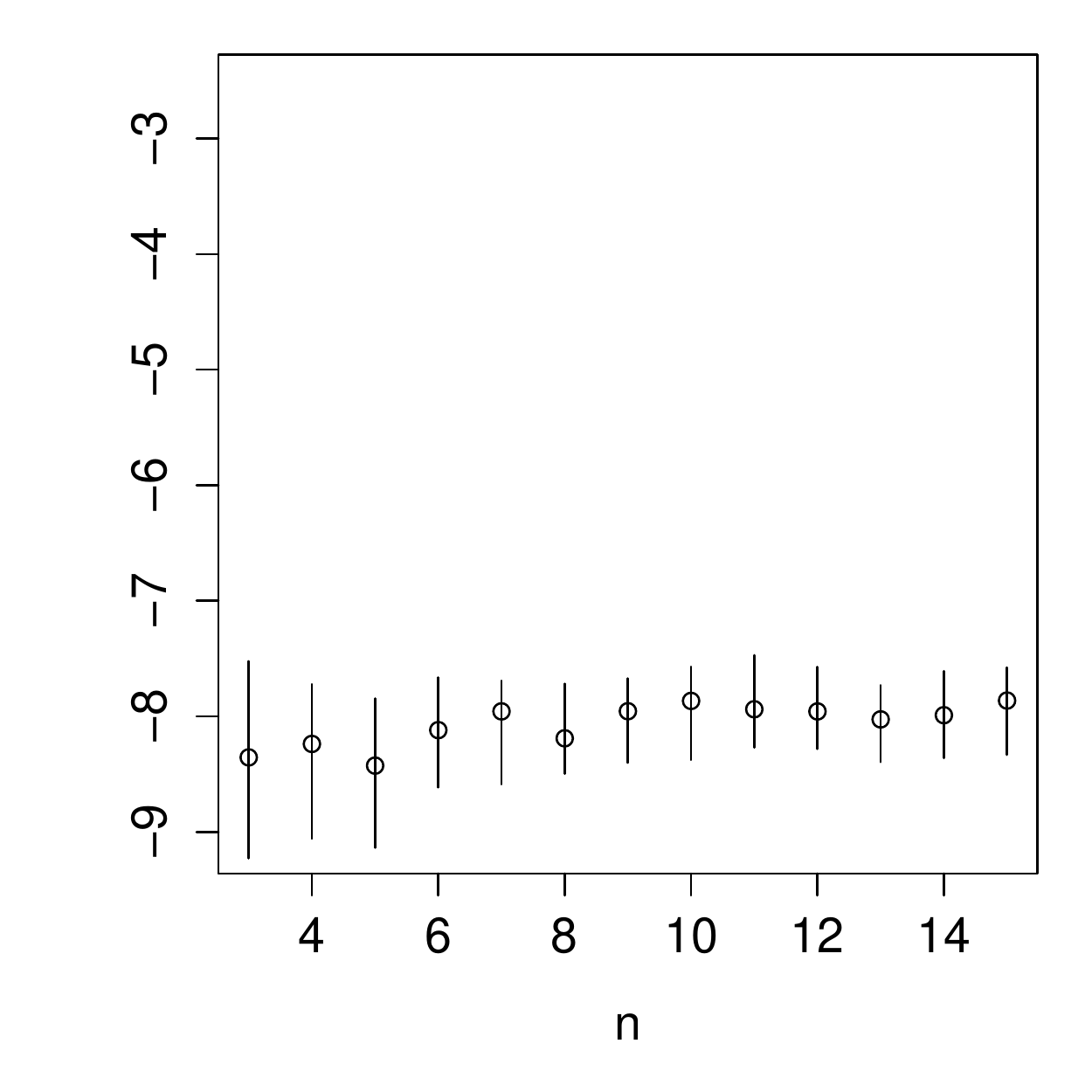}} & \raisebox{-.5\height}{\includegraphics[scale=0.27]{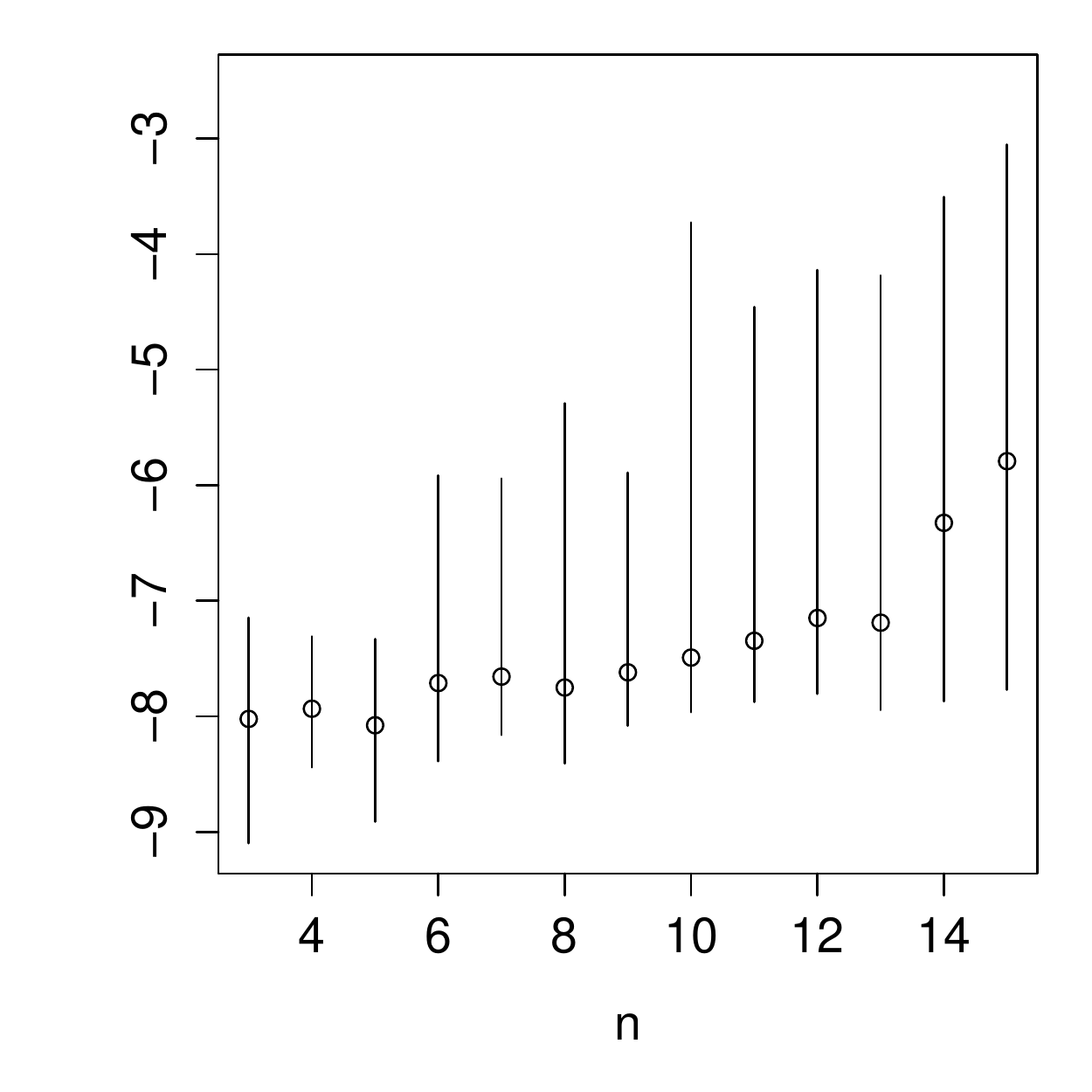}} \vspace{0.5cm} \\
	& \multicolumn{3}{c}{\large\ \ \ \ \ Normal Loops, Iterations} \vspace{0.25cm} \\
	 & \ \ \ \ L-BFGS with Restart & \ \ \ \ L-BFGS with Barrier & \ \ \ \ \ \ Gradient Descent \vspace{0.15cm} \\ 
	100 & \raisebox{-.5\height}{\includegraphics[scale=0.27]{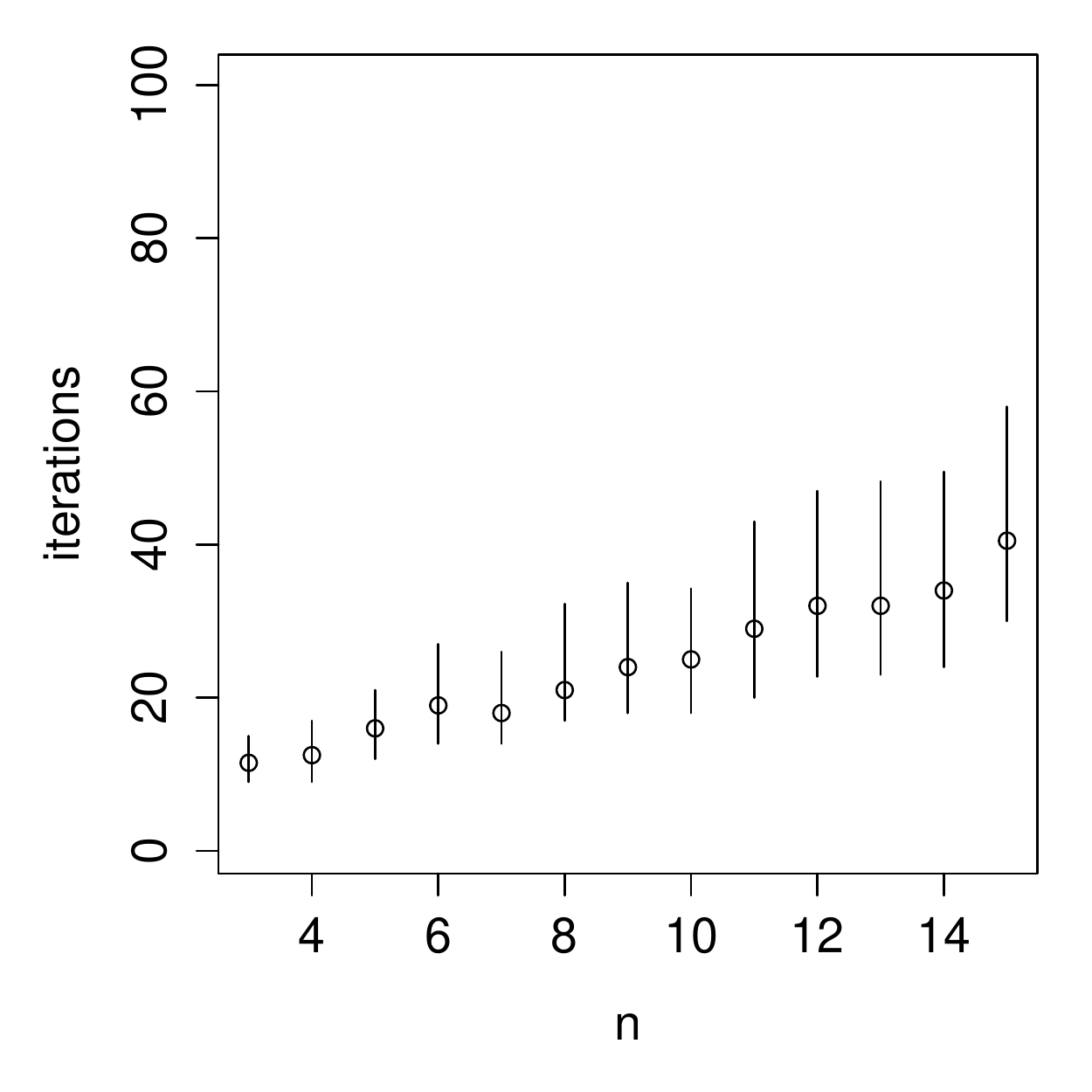}} & \raisebox{-.5\height}{\includegraphics[scale=0.27]{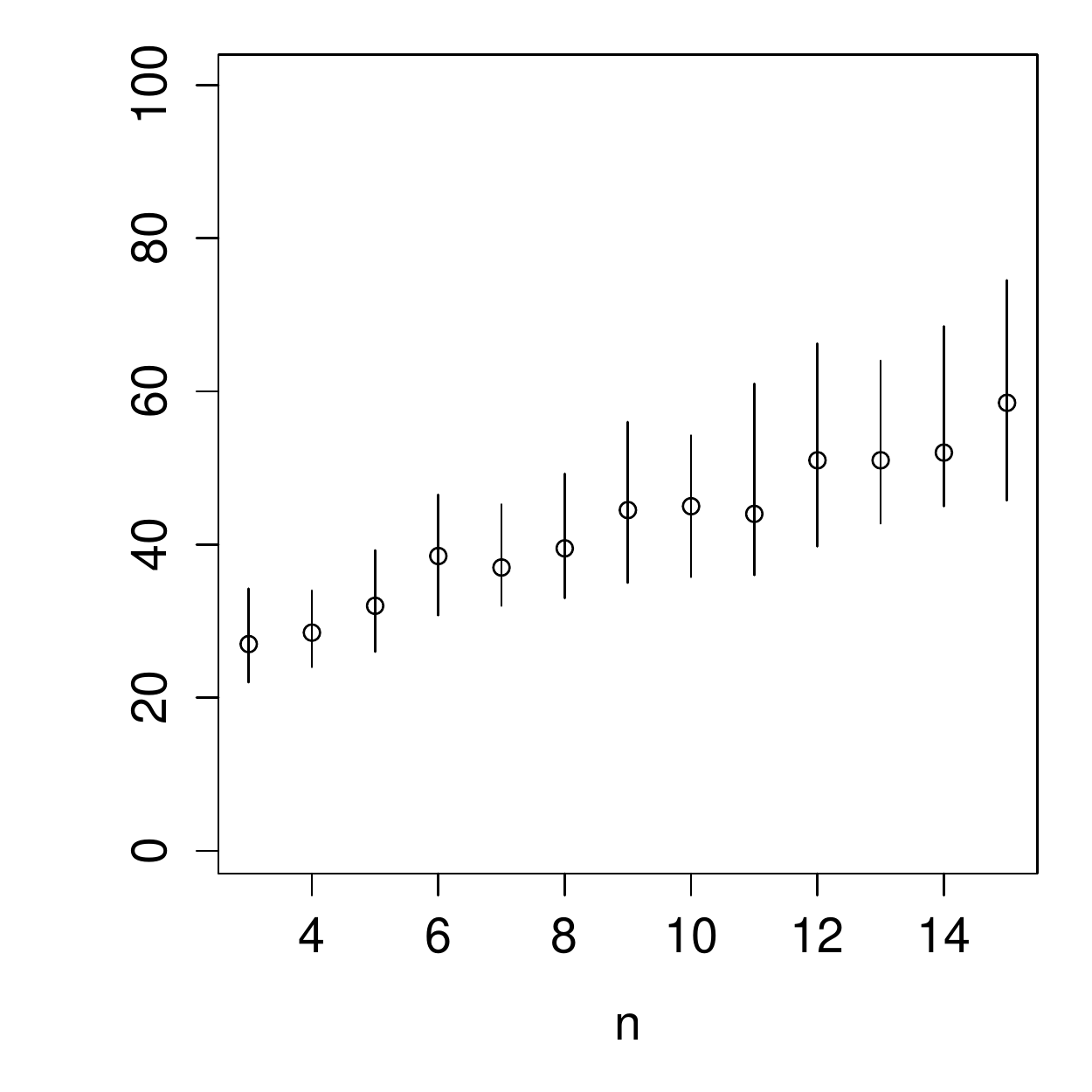}} & \raisebox{-.5\height}{\includegraphics[scale=0.27]{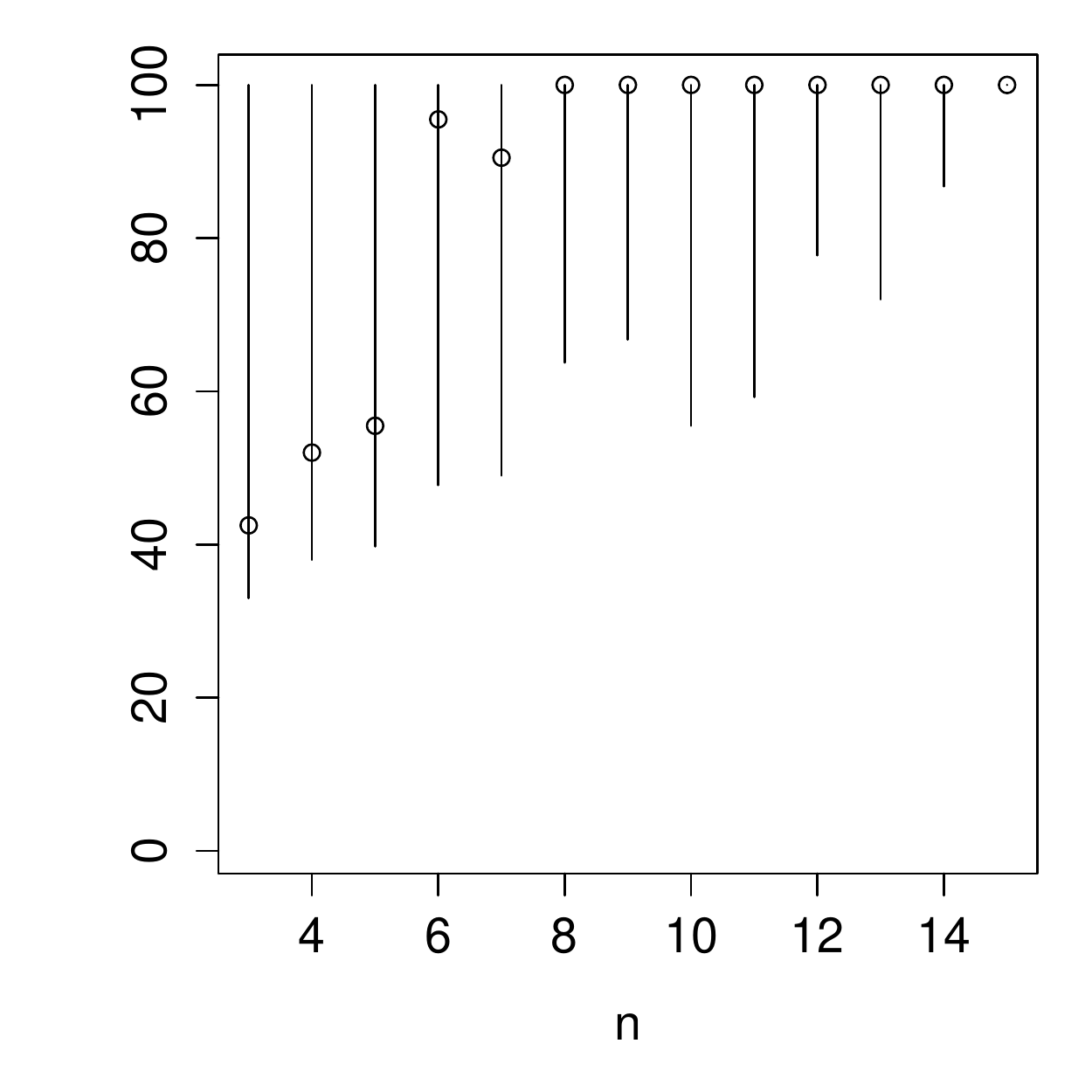}} \\
	1000 & \raisebox{-.5\height}{\includegraphics[scale=0.27]{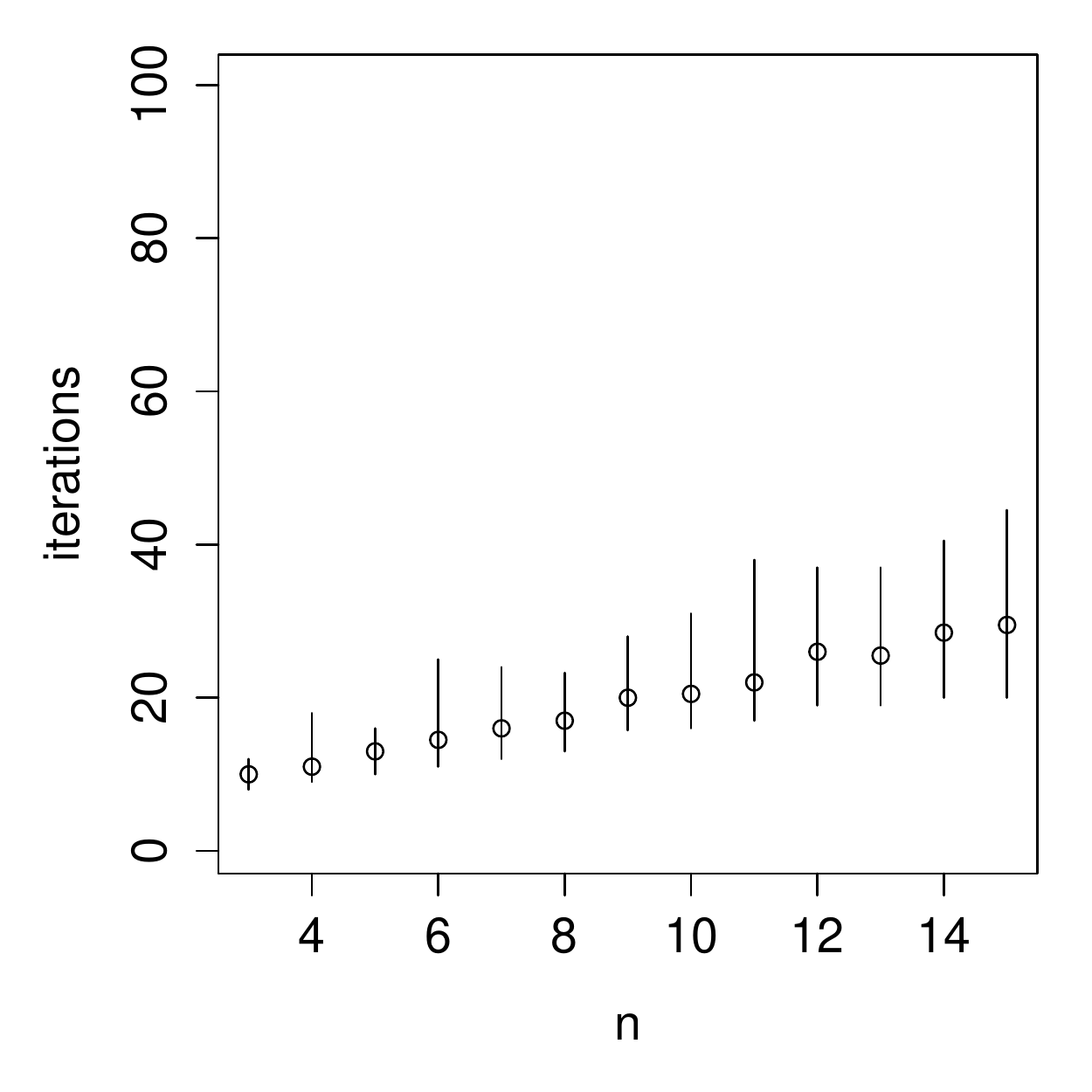}} & \raisebox{-.5\height}{\includegraphics[scale=0.27]{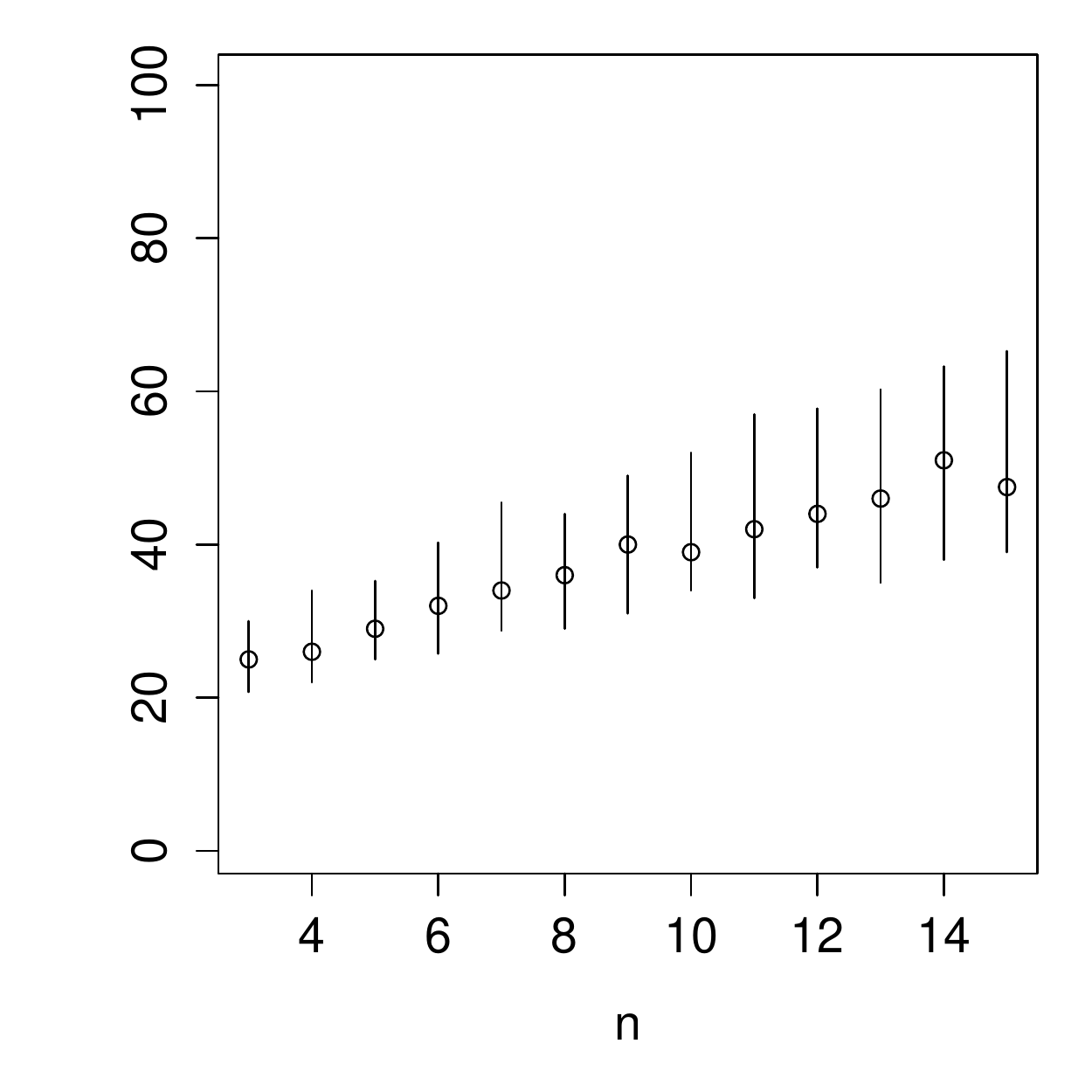}} & \raisebox{-.5\height}{\includegraphics[scale=0.27]{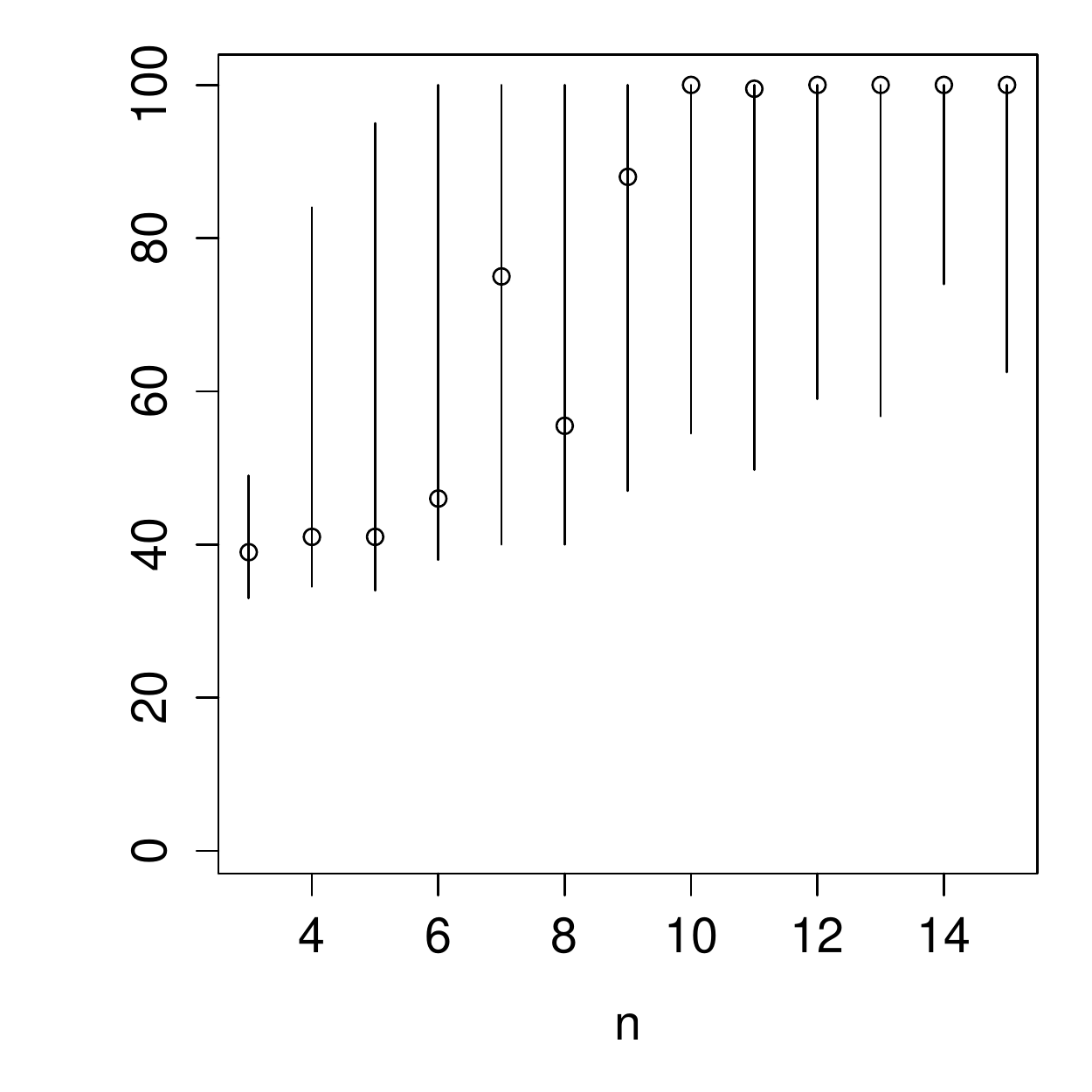}} \\
	10000 & \raisebox{-.5\height}{\includegraphics[scale=0.27]{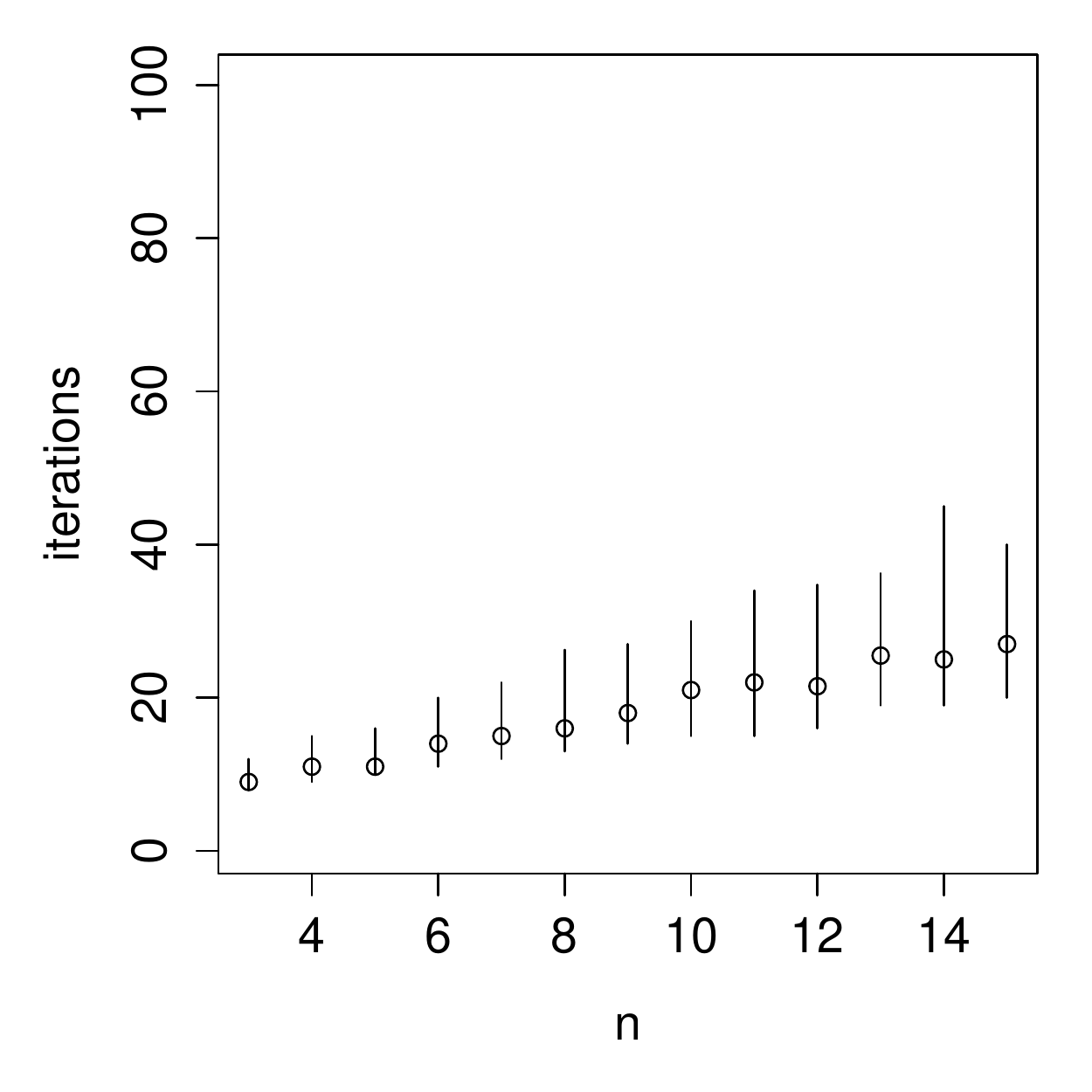}} & \raisebox{-.5\height}{\includegraphics[scale=0.27]{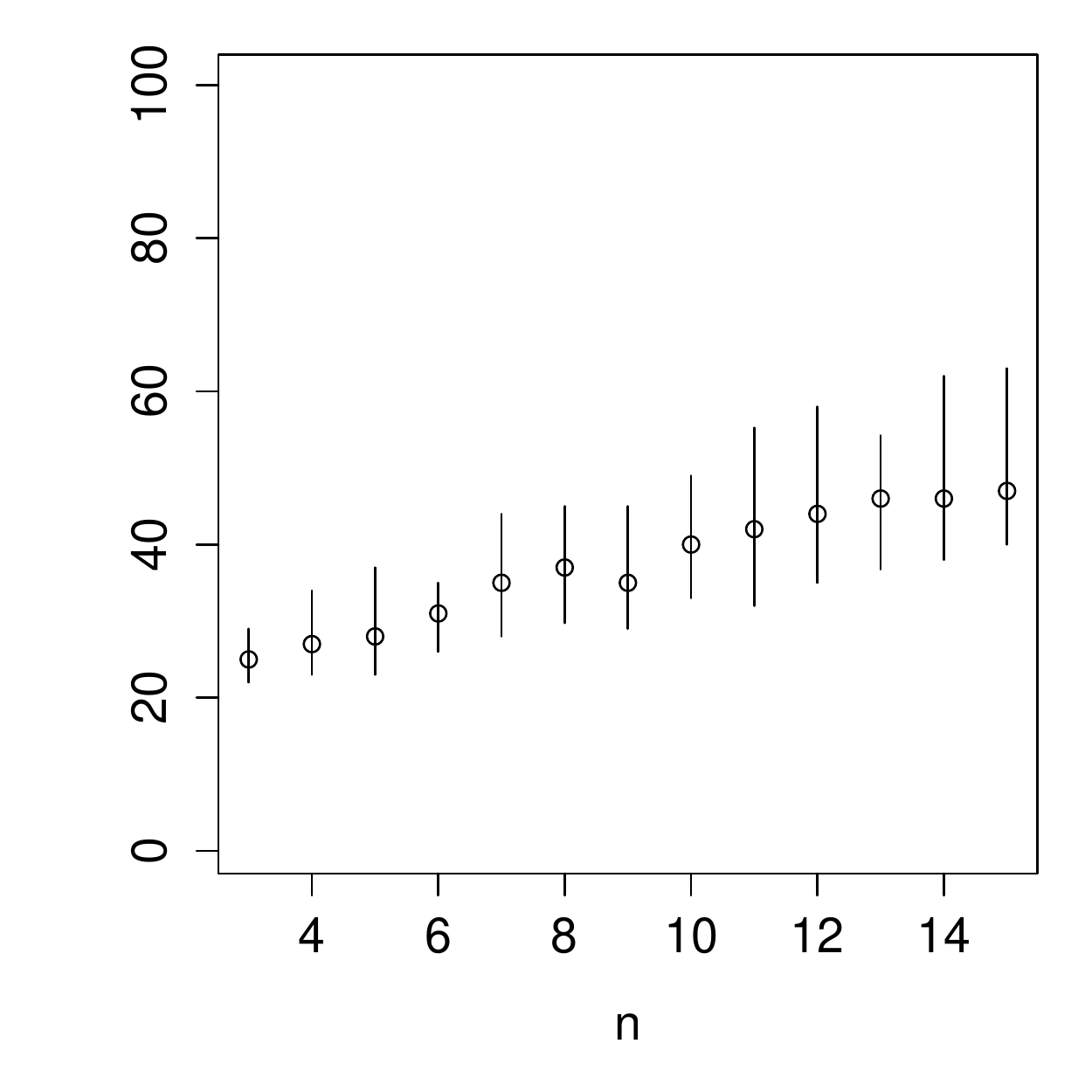}} & \raisebox{-.5\height}{\includegraphics[scale=0.27]{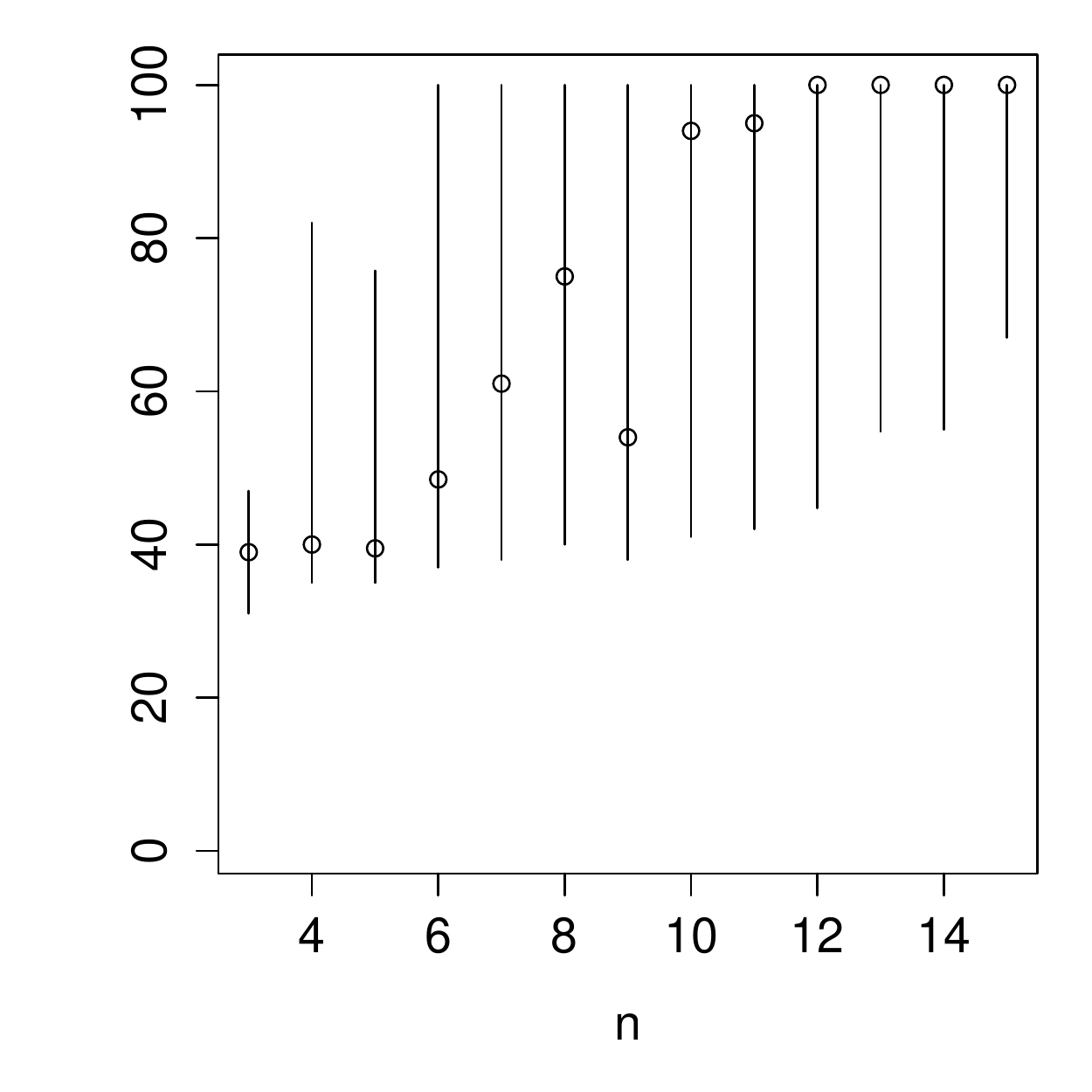}}
	\end{tabular}
	\caption[Comparing MSEs and number of iterations of learning methods for normal loops.]{Comparing MSEs on log scale and number of iterations of learning methods for normal loops. The circle indicates the median, and the error bars indicate the first and third quartiles---further graphs are to be read likewise.}
	\label{fig:learningMse}
\end{figure}

The results for normal loops are presented in Figure \ref{fig:learningMse}; the other models exhibit similar patterns and are thus omitted. The distributions for the MSEs were found to be heavy tailed, and so we chose to use the median to indicate the ``typical'' value. There are several sources of error contributing to the MSE, for example, that the marginals are learnt separately from the copula parameters, the finiteness of the sample, the imperfect nature of descent methods, and the true parameters, which effect the shape of the sublevel sets.

As the number of samples increases, the median MSE and its variation decrease, providing evidence that learning is consistent. The median MSE tends to increase as the model size increases, due to the fact that more parameters are being estimated with the same number of samples. The spike in MSE for the larger loops learnt with gradient descent is caused by the restriction on the number of iterations. Were we to let the algorithm run to convergence, this anomaly would disappear. Comparing the MSEs between model types and copulae, the MSEs for equivalent normal and Clayton models are almost identical, and likewise for chains and loops with the same number of parameters.

Since the three algorithms have similar MSEs, the choice between the three should be decided by which converges the fastest. A single iteration is comparable between the three algorithms because the time taken to evaluate the objective and gradient dominates the time taken to construct the search direction. There is a slight decrease, across all three methods, in the number of iterations until convergence as the number of samples increases. As expected (see the discussion of convergence rates in \S\ref{sec:newtonMethods}), the L-BFGS methods converge faster than gradient descent, and the simpler restart method appears to converge faster than the barrier method. Therefore, we conclude that L-BFGS with restart is the preferred learning method.

\section{MCAR learning}
We performed the following experiment to investigate how the efficiency of MCAR learning varies as the proportion of missing data is varied, or rather, the robustness of learning with missing data.
\begin{experiment}\label{exp:cmar}
	The models used were chains of size $n=2,\ldots,15$ with normal copulae. For each model, the parameters were randomized and $10000$ samples drawn. A given proportion of the data---$0$, $0.01$, $0.1$, $0.5$ or $0.75$---was erased at random so that many of the samples were incomplete. The parameters were learnt from a random restart by the procedure outlined in \S\ref{sec:cmarLearning} using L-BFGS with restart and $\epsilon=10^{-8}$, and a record made of the MSE of the learnt parameters from the true parameters. This was repeated $100$ times for each combination of model and proportion of missing data.
\end{experiment}
\begin{figure}[t]
  \centering\begin{tabular}{ccc}
		\multicolumn{3}{c}{\large\ \ \ \ \ \ Normal Chains, Errors} \vspace{0.25cm} \\
	  \ \ \ \ \ \ \ \ 0\% & \ \ \ \ \ \ \ \ 25\% & \ \ \ \ \ \ \ \ 50\%  \\
	 \raisebox{-.5\height}{\includegraphics[scale=0.3]{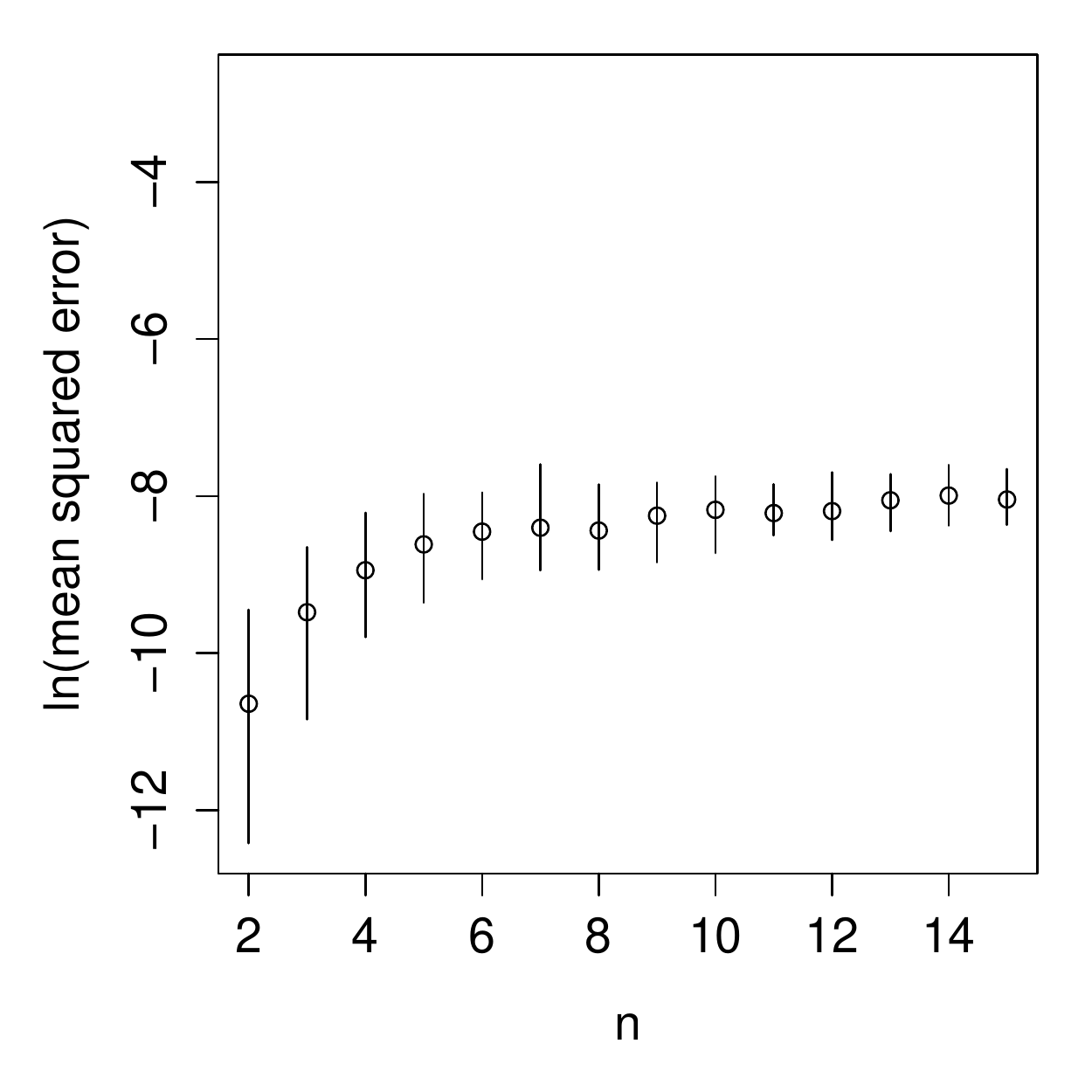}} & \raisebox{-.5\height}{\includegraphics[scale=0.3]{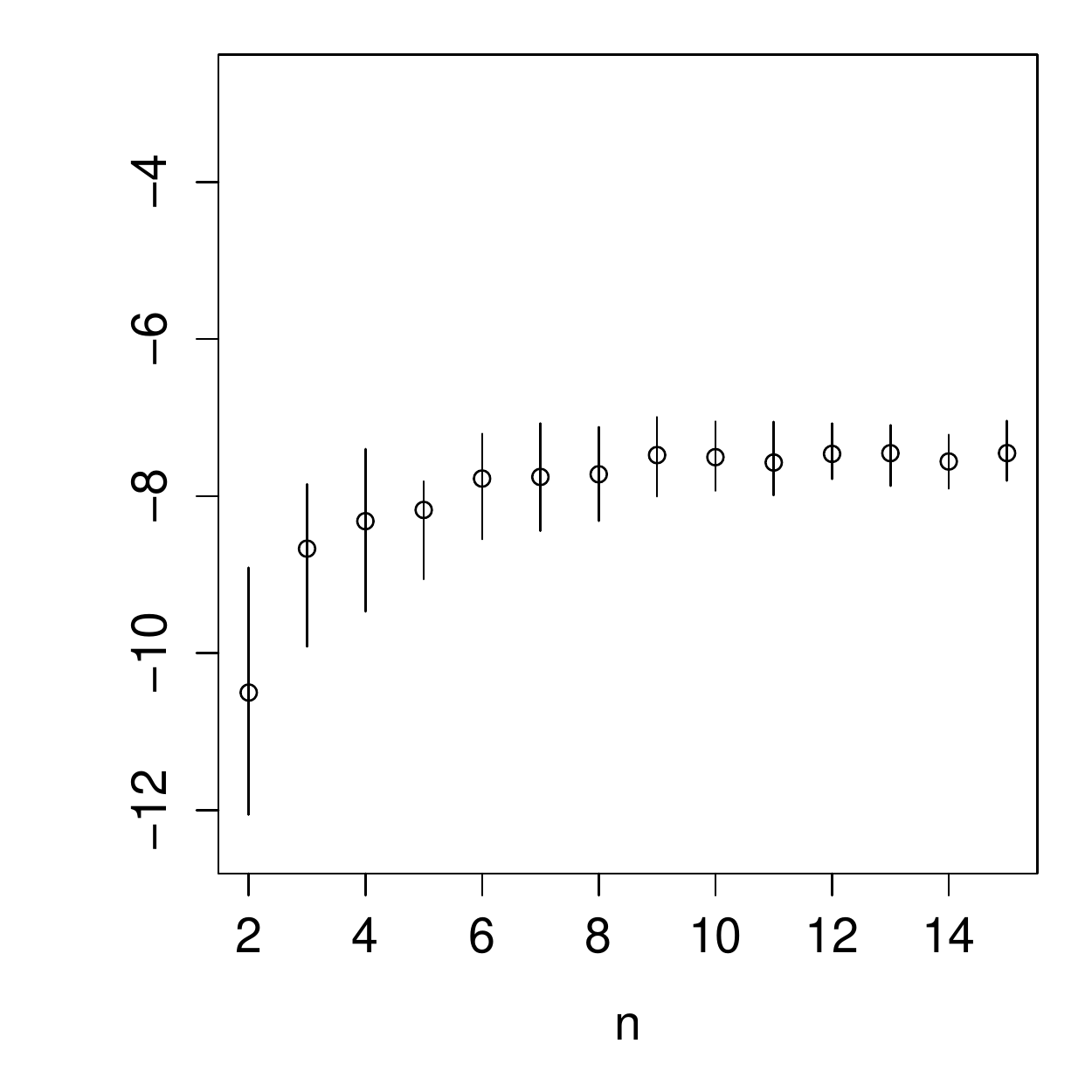}} & \raisebox{-.5\height}{\includegraphics[scale=0.3]{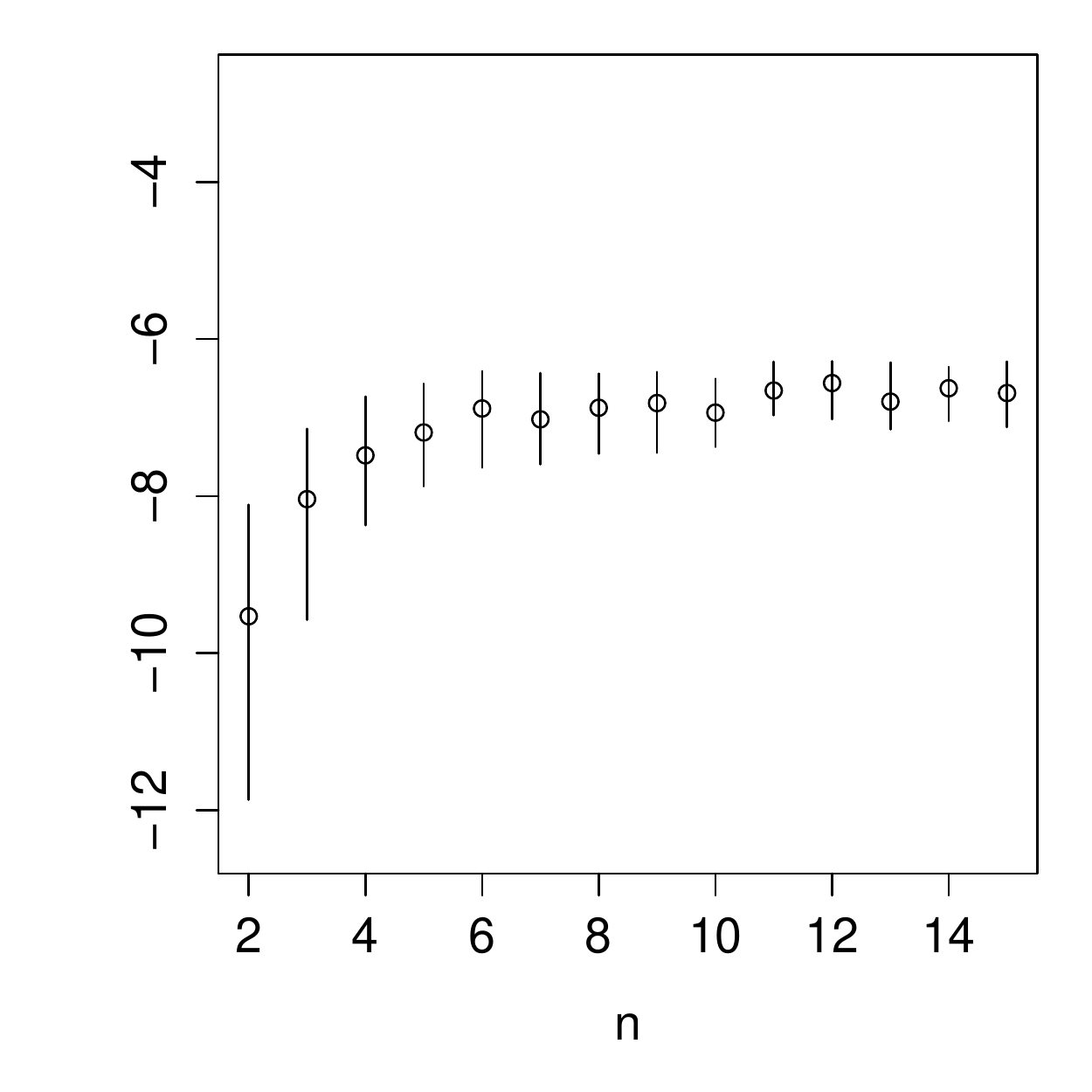}}\vspace{8pt} \\
		\ \ \ \ \ \ \ \ 75\% & \ \ \ \ \ \ \ \ 90\% & \ \ \ \ \ 100 Samples\\
	 \raisebox{-.5\height}{\includegraphics[scale=0.3]{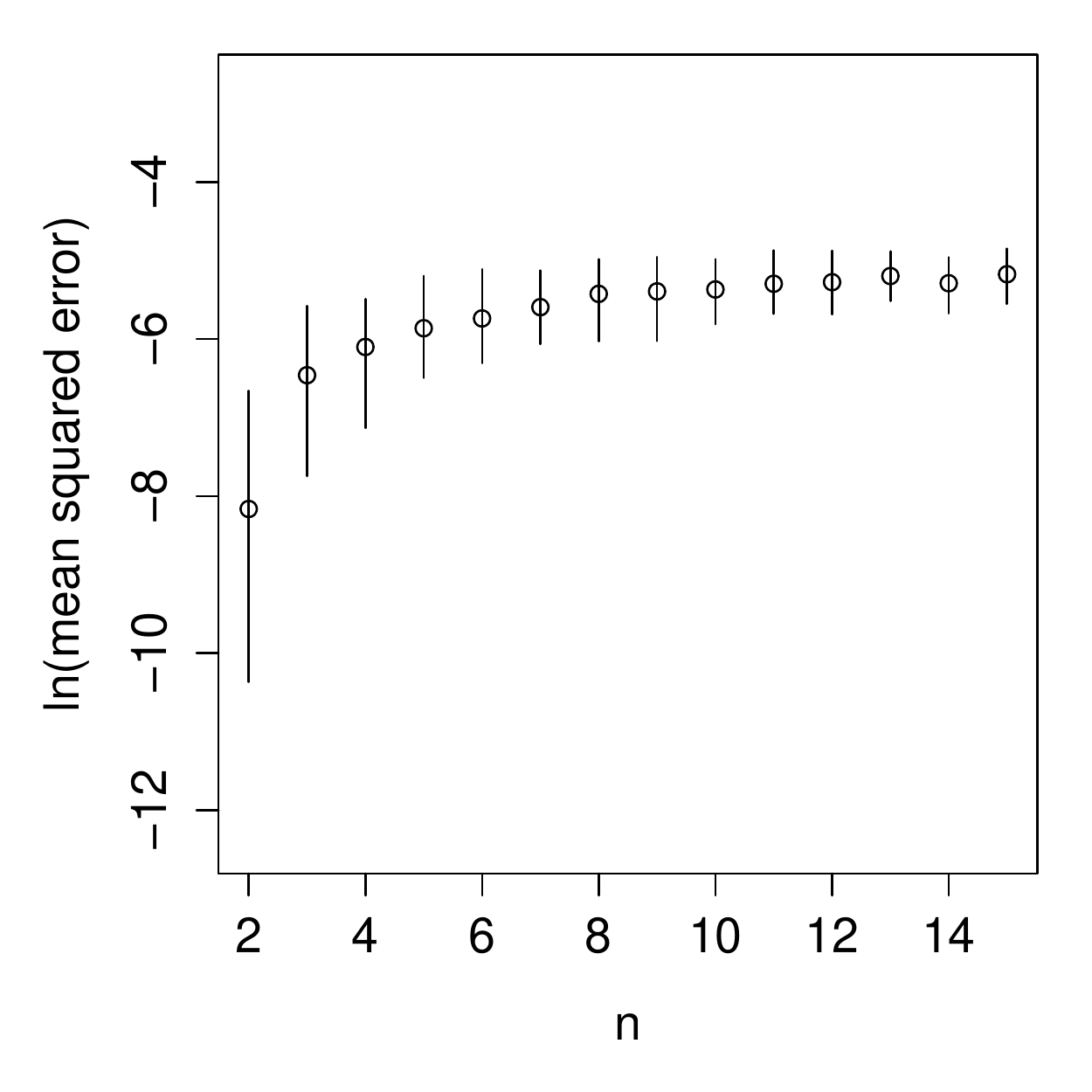}} & \raisebox{-.5\height}{\includegraphics[scale=0.3]{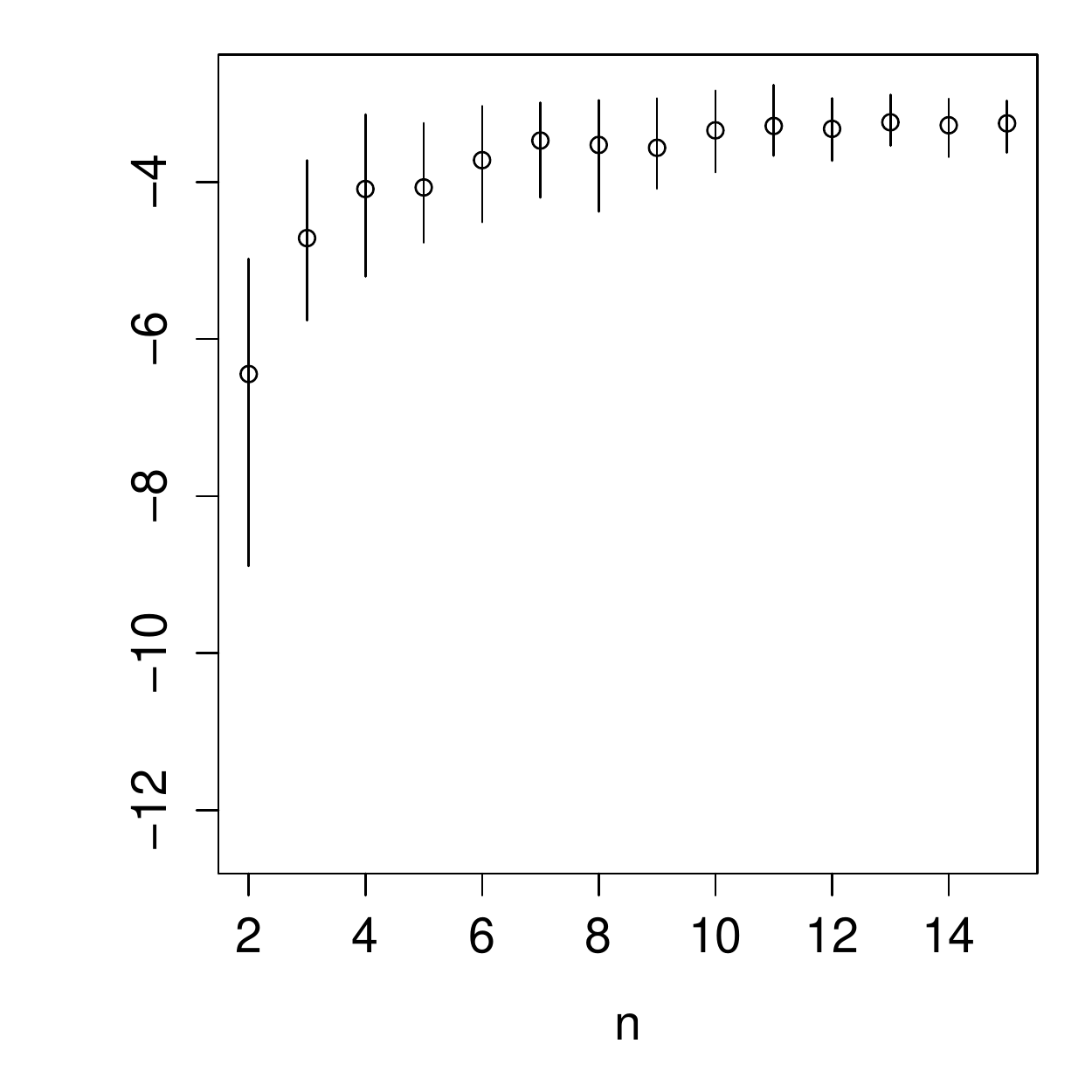}} & \raisebox{-.5\height}{\includegraphics[scale=0.3]{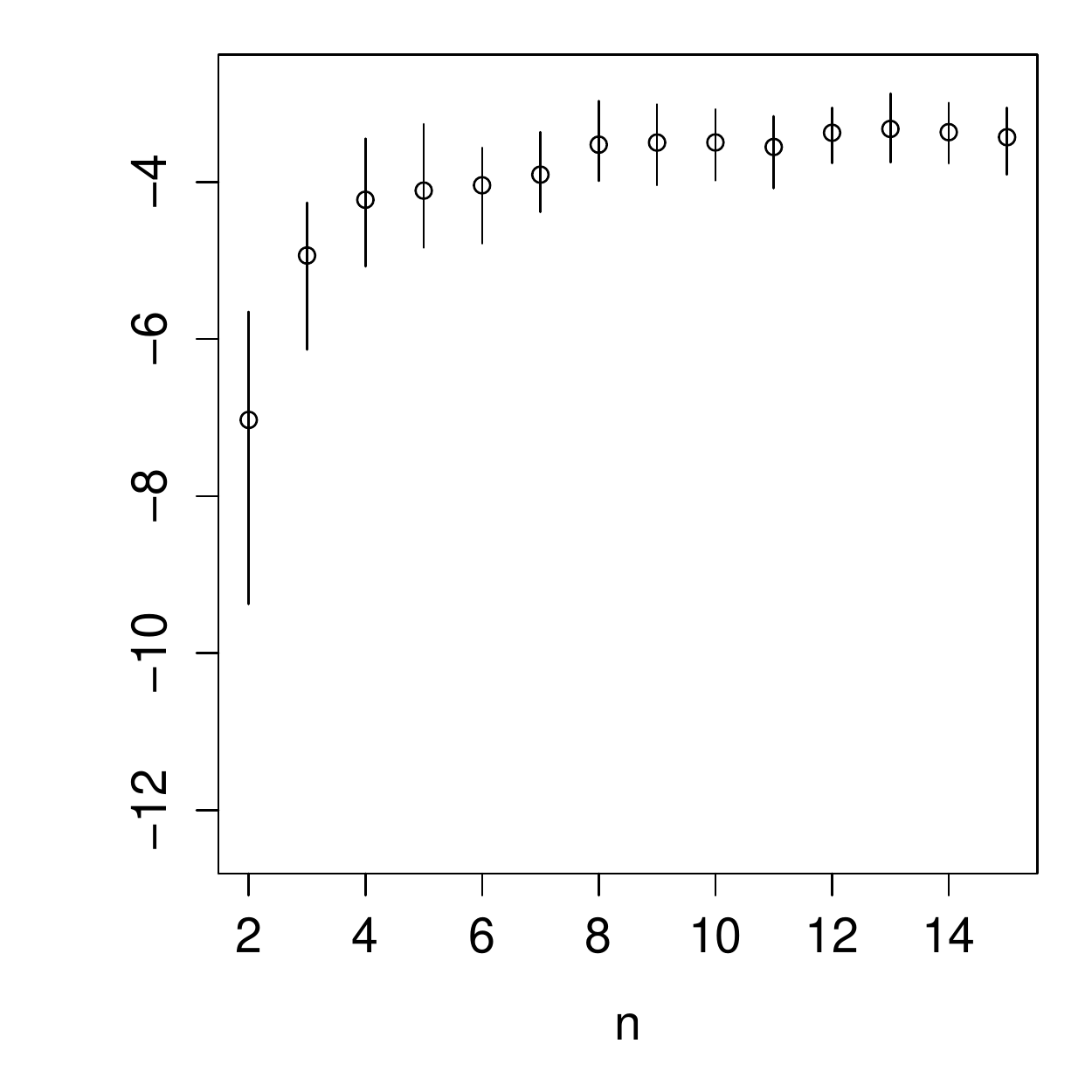}} \\
	\end{tabular}
	\caption[MSEs from learning with missing data for normal chains.]{MSEs on log scale from MCAR learning for normal chains. The numbers above the graphs indicate the percentage of missing data. The final plot shows learning by L-BFGS with restart with 100 samples.}
	\label{fig:cmarMse}
\end{figure}
The results are presented in Figure \ref{fig:cmarMse}. As expected, the median MSE and its variation increase as the proportion of missing data increases. From the regular patterns, we deduce that learning with a proportion of the data missing is equivalent to learning with a lesser number of complete samples. For example, when 90\% of the data is missing, the MSEs take similar values to learning with 100 complete samples. This was a surprise; we anticipated that learning with 10\% of the data would be equivalent to learning with 10\%, not 1\%, of the samples.

We provide the following intuition. Consider two situations: learning from a single complete sample, or learning from two incomplete samples that are formed by halving the complete one. By dividing the sample, the information that those variables occurred simultaneously is lost, thus reducing the accuracy of learning. In other words, a single complete sample is more informative than two half-complete samples.

\section{Piecewise composite likelihood learning}
We performed the following experiment to evaluate the relative efficiency of piecewise learning compared to L-BFGS with restart, and investigate whether the algorithm is consistent.
\begin{experiment}\label{exp:piecewise}
	The models used were trees of size $n=3,\ldots,10$ and grids of size $n=2,\ldots,9$ with Clayton copulae (it would have required too much time to learn larger models). For each model, the parameters were randomized and a sample of size $10$, $100$, or $1000$ was drawn. For the same sample, the two learning methods were performed from different random starts with $\epsilon=10^{-8}$. This was repeated 100 times for each combination of model and sample size. A record was made of the mean squared error (MSE) of the learnt parameters from the true parameter, and the time taken until convergence. 
\end{experiment}
The results are presented in Figures \ref{fig:piecewiseTrees} and \ref{fig:piecewiseGrids}. As the number of samples increases, the MSEs for piecewise learning appear to converge to those of L-BFGS learning, suggesting our algorithm is consistent. The variation in the MSEs for piecewise learning decreases on the log scale as the number of samples increases, in contrast to L-BFGS. There does appear to be a loss in efficiency; in general, for the same number of samples, learning with L-BFGS produces a lower median MSE than piecewise learning. The difference in efficiency, however, diminishes as the sample size and model increases. In some cases, piecewise learning even has a lower median MSE than L-BFGS with restart, for example, with trees larger than $n=8$ with 1000 samples.
\begin{figure}[p]
  \centering\begin{tabular}{c@{\hskip 1cm}c@{\hskip 1cm}c}
	& \multicolumn{2}{c}{\large\ Clayton Trees, Errors} \vspace{0.25cm} \\
	 Samples & \ \ \ \ L-BFGS with Restart & \ \ \ \ \ \ \ Piecewise \vspace{0.1cm} \\ 
	10 & \raisebox{-.5\height}{\includegraphics[scale=0.27]{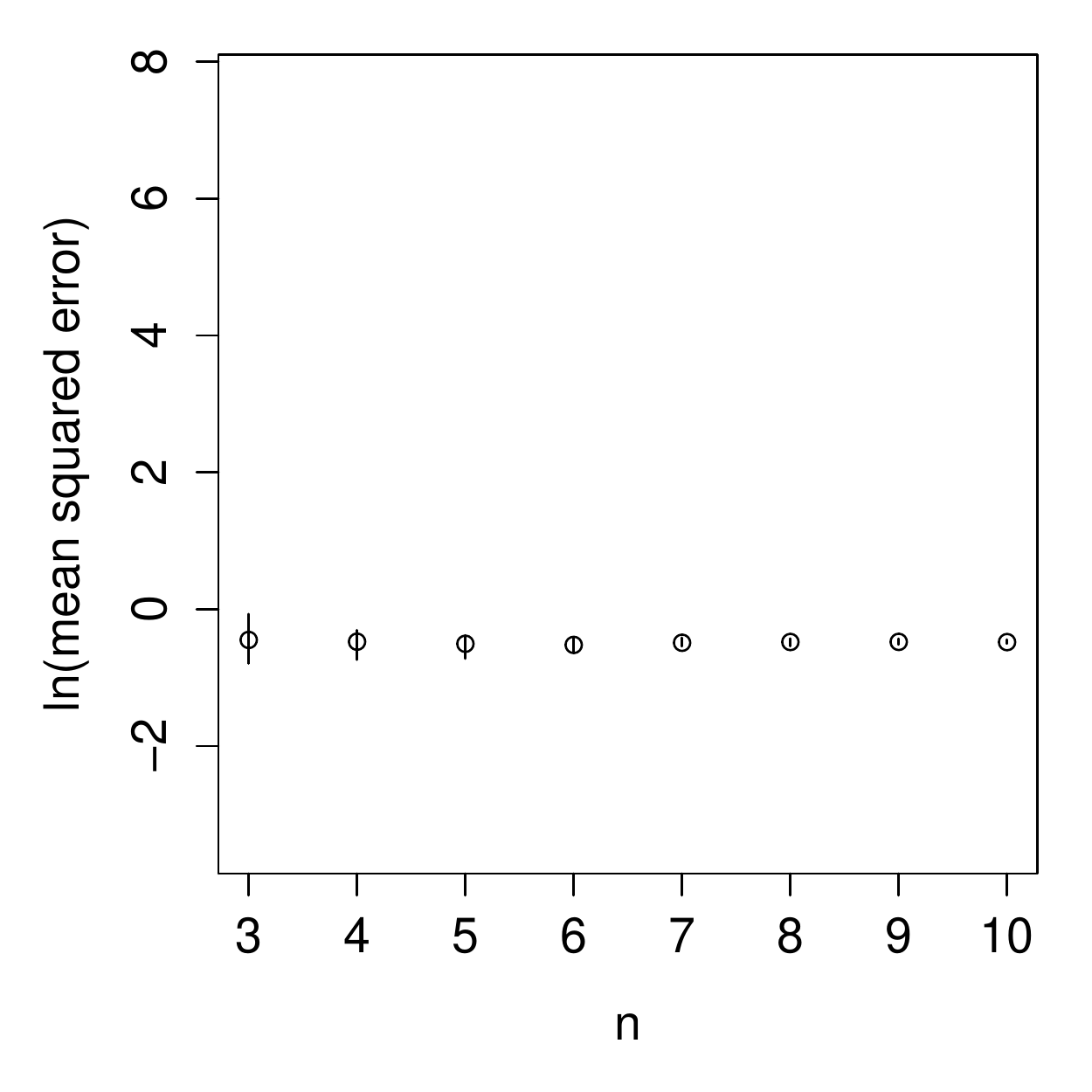}} & \raisebox{-.5\height}{\includegraphics[scale=0.27]{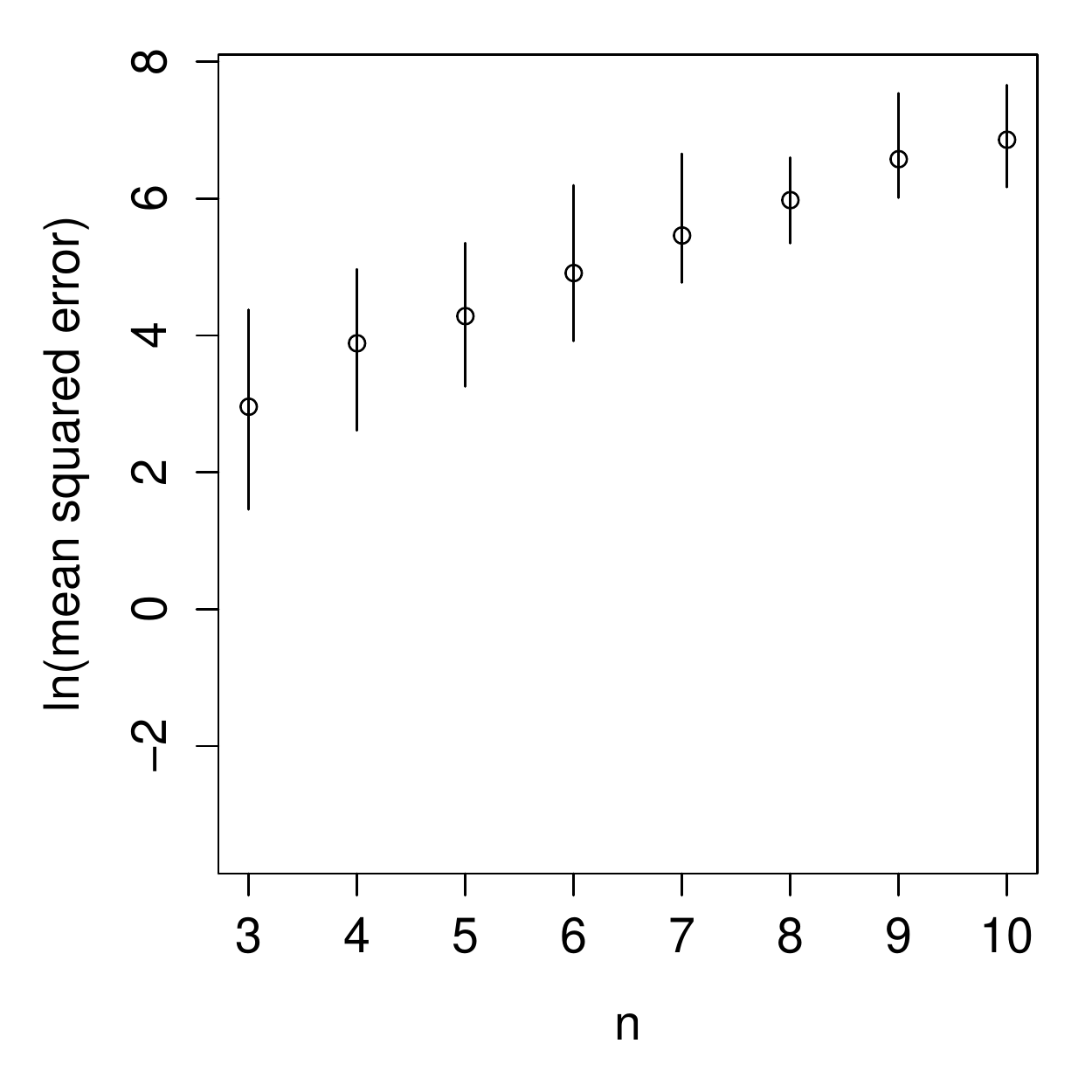}} \\
	100 & \raisebox{-.5\height}{\includegraphics[scale=0.27]{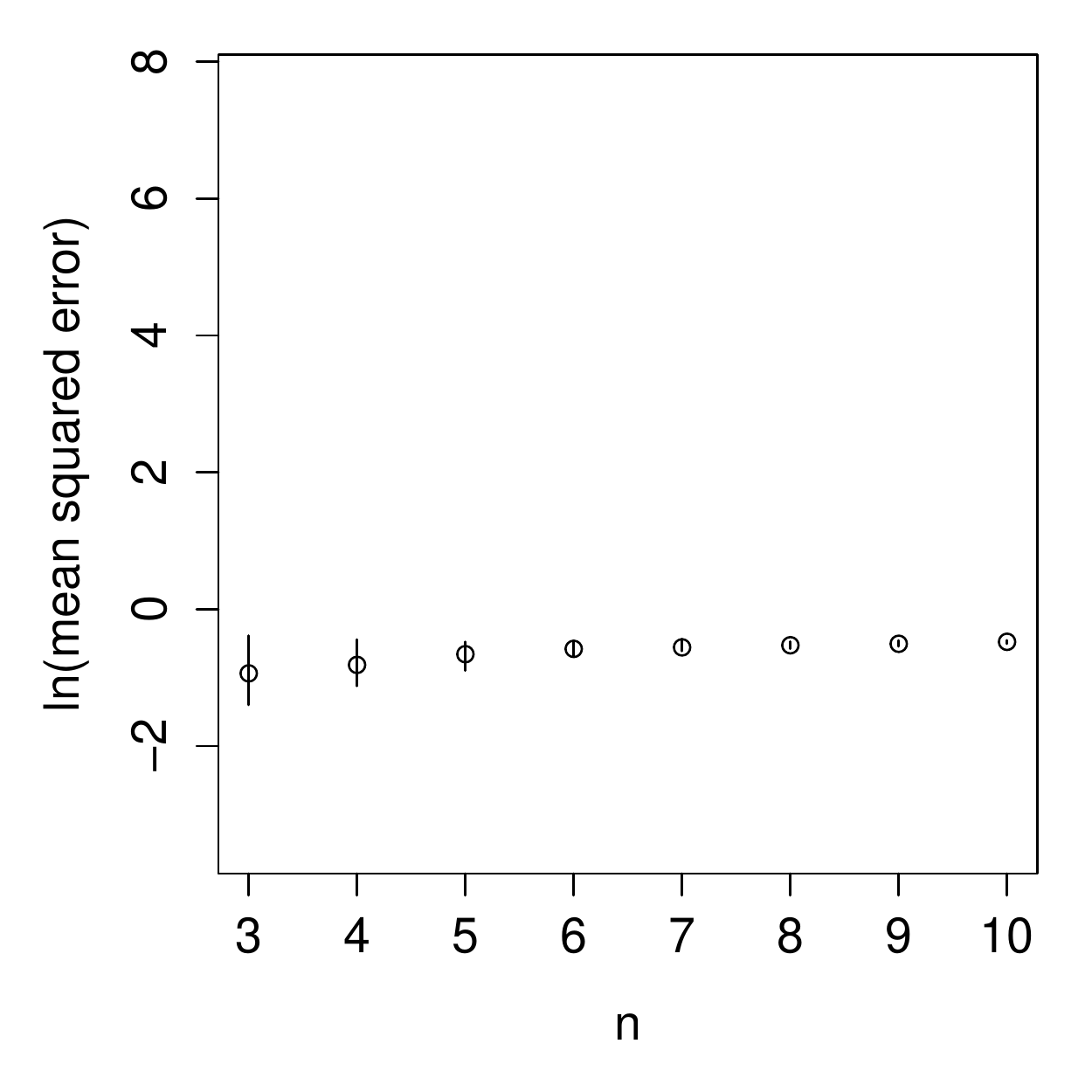}} & \raisebox{-.5\height}{\includegraphics[scale=0.27]{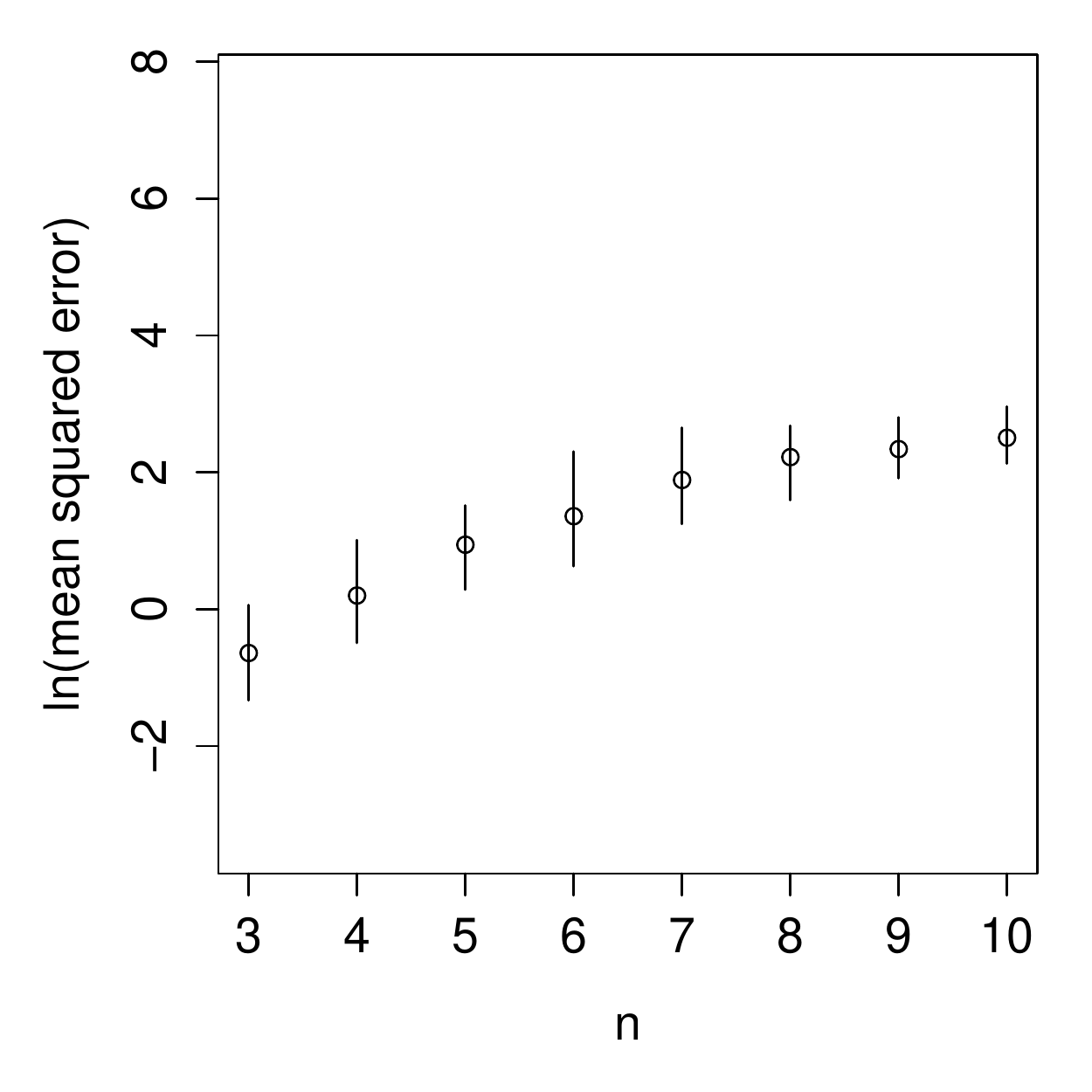}} \\
	1000 & \raisebox{-.5\height}{\includegraphics[scale=0.27]{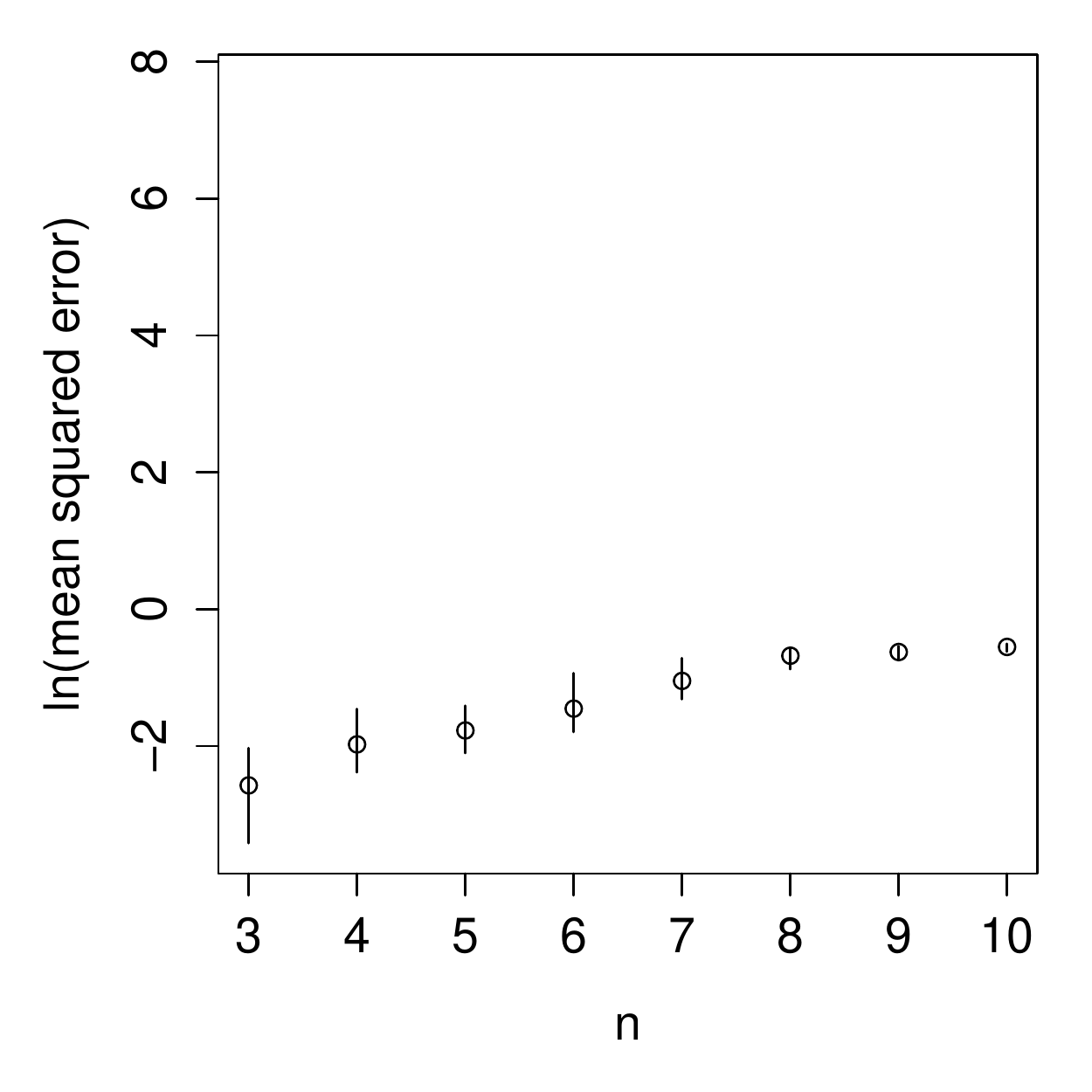}} & \raisebox{-.5\height}{\includegraphics[scale=0.27]{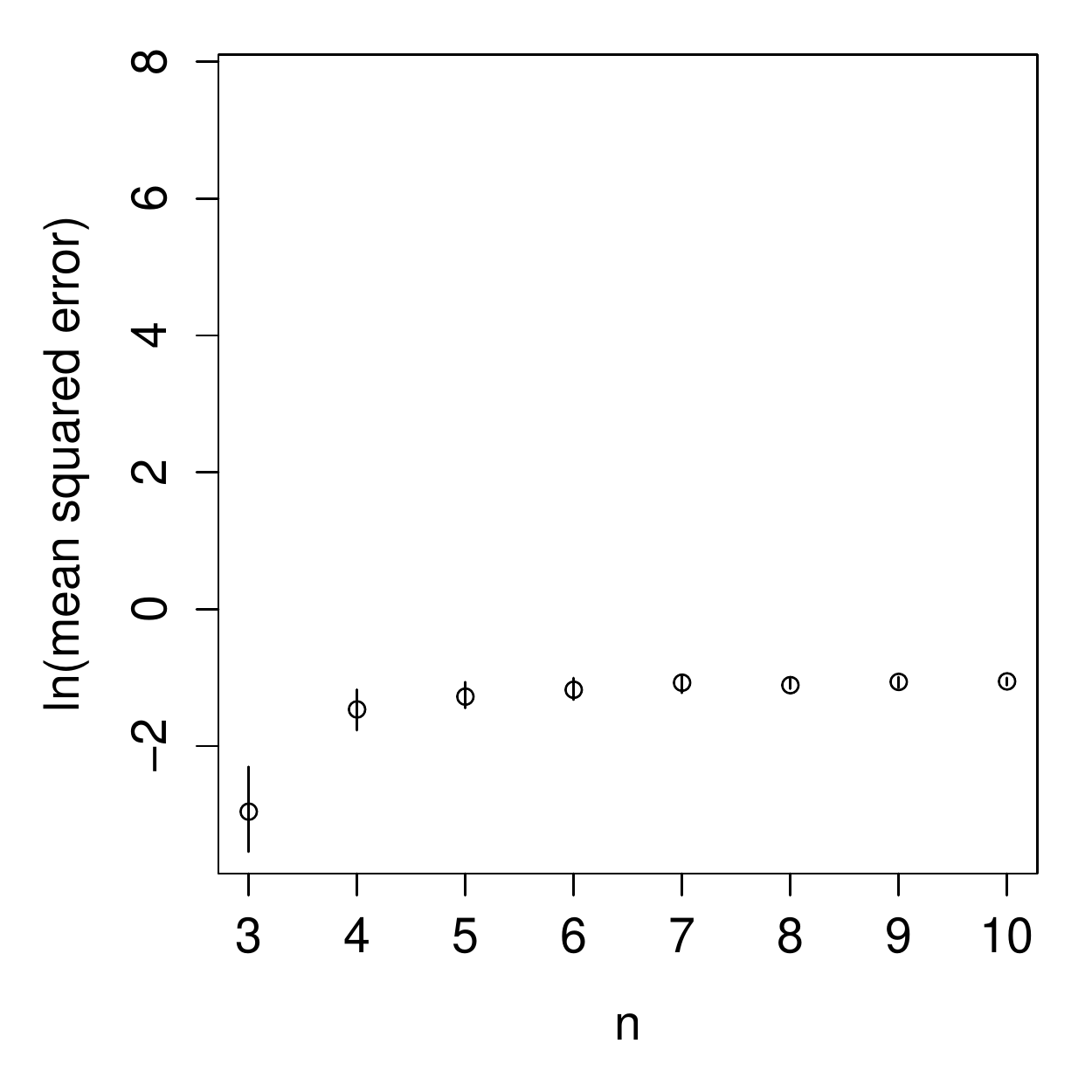}} \vspace{0.5cm} \\
	& \multicolumn{2}{c}{\large\ Clayton Trees, Running Times} \vspace{0.25cm} \\
	 & \ \ \ \ L-BFGS with Restart & \ \ \ \ \ \ \ Piecewise \vspace{0.1cm} \\ 
	10 & \raisebox{-.5\height}{\includegraphics[scale=0.27]{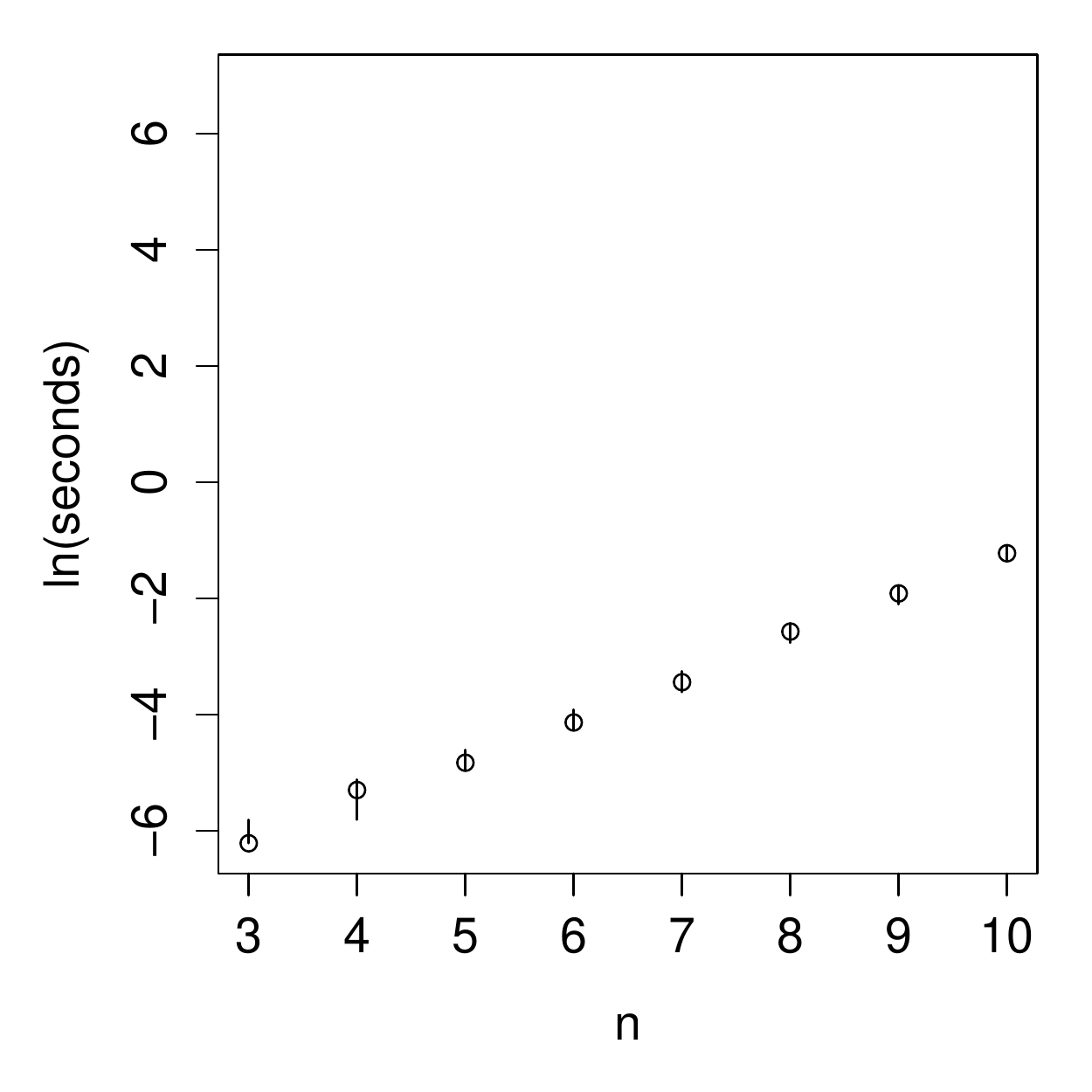}} & \raisebox{-.5\height}{\includegraphics[scale=0.27]{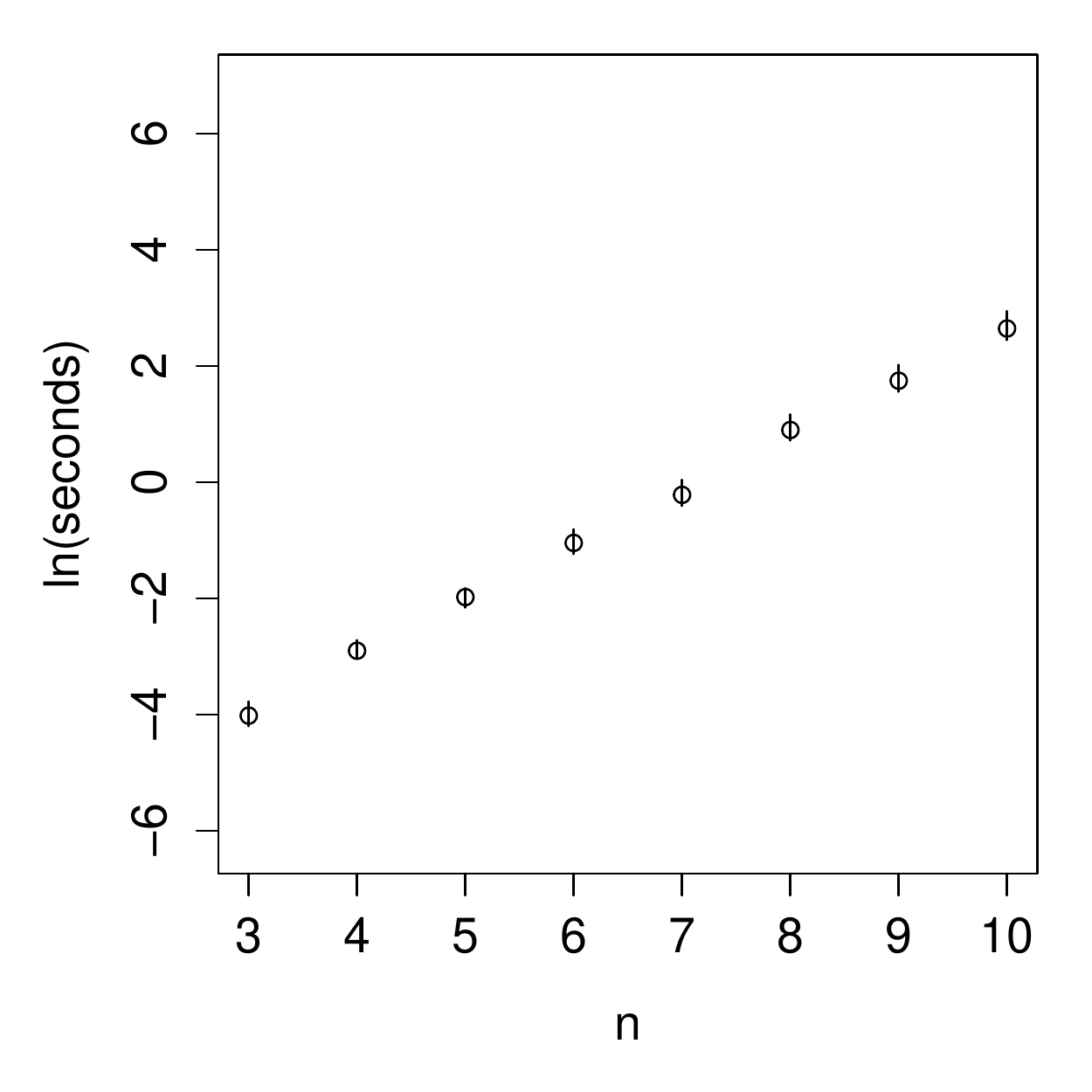}} \\
	100 & \raisebox{-.5\height}{\includegraphics[scale=0.27]{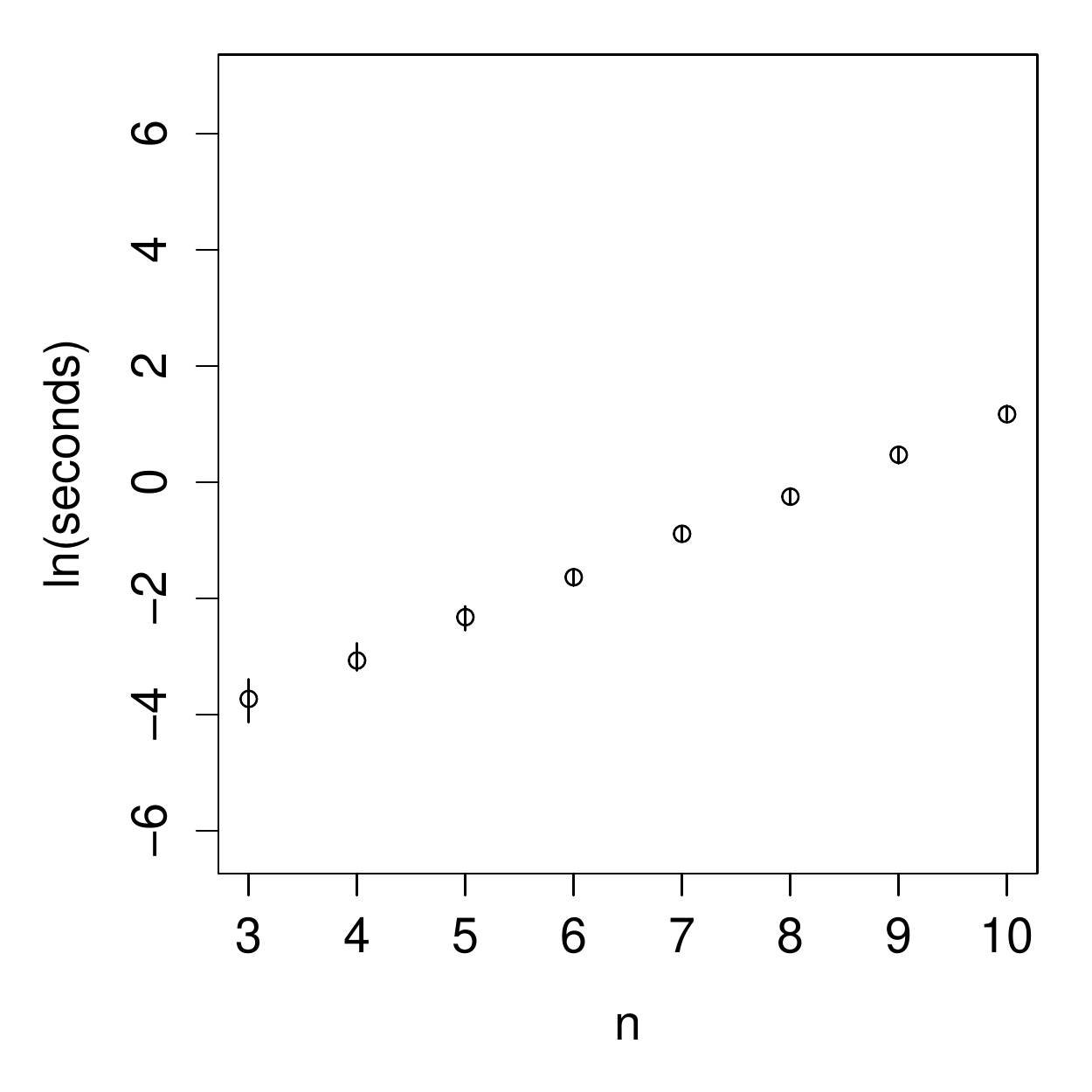}} & \raisebox{-.5\height}{\includegraphics[scale=0.27]{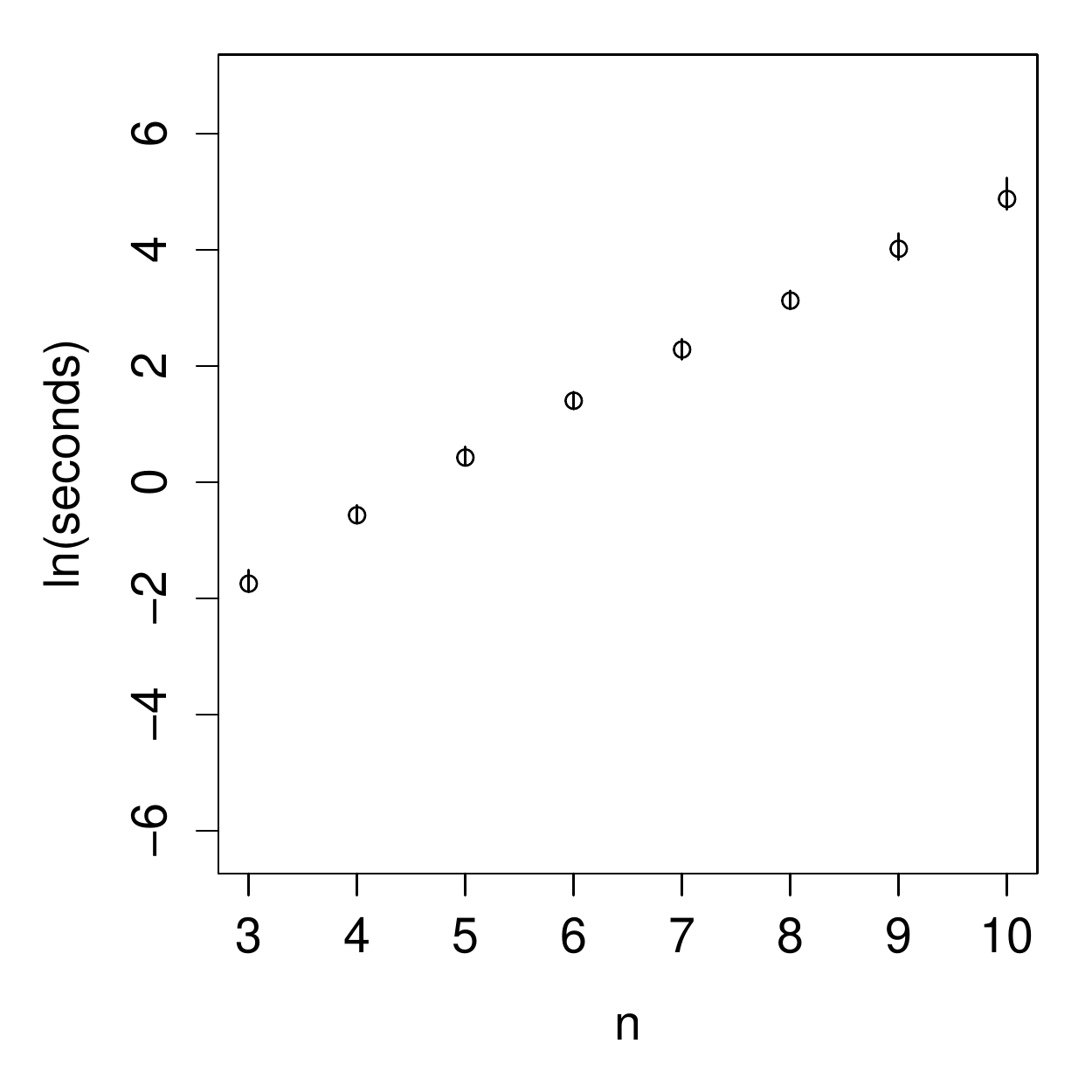}} \\
	1000 & \raisebox{-.5\height}{\includegraphics[scale=0.27]{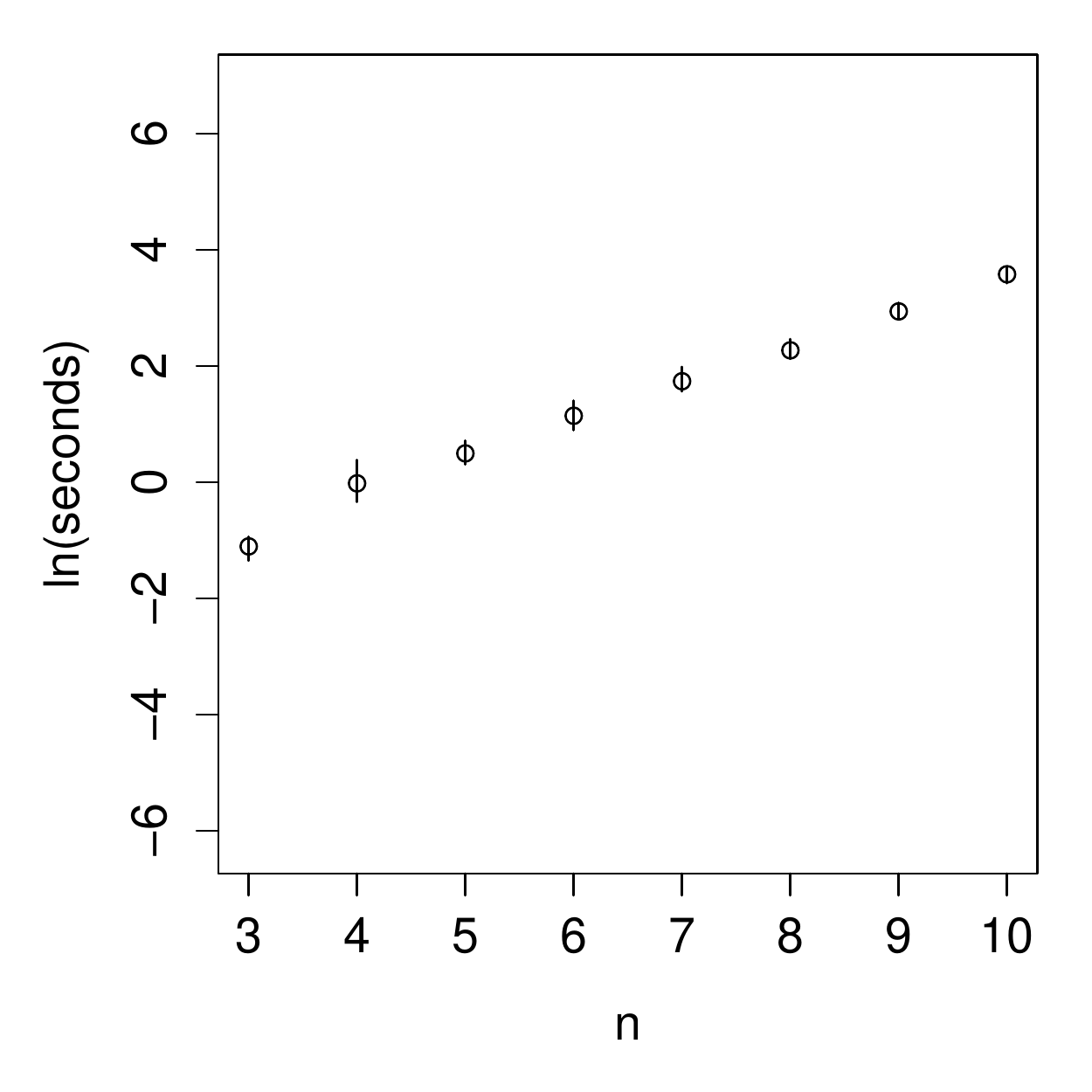}} & \raisebox{-.5\height}{\includegraphics[scale=0.27]{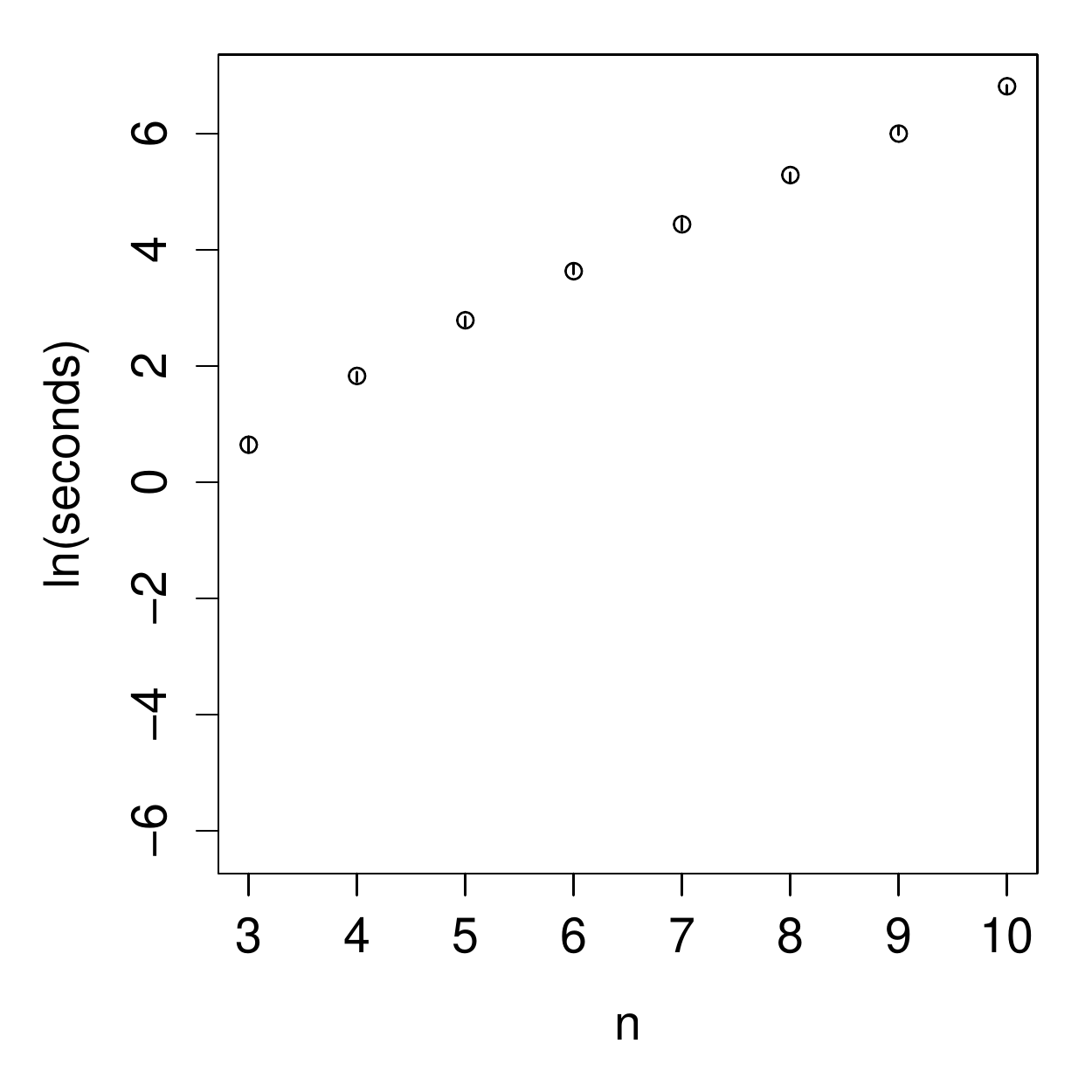}}
	\end{tabular}
	\caption[Comparing MSEs and running times from piecewise learning to L-BFGS with restart for Clayton trees.]{Comparing MSEs and running times on log scale from piecewise learning to L-BFGS with restart for Clayton trees.}
	\label{fig:piecewiseTrees}
\end{figure}

\begin{figure}[p]
  \centering\begin{tabular}{c@{\hskip 1cm}c@{\hskip 1cm}c}
	& \multicolumn{2}{c}{\large\ Clayton Grids, Errors} \vspace{0.25cm} \\
	 Samples & \ \ \ \ L-BFGS with Restart & \ \ \ \ \ \ \ Piecewise \vspace{0.1cm} \\ 
	10 & \raisebox{-.5\height}{\includegraphics[scale=0.27]{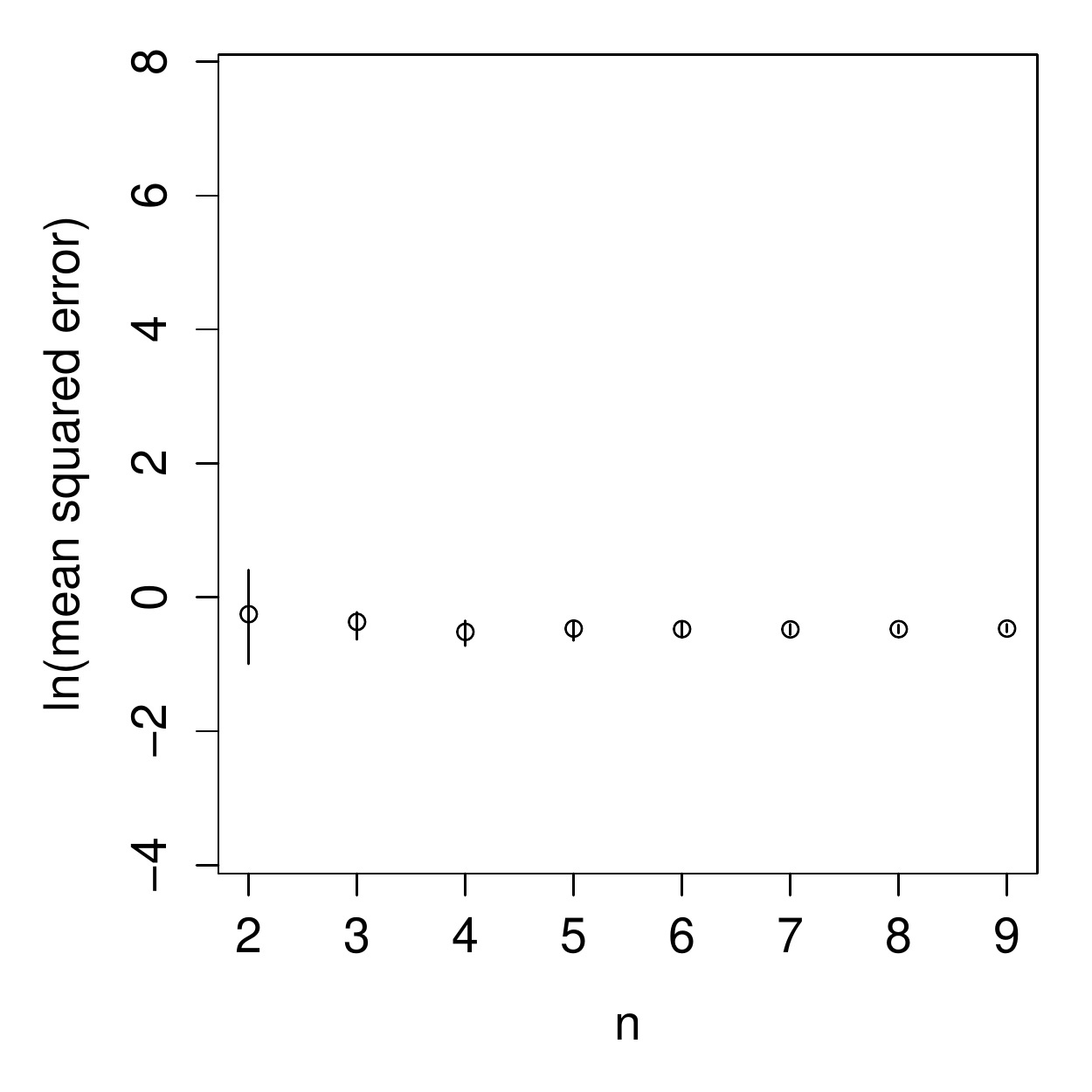}} & \raisebox{-.5\height}{\includegraphics[scale=0.27]{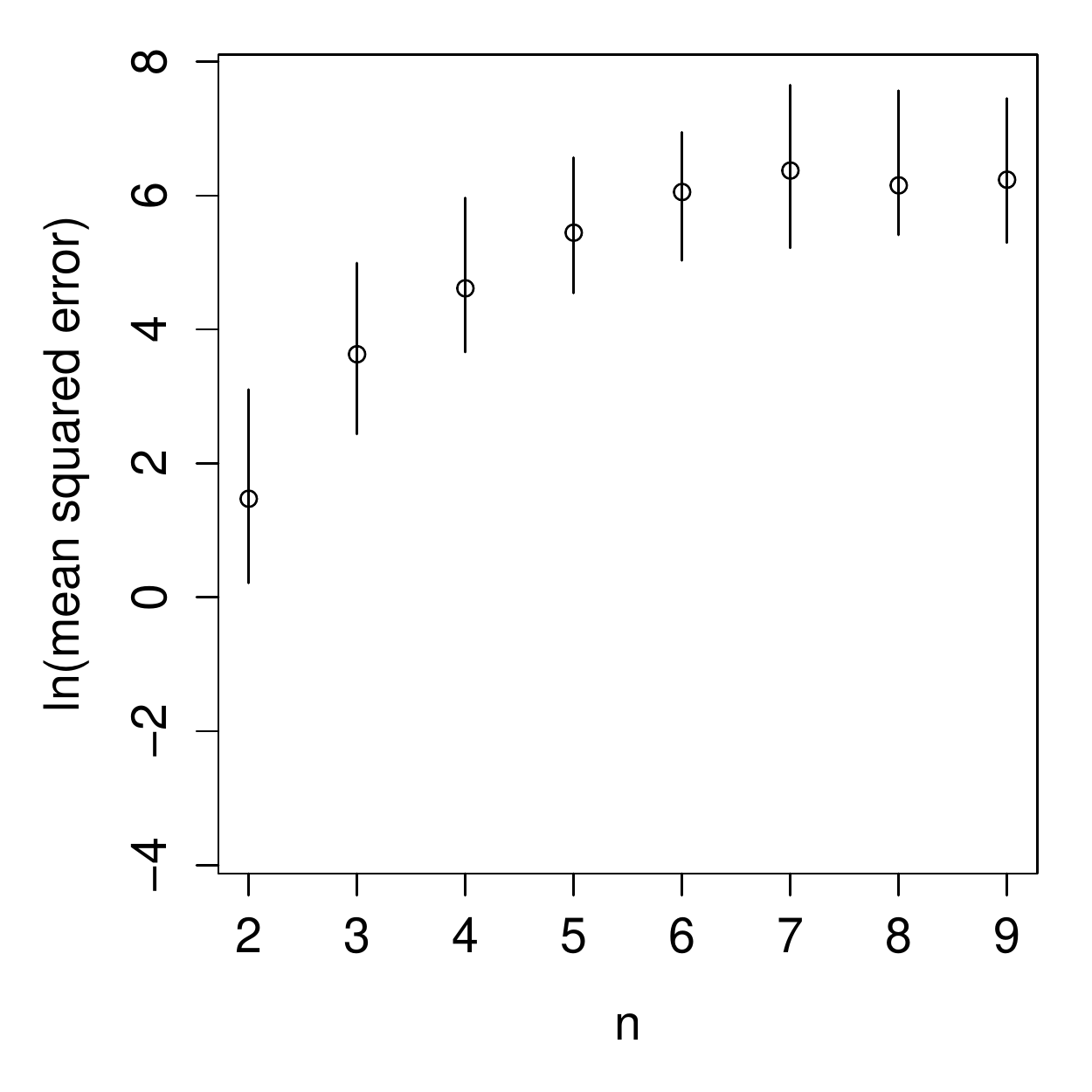}} \\
	100 & \raisebox{-.5\height}{\includegraphics[scale=0.27]{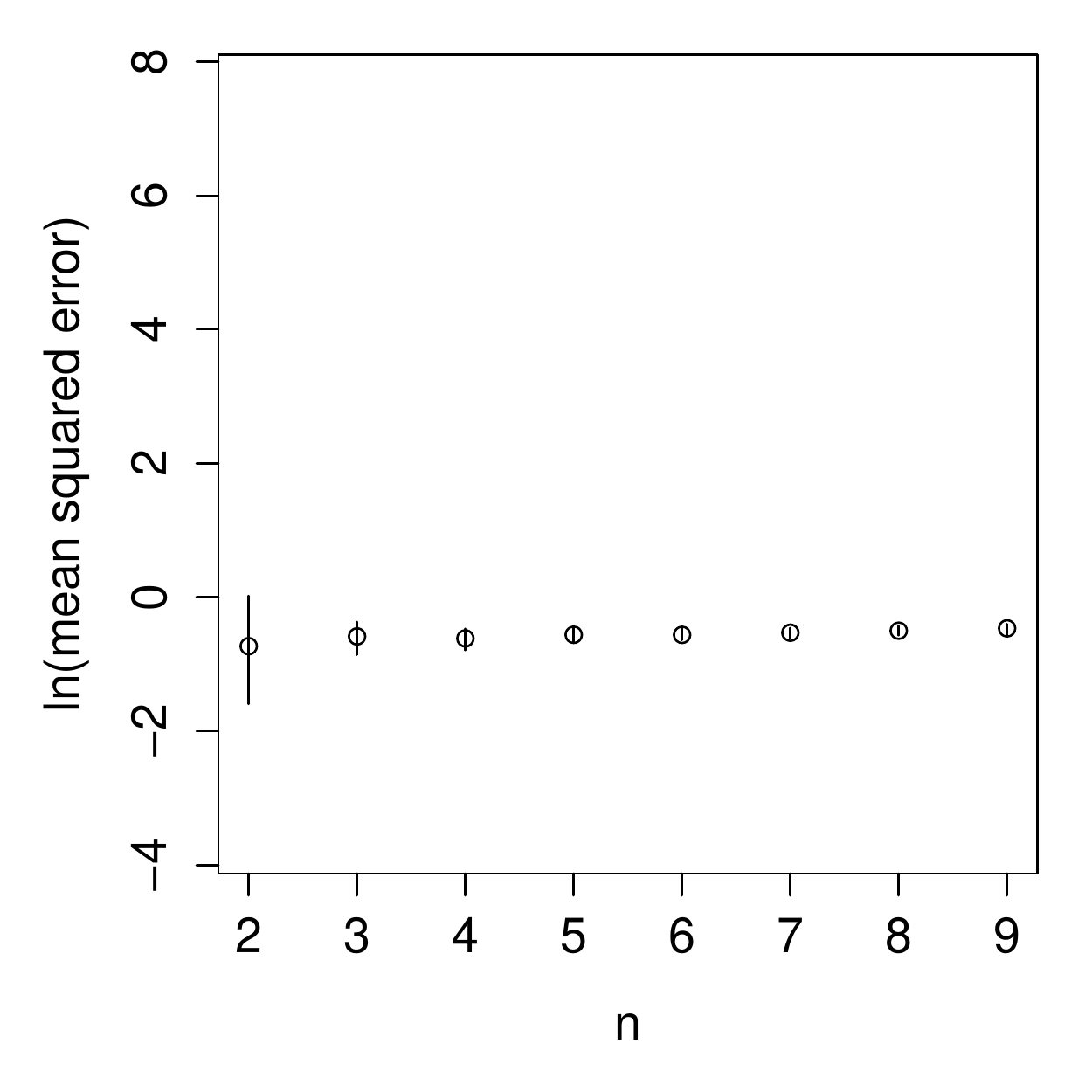}} & \raisebox{-.5\height}{\includegraphics[scale=0.27]{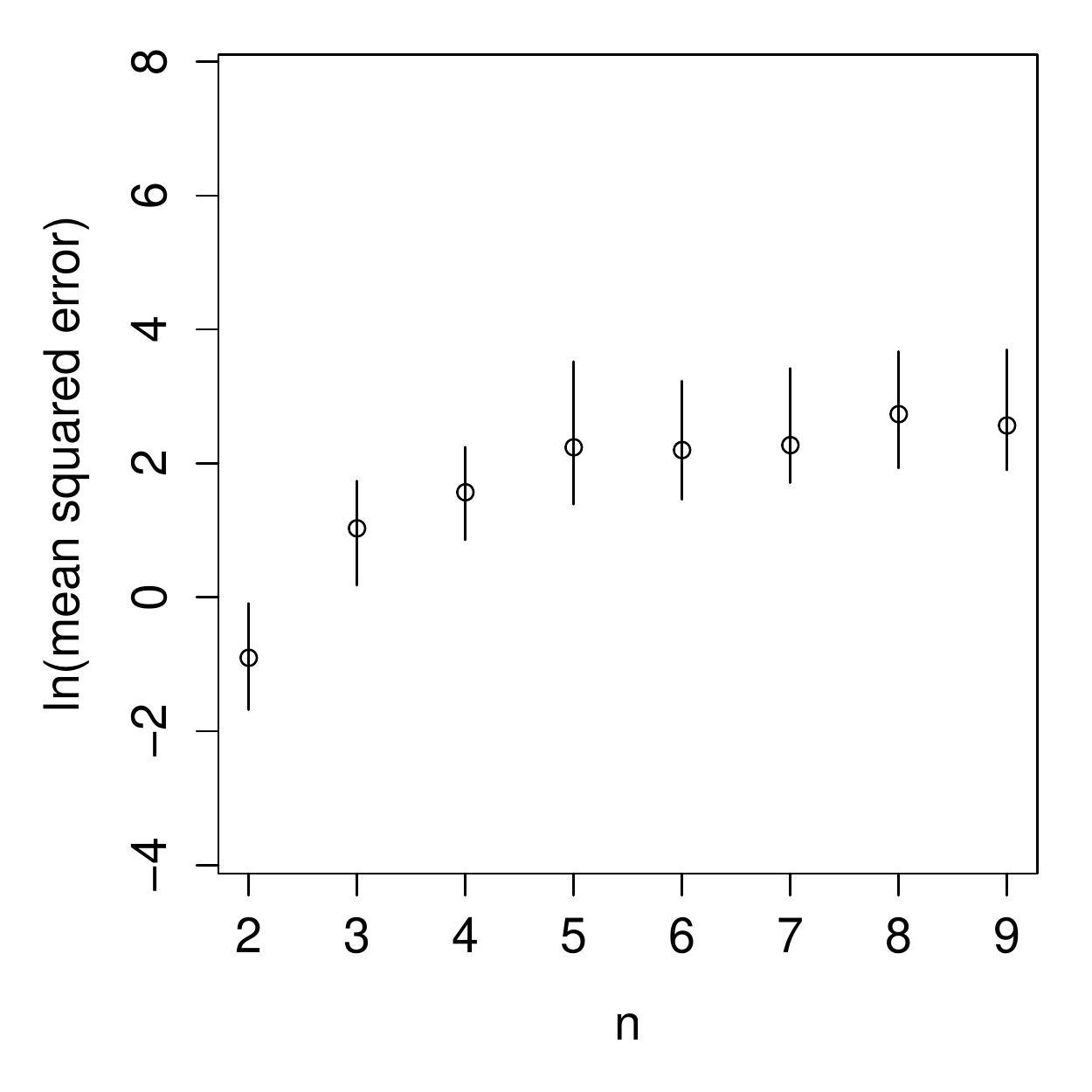}} \\
	1000 & \raisebox{-.5\height}{\includegraphics[scale=0.27]{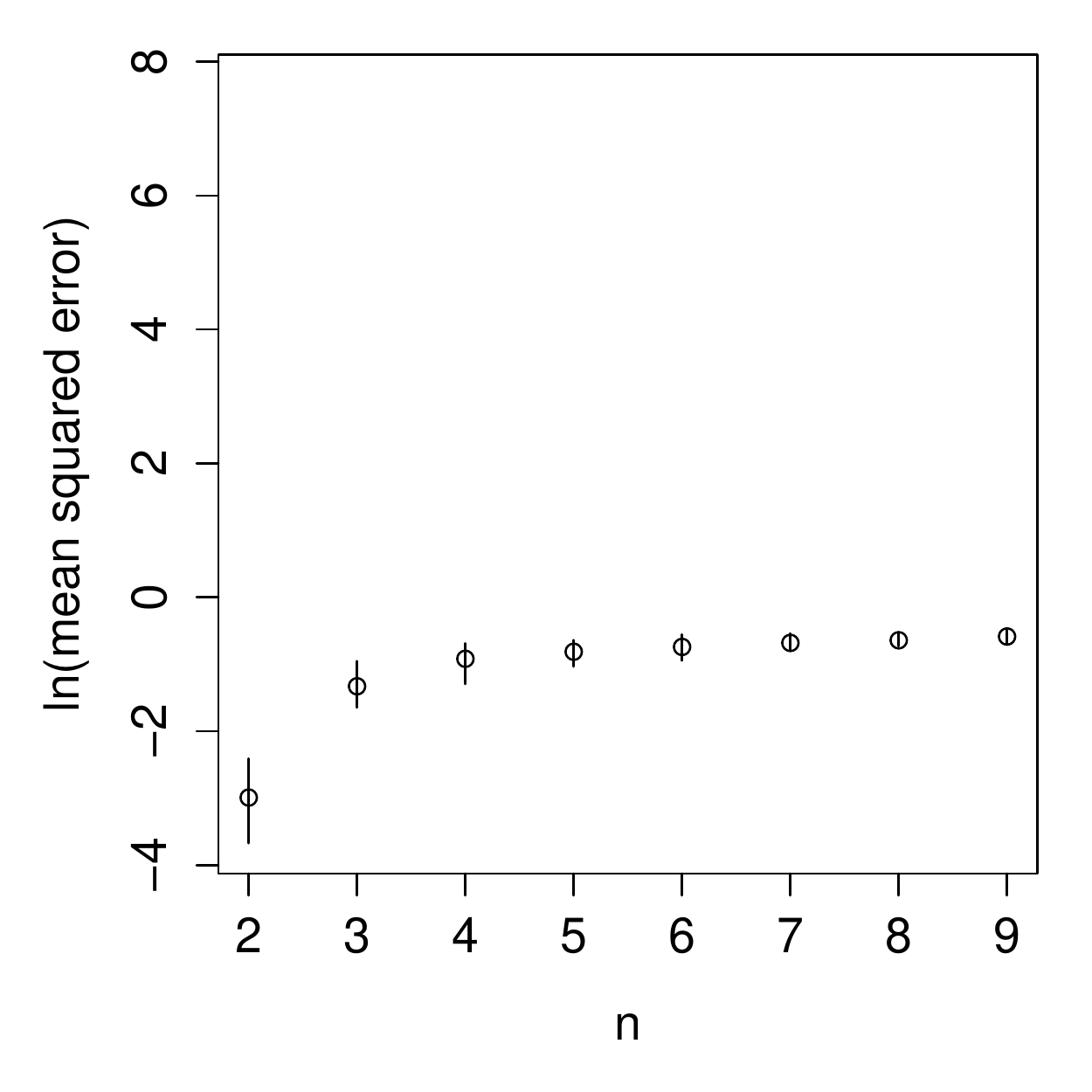}} & \raisebox{-.5\height}{\includegraphics[scale=0.27]{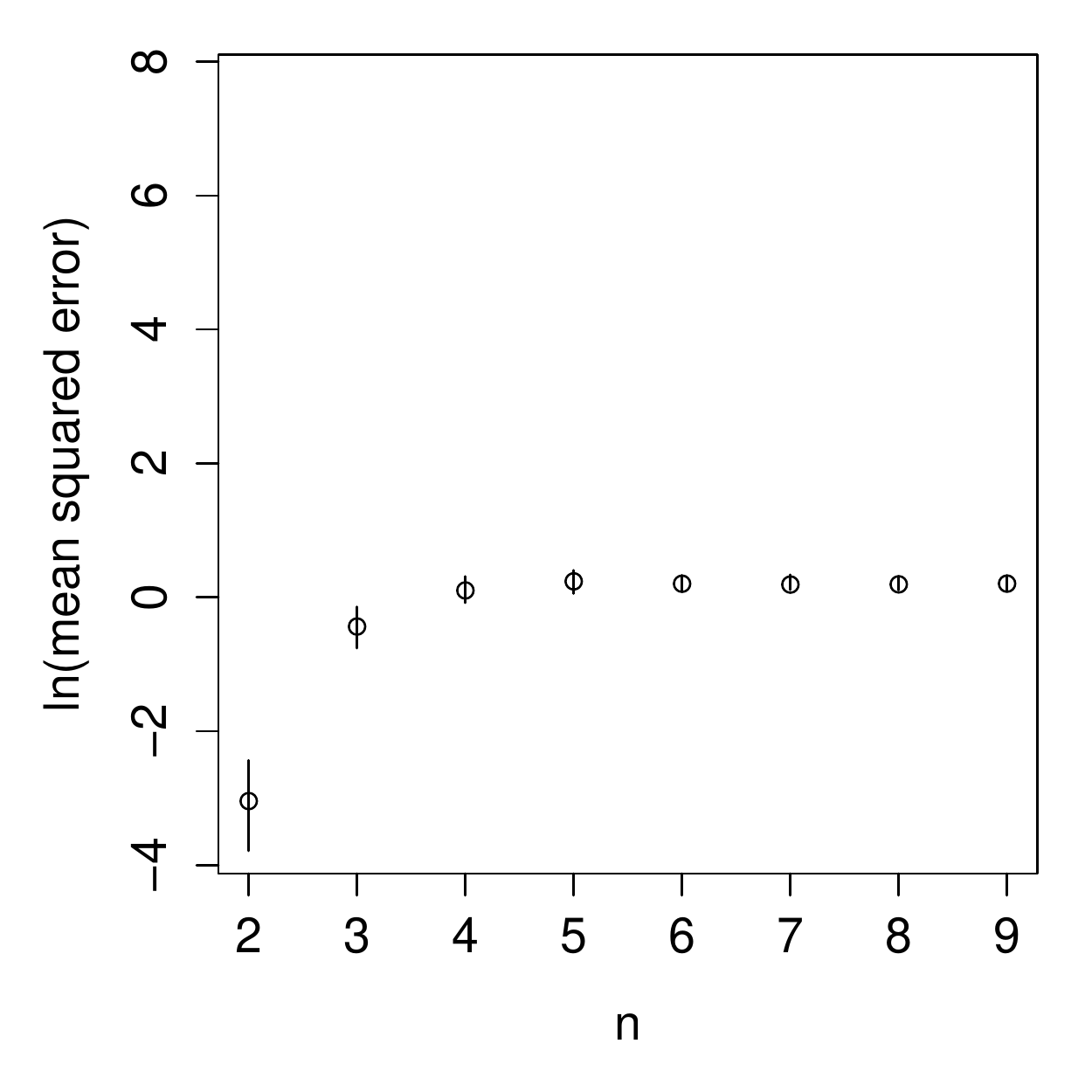}} \vspace{0.5cm} \\
	& \multicolumn{2}{c}{\large\ Clayton Grids, Running Times} \vspace{0.25cm} \\
	 & \ \ \ \ L-BFGS with Restart & \ \ \ \ \ \ \ Piecewise \vspace{0.1cm} \\ 
	10 & \raisebox{-.5\height}{\includegraphics[scale=0.27]{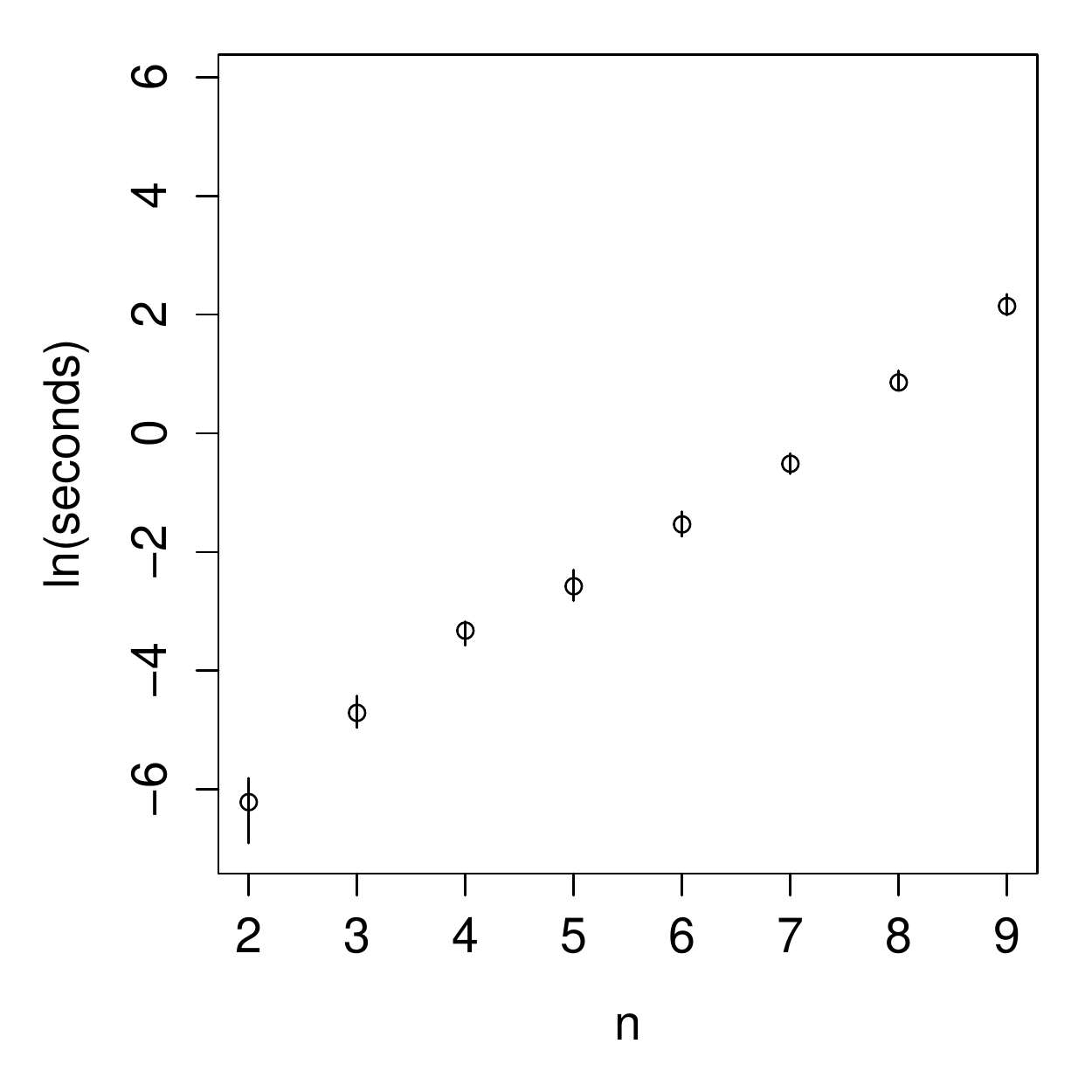}} & \raisebox{-.5\height}{\includegraphics[scale=0.27]{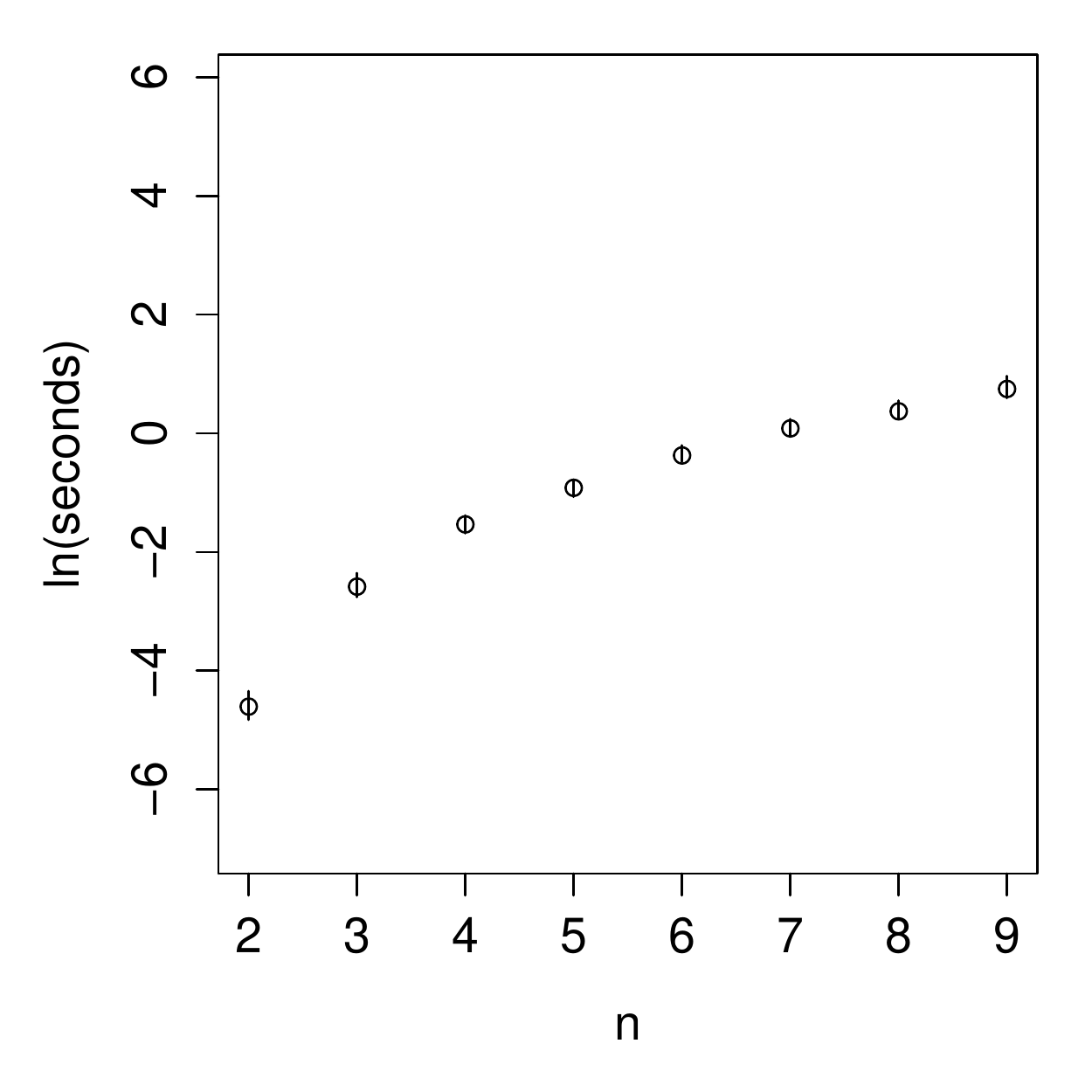}} \\
	100 & \raisebox{-.5\height}{\includegraphics[scale=0.27]{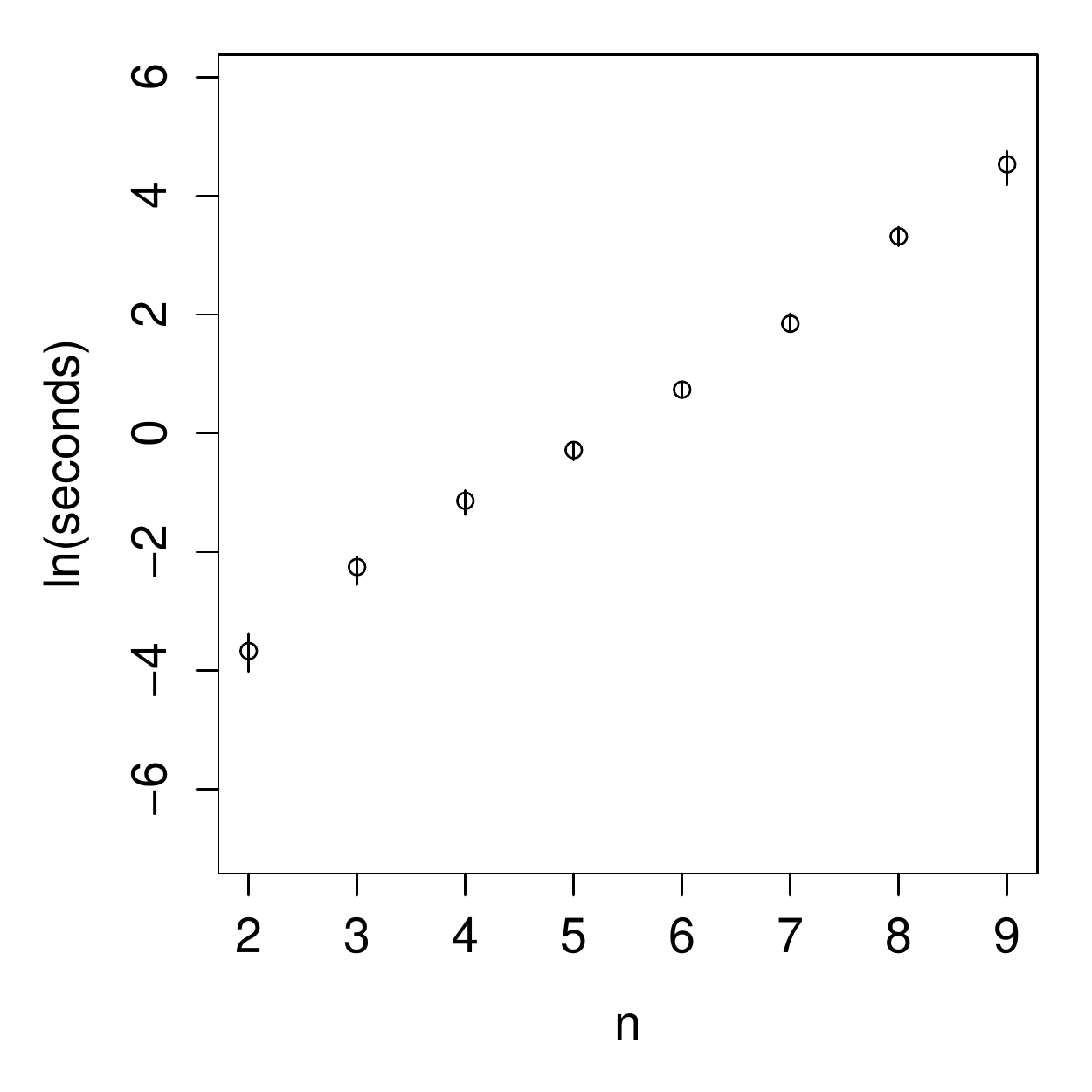}} & \raisebox{-.5\height}{\includegraphics[scale=0.27]{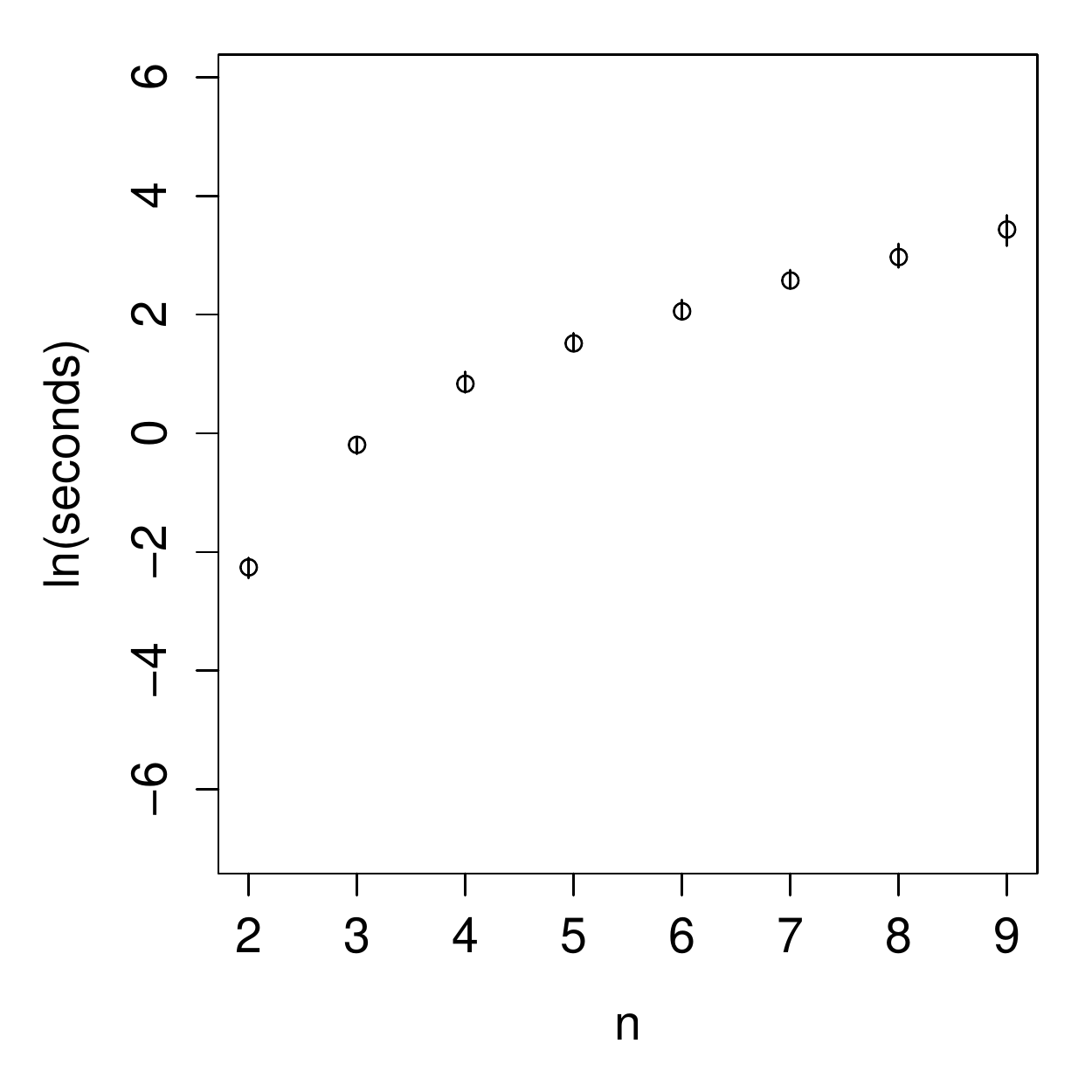}} \\
	1000 & \raisebox{-.5\height}{\includegraphics[scale=0.27]{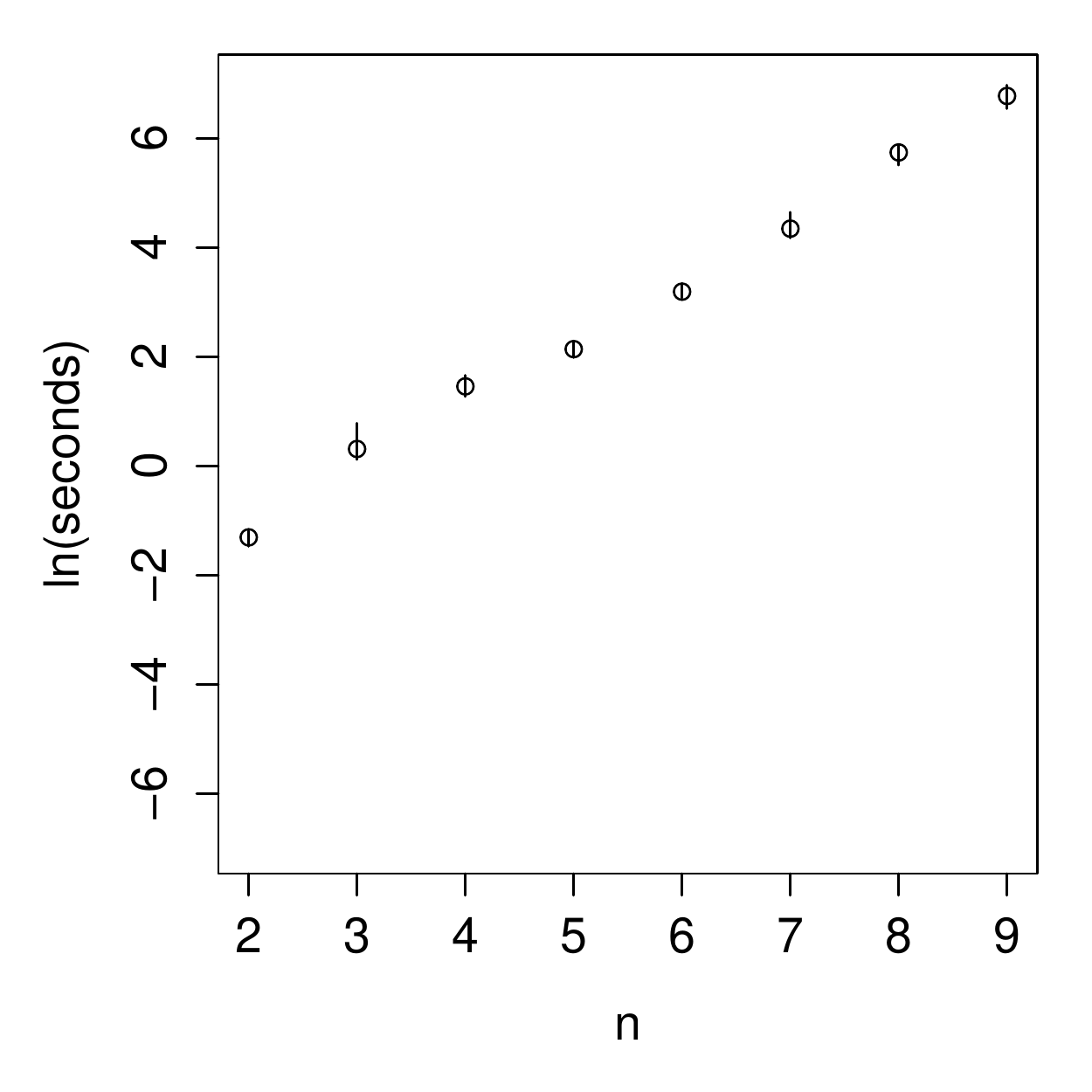}} & \raisebox{-.5\height}{\includegraphics[scale=0.27]{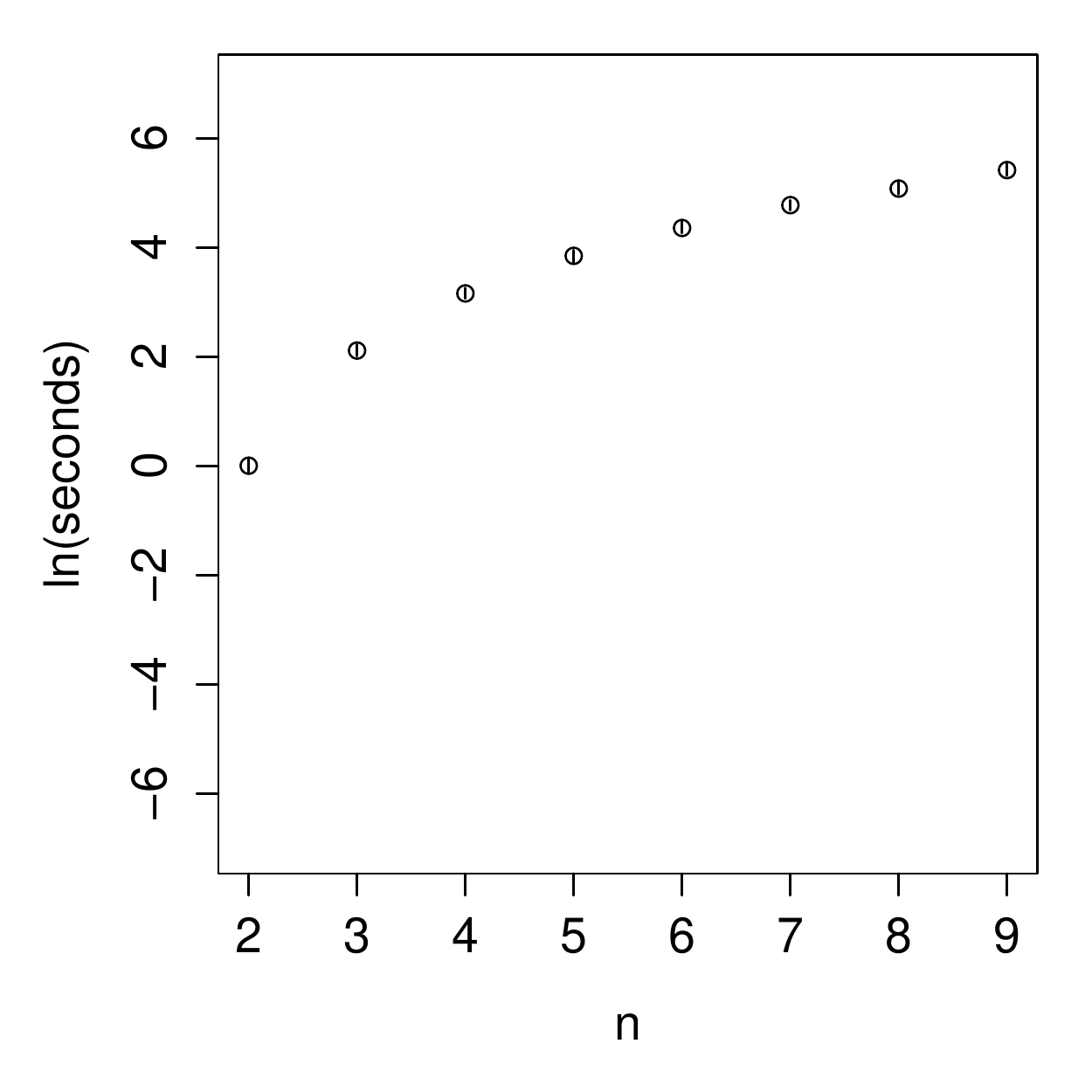}}
	\end{tabular}
	\caption[Comparing MSEs and running times from piecewise learning to L-BFGS with restart for Clayton grids.]{Comparing MSEs and running times on log scale from piecewise learning to L-BFGS with restart for Clayton grids.}
	\label{fig:piecewiseGrids}
\end{figure}

For the trees, the running times for both algorithms increases linearly in the model size. As $n$ increases, the number of variables increases exponentially, so a linear relationship on the log scale indicates that the running time increases linearly in the number of variables. Thus, for trees, L-BFGS with restart converges faster than piecewise learning for all model sizes.

For the grids, the running time of L-BFGS with restart increases exponentially in the model size (linearly in log-space), whereas piecewise learning increases only linearly; the treewidth of the model increases linearly in $n$, whereas piecewise learning operates on subproblems of unity treewidth. Thus, we have reduced the asymptotic complexity of learning grid structures. Whilst L-BFGS with restart is faster for small grids, piecewise learning is faster for grids of size $n\ge7$. 

Furthermore, piecewise learning is currently the only algorithm that can learn models of arbitrary treewidth and size.

\section{Limitations of the model}\label{sec:limitations}
The property of marginal independence severely limits the ability of CDNs to capture nonlocal structure, as we shall demonstrate with an example.
\begin{ex}
 Consider a chain of size $n=3$, with variables $\{X_1,X_2,X_3\}$ in the order of the graph. In this model, the variables represent the binary states of three adjacent pixels. As our models are continuous, we represent the binary variables with normally distributed marginals, and use the convention that a pixel $i$ is set when $X_i>0$, and is clear otherwise. There is a bivariate normal factor between each pair of connected variables, and the parameters are learnt over two samples, $(-1,-1,-1)$ and $(1,1,1)$, representing two classes. If our model captures the nonlocal structure, samples should have either all pixels set, or all pixels clear.
\end{ex}
One might incorrectly reason, as follows, that samples from the model belong to either class. First, we sample $X_1$. Then, we sample $X_2$, which is dependent on $X_1$, and, as we learnt the parameters from the sample, agrees in value with $X_1$. Finally, we sample $X_3$, which is dependent on $X_2$ and conditionally dependent on $X_1$ given $X_2$. Therefore, it should agree with the other variables as well.

In the conditional method, however, the order of sampling is irrelevant, so that sampling in any order produces a sample from the same distribution. Suppose we sample $X_1$ and then $X_3$. These variables are marginally independent, since they are not connected by an edge. Therefore, we expect them to take different values half of the time, or rather, half of the samples belong to neither class on average. This was confirmed experimentally.

To capture the nonlocal structure, an edge is required between the end variables (to form a loop of size $n=3$). With this extra edge, the end variables $X_1$ and $X_3$ are no longer marginally independent. Regardless of the sampling order, the second sampled variable is dependent on the first sampled variable and must agree with it, and similarly for the third variable. Again, we confirmed experimentally that samples drawn from this model belonged to either class.

At the start of this research project, it was hoped that the CDN could form a strong model of binary shape. On the contrary, from this discussion and some further experiments omitted here, it is clear that the model is not suitable for this purpose; to capture nonlocal structure an edge and factor is required between every variable belonging to the structure, and the resulting increase in treewidth makes such a model intractable.
	\chapter{Conclusions}
\label{conclusions}
In this thesis we have presented a general class of models, the copula CDN, that is capable of representing and manipulating high-dimensional probability distributions. We explained how to equip the model with a copula parameterization so that the marginals and dependence structure can be specified separately. We presented how to perform inference, sampling, and learning in the models, and studied properties of these algorithms using Monte Carlo simulations. We developed the first sampling algorithm for CDNs, an algorithm for efficiently learning from MCAR data, a novel learning method for large treewidth and high-dimensional models, and a method for performing gradient based optimization on CDNs parameterized with normal copulae.

In section \ref{sec:limitations}, we discussed the limitations of CDNs in representing nonlocal dependencies caused by the model's property of marginal independence. We believe this substantially diminishes the utility of pure CDN models. Instead, we hold the conditional independencies encodable by CDNs to \emph{complement} rather than supplant those of standard PGMs. The most promising research direction is, we believe, the approach of Silva et al. \cite{SilvaEtAl2011}, combining the CDN and BN frameworks to form a convenient parameterization of acyclic directed mixed graphs---the \emph{mixed CDN}.

As the authors comment, the addition of bidirected edges allows the inclusion of implicit latent variables, whose parameterization and connectivity does not have to be specified. These can complement or replace the modelling of explicit latent variables. Indeed, additional bidirected edges in mixed CDNs do not complicate inference and learning, in contrast to the inclusion of latent variables in standard PGMs. One possible application is to augment a BN model for medical diagnosis with bidirected edges representing confounding factors between symptoms. Another possible application is to the modelling of gene regulatory networks; a bidirected edge would represent the regulation of two genes by an unknown molecule, the discovery of such edges during structure learning motivating further biological research. An application is given in \cite{Silva2012} to modelling the patterns in a national survey of NHS staff, where a combination of standard latent variables and bidirected edges are used to represent the ``known unknowns'' and ``unknown unknowns,'' respectively. We suggest that for domains without latent or confounding variables, the standard BN framework is preferable.

%This idea is used to great effect in the \emph{structured canonical correlation model} .

A further application of mixed CDNs is to form a general classifier. For example, one could start with a standard naive Bayes classifier and learn the structure of a bidirected model over the observed variables. In other words, the framework of mixed CDNs allows one to weaken the assumption of independence between the observed variables in a naive Bayes classifier \emph{without modelling a direct dependence between the variables or introducing explicit latent variables}. The marginals could be represented non-parametrically with their kernel density estimate, as in \cite{Elidan2012b}. It would be interesting to compare the performance of such a model to the support vector machine, the current state-of-the-art, and the newly developed copula network classifier \cite{Elidan2012b}.

Despite the fact that in this thesis our novel algorithms were developed for pure CDNs , variants are applicable and useful for mixed CDN models. For example, piecewise composite likelihood learning could be incorporated into an algorithm for structure learning over very high-dimensional spaces. Unanswered questions include how to apply MCMC sampling methods to efficiently sample from models with high tree width, and how to perform approximate, MAP, and marginal MAP inference. Since marginalization is trivial in mixed CDNs, marginal MAP inference should have comparable tractability to MAP inference, in contrast to standard PGMs.
  %etc

% APPENDICES

  % change chapter name and counters (eg Chapter 1 -> Appendix A)
  \appendix

  % assuming there are files appendix1.tex etc...
  %\include{appendix1}
  %\include{appendix2}
  %\include{appendix3}
  %etc

% BIBLIOGRAPHY

  % add Bibliography to table of contents
  \addcontentsline{toc}{chapter}{Bibliography}

  % list BibTeX (.bib) files and choose bibliography style
	\bibliographystyle{abbrv}
  \bibliography{references}  

  % OR... do it manually in the file bibliography.tex
  %\include{bibliography}

\end{document}